\newcommand{\arxiv}[1]{\iftoggle{icml}{}{#1}}
\newcommand{\icml}[1]{\iftoggle{icml}{#1}{}}
\global\togglefalse{icml}
\newcommand{\footref}[1]{\textsuperscript{\ref{#1}}}
\newcommand{\loose}{\looseness=-1}
\newcommand{\neutralize}[1]{\expandafter\let\csname c@#1\endcsname\count@}
\declaretheorem[name=Theorem,parent=section]{theorem}
\declaretheorem[name=Lemma,parent=section]{lemma}
\declaretheorem[name=Assumption, parent=section]{assumption}
\declaretheorem[name=Condition, parent=section]{condition}
\declaretheorem[name=Remark,parent=section]{remark}
\declaretheorem[name=Proposition, parent=section]{proposition}
\renewenvironment{proof}[1][Proof]%
{%
	\par\noindent{\bfseries\upshape {#1.}\ }%
}%
{\qed\newline}
\newtheorem*{theorem*}{Theorem}
\theoremstyle{plain}
\newtheorem{definition}[theorem]{Definition}
\xpatchcmd{\proof}{\itshape}{\normalfont\proofnameformat}{}{}
\newcommand{\proofnameformat}{\bfseries}
\newcommand{\pref}[1]{\cref{#1}}
\newcommand{\pfref}[1]{Proof of \pref{#1}}
\renewcommand{\eqref}[1]{\texorpdfstring{\hyperref[#1]{(\ref*{#1})}}{(\ref*{#1})}}
\Crefname{assumption}{Assumption}{Assumptions}
\def\ddefloop#1{\ifx\ddefloop#1\else\ddef{#1}\expandafter\ddefloop\fi}
\def\ddef#1{\expandafter\def\csname bb#1\endcsname{\ensuremath{\mathbb{#1}}}}
\def\ddefloop#1{\ifx\ddefloop#1\else\ddef{#1}\expandafter\ddefloop\fi}
\def\ddef#1{\expandafter\def\csname b#1\endcsname{\ensuremath{\mathbf{#1}}}}
\def\ddef#1{\expandafter\def\csname sf#1\endcsname{\ensuremath{\mathsf{#1}}}}
\def\ddef#1{\expandafter\def\csname c#1\endcsname{\ensuremath{\mathcal{#1}}}}
\def\ddef#1{\expandafter\def\csname h#1\endcsname{\ensuremath{\widehat{#1}}}}
\def\ddef#1{\expandafter\def\csname hc#1\endcsname{\ensuremath{\widehat{\mathcal{#1}}}}}
\def\ddef#1{\expandafter\def\csname t#1\endcsname{\ensuremath{\widetilde{#1}}}}
\def\ddef#1{\expandafter\def\csname tc#1\endcsname{\ensuremath{\widetilde{\mathcal{#1}}}}}
\def\ddefloop#1{\ifx\ddefloop#1\else\ddef{#1}\expandafter\ddefloop\fi}
\def\ddef#1{\expandafter\def\csname scr#1\endcsname{\ensuremath{\mathscr{#1}}}}
\newcommand{\indic}{\mathbbm{I}}    %
\newcommand{\eps}{\epsilon}
\newcommand{\veps}{\varepsilon}
\DeclareMathOperator*{\argmin}{arg\,min} %
\DeclareMathOperator*{\argmax}{arg\,max}
\def\ddef#1{\expandafter\def\csname b#1\endcsname{\ensuremath{\mb{#1}}}}
\newcommand{\ind}[1]{^{(#1)}}
\DeclarePairedDelimiter{\abs}{\lvert}{\rvert} %
\DeclarePairedDelimiter{\brk}{[}{]}
\DeclarePairedDelimiter{\crl}{\{}{\}}
\DeclarePairedDelimiter{\prn}{(}{)}
\DeclarePairedDelimiter{\ceil}{\lceil}{\rceil}
\let\P\undefined
\DeclareMathOperator{\En}{\mathbb{E}}
\DeclareMathOperator{\P}{P}
\newcommand{\mb}[1]{\boldsymbol{#1}}
\renewcommand{\bm}[1]{\boldsymbol{#1}}
\newcommand{\wt}[1]{\widetilde{#1}}
\newcommand{\wb}[1]{\widebar{#1}}
\newcommand{\ldef}{\vcentcolon=}
\newcommand{\musikp}{$\musik+\psdp$}
\newcommand{\comblock}{CombLock\xspace}
\newcommand{\lhs}{left-hand side\xspace}
\newcommand{\rhs}{right-hand side\xspace}
\newcommand{\cMbar}{\wb{\cM}}
\newcommand{\cAbar}{\wb{\cA}}
\newcommand{\etamin}{\eta_{\mathrm{min}}}
\renewcommand{\ln}{\log}        %
\newcommand{\phistar}{\phi_{\star}}
\newcommand{\tfrak}{\mathfrak{t}}
\newcommand{\term}{\mathfrak{t}}
\newcommand{\afrak}{\mathfrak{a}}
\newcommand{\bmdps}{Block MDPs\xspace}
\newcommand{\Pbayes}{P_{\bayes}}
\newcommand{\pihat}{\hat{\pi}}
\newcommand{\pistar}{\pi_{\star}}
\newcommand{\ahat}{\hat{a}}
\let\oldparagraph\paragraph
\renewcommand{\paragraph}[1]{\oldparagraph{#1.}}
\renewcommand{\colon}{:}        %
\newcommand{\range}[2]{\brk*{#1\ldotst{}#2}}
\newcommand{\reals}{\mathbb{R}}
\newcommand{\err}{\text{err}}
\renewcommand{\P}{\mathbb{P}}
\newcommand{\sfrak}{\mathfrak{s}}
\newcommand{\E}{\mathbb{E}}
\newcommand{\nm}{\texttt{NM}}
\newcommand{\Pinm}{\Pi_{\texttt{NM}}}
\newcommand{\nn}{\nonumber} 
\newcommand{\ldotst}{%
	\mathinner{{\ldotp}{\ldotp}}%
}
\newcommand{\unifa}{\pi_\texttt{unif}}
\newcommand{\unif}{\texttt{unif}}
\newcommand{\stat}{\texttt{stat}}
\newcommand{\olive}{\texttt{OLIVE}}
\newcommand{\homer}{\texttt{HOMER}}
\renewcommand{\a}{\bm{a}}
\newcommand{\x}{\bm{x}}
\newcommand{\s}{\bm{s}}
\newcommand{\supp}{\textrm{supp}\,}
\newcommand{\bayes}{\texttt{bayes}}
\newcommand{\Pibar}{\wbar{\Pi}}
\newcommand{\sh}{\mathfrak{s}'}
\newcommand{\st}{\mathfrak{s}}
\renewcommand{\i}{\mathfrak{i}}
\renewcommand{\j}{\mathfrak{j}}
\renewcommand{\k}{\mathfrak{l}}
\newcommand{\fraka}{\mathfrak{a}}
\newcommand{\wtilde}[1]{\widetilde{#1}}
\newcommand{\wbar}[1]{\widebar{#1}}
\newcommand{\iotahat}{\hat{\iota}}
\newcommand{\fhat}{\hat{f}}
\newcommand{\phihat}{\hat{\phi}}
\newcommand{\Ebar}{\wbar\E}
\newcommand{\Pbar}{\wbar\P}
\newcommand{\dbar}{\bar{d}}
\newcommand{\Mbar}{\wbar{\cM}}
\newcommand{\Abar}{\wbar{\cA}}
\newcommand{\fk}{P_\texttt{FK}}
\newcommand{\rbar}{\bar{r}}
\newcommand{\Pibarm}{\wbar{\Pi}_{\texttt{M}}}
\newcommand{\ikdp}{\texttt{IKDP}\xspace}
\newcommand{\ikdptab}{\texttt{IKDP.Tab}\xspace}
\newcommand{\emikdptab}{\texttt{\em IKDP.Tab}\xspace}
\newcommand{\musik}{\texttt{MusIK}\xspace}
\newcommand{\musiktab}{\texttt{MusIK.Tab}\xspace}
\newcommand{\psdp}{\texttt{PSDP}\xspace}
\newcommand{\Pim}{\Pi_{\texttt{M}}}
\renewcommand{\emptyset}{\varnothing}
\newcommand{\algcommentlight}[1]{\textcolor{blue!70!black}{\transparent{0.5}\footnotesize{\texttt{\textbf{//\hspace{2pt}#1}}}}}
 \newcommand{\algcommentbiglight}[1]{\textcolor{blue!70!black}{\transparent{0.5}\footnotesize{\texttt{\textbf{/* #1~*/}}}}}
\newcommand{\bigoh}{O}
\newcommand{\bigoht}{\wt{O}}
\newcommand{\bigom}{\Omega}
\newcommand{\poly}{\mathrm{poly}}
\let\underbar\undefined
\let\save@mathaccent\mathaccent
\newcommand*\if@single[3]{%
  \setbox0\hbox{${\mathaccent"0362{#1}}^H$}%
  \setbox2\hbox{${\mathaccent"0362{\kern0pt#1}}^H$}%
  \ifdim\ht0=\ht2 #3\else #2\fi
  }
\newcommand*\rel@kern[1]{\kern#1\dimexpr\macc@kerna}
\newcommand*\widebar[1]{\@ifnextchar^{{\wide@bar{#1}{0}}}{\wide@bar{#1}{1}}}
\newcommand*\underbar[1]{\@ifnextchar_{{\under@bar{#1}{0}}}{\under@bar{#1}{1}}}
\newcommand*\wide@bar[2]{\if@single{#1}{\wide@bar@{#1}{#2}{1}}{\wide@bar@{#1}{#2}{2}}}
\newcommand*\under@bar[2]{\if@single{#1}{\under@bar@{#1}{#2}{1}}{\under@bar@{#1}{#2}{2}}}
\newcommand*\wide@bar@[3]{%
  \begingroup
  \def\mathaccent##1##2{%
    \let\mathaccent\save@mathaccent
    \if#32 \let\macc@nucleus\first@char \fi
    \setbox\z@\hbox{$\macc@style{\macc@nucleus}_{}$}%
    \setbox\tw@\hbox{$\macc@style{\macc@nucleus}{}_{}$}%
    \dimen@\wd\tw@
    \advance\dimen@-\wd\z@
    \divide\dimen@ 3
    \@tempdima\wd\tw@
    \advance\@tempdima-\scriptspace
    \divide\@tempdima 10
    \advance\dimen@-\@tempdima
    \ifdim\dimen@>\z@ \dimen@0pt\fi
    \rel@kern{0.6}\kern-\dimen@
    \if#31
      \overline{\rel@kern{-0.6}\kern\dimen@\macc@nucleus\rel@kern{0.4}\kern\dimen@}%
      \advance\dimen@0.4\dimexpr\macc@kerna
      \let\final@kern#2%
      \ifdim\dimen@<\z@ \let\final@kern1\fi
      \if\final@kern1 \kern-\dimen@\fi
    \else
      \overline{\rel@kern{-0.6}\kern\dimen@#1}%
    \fi
  }%
  \macc@depth\@ne
  \let\math@bgroup\@empty \let\math@egroup\macc@set@skewchar
  \mathsurround\z@ \frozen@everymath{\mathgroup\macc@group\relax}%
  \macc@set@skewchar\relax
  \let\mathaccentV\macc@nested@a
  \if#31
    \macc@nested@a\relax111{#1}%
  \else
    \def\gobble@till@marker##1\endmarker{}%
    \futurelet\first@char\gobble@till@marker#1\endmarker
    \ifcat\noexpand\first@char A\else
      \def\first@char{}%
    \fi
    \macc@nested@a\relax111{\first@char}%
  \fi
  \endgroup
}
\newcommand*\under@bar@[3]{%
  \begingroup
  \def\mathaccent##1##2{%
    \let\mathaccent\save@mathaccent
    \if#32 \let\macc@nucleus\first@char \fi
    \setbox\z@\hbox{$\macc@style{\macc@nucleus}_{}$}%
    \setbox\tw@\hbox{$\macc@style{\macc@nucleus}{}_{}$}%
    \dimen@\wd\tw@
    \advance\dimen@-\wd\z@
    \divide\dimen@ 3
    \@tempdima\wd\tw@
    \advance\@tempdima-\scriptspace
    \divide\@tempdima 10
    \advance\dimen@-\@tempdima
    \ifdim\dimen@>\z@ \dimen@0pt\fi
    \rel@kern{0.6}\kern-\dimen@
    \if#31
      \underline{\rel@kern{-0.6}\kern\dimen@\macc@nucleus\rel@kern{0.4}\kern\dimen@}%
      \advance\dimen@0.4\dimexpr\macc@kerna
      \let\final@kern#2%
      \ifdim\dimen@<\z@ \let\final@kern1\fi
      \if\final@kern1 \kern-\dimen@\fi
    \else
      \underline{\rel@kern{-0.6}\kern\dimen@#1}%
    \fi
  }%
  \macc@depth\@ne
  \let\math@bgroup\@empty \let\math@egroup\macc@set@skewchar
  \mathsurround\z@ \frozen@everymath{\mathgroup\macc@group\relax}%
  \macc@set@skewchar\relax
  \let\mathaccentV\macc@nested@a
  \if#31
    \macc@nested@a\relax111{#1}%
  \else
    \def\gobble@till@marker##1\endmarker{}%
    \futurelet\first@char\gobble@till@marker#1\endmarker
    \ifcat\noexpand\first@char A\else
      \def\first@char{}%
    \fi
    \macc@nested@a\relax111{\first@char}%
  \fi
  \endgroup
}
\newcommand{\zm}[1]{\zmcomment{#1}}
		\let\Cref\crtCref
		\let\cref\crtcref
	\newcommand{\creftitle}[1]{\crtcref{#1}}
\let\OldStatex\Statex
\renewcommand{\Statex}[1][3]{%
	\setlength\@tempdima{\algorithmicindent}%
	\OldStatex\hskip\dimexpr#1\@tempdima\relax}
\title{Representation Learning with Multi-Step Inverse Kinematics:\\An
  Efficient and Optimal Approach to Rich-Observation RL}
\author{Zakaria Mhammedi\\{\small \texttt{mhammedi@mit.edu}} \and    Dylan J. Foster\\{\small \texttt{dylanfoster@microsoft.com}} \and Alexander Rakhlin\\{\small \texttt{rakhlin@mit.edu}}
}
\date{}
	\icmltitlerunning{Representation Learning with Multi-Step Inverse Kinematics}
\begin{document}	
\arxiv{
	\maketitle
}

\icml{
		\twocolumn[
	\icmltitle{Representation Learning with Multi-Step Inverse Kinematics:\\An
		Efficient and Optimal Approach to Rich-Observation RL}

\icmlsetsymbol{equal}{*}
	
	\begin{icmlauthorlist}
		\icmlauthor{Firstname1 Lastname1}{yyy}
		\icmlauthor{Firstname2 Lastname2}{comp}
		\icmlauthor{Firstname3 Lastname3}{yyy}
	\end{icmlauthorlist}
	
	\icmlaffiliation{yyy}{Department of XXX, University of YYY, Location, Country}
	\icmlaffiliation{comp}{Company Name, Location, Country}
	
	\icmlcorrespondingauthor{Firstname1 Lastname1}{first1.last1@xxx.edu}

	\icmlkeywords{Machine Learning, Reinforcement Learning, Representation Learning, Rich Observation, Inverse Kinematics}
	
	\vskip 0.3in
	]
	
	\printAffiliationsAndNotice{}  %
}
	
	\begin{abstract}
We study the design of sample-efficient algorithms for reinforcement
learning in the presence of rich, high-dimensional observations,
formalized via the
\emph{Block MDP} problem. Existing algorithms suffer from either 1) computational intractability, 2) strong statistical
assumptions that are not necessarily satisfied in practice, or 3)
suboptimal sample complexity. We address these issues by providing the first computationally efficient algorithm that attains
rate-optimal sample complexity with respect to the desired accuracy
level, with minimal statistical assumptions. Our algorithm, \musik,
combines systematic exploration with representation learning based on
\emph{multi-step inverse kinematics}, a learning objective in which
the aim is to predict the learner's own action from the current
observation and observations in the (potentially distant) future. \musik is simple and
flexible, and can efficiently take advantage of general-purpose
function approximation. Our analysis leverages several new techniques tailored to
non-optimistic exploration algorithms, which we anticipate will find broader use.

 	\end{abstract}
        
        \arxiv{
	\addtocontents{toc}{\protect\setcounter{tocdepth}{2}}
        {
          \hypersetup{hidelinks}
          \tableofcontents
        }
        }

	\section{Introduction}
        \label{sec:intro}

Many of the most promising application domains for reinforcement learning entail navigating unknown environments in the presence of complex, high-dimensional sensory inputs. For example, a challenging task in robotic control is to navigate to a goal state in a new, unmapped environment using only raw pixels from a camera as feedback \citep{baker2022video,bharadhwaj2022information}. Such tasks demand reinforcement learning agents capable of both 1) deliberate exploration, and 2) representation learning, as a means to learn from high-dimensional (``rich'') observations. In this context, a major challenge---in theory and practice---is to develop algorithms that are practical and sample-efficient, yet require minimal prior knowledge.

We study the design of sample-efficient algorithms for rich-observation reinforcement learning through a canonical model known as the \emph{Block MDP}  \citep{jiang2017contextual,du2019latent}. The Block MDP is a setting in which the \emph{observed} state space $\cX$ is high-dimensional (e.g., pixels from a camera), but the dynamics are governed by a small, finite \emph{latent} state space (e.g., a robot's actuator configuration). The key structural property of the Block MDP model, which makes the problem tractable statistically, is that the latent states can be uniquely \emph{decoded} from observations (avoiding issues of partial observability). However, the mapping from observations to latent states is not known in advance, necessitating the use of representation learning in tandem with exploration. As such, the Block MDP is appealing as a stylized testbed in which to study design of sample-efficient algorithms based on representation learning.

Algorithm design for the Block MDP is particularly challenging because representation learning and exploration are not only required, but must be \emph{interleaved}: learning a good representation is necessary to effectively control the agent and explore, but it is difficult to learn such a representation without exploring and gathering diverse feedback.
  In spite of extensive research into the design of algorithms with provable guarantees \citep{jiang2017contextual,du2019latent,misra2020kinematic,zhang2022efficient,uehara2022}, all existing algorithms suffer from one or more of the following drawbacks:
\begin{enumerate}
\item Computational intractability.
\item Strong statistical assumptions that are not necessarily satisfied in practice.
\item Suboptimal sample complexity.
\end{enumerate}
In more detail, computationally efficient algorithms can be split into two families. The first achieves rate-optimal sample complexity with respect to the desired accuracy level \citep{misra2020kinematic,modi2021model}, but their guarantees scale inversely proportional to a \emph{reachability} parameter which captures the minimum probability with which any state can be reached by a policy targeting it; when reachability is violated, these results give no guarantees. More recent approaches dispense with the reachability assumption \citep{zhang2022efficient}, but do not attain rate-optimal sample complexity.

\paragraph{Our contributions} We address issues (1), (2), and (3) by providing the first computationally efficient algorithm that attains rate-optimal sample complexity\footnote{We use the term ``rate-optimal'' to refer to optimality of the rate with respect to the accuracy parameter $\veps$, but not necessarily with respect to other parameters.} without reachability or other strong statistical assumptions\arxiv{ (\cref{tb:resultscomp})}\icml{ (\cref{tb:resultscomp} in \cref{sec:omitted})}. Our algorithm, \musik (``\texttt{Mu}lti-\texttt{s}tep \texttt{I}nverse \texttt{K}inematics''), interleaves exploration with representation learning based on \emph{multi-step inverse kinematics} \citep{lamb2022guaranteed}, a learning objective in which the aim is to predict the learner's own action from the current observation and observations in the (potentially distant) future.
\musik is simple and flexible: it can take advantage of general-purpose function approximation, and is computationally efficient whenever a standard supervised regression objective for the function class of interest can be solved efficiently. In a validation experiment, we find that it obtains comparable or superior performance to other provably efficient methods \citep{misra2020kinematic,zhang2022efficient}. \loose
\arxiv{

        \begin{table}[tp]
        	               \caption{Comparison of sample complexity required learn an $\veps$-optimal
        		policy.
        		For approaches that require a minimum
        		reachability assumption $\eta_{\min} \coloneqq \min_{s\in
        			\cS}\max_{\pi \in \Pim}d^{\pi}(s)$ denotes
        		the reachability parameter. $\Phi$ and $\Psi$ denote the decoder and model classes, respectively. 
        	}
        	\label{tb:resultscomp}
          \renewcommand{\arraystretch}{1.6}
		\fontsize{9}{10}\selectfont
		\centering 
		\begin{tabular}{ccccc}
			\hline
			& Sample complexity & Model-free &
                                                           Comp. efficient
                  &\makecell{Rate-optimal \\ $1/\veps^2$-sample
                  comp.} \\
			\hline
			$\olive$ \citep{jiang2017contextual} & $\frac{A^2 H^3 S^3 \ln |\Phi|}{\veps^2}$ & Yes & No & Yes \\
			$\texttt{MOFFLE}$ \citep{modi2021model} & $\frac{A^{13 }H^8 S^7 \ln |\Phi|}{(\veps^2\eta_{\min} \wedge \eta_{\min}^5)}$ & Yes& Yes &  No \\
			$\homer$ \citep{misra2020kinematic} &
			$\frac{A H S^6 (S^2 A^3+\ln |\Phi|)}{(\veps^2 \wedge \eta_{\min}^3)}$ &Yes & Yes &  No\\
			$\texttt{Rep-UCB}$ \citep{uehara2022} &$\frac{ A^2 H^5 S^4 \ln (|\Phi|\textcolor{red!70!black}{|\Psi|}) }{\veps^2}$ & No & Yes&  Yes \\
			$\texttt{BRIEE}$ \citep{zhang2022efficient} & $ \frac{A^{14} H^9 S^8 \ln |\Phi|}{\veps^4}$& Yes & Yes & No  \\
			$\musik$ (this paper) & $\frac{A^2 H^4 S^{10} (A S^3 + \ln |\Phi|)}{\veps^2}$& \textbf{Yes} &  \textbf{Yes} &  \textbf{Yes}\\
			\hline
		\end{tabular}
	\end{table}

 }

\paragraph {Organization}
\arxiv{
\pref{sec:setting} introduces the Block MDP setting and the online reinforcement learning framework, as well as necessary notation. In \cref{sec:multi}, we present our main algorithm, $\musik$, formally state its main guarantee for reward-free exploration, and discuss some of its implications. We also provide (\cref{sec:rewardsetting}) guarantees for \emph{reward-based} learning with \musik. In \cref{sec:overview}, we give an overview of the main analysis ideas behind \musik, and in \cref{sec:experiments} we present experimental results. We conclude with discussion and future directions in \cref{sec:discussion}. All proofs are deferred to the appendix unless otherwise stated.
}
\icml{
  \pref{sec:setting} introduces the Block MDP setting and the online reinforcement learning framework, as well as necessary notation. In \cref{sec:multi}, we present our main algorithm, $\musik$, formally state its main guarantee for reward-free exploration, and discuss some of its implications. In \cref{sec:key}, we give an overview of the main analysis ideas behind \musik, with a more thorough overview deferred to \pref{sec:overview}. Lastly, in \cref{sec:experiments} we present an experimental validation.\arxiv{ We conclude with discussion and future directions in \cref{sec:discussion}.} All proofs are deferred to the appendix unless otherwise stated.
}

\section{Problem Setting}
\label{sec:setting}

We consider an episodic finite-horizon reinforcement learning framework, with $H\in\bbN$ denoting the horizon. A Block MDP  $\cM=(\cX,\cS,\cA,T,q)$ consists of an \emph{observation space} $\cX$, \emph{latent state space} $\cS$, \emph{action space} $\cA$, \emph{latent space transition kernel} $T:\cS\times\cA\to\Delta(\cS)$, and \emph{emission distribution} $q:\cS\to\Delta(\cX)$ \citep{du2019latent}. For each layer $h\in\brk{H}$, the \emph{latent state} $\s_h\in\cS$ evolves in a Markovian fashion based on the agent's action $\a_h\in\cA$ via \icml{$ \s_{h+1} \sim{} T(\cdot\mid{}\s_h,\a_h)$, }\arxiv{
\begin{align}
  \label{eq:bmdp_latent}
  \s_{h+1} \sim{} T(\cdot\mid{}\s_h,\a_h),
\end{align}
}
with $\s_1\sim{}T(\cdot\mid{}\emptyset)$, where $T(\cdot\mid{}\emptyset)$ denotes the initial state distribution. The latent state is not observed directly. Instead, we observe \emph{observations} $\x_h\in\cX$ generated by the emission process
\begin{align}
  \x_{h} \sim {} q(\cdot\mid{}\s_h).\nn 
\end{align}
We assume that the latent space $\cS$ and action space $\cA$ are finite, with $S\ldef{}\abs{\cS}$ and $A\ldef{}\abs{\cA}$, but the observation space $\cX$ may be large (with $\abs{\cX}\gg\abs{\cS}$) or potentially infinite. The most important property of the BMDP model, which facilitates sample-efficient learning, is \emph{decodability}:
	\begin{align}
          \supp q(\cdot \mid s)\cap\supp q(\cdot \mid s')=\emptyset, \quad\forall s'\neq s\in\cS.
   \nn 
	\end{align}
        Decodability implies that latent states can be uniquely recovered from observations. In particular, there exists a (unknown to the agent) \emph{decoder} $\phi_\star\colon \cX \rightarrow \cS$ such that $\phi_\star(\x_h)=\s_h$ a.s. for all $h\in\brk{H}$.

        To simplify presentation and keep notation compact, we assume that the BMDP $\cM$ is \emph{layered} in the sense that $\cS = \cS_1\cup \dots\cup  \cS_H$ for $\cS_i \cap \cS_j=\emptyset$ for all $i\neq j$, where $\cS_h\subseteq \cS$ is the subset of states in $\cS$ that are reachable at layer $h\in[H]$. This comes with no loss of generality (up to dependence on $H$), as one can always augment the state space to include the layer index. We also define $\cX_h \coloneqq \bigcup_{s\in \cS_h} \supp q(\cdot \mid s)$, for all $h\in [H]$, and note that by decodability, we have that $\cX_i \cap \cX_j=\emptyset$, for all $i\neq j$.

        \paragraph{Online reinforcement learning and reward-free exploration}
        We consider the standard \emph{online reinforcement learning} framework in which the underlying BMDP $\cM$ is unknown, but the learning agent can interact with it by repeatedly executing a policy $\pi:\cX\to\cS$ (or, a potentially non-Markovian policy, as we will consider in the sequel) and observing the resulting trajectory $(\x_1,\a_1),\ldots,(\x_H,\a_H)$. We do not assume that a reward function is given. Instead, we aim to perform the more general problem of \emph{reward-free exploration}, which entails learning a collection of policies that covers the latent state space to the greatest extent possible \citep{du2019latent,misra2020kinematic,efroni2021provably}.

        In more detail, we consider the reward-free exploration task of learning an \emph{approximate policy cover}, which is a collection of policies which can reach any latent state with near-optimal probability. To formalize this notion, for $s\in\cS_h$, we let $d^{\pi}(s)\ldef\bbP^{\pi}\brk{\s_h=s}$ denote the probability of reaching state $s$ when executing a policy $\pi$, and let $\Pim\coloneqq \left\{\pi \colon \bigcup_{h=1}^H  \cX_h \rightarrow \cA\right\}$ be the set of all Markovian policies.
		\begin{definition}[Approximate policy cover]
		\label{def:polcover101}
                A collection of policies $\Psi$ is an $(\alpha,\veps)$-policy cover for layer $h$ if for all  $s\in \cS_h$ such that $\max_{\pi \in \Pim} d^{\pi}(s)  \geq \veps$, we have
		\begin{align}
			\max_{\pi \in \Psi} d^{\pi}(s)\geq  \alpha \cdot \max_{\pi' \in \Pim} d^{\pi'}(s). \nn 
		\end{align}
              \end{definition}
Informally, an $(\alpha,\veps)$-policy cover $\Psi$ has the property that for every state $s\in\cS$ that can be reached with probability at least $\veps$, there exists a policy in $\Psi$ that reaches it with probability at least $\alpha\cdot\veps$. For our results, $\alpha$ will be a numeric constant (say, $1/2$), and $\veps$ will be a parameter to the algorithm. We show (\pref{sec:rewardsetting}) that given access to such a policy cover, it is possible to optimize any downstream reward function to precision $\bigoh(\veps)$.

\arxiv{
\begin{remark}
  When $\veps=0$, \pref{def:polcover101} recovers the policy cover definition used in \citep{misra2020kinematic}. The relaxed notion of a policy cover in \cref{def:polcover101}, which allows one to ``sacrifice'' states that are hard to reach with any policy (i.e.~those states $s$ for which $\max_{\pi \in \Pim} d^{\pi}(s)  < \veps$), is natural in our setting, as we do not assume that all states can be reached with some minimum probability.
\end{remark}
}
\icml{
  When $\veps=0$, \pref{def:polcover101} recovers the policy cover definition used in \cite{misra2020kinematic}. The relaxed notion of a policy cover in \cref{def:polcover101}, which allows one to ``sacrifice'' states that are hard to reach with any policy (i.e.~those states $s$ for which $\max_{\pi \in \Pim} d^{\pi}(s)  < \veps$), is natural in our setting, as we do not assume that all states can be reached with some minimum probability.
}

        \paragraph{Function approximation}
To provide sample-efficient learning guarantees, we make use of function approximation. In particular, we do not assume that the true decoder $\phistar:\cX\to\cS$ is known to the learner and, as in prior work \citep{du2019latent,misra2020kinematic, zhang2022efficient}, assume access to a \emph{decoder class} $\Phi\subseteq(\cX\to\cS)$ that contains $\phistar$.
	\begin{assumption}[Realizability]
		\label{assum:real}
		The decoder class $\Phi\subseteq \{\phi \colon \cX \rightarrow\cS\}$ contains the true decoder $\phistar$.
              \end{assumption}
              The class $\Phi$ captures the learner's prior knowledge about the environment, and may consist of neural networks or other flexible function approximators. To simplify presentation, we assume that $\Phi$ is finite; as our results only invoke standard uniform convergence arguments, extension to infinite classes and other notions of statistical capacity is straightforward \citep{misra2020kinematic}.
              We aim to learn an $(\alpha,\veps)$-policy cover (for constant $\alpha$) using a number of episodes/trajectories (``sample complexity'') that scales with\arxiv{
              \[
                \poly(H,S,A,\log\abs{\Phi}) \cdot 1/\veps^{2}.
                \]}
\icml{$\poly(H,S,A,\log\abs{\Phi}) \cdot 1/\veps^{2}$.
}
Notably, this guarantee depends on the number of latent states $S$ and the complexity $\log\abs{\Phi}$ for the decoder class, but does not explicitly depend on the size of the observation space $\cX$.

\subsection{Preliminaries}
\renewcommand{\o}{\bm{o}}
We proceed to introduce additional notation required to present our main results. Most important will be the notion of \emph{partial policies}, both Markovian and non-Markovian. For any $n,m\in\mathbb{N}$, we denote by $[m\ldotst{}n]$ the integer interval $\{m,\dots, n\}$. We also let $[n]\coloneqq [1\ldotst{}n]$. Further, for any sequence of objects $o_1, o_2,\dots$, we define $o_{m:n}\coloneqq (o_{i})_{i\in[m \ldotst n]}$.

\paragraph{Partial policies} A \emph{partial policy} is a policy that is defined only over a contiguous subset of layers $\brk{l\ldotst{}r}\subseteq\brk{H}$. We let $\Pim^{l:r} \coloneqq \left\{\pi \colon \bigcup_{h=l}^r  \cX_h \rightarrow \cA\right\}$ be the set of \emph{Markovian partial} policies that are defined over layers $l$ to $r$. For a policy $\pi\in\Pim^{l:r}$ and layer $h\in\brk{l\ldotst{}r}$, $\pi(x_h)$ denotes the action taken by the policy at layer $h$ when $x_h\in\cX_h$ is the current observation. We will use the notation $\Pim \equiv \Pim^{1:H}$.

We also consider \emph{non-Markov} (history-dependent) partial policies. For $1\leq l\leq r\leq H$, we let 
	\begin{align} \Pinm^{l:r} \coloneqq \left\{\pi : \bigcup_{h=l}^r (\cX_{l}\times \dots \times \cX_h) \rightarrow \cA\right\}\nn 
\end{align}
denote the set of non-Markovian partial policies that are defined over layers $l$ to $r$. The action of a partial policy $\pi\in\Pinm^{l:r}$ is only defined for layers $h\in\range{l}{r}$, but may depend on the entire history of observations $x_{l:h}=x_l,\ldots,x_h$ beginning from layer $l$. In particular, for layer $h\in\range{l}{r}$, $\pi(x_{l:h})$ denotes the policy's action when $x_{l:h}\in\cX_l\times\cdots\times\cX_h$ is the history. \icml{For any $1\leq t\leq h\leq H$, and any pair of partial policies $\pi \in \Pinm^{1:t-1}, \pi'\in \Pinm^{t:h}$, we let $\pi \circ_t \pi'$ be the partial policy in $\Pinm^{1:h}$ that satisfies $(\pi \circ_t \pi')(x_{1:\tau}) = \pi(x_{1:\tau})$ for all $\tau<t$ and $(\pi \circ_t \pi')(x_{1:\tau}) = \pi'(x_{t:\tau})$ for all $\tau \in [t\ldotst h]$. We define $\pi \circ_t \pi'$ similarly when $\pi \in \Pinm^{1:\tau}$ for $\tau\geq t$.}

 \arxiv{
 \paragraph{Composition of partial policies} For any $1\leq t\leq h\leq H$, and any pair of partial policies $\pi \in \Pinm^{1:t-1}, \pi'\in \Pinm^{t:h}$, we let $\pi \circ_t \pi'$ be the partial policy in $\Pinm^{1:h}$ that satisfies $(\pi \circ_t \pi')(x_{1:\tau}) = \pi(x_{1:\tau})$ for all $\tau<t$ and $(\pi \circ_t \pi')(x_{1:\tau}) = \pi'(x_{t:\tau})$ for all $\tau \in [t\ldotst h]$. We define $\pi \circ_t \pi'$ similarly when $\pi \in \Pinm^{1:\tau}$ for $\tau\geq t$.

\paragraph{BMDP notation and occupancy measures} 
Given any policy $\pi $ and BMDP $\cM$, we denote by $\P^{\cM,\pi}$ the probability law over $\{(\s_h , \x_h, \a_h)\colon h\in[H]\}$ induced by executing $\pi$ in $\cM$. We let $\E^{\cM,\pi}$ denote the corresponding expectation. For any $h \in [H]$ and $s\in \cS_h$, we denote by $d^{\cM,\pi}(s) \coloneqq \P^{\cM,\pi}[\s_h=s]$ the \emph{occupancy} of $s$ under $\pi$. We drop the $\cM$ superscript when the underlying BMDP is clear from context. %

 \paragraph{Further notation}
 Given a set of partial policies $\Psi \coloneqq \{\pi^{(i)}\colon i \in [N]\}$, we denote by $\unif(\Psi)$ the random partial policy obtained by sampling $\bi\sim \unif([N])$ and playing $\pi^{(\bi)}$. We overload notation slightly and denote by $\unifa$ the random policy that plays actions in $\cA$ uniformly at random at all layers. We use the notation $\wtilde{O}(1)$ to hide polylogarithmic factors in $H,S,A, \ln |\Phi|$, and $\veps^{-1}$.
}

\icml{
	\paragraph{Further notation} 
	Given any policy $\pi $ and BMDP $\cM$, we denote by $\P^{\cM,\pi}$ the probability law over $\{(\s_h , \x_h, \a_h)\colon h\in[H]\}$ under the process induced by executing $\pi$ in $\cM$. We let $\E^{\cM,\pi}$ denote the corresponding expectation. For any $h \in [H]$ and $s\in \cS_h$, we denote by $d^{\cM,\pi}(s) \coloneqq \P^{\cM,\pi}[\s_h=s]$ the \emph{occupancy} of $s$ under $\pi$. We drop the $\cM$ superscript when clear from the context. Given a set of partial policies $\Psi \coloneqq \{\pi^{(i)}\colon i \in [N]\}$, we denote by $\unif(\Psi)$ the random partial policy obtained by sampling $\bi\sim \unif([N])$ and playing $\pi^{(\bi)}$. We overload notation slightly and denote by $\unifa$ the random policy that plays actions in $\cA$ uniformly at random at all layers. We use the notation $\wtilde{O}(1)$ to hide poly-logarithmic factors in $H,S,A, \ln |\Phi|$, and $\veps^{-1}$.
}

\arxiv{	\section{Multi-Step Inverse Kinematics: Algorithm and Main Results}	}
\icml{	\section{Algorithm and Main Results}	}
	\label{sec:multi}

We now present our algorithm, \musik, and prove that it efficiently learns a policy cover with rate-optimal $\mathrm{poly}(H, S,A,\ln |\Phi|) \cdot 1/\veps^2$ sample complexity. First, in \pref{sec:challenges}, we highlight the challenges faced in achieving similar guarantees with existing approaches, with an emphasis on difficulties removing a statistical assumption known as \emph{reachability}. With this out of the way, we introduce the \musik algorithm (\pref{sec:musik}) and give an overview of its main performance guarantee and key features (\pref{sec:main_theorem}).
\arxiv{Finally, in \cref{sec:rewardsetting}, we show how to use the policy cover produced by \musik to perform reward-based reinforcement learning with any reward function of interest.
}
\icml{Extensions to reward-based reinforcement learning are deferred to \cref{sec:rewardsetting}.
}

\subsection{Challenges and Related Work}
\label{sec:challenges}

For the Block MDP model, the optimal sample complexity to learn an $\veps$-optimal policy or learn an $(\alpha,\veps)$-approximate policy cover for constant $\alpha$ scales with $1/\veps^2$.\footnote{An $\bigom(1/\veps^2)$ lower bound on the sample complexity follows from standard lower bounds for tabular RL \citep{jin2020provably}.} Previous approaches---both for reward-free and reward-based exploration---either achieve this rate, but are not computationally efficient, or only achieve it under additional statistical assumptions that may not be satisfied in general. To motivate the need for new algorithm design and analysis ideas, let us highlight where these challenges arise.

Existing algorithms can be broken into two families, \emph{optimistic algorithms}, and algorithms that are not optimistic, but require \emph{reachability} conditions. Optimistic algorithms use the principle of \emph{optimism in the face of uncertainty} to drive exploration. Implementing optimism in the BMDP setting is challenging because the latent states are not observed, which prevents the naive application of state-action exploration bonuses found in tabular RL \citep{azar2017minimax,jin2018q}. An alternative is to appeal to \emph{global optimism}, which computes an optimistic policy by optimizing over a \emph{version space} of plausibly-optimal value functions. This approach enjoys rate-optimal sample complexity \citep{jiang2017contextual,du2021bilinear,jin2021bellman}, but cannot be implemented efficiently in general because it requires searching for value functions that satisfy non-convex constraints at all layers $h\in\brk{H}$ simultaneously (``globally'') \citep{dann2018oracle}.

As a tractable replacement for global optimism, a more recent line of algorithms implement optimism using a \emph{plug-in} approach which computes layer-wise bonuses with respect to an \emph{estimated decoder}. First, \citet{uehara2022} show that under the stronger assumption that the learner has access to a realizable \emph{model class}, it is possible to learn a decoder for which the plug-in approach attains rate-optimal sample complexity; this observation, while interesting, falls short of a model-free guarantee that scales only with $\log\abs{\Phi}$. More recently, \citet{zhang2022efficient} observed that similar results can be achieved with only decoder realizability by appealing to a certain min-max representation learning objective.\footnote{\citet{modi2021model} employ a similar representation learning objective, but require a minimum reachability assumption.} However, this objective involves a form of adversarial training that increases the sample complexity, leading to a final guarantee that scales with $1/\veps^{4}$ instead of $1/\veps^2$.

Given the challenges faced by optimistic approaches, an alternative is to do away with optimism entirely. Algorithms from this family \citep{du2019provably,misra2020kinematic} proceed in a forward fashion: They first solve a representation learning objective which enables building a policy cover for layer $2$. Then, using this policy cover, they explore to collect data that can be used to solve a similar representation learning objective for layer $3$, then use this to build a policy cover for layer $3$, and so on. A-priori, a natural concern is that the myopic nature of these step-by-step approaches might lead to approximation errors that compound exponentially as a function of the horizon $H$. To avoid, this, existing work \citep{du2019provably,misra2020kinematic} makes a \emph{minimum reachability assumption}.
	\begin{definition}[Minimum reachability]
\label{def:reachability}
There exists $\etamin>0$ such that for all $h\in[H]$ and $s\in \cS_{h}$, there exists $\pi \in\Pim$ such that $d^{\pi}(s)\geq \etamin$.
\end{definition}
Reachability is a useful assumption because it ensures that for every possible state $s$ in the latent space, we can learn a policy that can reach $s$ with sufficiently high probability (say, with probability at least $\etamin/2$), which prevents errors from cascading as one moves forward from layer $h$ to layer $h+1$. The best algorithm from this family, \homer, attains sample complexity that is proportional to $1/\veps^2$, but scales inversely proportional to the reachability parameter $\etamin$, and provides no guarantees when $\etamin=0$. Prior to our work, it was not known whether any algorithm based on the non-optimistic layer-by-layer approach could succeed at all in the absence of reachability, let alone achieve rate-optimal sample complexity. \arxiv{We refer to \pref{tb:resultscomp} for a summary}\icml{We refer to \pref{tb:resultscomp} in \pref{sec:omitted} for a summary}.

\subsection{The \musik Algorithm}
\label{sec:musik}
\arxiv{
        \begin{algorithm}[ht]
          \caption{$\musik$: Multi-Step Inverse Kinematics}
          \label{alg:GenIk}
          \begin{algorithmic}[1]\onehalfspacing
            \Require Decoder class $\Phi$. Number of samples $n$.
            \State Set $\Psi\ind{1}= \emptyset$.
            \For{$h=2\ldots, H$} 
            \State Let
            $\Psi\ind{h}=\ikdp(\Psi\ind{1},\dots,\Psi\ind{h-1},
            \Phi,n)$.\quad\algcommentlight{\pref{alg:IKDP}.}
            \label{line:ikdp1}
            \EndFor
            \State \textbf{Return:} Policy covers $\Psi\ind{1},\dots,\Psi\ind{H}$. 
          \end{algorithmic}
	\end{algorithm}
}
\icml{
        \begin{algorithm}[ht]
	\caption{$\musik$: Multi-Step Inverse Kinematics}
	\label{alg:GenIk}
	\begin{algorithmic}[1]\onehalfspacing
		\Require Decoder class $\Phi$. Number of samples $n$.
		\State Set $\Psi\ind{1}= \emptyset$.
		\For{$h=2\ldots, H$} 
		\State \hbox{Let
		$\Psi\ind{h}=\ikdp(\Psi\ind{1},\dots,\Psi\ind{h-1},
		\Phi,n)$\hfill\algcommentlight{Alg.\,\ref{alg:IKDP}}}
		\label{line:ikdp1}
		\EndFor
		\State \textbf{Return:} Policy covers $\Psi\ind{1},\dots,\Psi\ind{H}$. 
	\end{algorithmic}
\end{algorithm}
}

Our main algorithm, \musik, is presented in \pref{alg:GenIk}. $\musik$ performs reward-free exploration, iteratively building approximate
        policy covers $\Psi\ind{1},\ldots,\Psi\ind{H}$ for layers $h=1,\dots, H$. The algorithm first gathers data from the initial state distribution, and uses this to learn a policy cover $\Psi\ind{2}$ for layer $2$ (we adopt the convention that $\Psi\ind{1}=\emptyset$). The algorithm then collects data using $\Psi\ind{2}$, and uses this to build an approximate
        policy cover $\Psi\ind{3}$ for layer $3$, and so on. Once layer $H$ is reached, the algorithm returns $\Psi\ind{1},\ldots,\Psi\ind{H}$. The crux of the \musik algorithm is a subroutine, \ikdp (Inverse Kinematics for Dynamics Programming, \pref{alg:IKDP}) which, at each step $h$, makes use of the previous policy covers $\Psi\ind{1},\ldots,\Psi\ind{h-1}$ to compute the policy cover $\Psi\ind{h}$. In what follows, we give a detailed overview of \ikdp.

        \paragraph{The \ikdp subroutine}
        For each $h\in\brk{H}$, the \ikdp subroutine (\pref{alg:IKDP}) uses the policy covers $\Psi\ind{1},\ldots,\Psi\ind{h-1}$ to construct the policy cover $\Psi\ind{h}$ for layer $h$ in a backwards fashion inspired by dynamic programming: Beginning from layer $h-1$, the algorithm builds a collection of partial policies $\crl{\hat
          \pi\ind{i,h-1}}_{i\in\brk{S}}\in \Pim^{h-1:h-1}$ using data collected by rolling in with $\Psi\ind{h-1}$; each policy $\pihat\ind{i,h-1}$ is responsible for targeting a single latent state in layer $h$. The algorithm then moves back one layer, and constructs a collection
        $\crl{\pihat \ind{i,h-2}}_{i\in\brk{S}}\in \Pim^{h-2:h-1}$ using data collected by rolling in with $\Psi\ind{h-2}$ and rolling out using the collection $\crl{\hat
          \pi\ind{i,h-1}}_{i\in\brk{S}}$. This process is repeated until the first layer is reached, and the final collection of policies $\Psi\ind{h}=\crl{\pihat \ind{i,1}}_{i\in\brk{S}}$ is returned. The key invariant maintained throughout this process is that for all layers $t\in\brk{h-1}$, for every latent state $s\in\cS_h$, there exists a partial policy in the set $\crl{\pihat \ind{i,t}}_{i\in\brk{S}}$ that reaches $s$ with near-optimal probability starting from layer $t$ (in a certain average-case sense).

        \paragraph{Multi-step inverse kinematics objective}
        For each layer $t\in\brk{h-1}$, given the partial policies $\crl{\pihat \ind{i,t+1}}_{i\in\brk{S}}$ from the previous backward step, \ikdp computes the collection $\crl{\pihat \ind{i,t}}_{i\in\brk{S}}$ by appealing to a regression objective (\pref{line:inversenew2}) based on \emph{multi-step inverse kinematics} \citep{lamb2022guaranteed}. To motivate the approach, we recall that a significant challenge faced in the BMDP setting is that the latent states are not directly observed. Were not the case, it would be possible to build a policy cover by directly optimizing ``visitation'' reward functions of the form $\br_h^{(s)} \coloneqq \mathbb{I}\{\s_h=s\}$ for each $s\in\cS_h$ (this can be accomplished using standard methods such as \psdp \citep{bagnell2003policy,misra2020kinematic}). As an alternative, one can think of $\ikdp$ as constructing proxies for the state-action value functions ($Q$-functions) associated with the (unobserved) reward functions $\br_h^{(s)}$ for each $s\in\cS_h$. These proxies are constructed using the objective in \cref{line:inversenew2}, which involves predicting actions from observations at different layers (multi-step inverse kinematics).

        In more detail, for each backward iteration $t\in[h-1]$, $\ikdp$ samples $\pi \sim \Psi\ind{t}$, executes $\pi$ up to layer $t$, plays a random action $\a_t \sim \unifa$, then selects a random index $\bi_t\sim \unif([S])$ and executes $\pihat ^{(\bi_t,t+1)} \in \Pinm^{t+1:h-1}$ from layer $t+1$ onward. The regression objective in \cref{line:inversenew2} then uses this data to estimate the conditional density for the pair $(\a_t, \bi_t)$, conditioned on the observations $\x_t$ and $\x_h$. This estimate for the conditional density acts as a proxy for the $Q$-functions associated with the unobserved visitation reward functions $\br_h^{(s)}$ described above. Thanks to the decodability property of the BMDP model, it can be shown that the Bayes-optimal solution to the regression objective in \cref{line:inversenew2} depends on observations only through latent states. This allows us to parameterize the objective using the decoder class $\Phi$, which is key to achieving low sample complexity.

        \paragraph{Policy composition}
        After solving the multi-step inverse kinematics objective in \cref{line:inversenew2}, \ikdp uses the resulting decoder $\phihat\ind{t}$ and function $\fhat\ind{t}$ to build the set of partial policies $\crl{\pihat \ind{i,t}}_{i\in\brk{S}}$ from the set $\crl{\pihat \ind{i,t+1}}_{i\in\brk{S}}$ produced at the previous backward step (\pref{line:second,line:nonmark}). Here, the challenge is that there is no way to know which policy $\crl{\pihat \ind{i,t+1}}_{i\in\brk{S}}$ is responsible for targeting a given state $s\in \cS_h$ due to non-identifiability. We address this using a \emph{non-Markovian} policy construction in \pref{line:second,line:nonmark}, which we now describe.

        Recall that the objective in \cref{line:inversenew2} predicts both actions and \emph{indices of roll-out policies}. Predicting the indices of roll-out policies offers a mechanism to associate partial policies at successive layers. To do so, \pref{line:second} of \ikdp defines
        \[
        (\ahat\ind{i,t}(x), \iotahat\ind{i,t}(x))=
        \argmax_{(a,j)} \fhat\ind{t}((a,j) \mid
        \phihat\ind{t}(x),i), \quad x\in \cX_t.
      \]
      One should interpret $j=\iotahat\ind{i,t}(x)$ as the \emph{most likely} (or most closely associated) roll-out policy $\pihat\ind{j,t+1}$ when the (decoded) latent state at layer $h$ is $i\in\brk{S}$ and $x\in\cX_t$ is the current observation at layer $t$. Meanwhile, the action $\ahat\ind{i,t}(x)$ (approximately) maximizes the probability of reaching $i$ if we roll out with $\pihat\ind{j,t+1}$.
      With this in mind, the composition rule in \pref{line:nonmark} constructs $\pihat\ind{i,t}$ via
        \[
        \hat  \pi^{(i,t)}(x_{t:\tau})\coloneqq\left\{
        \begin{array}{ll}
        	\ahat\ind{i,t}(x_t),&\quad \tau=t,\;\;
        	x_t\in\cX_t,\\
        	\pihat
        	^{(\iotahat\ind{i,t}(x_t),t+1)}(x_{t+1:\tau}),&\quad\tau\in\range{t+1}{h-1},\;\;
        	x_{t:\tau}\in \cX_t\times \dots \times\cX_{\tau}.
        \end{array}
        \right.
      \]
      That is, for layers $t+1,\ldots,h-1$, this construction follows the policy $\pihat\ind{\iotahat\ind{i,t}(\x_t),t+1}$ which---per the discussion above---is most associated with the decoded state $i\in\brk{S}$. At layer $t$, we select $\a_t=\ahat\ind{i,t}(\x_t)$, maximizing the probability of reaching the decoded latent state $i\in\brk{S}$ when we roll-out with $\pihat\ind{\iotahat\ind{i,t}(\x_t),t+1}$. This construction, while intuitive, is non-Markovian, since for layers $t+1$ and onward the policy depends on $\x_t$ through $\iotahat\ind{i,t}(\x_t)$.
        
        \loose

We refer to \pref{sec:overview} for a detailed overview of the analysis ideas behind \ikdp, as well as further intuition. %

\arxiv{
	\begin{algorithm}[t]
		\caption{$\texttt{IKDP}$:  Inverse Kinematics for
                  Dynamic Programming }
		\label{alg:IKDP}
		\begin{algorithmic}[1]\onehalfspacing
                  \Require ~
                  \begin{itemize}[leftmargin=*]
                    \item Approximate covers
                      $\Psi\ind{1},\dots,\Psi\ind{h-1}$ for layers $1$
                      to $h-1$, where $\Psi\ind{t} \subseteq\Pinm^{1:t-1}$.
                    \item Decoder class
                      $\Phi$.
                      \item Number of samples $n$.
                  \end{itemize}
			\For{$t=h-1,\ldots, 1$} \label{line:mainiter}
			\State $\cD\ind{t}\leftarrow \emptyset$.
                        \Statex[1]\algcommentbiglight{Collect data by rolling in with
                          policy cover and rolling out with partial policy}
			\For{$n$ times}
			\State Sample $\bi_t \sim \unif([S])$. \label{line:index}
			\State Sample $(\x_t, \a_t, \x_{h})\sim \unif(\Psi\ind{t}) \circ_t \unifa\circ_{t+1} \pihat ^{(\bi_t,t+1)}$. \algcommentlight{with $\pihat ^{(i,h)} \coloneqq \unifa$, for all $i\in[S]$} \label{line:action}
			\State $\cD\ind{t} \leftarrow \cD\ind{t}\cup \{(\bi_t,\a_t, \x_t, \x_{h})\}$.
			\EndFor
                        \Statex[1] \algcommentbiglight{Inverse kinematics}
			\State Solve
                        \begin{equation}
                    \fhat\ind{t}, \phihat\ind{t}\gets \argmax_{f:  [S]^2 \rightarrow
                   \Delta(\cA\times [S]) , \phi\in  \Phi}
                 \sum_{(j,a,x,x')\in \cD\ind{t}} \ln   f( (a,j) \mid
                 \phi(x), \phi(x')).
                 \end{equation}
                 \label{line:inversenew2}
                 \Statex[1] \algcommentbiglight{Update partial policy cover}
                 \State For each $i\in \brk{S}$, define
                 \begin{equation}
                 (\ahat\ind{i, t}(x), \iotahat\ind{i, t}(x))=
                   \argmax_{(a,j)} \fhat\ind{t}((a,j) \mid
                   \phihat\ind{t}(x),i), \quad x\in \cX_t.
                   \end{equation}
                 \label{line:second}
			\State  For each $i\in [S]$, define $\hat
                        \pi^{(i, t)}\in \Pinm^{t:h-1}$  via
\label{line:nonmark}
                        \begin{align}
				\hat  \pi^{(i, t)}(x_{t:\tau})\coloneqq \left\{
                          \begin{array}{ll}
                            \ahat\ind{i, t}(x_t),&\quad \tau=t,\;\;
                                                  x_t\in\cX_t,\\
                            \pihat
                            ^{(\iotahat\ind{i, t}(x_t),t+1)}(x_{t+1:\tau}),&\quad\tau\in\range{t+1}{h-1},\;\;
                                                                   x_{t:\tau}\in \cX_t\times \dots \times\cX_{\tau}.
                          \end{array}
\right.\label{eq:partial}
\end{align}
			\EndFor
			\State \textbf{Return:} Policy cover
                        $\Psi\ind{h}=\{\hat\pi^{(i,1)}\colon i
                        \in[S]\}\subseteq \Pinm^{1:h-1}$ for layer $h$. \label{line:line}
		\end{algorithmic}
              \end{algorithm}
              
       }

\icml{
	\begin{algorithm}[t]
		\caption{$\texttt{IKDP}$:  Inverse Kinematics for
			Dynamic Programming }
		\label{alg:IKDP}
		\begin{algorithmic}[1]\onehalfspacing
			\Require ~ Approximate covers
				$\Psi\ind{1},\dots,\Psi\ind{h-1}$ for layers $1$
				to $h-1$, where $\Psi\ind{t} \subseteq\Pinm^{1:t-1}$. Decoder class
				$\Phi$. Number of samples $n$.
			\For{$t=h-1,\ldots, 1$} \label{line:mainiter}
			\State $\cD\ind{t}\leftarrow \emptyset$.
			\For{$n$ times}
			\State Sample $\bi_t \sim \unif([S])$ and set $\pihat = \pihat ^{(\bi_t,t+1)}$. \label{line:index}
			\State Sample $(\x_t, \a_t, \x_{h})\sim \P^{\unif(\Psi\ind{t}) \circ_t \unifa\circ_{t+1}\pihat }$. \label{line:action}
			\State $\cD\ind{t} \leftarrow \cD\ind{t}\cup \{(\bi_t,\a_t, \x_t, \x_{h})\}$.
			\EndFor
			\Statex[1] \algcommentbiglight{Inverse kinematics}
			\State 	\label{line:inversenew2} Compute the solution $(\fhat\ind{t},
			\phihat\ind{t})$ of the problem
				\vspace{-0.3cm}
			\begin{align}
			\max_{f\in \cF, \phi\in  \Phi}
				\sum_{(j,a,x,x')\in \cD\ind{t}} \ln   f( (a,j) \mid
				\phi(x), \phi(x')),
			\end{align}
			\vspace{-0.5cm}
		\Statex[1] where $\cF\coloneqq [S]^2 \rightarrow
		\Delta(\cA\times [S])$. 
			\Statex[1] \algcommentbiglight{Update partial policy cover}
			\State \label{line:second} For each $i\in \brk{S}$ and $x\in \cX_t$ define
				\vspace{-0.3cm}
			\begin{align}
				& (\ahat\ind{i, t}(x), \iotahat\ind{i, t}(x))\notag\\ &\quad  =
			\argmax{}_{(a,j)} \fhat\ind{t}((a,j) \mid
				\phihat\ind{t}(x),i).
			\end{align}
		\vspace{-0.8cm}
			\State \label{line:nonmark} For each $i\in [S]$, $\tau \in \range{t+1}{h-1}$, $x_{t:\tau}\in  \bigtimes_{k=t}^\tau \cX_{k}$, 
			\Statex[1] define $\hat
			\pi^{(i, t)}\in \Pinm^{t:h-1}$ via $\hat  \pi^{(i, t)}(x_{t})=	\ahat\ind{i, t}(x_t)$
			\Statex[1] and $\hat  \pi^{(i, t)}(x_{t:\tau})=\pihat
			^{(\iotahat\ind{i, t}(x_t),t+1)}(x_{t+1:\tau})$.
		\EndFor
		\State \textbf{Return:} Layer $h$ cover
		$\Psi\ind{h}=\{\hat\pi^{(i,1)}\}_{i
		\in[S]}\subseteq \Pinm^{1:h-1}.$ %
	\end{algorithmic}
\end{algorithm}

}

\paragraph{On inverse kinematics}

\musik can be viewed as generalizing the notion of \emph{one-step inverse kinematics} to multiple steps. One-step inverse kinematics, which aims to predict the action $\a_h$ from $\x_h$ and $\x_{h+1}$, has been explored in a number of empirical works \citep{pathak2017curiosity,badia2020agent57,baker2022video,bharadhwaj2022information}. In theory, however, it can be shown that this approach can fail to meaningfully recover latent state information \citep{misra2020kinematic,efroni2021provably}. In particular, it is prone to incorrectly merging latent states with different dynamics. Multi-step inverse kinematics generalizes one-step inverse kinematics by predicting $\a_h$ from $\x_h$ and $\x_{h'}$ for all possible choices for $h'>h$. Recent work of \citet{lamb2022guaranteed} observed that---in the infinite-data limit---multi-step inverse kinematics can rectify the issues with one-step IK, and enjoys other benefits including robustness to exogenous information. Our work is the first to provably combine multi-step inverse kinematics with systematic exploration to derive finite-sample guarantees.\footnote{The work of \cite{efroni2021provably} also makes use of multi-step inverse models, but is limited to deterministic systems. \citet{mhammedi2020learning} also uses a form of multi-step inverse kinematics in the context of linear control with rich observations, but their approach is specialized to the linear setting.}

\begin{remark}
  $\ikdp$ also bears some similarity to the $\psdp$ algorithm (see \citet{bagnell2003policy,misra2020kinematic} and \cref{alg:PSDP}), and uses the principle of dynamic programming in a similar fashion. Unlike $\psdp$, $\ikdp$ does not require feedback from an external reward function, and can be thought of as automatically discovering its own reward function to drive exploration.
\end{remark}

        \paragraph{Efficient implementation}
        \musik is practical, and is computationally efficient whenever the standard log-loss conditional density estimation problem\arxiv{
        \[
\fhat\ind{t} ,
                 \phihat\ind{t}\gets \argmax_{f\in\cF , \phi\in \Phi}
                 \sum_{(j,a,x,x')\in \cD\ind{t}} \ln   f( (a,j) \mid
                 \phi(x), \phi(x')).
        \]
    }
on \pref{line:inversenew2} of \ikdp can be solved efficiently for the decoder class $\Phi$ of interest.\arxiv{\footnote{We also note that the log-loss conditional density estimation objective in $\musik$ can be replaced by standard supervised square-loss regression objectives without changing the guarantee of $\musik$ in \cref{thm:policycover}.}}
      In practice, $\Phi$ and $\cF\ldef{}[S]^2 \rightarrow
        \Delta(\cA\times [S])$ can both be approximated with neural networks or other flexible function classes, and the conditional density estimation problem in \cref{line:inversenew2} can be solved by appealing to stochastic gradient descent or other off-the-shelf training procedures; this is the approach taken in our experiments (\pref{sec:experiments}).%

Let us also remark on the complexity of representing and executing the partial policies $\crl*{\pihat\ind{i,t}\colon  i\in\brk{S},t\in\brk{h-1}}$ computed in \pref{line:nonmark} of \ikdp. These policies are non-Markovian, which presents a problem at first glance, since general non-Markovian policies in a horizon-$H$ MDP with $S$ states require a table of size $S^{H}$ to represent. Fortunately, the non-Markovian policies in \ikdp are quite structured, and can be represented and executed with runtime and memory complexity that is polynomial in $H$ instead of exponential;\arxiv{ see \cref{alg:gen} for pseudocode}\icml{ see \cref{alg:gen} in \cref{sec:omitted} for pseudocode}. In particular, the partial policies for layer $h$ can be fully represented using $O(H)$ memory via the collection of functions $\{(\hat f\ind{t}, \hat \phi\ind{t})\colon t \in[h-1]\}$ learned in \cref{line:inversenew2} of \cref{alg:IKDP} (assuming that, for $t\in[h-1]$, storing $(\fhat\ind{t},\phihat\ind{t})$ requires $O(1)$ memory). One can then execute the partial policies to generate a trajectory using $O(H S A)$ runtime,\arxiv{\footnote{$O(SA)$ work is required to compute $\argmax_{(a,j)} \fhat\ind{\tau}((a,j) \mid  k,i)$, for $\tau \in[t\ldotst h-1]$ and $i,k \in[S]$. This is needed in Line \ref{line:argmax} of \cref{alg:gen}}} \icml{(assuming that evaluating $\phihat\ind{t}(x)$ costs $O(1)$ units of time).}\arxiv{assuming that evaluating $\phihat\ind{t}(x)$ costs $O(1)$ units of time for all $t\in[h-1]$ and $x\in \cX_t$.}

\subsection{Main Result}
\label{sec:main_theorem}

                We now state the main guarantee for $\musik$ (proven in \cref{app:geniklemmaproof}) and discuss some of its implications. 

	\begin{theorem}[Main theorem for \musik]
		\label{thm:policycover}
		Let $\veps,\delta \in(0,1)$ be given. Suppose that \cref{assum:real} holds, and that $n$ is chosen such that \[
		n \geq \frac{c A^2 S^{10} H^2 \left(S^3 A \ln n+ \ln (|\Phi| H^2/\delta)\right)}{\veps^2},\] for some absolute constant $c>0$ independent of all problem parameters. Then, with probability at least $1-\delta$, the policies $\Psi\ind{1},\dots,\Psi\ind{H}$ produced by \musik (\pref{alg:GenIk}) are $(1/4,\veps)$-policy covers for layers 1 to $H$. The total number of trajectories used by the algorithm is at most
\begin{align}
 \wtilde{O}(1)\cdot 
 \frac{A^2 S^{10} H^4 \left( A S^3 + \ln (|\Phi| H^2/\delta)\right)}{\veps^2}. \label{eq:trajectories} 
\end{align}
	\end{theorem}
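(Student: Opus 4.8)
The plan is to prove \cref{thm:policycover} by a two-level induction. The \emph{outer} induction is over the layer index $h$: we show that whenever $\Psi^{(1)},\dots,\Psi^{(h-1)}$ are $(1/4,\veps)$-policy covers for layers $1,\dots,h-1$, a single invocation $\Psi^{(h)}=\ikdp(\Psi^{(1)},\dots,\Psi^{(h-1)},\Phi,n)$ with the stated $n$ returns $\Psi^{(h)}$ that is again a $(1/4,\veps)$-policy cover for layer $h$, except on an event of probability at most $\delta/H^2$. Since $\Psi^{(1)}=\emptyset$ is trivially a cover for layer $1$, and $\musik$ makes $H-1$ calls to $\ikdp$, the $t$-th of which solves $t-1\le H-1$ inverse-kinematics regressions, a union bound over the at most $\binom{H}{2}=O(H^2)$ regressions solved in total upgrades this to the claimed high-probability guarantee for all of $\Psi^{(1)},\dots,\Psi^{(H)}$ simultaneously; the total trajectory count is $\binom{H}{2}\cdot n=O(H^2 n)$, which matches \eqref{eq:trajectories} after substituting the stated lower bound on $n$ and absorbing the $\ln n$ term into $\wtilde O(1)$.

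The heart of the argument is the analysis of one $\ikdp$ call, carried out by an \emph{inner} backward induction over the dynamic-programming index $t=h-1,\dots,1$. I would maintain the invariant that the collection $\{\pihat^{(i,t)}\}_{i\in[S]}$ is an approximate ``partial cover'' for reaching layer $h$ in the average-case sense described in \cref{sec:musik}: for every $s\in\cS_h$ with $\max_{\pi\in\Pim}d^\pi(s)\ge\veps$, some $\pihat^{(i_s,t)}\in\Pinm^{t:h-1}$ has the property that, preceded by a roll-in from $\unif(\Psi^{(t)})$ up to layer $t$, it reaches $s$ at layer $h$ with occupancy within an \emph{additive} error of the best occupancy achievable under the same roll-in, the error growing by only $\poly(S,A)\cdot(\epsstat+\veps)$ per backward step. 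Two lemmas drive the induction. First, a \textbf{statistical lemma}: the tuples in $\cD^{(t)}$ are i.i.d.\ from $\P^{\unif(\Psi^{(t)})\circ_t\unifa\circ_{t+1}\pihat^{(\bi_t,t+1)}}$, and by decodability the Bayes-optimal predictor of $(\a_t,\bi_t)$ from $(\x_t,\x_h)$ factors through $(\phistar(\x_t),\phistar(\x_h))$, hence lies in $\cF\circ\Phi$ by \cref{assum:real}; a standard log-loss maximum-likelihood generalization bound (with the statistical complexity of $\cF$, a product of $S^2$ simplices on $\cA\times[S]$, contributing the $S^3A\ln n$ term) then gives, with probability $1-\delta/H^2$, an in-expectation squared-Hellinger bound between $\fhat^{(t)}\!\circ\!\phihat^{(t)}$ and the true conditional law of order
\[
\epsstat^2 \;\asymp\; \frac{S^3 A \ln n + \ln(|\Phi| H^2/\delta)}{n}.
\]
Second, a \textbf{decoding-to-policy lemma}: using Bayes' rule together with the uniform sampling of $\a_t$ and $\bi_t$, the true conditional probability of $(a,j)$ given $(\x_t,\x_h)$ is proportional to $\P^{\pihat^{(j,t+1)}}[\s_h=\phistar(\x_h)\mid\s_{t+1}\sim T(\cdot\mid\phistar(\x_t),a)]$; translating the Hellinger bound through this identity shows that the pair $(\ahat^{(i,t)},\iotahat^{(i,t)})$ extracted in \cref{line:second} nearly maximizes the probability of reaching the decoded target $i\in[S]$ at layer $h$ over all action/roll-out-policy pairs, on the sub-population of layer-$t$ states visited by $\unif(\Psi^{(t)})$ with probability $\gtrsim\veps/S$. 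Combining this with the composition rule of \cref{line:nonmark} and the level-$(t+1)$ invariant propagates the invariant to level $t$; unrolling to $t=1$ and using $\Psi^{(h)}=\{\pihat^{(i,1)}\}_{i\in[S]}$ turns the $t=1$ invariant into a $(1/4,\veps)$-policy-cover guarantee for layer $h$, provided $n$ is large enough that the accumulated error $H\cdot\poly(S,A)\cdot\epsstat\lesssim\veps$, which is exactly the stated bound on $n$.

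The main obstacle is controlling error propagation across the backward recursion \emph{without} a reachability assumption; this is precisely the difficulty the ``sacrificing hard-to-reach states'' relaxation in \cref{def:polcover101} is designed to handle, and it is where the bulk of the analysis lies. The subtlety is that the level-$(t+1)$ invariant only certifies a good roll-out policy when layer-$(t+1)$ states are reached \emph{via} $\unif(\Psi^{(t+1)})$, whereas in the backward step for level $t$ the roll-out distribution at layer $t+1$ is the one induced by $\unif(\Psi^{(t)})\circ_t\unifa$; reconciling these requires using that $\Psi^{(t+1)}$ is itself a policy cover for layer $t+1$, so that every layer-$(t+1)$ state reachable with probability $\ge\veps$ is visited by $\Psi^{(t+1)}$ with comparable probability, together with a careful decomposition of the target occupancy at layer $h$ that discards at each step only states whose contribution is $O(\veps)$, so the discarded mass telescopes to $O(H\veps)$ and is absorbed. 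A secondary, more routine difficulty is the non-identifiability of which $\pihat^{(i,t+1)}$ targets which latent state: this is why the regression in \cref{line:inversenew2} predicts the roll-out \emph{index} jointly with the action, so that the association $\iotahat^{(i,t)}$ used in \cref{line:nonmark} is by construction consistent with the decoder $\phihat^{(t)}$ that the same regression produces, and the non-Markovian partial policies remain polynomially representable.
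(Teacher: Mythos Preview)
Your plan has a genuine gap in the treatment of non-reachability, which is the central difficulty of the theorem. You propose to run the outer induction with the hypothesis ``$\Psi^{(1)},\dots,\Psi^{(h-1)}$ are $(1/4,\veps)$-policy covers relative to $\Pim$ in $\cM$'', and to handle states $s_t$ not certified by $\Psi^{(t)}$ (i.e., those with $\max_\pi d^\pi(s_t)<\veps$) by observing that their contribution to the performance-difference sum at step $t$ is at most $d^{\pi_\star}(s_t)<\veps$, so that the total discarded mass over the $h-1$ backward steps is $O(H\veps)$ and ``is absorbed''. But this arithmetic does not close the induction. To show $\Psi^{(h)}$ is a $(1/4,\veps)$-cover for layer $h$, you must show, for every target $s\in\cS_h$ with $\max_\pi d^\pi(s)\ge\veps$, that $\max_i d^{\hat\pi^{(i,1)}}(s)\ge\tfrac14\max_\pi d^\pi(s)$. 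Since $\max_\pi d^\pi(s)$ can be as small as $\veps$, the total error in the performance-difference decomposition must be at most $\tfrac34\veps$; an error of order $H\veps$ (or $HS\veps$, counting up to $S$ hard-to-reach states per layer) is too large by a factor of $H$. There is no telescoping here---each backward step contributes a fresh $O(S\veps)$ discard, and these add rather than cancel---so the induction fails for any $H\ge 2$.

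The paper's proof circumvents this via two constructions that your plan omits: the \emph{extended BMDP} $\cMbar$ (augmented with terminal states and a terminal action $\afrak$) and the \emph{truncated policy class} $\Pibar_\epsilon$, whose policies all play $\afrak$ on states $s_t\notin\cS_{t,\epsilon}$. The outer induction hypothesis is changed to ``$\Psi^{(1)},\dots,\Psi^{(h-1)}$ are $(1/2,\epsilon)$-covers \emph{relative to $\Pibar_\epsilon$ in $\cMbar$}'' with $\epsilon=\veps/(2S)$. Because the comparator $\pi_\star\in\Pibar_\epsilon$ now takes the terminal action on every $s_t\notin\cS_{t,\epsilon}$, the $Q$-function for the visitation reward satisfies $Q_t(s_t,\pi_\star(s_t))=Q_t(s_t,\afrak)=0$, so the contribution of such states to the performance-difference sum is \emph{exactly zero}, not $O(\veps)$; this is what prevents the error from compounding across layers. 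The approximation cost of restricting to $\Pibar_\epsilon$ is paid once, after the induction is complete: a structural lemma shows $\max_{\pi\in\Pibarm}\dbar^\pi(s)\le\max_{\pi\in\Pibar_\epsilon}\dbar^\pi(s)+S\epsilon$ (here a genuine telescoping occurs, over the recursive definition of $\Pibar_{t,\epsilon}$, yielding $\sum_t|\cS_t|\epsilon=S\epsilon$ rather than $HS\epsilon$), and a transfer lemma then converts the $(1/2,\epsilon)$-cover relative to $\Pibar_\epsilon$ into the claimed $(1/4,\veps)$-cover relative to $\Pim$. Without this machinery the induction you describe cannot be closed.
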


        \pref{thm:policycover} is the first sample complexity guarantee for the BMDP setting that 1) is attained by an efficient algorithm, 2) does not scale with the reachability parameter $\etamin$, and 3) attains rate-optimal $1/\veps^2$ sample complexity. Previous efficient BMDP algorithms such as $\texttt{MOFFLE}$ or $\homer$ have sample complexity scaling with $1/\veps^2 \cdot \text{poly}(1/\eta_{\min })$, where $\etamin \coloneqq \min_{s\in\cS} \sup_{\pi \in \Pim} d^{\pi}(s)$ is the reachability parameter, and do not provide guarantees if $\etamin=0$. More recents results \citep{zhang2022efficient} do not require $\etamin>0$, but have suboptimal dependence on $\veps$. We remark that the dependence on the problem-dependent parameters $S$, $A$, and $H$ in our result is loose, and improving this with an efficient algorithm is an interesting open question; other efficient algorithms have similarly loose dependence, per \pref{tb:resultscomp}.

        \paragraph{Practicality}
        As discussed in the prequel, \musik is computationally efficient whenever the standard conditional density estimation problem in \pref{line:inversenew2} of \ikdp can be solved efficiently for the decoder class $\Phi$ of interest, allowing for the use of off-the-shelf models and estimation algorithms; in experiments (\pref{sec:experiments}), we appeal to deep neural networks and stochastic gradient descent.

        From prior work, the only other computationally-efficient (and model-free) algorithm that does not require minimum reachability in BMDPs is $\texttt{BRIEE}$ \citep{zhang2022efficient}.  The log-loss conditional density estimation objective in $\musik$ is somewhat simpler than the min-max representation learning objective in $\texttt{BRIEE}$, with the latter necessitating adversarial training.%

        \arxiv{
        \paragraph{Proof techniques}
        We find it somewhat surprising that \musik attains rate-optimal sample complexity in spite of forgoing optimism. The proof of \pref{thm:policycover}, which we sketch  in \pref{sec:overview},
        has two main components. For the first component, we prove that the multi-step inverse kinematics objective learns a decoder that can be used to drive exploration; this formalizes the intuition in \pref{sec:musik}. With this established, proving that \musik succeeds under minimum reachability (\pref{def:reachability}) is somewhat straightforward, but proving that the algorithm 1) succeeds in  absence of this assumption, and 2) achieves optimal sample complexity is more involved. For this component of the proof, we use a new analysis tool we refer to as an \emph{extended BMDP} which, in tandem with another tool we refer to as a \emph{truncated policy class}, allows one to emulate certain consequences of reachability even when the condition does not hold. These techniques, which we anticipate will find broader use in the analysis of non-optimistic algorithms, appear to be new even for tabular reinforcement learning.
        }

\arxiv{
 \arxiv{
 	\begin{algorithm}[t]
 	\caption{Execute non-Markov partial policy
          produced by $\musik$.}
 	\label{alg:gen}
 	\begin{algorithmic}[1]\onehalfspacing
 		\Require 
                ~
        \begin{itemize}[leftmargin=*]
        \item Indices $t,h\in[H]$ such that $t<h$.
        \item Index $i\in\brk{S}$.
          \hfill\algcommentlight{Index for policy
            $\pihat\ind{i,t}\in\Pinm^{t:h}$ produced in \cref{line:nonmark} of \cref{alg:IKDP}.}
                    \item                 Initial observation $x_t\in
        \cX_t$.
                      \item Functions $(\hat f\ind{t},\phihat\ind{t}),\dots, (\hat
        f\ind{h-1}, \phihat\ind{h-1})$ produced in 
        \cref{line:inversenew2} of \cref{alg:IKDP}.
                  \end{itemize}
 	    \State Set $\bj_{t-1} = i$.
 		\For{$\tau=t,\ldots, h-1$}
 		\State Compute $(\a_\tau, \bm{j}_\tau)\in \argmax_{(a,j)} \fhat\ind{\tau}((a,j) \mid \phihat\ind{\tau}(\x_\tau),\bm{j}_{\tau-1})$. \label{line:argmax}
 		\State Play action $\a_\tau$ at layer $\tau$ and
                observe $\x_{\tau+1}$.
 		\EndFor
                \State \textbf{Return:} Partial trajectory
                $(\a_{t:h-1},\x_{t:h})$ generated by $\pihat\ind{i,t}\in
                \Pinm^{t:h-1}$ (\cref{line:nonmark}).
 	\end{algorithmic}
 \end{algorithm}	

}

\icml{
 
\begin{algorithm}[ht]
	\caption{Execute non-Markov partial policy
		produced by $\musik$.}
	\label{alg:gen}
	\begin{algorithmic}[1]\onehalfspacing
		\Require 
		~Indices $t,h\in [H]$ and $i\in\brk{S}$ (Indexes policy
			$\pihat\ind{i,t}\in\Pinm^{t:h-1}$ produced in \cref{line:nonmark} of \cref{alg:IKDP}).
		Initial observation $x_t\in
		\cX_t$. Functions $(\hat f\ind{t},\phihat\ind{t}),\dots, (\hat
		f\ind{h-1}, \phihat\ind{h-1})$ produced in 
		\cref{line:inversenew2}.
		\State Set $\bj_{t-1} = i$.
		\For{$\tau=t,\ldots, h-1$}
		\State $(\a_\tau, \bm{j}_\tau)\gets \argmax\limits_{(a,i)} \fhat\ind{\tau}((a,i) \mid \phihat\ind{\tau}(\x_\tau),\bm{j}_{\tau-1})$ \label{line:argmax}
		\State Play action $\a_\tau$ at layer $\tau$ and
		observe $\x_{\tau+1}$.
		\EndFor
		\State \textbf{Return:} Partial trajectory
		$(\a_{t:h-1},\x_{t:h})$ generated by $\pihat\ind{i,t}\in
		\Pinm^{t:h-1}$ (\cref{line:nonmark} of \cref{alg:IKDP}).
	\end{algorithmic}
\end{algorithm}

}

 }

\arxiv{
 \subsection{Application to Reward-Based RL: Planning with an Approximate Cover}
 \label{sec:rewardsetting} 
\arxiv{
 To conclude the section, we show how the policy cover learned by $\musik$ can be used to optimize any downstream reward function of interest.}
\icml{In this section, we show how the policy cover learned by $\musik$ can be used to optimize any downstream reward function of interest. }
 For the results that follow, we assume that at each layer $h\in\brk{H}$, the learner observes a reward $\br_h\in\brk*{0,1}$ in addition to the observation $\x_h\in\cX$, so that trajectories take the form $(\x_1,\a_1,\br_1),\ldots,(\x_H,\a_H,\br_H)$.  We will make the following standard BMDP assumption \citep{misra2020kinematic,zhang2022efficient}, which asserts that the mean reward function depends only on the latent state, not the full observation.
\begin{assumption}[Realizability]
	\label{assum:reward}
	For all $h\in[H]$, there exists $\rbar_h\colon \cS \times \cA \rightarrow [0,1]$ such that $\E[\br_h \mid \x_h=x,\a_h=a]=\rbar_h(\phi_\star(x),a)$. 
\end{assumption}

\begin{algorithm}
	\caption{$\texttt{PSDP}$: Policy Search by Dynamic Programming
		(variant of \citet{bagnell2003policy})}
	\label{alg:PSDP}
	\begin{algorithmic}[1]\onehalfspacing
		\Require Policy cover $\Psi\ind{1},\dots,\Psi\ind{H}$. Decoder class $\Phi$. Number of samples $n$.
		\For{$h=H, \dots, 1$} 
		\State $\cD\ind{h} \gets\emptyset$. 
		\For{$n$ times}
		\State Sample $(\x_h, \a_h, \bm{r}_{h:H})\sim
		\unif(\Psi\ind{h})\circ_h \unif (\cA) \circ_{h+1} \pihat \ind{h+1}$.
		\State Update dataset: $\cD\ind{h} \gets \cD\ind{h} \cup \left\{\left(\x_h, \a_h, \sum_{t =h}^H \br_{t}\right)\right\}$.
		\EndFor
		\State Solve regression:
		\[(\fhat\ind{h},\phihat\ind{h}) \gets\argmin_{f \colon [S]\times \cA\rightarrow [0,H-h +1], \phi \in \Phi}  \sum_{(x, a, R)\in\cD} (f(\phi(x),a)-R)^2.\] \label{eq:mistake}
		\State Define $\pihat\ind{h}\in\Pim^{h:H}$ via
		\[
		\pihat \ind{h}(x) = \left\{
		\begin{array}{ll}
			\argmax_{a\in \cA} \fhat\ind{h}(\phihat\ind{h}(x),a),&\quad x\in\cX_h,\\
			\pihat\ind{h}(x),& \quad x\in \cX_{t},\;\;  t\in [h+1 \ldotst H].
		\end{array}
		\right.
		\]
		\EndFor
		\State \textbf{Return:} Near-optimal policy $\pihat \ind{1}\in \Pim$. 
	\end{algorithmic}
\end{algorithm}

\paragraph{The PSDP algorithm}
To optimize rewards, we take a somewhat standard approach and appeal to a variant of the Policy Search by Dynamic Programming (\psdp) algorithm of \citet{bagnell2003policy,misra2020kinematic}. $\psdp$ uses the approximate policy cover produced by \musik as part of a dynamic programming scheme, which constructs a near-optimal policy in a layer-by-layer fashion. In particular, starting from layer $H$, $\psdp$ first constructs a partial policy $\hat
\pi\ind{H}\in \Pim^{H:H}$ using data collected with $\Psi\ind{H}$, then moves back a layer and constructs a partial policy
$\pihat \ind{H-1}\in \Pim^{H-1:H}$ using data collected with $\Psi\ind{H-1}$ and $\pihat\ind{H}$,
and so on, until the first layer is reached. The variant of $\psdp$ we present here differs slightly from the original version in \cite{bagnell2003policy,misra2020kinematic}, with the main difference being that instead of using a policy optimization sub-routine to compute the policy for each layer, we appeal to least-squares regression (see \cref{eq:mistake} of \cref{alg:PSDP}) to estimate a $Q$-function, and then select the greedy policy this function induces.

The following result, proven in \cref{app:PSDPthmproof}, provides the main sample complexity guarantee for \psdp.\footnote{This result does not immediately follow from prior work \citep{misra2020kinematic} because it allows for an $(\alpha,\veps)$-policy cover with $\veps>0$; previous work only handles the case where $\veps=0$.}
\begin{theorem}
	\label{thm:PSDPthm}
	Let $\alpha$, $\veps$, $\delta \in(0,1)$ be given. Suppose that \cref{assum:real,assum:reward} hold, and that for all $h\in\brk{H}$:
	\begin{enumerate}
		\item $\Psi\ind{h}$ is a $(\alpha, \eps)$-approximate cover for layer $h$, where $\eps\coloneqq \veps/(2SH^2)$.
		\item $|\Psi\ind{h}|\leq S$.
	\end{enumerate}
	Then, for appropriately chosen $n\in\bbN$, the policy $\pihat\ind{1}$ returned by \pref{alg:PSDP} satisfies
	\begin{align}
		\E^{\pihat\ind{1}}\left[\sum_{h=1}^H \br_h\right]\geq    \max_{\pi \in \Pim}  \E^{\pi}\left[\sum_{h=1}^H \br_h\right] - \veps \nn
	\end{align}
	with probability at least $1-\delta$. Furthermore, the total number of sampled trajectories used by the algorithm is bounded by
	\begin{align}
	\nn
		\bigoht(1)\cdot \frac{ H^5 S^6 (S A + \ln (|\Phi|/\delta))}{ \alpha^2 \veps^2}.
	\end{align}   
\end{theorem}

\paragraph{Sample complexity to find an $\veps$-suboptial policy with $\musik+\psdp$} From \cref{thm:PSDPthm}, to find an $\veps$-suboptimal policy, $\psdp$ requires an $(\alpha,\eps)$-approximate cover for all layers, where $\eps \ldef \veps/(2 SH^2)$. Focusing only on dependence on the accuracy parameter $\veps$, it follows from the results in \pref{sec:main_theorem} that $\musik$ can generate an $(1/4, \eps)$-approximate cover using $\wtilde O(1/\veps^2)$ trajectories (see \eqref{eq:trajectories}). Thus, the total number of trajectories required to find an $\veps$-suboptimal policy in reward-based RL using $\musik+\psdp$ scales with $\wtilde O(1/\veps^2)$. To the best of our knowledge, this is the first computationally efficient approach that gives $\bigoht(1/\veps^2)$ sample complexity for reward-based reinforcement learning in BMDPs (without reachability).

 }

\arxiv{
		\section{Overview of Analysis}
	\label{sec:overview}
\icml{
  In this section, we give an overview of the analysis of our main result, \cref{thm:policycover}, with the full proof deferred to  \cref{sec:BMDP}. First, in \cref{sec:warmup} we show how to analyze a simplified version of \musik for the \emph{tabular} setting in which the state $\s_h$ is directly observed. Then, in \cref{sec:block}, we build on these developments to give a proof sketch for the full Block MDP setting.
  }

\arxiv{
  In this section, we give an overview of the analysis of our main result for \musik, \cref{thm:policycover}, with the full proof deferred to  \cref{sec:BMDP}. First, in \cref{sec:keytools} we introduce two analysis tools, the \emph{extended BMDP} and \emph{truncated policy class}, which play a key role in providing tight guarantees for \musik (and more broadly, non-optimistic algorithms) in the absence of minimum reachability. Then, as a warmup (\cref{sec:warmup}), we show how to analyze a simplified version of \musik for the \emph{tabular} setting in which the state $\s_h$ is directly observed (i.e., $\cX=\cS$ and $\bx_h=\bs_h$ almost surely). Finally, in \cref{sec:block}, we build on these developments to give a proof sketch for the full Block MDP setting.
  }

\arxiv{\subsection{Key Analysis Tools: Extended BMDP and Truncated Policy Class}
\label{sec:keytools}
Recall that \musik proceeds by inductively building a sequence of policy covers $\Psi\ind{1},\ldots,\Psi\ind{H}$. A key invariant maintained by the algorithm is that for each layer $h$, the previous covers $\Psi\ind{1},\ldots,\Psi\ind{h-1}$ provide good coverage for layers $1,\ldots,h-1$, and thus can be used to efficiently gather data to build the next cover $\Psi\ind{h}$. Prior approaches that build policy covers in this inductive fashion \citep{du2019provably,misra2020kinematic} require the assumption of minimum reachability (\pref{def:reachability}) to ensure that for each $h$, $\Psi\ind{h}$ \emph{uniformly} covers all possible states in $\cS_h$. In the absence of reachability, we inevitably must sacrifice certain hard-to-reach states, which necessitates a more refined analysis. In particular, we must show that the effects of ignoring hard-to-reach states at earlier layers do not compound as the algorithm proceeds forward.

To provide such an analysis, we make use of a tool we refer to as the \emph{extended BMDP} $\Mbar$. The extended BMDP $\Mbar$ augments $\cM$ by adding a set of $H$ \emph{terminal states} $\tfrak_{1:H}$ and one additional \emph{terminal action} $\afrak$ as follows:
	\begin{enumerate}
		\item The latent state space is $\wbar \cS \coloneqq \bigcup_{h=1}^H \wbar \cS_h$, where $\wb{\cS}_h\ldef\cS_h\cup\crl{\tfrak_h}$. 
		\item The action space is $\cAbar\ldef\cA \cup\crl{\afrak}$. Here, $\afrak$ is a ``terminal action'' that causes the latent state to deterministically transition to $\term_{h+1}$ from every state at layer $h\in[H-1]$.
		\item For $h\in[H-1]$, taking any action in $\wbar\cA$ at latent state $\tfrak_h$ transitions to $\tfrak_{h+1}$ deterministically.\footnote{The reason we introduce $H$ states $\tfrak_{1:H}$ instead of a single self-looping state $\tfrak$ is to keep the convention that the state space is layered.}
                \end{enumerate}
            The dynamics of $\cMbar$ (including the initial state distribution) are otherwise identical to $\cM$. We assume the state $\tfrak_h$ emits itself as an observation and we write $\wb{\cX}_h \coloneqq  \cX_h \cup\{\tfrak_h\}$, for all $h\in[H]$. We will use the convention that $\phi(\tfrak_h)=\tfrak_h$, for all $\phi\in\Phi$ and $h\in[H]$. For any policy $\pi \in \Pibar_{\nm} \coloneqq \left\{\pi \colon \bigcup_{h=1}^H (\wbar\cX_{1}\times \dots \times \wbar\cX_h) \rightarrow \Abar\right\}$, we define 
	\begin{align*}
		\Pbar^{\pi}\coloneqq \P^{\Mbar,\pi}, \quad  \Ebar^{\pi}\coloneqq \E^{\Mbar,\pi},\quad  \text{and} \quad \dbar^{\pi}(s)\coloneqq \P^{\Mbar,\pi}[\s_h = s], \quad \text{for all } s\in \cS_h \text{ and } h \in[H].
	\end{align*}
	\paragraph{Truncated policy class} On its own, the extended BMDP is not immediately useful. The main idea behind our analysis is to combine it with a restricted sub-class of policies we refer to as the \emph{truncated policy class.} Define $\Pibarm  \coloneqq \{ \pi\colon \bigcup_{h=1}^H \wbar\cX_h \rightarrow \Abar\}$. For $\epsilon \in(0,1)$, we define a sequence of policy classes $\Pibar_{0, \epsilon},\dots,\Pibar_{H, \epsilon}$, inductively, starting from $\Pibar_{0, \epsilon}=\Pibarm$:
	\begin{align}
		\pi \in \Pibar_{t, \epsilon} & \iff \nn \\  &  \hspace{-.1cm} \exists \pi'\in \Pibar_{t-1, \epsilon}: \forall h \in[H], \forall s\in \wbar\cS_h, \forall x\in \phi^{-1}_\star(s), \ 	\pi(x) = \left\{\begin{array}{ll} \fraka, & \text{if } h=t \text{ and } \max_{\tilde \pi\in \Pibar_{t-1, \epsilon}} \dbar^{\tilde \pi}(s)< \epsilon, \\   \pi'(x), & \text{otherwise}.    \end{array}    \right.  \label{eq:define0}
	\end{align}
	Restated informally, the class $\Pibar_{t,\eps}$ is identical to $\Pibar_{t-1,\eps}$, except that at layer $t$, all policies in the class $\Pibar_{t,\eps}$ take the terminal action $\afrak$ in latent states $s$ for which $\max_{\tilde \pi\in \Pibar_{t-1, \epsilon}} \dbar^{\tilde \pi}(s)< \epsilon$.
	
	We define the \emph{truncated policy class} as $\Pibar_{\epsilon} \coloneqq \Pibar_{H,\epsilon}$. The truncated policy class satisfies two fundamental technical properties. First, by construction, all policies in the class take the terminal action $\afrak$ when they encounter states that are not $\eps$-reachable by $\Pibar_\eps$. 	Second, in spite of the fact that policies in $\Pibar_{\epsilon}$ always take the terminal action on states with low visitation probability, they can still achieve near-optimal visitation probability for all states in $\cMbar$ (up to additive error). The following lemmas formalize these properties.
	\begin{lemma}[Behavior on low-reachability states]
		\label{lem:bla}
		Let $\epsilon\in(0,1)$ be given, and define
		\begin{align}
		  \label{eq:truncated_reachable}
			\cS_{h,\epsilon} \ldef  \crl*{  s\in \cS_h  \colon	\max_{\pi \in \Pibar_{\epsilon}}\dbar^{\pi}(s) \geq \epsilon}.
		\end{align}
		Then, for all $h \in[H]$ if $s\in \cS_{h }\setminus\cS_{h,\epsilon}$, then for all $\pi\in\Pibar_\eps$, $\pi(x)=\fraka$ for all $x\in \phi^{-1}_\star(s)$.
	\end{lemma}
	\begin{lemma}[Approximation for truncated policies]
		\label{lem:critical}
		Let $\epsilon\in(0,1)$ be given. For all $h \in[H]$ and $s\in \cS_h$, 
		\begin{align}
		         \label{eq:truncated_approximation}
			\max_{\pi \in \Pibarm} \dbar^{\pi}(s) \leq \max_{\pi \in \Pibar_{\epsilon}} \dbar^{\pi}(s) +S \epsilon.
		\end{align}
	\end{lemma}
	The proofs for these results (and other results in this subsection) are elementary, and are given in \pref{app:structural}.       
	Building on these properties, our proof of \pref{thm:policycover} makes use of two key ideas:
	\begin{enumerate}
		\item Even though the extended BMDP $\cMbar$ does not necessarily enjoy minimum reachability (\pref{def:reachability}), if we restrict ourselves to competing against policies in $\Pibar_\eps$, \pref{lem:bla} will allow us to ``emulate'' certain properties enjoyed by $\eps$-reachable BMDPs. This in turn will imply that if are satisfied with learning a policy cover with good coverage ``relative'' to $\Pibar_\eps$, \pref{alg:GenIk} will succeed.
		\item By \pref{lem:critical}, we lose little by restricting our attention to the class $\Pibar_\eps$. This will allow us to transfer any guarantees we achieve with respect to the extended BMDP $\cMbar$ and truncated policy class $\Pibar_\eps$ back to the original BMDP $\cM$ and unrestricted policy class $\Pim$.
	\end{enumerate}
	We make the first point precise in the sections that follow (\cref{sec:warmup,sec:block}). Before proceeding, we formalize the second point via another technical result, \pref{lem:transfer}. To do so, we introduce the notion of a \emph{relative policy cover}.
	\begin{definition}[Relative policy cover]
		\label{def:polcover}
		Let $\alpha, \veps\in[0,1)$ be given. Consider a BMDP $\cM'$, and let $\Pi$ and $\Psi$ be two sets of policies. We say that $\Psi$ is an $(\alpha,\veps)$-policy cover relative to $\Pi$ in $\cM'$ for layer $h$ if
		\begin{align}
			\max_{\pi \in \Psi} d^{\cM',\pi}(s) \geq  \alpha \cdot \max_{\pi \in \Pi} d^{\cM',\pi}(s) \quad \text{for all } \  s\in \cS_h \ \text{ such that }\  \max_{\pi \in \Pi} d^{\cM',\pi}(s)  \geq \veps. \nn
		\end{align}
	\end{definition} 
	\begin{lemma}[Policy cover transfer]
	\label{lem:transfer}
		Let $\veps\in(0,1)$ be given, and define $\eps\ldef\veps/2S$.
		Let $\Psi$ be a set of policies for $\cMbar$ that never take the terminal action $\afrak$. If $\Psi$ is a $(1/2,\epsilon)$-policy cover relative to $\Pibar_\epsilon$ in $\Mbar$ for all layers, then $\Psi$ is a $(1/4,\veps)$-policy cover relative to $\Pim$ in the $\cM$ for all layers.
	\end{lemma}
	\pref{lem:transfer} implies that for any $\veps$, letting $\eps\ldef{}\veps/2S$, if we can construct a set $\Psi$ that acts as a
	$(1/2,\eps)$-policy cover relative to
	$\Pibar_{\epsilon}$ in $\Mbar$, then $\Psi$ will also be a $(1/4,\veps)$-policy cover relative to $\Pim$ in the original BMDP $\cM$, as desired. This allows us to restrict our attention to the former goal going forward. }

	\subsection{Warm-Up: Multi-Step Inverse Kinematics for Tabular MDPs}
	\label{sec:warmup}
        \begin{algorithm}[htp]
          \caption{$\musiktab$: Multi-Step Inverse Kinematics (tabular variant)}
          \label{alg:musik_tabular}
          \begin{algorithmic}[1]\onehalfspacing
            \Require Number of samples $n$.
            \State Set $\Psi\ind{1}= \emptyset$.
            \For{$h=2\ldots, H$} 
            \State Let
            $\Psi\ind{h}=\ikdptab(\Psi\ind{1},\dots,\Psi\ind{h-1},
            n)$.\quad\algcommentlight{\pref{alg:IKDP-tab}.}
            \label{line:ikdp1}
            \EndFor
            \State \textbf{Return:} Policy covers $\Psi\ind{1},\dots,\Psi\ind{H}$. 
          \end{algorithmic}
	\end{algorithm}

In this section, we use the extended BMDP, truncated policy class, and relevant structural results introduced in
prequel to analyze a simplified version of \musik for the \emph{tabular} setting in which the state $\s_h$ is directly observed (a special case of the BMDP in which $\cX=\cS$ and $\bx_h=\bs_h$ almost surely). The tabular setting preserves the most important challenges in removing reachability, and will serve as a useful warm-up exercise for the full BMDP setting. Our analysis will also give a taste for how the multi-step inverse kinematics objective in \ikdp (\pref{line:inversenew2}) allows one to approximately implement dynamic programming.

\paragraph{$\musik$ and $\ikdp$ for tabular MDPs} \pref{alg:musik_tabular} (\musiktab) and \pref{alg:IKDP-tab} (\ikdptab) are simplified variants of \musik and \ikdp tailored to the tabular setting. \musiktab is identical to \musik, except that the subroutine \ikdp is replaced by \ikdptab. $\ikdptab$ has the same structure as $\ikdp$, but does not require access to a decoder class $\Phi$, since the states are observed directly. The algorithm takes advantage of a slightly simplified multi-step inverse kinematics objective (\cref{line:inversenew20} of \cref{alg:IKDP-tab}) which involves directly predicting actions based on the latent states. Recall that for iteration $t\in[h-1]$, the full version of $\ikdp$ uses observations to predict \emph{pairs} $(\a_t,\bi_t)$, where $\a_t$ is the action played at layer $t$ and $\bi_t\in\brk{S}$ is the (random) index of the partial policy executed after layer $t$. \ikdptab does
not require randomizing over the index $\bi_t$, and instead solves a separate regression problem for each state $i\in\brk{S}$ (representing the state being targeted at layer $h$), predicting only the action $\a_t$; we will highlight the need for the randomization over indices $\bi_t$ when we return to the BMDP setting in the sequel (\pref{sec:block}).

The following theorem, an analogue of \cref{thm:policycover} for tabular MDPs, provides the main guarantee for $\musiktab$.
\begin{theorem}[Main theorem for \musiktab]
	\label{thm:trend}
	Let $\veps,\delta \in(0,1)$ be given, and let  $n\geq 1$ be chosen such that
    \begin{align}
	n \geq \frac{c A^2 S^6 H^2 \left( S^2 A \ln n + \ln (S H^2/\delta)\right)}{\veps^2},\label{eq:condn} 
	\end{align} for some absolute constant $c>0$ independent of all problem parameters. Then, with probability at least $1-\delta$, the collections $\Psi\ind{1},\dots,\Psi\ind{H}$ produced by \musiktab are $(1/4,\veps)$-policy covers for layers $1$ through $H$.
\end{theorem}
\paragraph{Analysis by induction}
To prove \cref{thm:trend}, we proceed by induction over the layers $h=1,\dots,H$. Leveraging the extended MDP and truncated policy class, we will show that for each layer $h\in[H]$, if the collections $\Psi\ind{1},\dots, \Psi\ind{h-1}$ produced by $\ikdptab$ have the property that
\begin{align}
  \Psi\ind{1},\dots, \Psi\ind{h-1} \text{ are $(1/2,\epsilon)$-policy covers relative to $\Pibar_{\epsilon}$ in $\Mbar$ for layers $1$ through $h-1$,} \label{eq:inductouter}
\end{align}
then with high probability, the collection $\Psi_{h}$ produced by $\ikdptab(\Psi_{1:h-1},n)$ will be a  $(1/2,\epsilon)$-policy cover relative to $\Pibar_{\epsilon}$ in $\Mbar$ for layer $h$. Formally, we will prove the following result.
\begin{theorem}[Main theorem for $\ikdptab$]
	\label{thm:geniklemma0}
	Let $\epsilon,\delta \in(0,1)$ and $h\in[H]$ be given and define $\veps_{\stat}(n, \delta')\coloneqq  \sqrt{ n^{-1}\prn{S^2 A \ln n+ \ln (1/\delta')}}$. Assume that:
        \begin{enumerate}
        \item \emikdptab is invoked with $\Psi\ind{1},\ldots, \Psi\ind{h-1}$ satisfying \cref{eq:inductouter}.
        \item The policies in $\Psi\ind{1},\ldots, \Psi\ind{h-1}$ never take the terminal action $\afrak$.
        \item The parameter $n$ is chosen such that $8A  S^2 HC \cdot\veps_{\stat}(n,\frac{\delta}{SH^2})\leq \epsilon$ for some absolute constant $C>0$ independent of all problem parameters.
      \end{enumerate}
      Then, with probability at least $1-\frac{\delta}{H}$, the collection $\Psi\ind{h}$ produced by $\emikdptab(\Psi\ind{1},\dots,\Psi\ind{h-1},n)$ is an $(1/2,\epsilon)$-policy cover relative to $\Pibar_{\epsilon}$ in $\Mbar$ for layer $h$. In addition, $\Psi\ind{h}\subseteq \Pim^{1:h-1}$.
      \end{theorem}
      With this result in hand, the proof of \pref{thm:trend} follows swiftly.
      \begin{proof}[{Proof of \cref{thm:trend}}]
  Let $\delta,\veps\in(0,1)$ be given and let $\epsilon \coloneqq \veps/(2S)$. Let $\veps_\stat(\cdot, \cdot)$ and $C$ be as in \cref{thm:geniklemma0}; here $C$ is an absolute constant independent of all problem parameters. Let $\cE_h$ denote the event that \ikdptab succeeds as in \cref{thm:geniklemma0} for layer $h\in[H]$ with parameters $\delta$ and $\eps$, and define $\cE\coloneqq \bigcap_{h\in[H]}\cE_h$. Observe that by \pref{thm:geniklemma0} and the union bound, we have $\P[\cE]\geq 1 -\delta$. For $n$ large enough such that $8A  S^2 HC \cdot\veps_{\stat}(n,\frac{\delta}{SH^2})\leq \epsilon$ (which is implied by the condition on $n$ in the theorem's statement for $c=2^5 C$), \cref{thm:geniklemma0} implies that under $\cE$, the output $\Psi\ind{1},\dots,\Psi\ind{H}$ of $\musik$ are $(1/2,\epsilon)$-policy covers relative to $\Pibar_{\epsilon}$ in $\wbar \cM$ for layers 1 to $H$, respectively. We conclude by appealing to \cref{lem:transfer}, which now implies that $\Psi\ind{1},\ldots,\Psi\ind{H}$ are $(1/4,\veps)$-policy covers relative to $\Pim$ in $\cM$.

\begin{algorithm}[t]
  \caption{$\ikdptab:$ Inverse Kinematics for Dynamic Programming (tabular variant)}
  \label{alg:IKDP-tab}
  \begin{algorithmic}[1]\onehalfspacing
    \Require
    ~
    \begin{itemize}[leftmargin=*]
    \item Approximate covers
      $\Psi\ind{1},\dots,\Psi\ind{h-1}$ for layers $1$
      to $h-1$, where $\Psi\ind{t} \subseteq\Pim^{1:t-1}$.
    \item Number of samples $n$.
    \end{itemize}
    \For{$t=h-1,\ldots, 1$} \label{line:mainiter0}
    \State $\cD\ind{t}\leftarrow \emptyset$.
    \Statex[1]\algcommentbiglight{Collect data by rolling in with
      policy cover and rolling out with partial policy}
    \For{$i\in[S]$} \label{line:for2}
    \For{$n$ times}
    \State Sample $(\s_t, \a_t, \s_{h})\sim \unif(\Psi\ind{t}) \circ_t \unifa\circ_{t+1} \pihat ^{(i,t+1)}$.
    \State $\cD\ind{t} \leftarrow \cD\ind{t}\cup \{(\a_t, \s_t, \s_{h})\}$.
    \EndFor
    \Statex[2] \algcommentbiglight{Inverse kinematics}
    \State \label{line:inversenew20}
    \begin{equation}
      \fhat\ind{i,t} \in \mathrm{argmax}_{f\colon  [S]^2 \rightarrow \Delta_{A}} \sum_{(a,s,s')\in \cD\ind{t}} \ln  f(a \mid s, s').
      \label{eq:ik_tabular}
    \end{equation}
    \Statex[2] \algcommentbiglight{Update partial policy cover}
    \State Define $\ahat\ind{i,t}(s)\in \argmax_{a\in \cA} \fhat\ind{i,t}(a \mid s,i)$.
    \State  Define $\hat\pi^{(i,t)}\in \Pim^{t:h-1}$  via\label{line:policy_update_tabular}
    \begin{align}
      \hat  \pi^{(i,t)}(s)\coloneqq \left\{
      \begin{array}{ll}
        \ahat\ind{i,t}(s),&\quad s\in\cS_t,\\
        \pihat
        ^{(i,\tau)}(s),&\quad  s\in \cS_\tau,\;\;\tau\in\range{t+1}{h-1}.
      \end{array}
                         \right.
                         \label{eq:policy_update_tabular}
    \end{align}
    \EndFor
    \EndFor
    
    \State \textbf{Return:} Policy cover
    $\Psi\ind{h}=\{\hat\pi^{(i,1)}\colon i
    \in[S]\}\subseteq \Pim^{1:h-1}$ for layer $h$.
  \end{algorithmic}
\end{algorithm}

We now compute the total number of trajectories used by the algorithm.
Recall that when invoked with parameter $n\in\bbN$, \musiktab invokes $H-1$ instances of $\ikdptab$, each with parameter $n$. Each instance of $\ikdptab$ uses $n$ trajectories for each layer $t\in[h-1]$ and $i\in[S]$ (see \cref{line:mainiter0,line:for2} of \cref{alg:IKDP-tab}), so the total number of trajectories used by $\musiktab$ is at most
		\begin{align}
                  \bigoht(1)\cdot\frac{A^2 S^7 H^4 \left( S^2 A + \ln (|\Phi|S H^2/\delta)\right)}{\veps^2}.\nn 
		\end{align}
\end{proof}

\subsubsection{Proof Sketch for \creftitle{thm:geniklemma0}}
We now sketch the proof of \pref{thm:geniklemma0}. The most important feature of the proof is that the guarantee on which we induct, \pref{eq:inductouter}, is stated with respect to the extended MDP and truncated policy class. We work in the extended MDP throughout the proof, and only pass back to the original MDP $\cM$ and full policy class $\Pim$ in the proof of \pref{thm:trend} (see above) \emph{once the induction is completed}.

Let $h\in[H]$ and $\epsilon>0$ be fixed, and assume that \cref{eq:inductouter} holds (that is, $\Psi\ind{1},\dots, \Psi\ind{h-1}$ are $(1/2,\epsilon)$-policy covers relative to $\Pibar_{\epsilon}$ in $\Mbar$ for layers $1$ through $h-1$). We will prove that the collection $\Psi\ind{h}$ produced by $\emikdptab(\Psi\ind{1},\dots,\Psi\ind{h-1},n)$ is an $(1/2,\epsilon)$-policy cover relative to $\Pibar_{\epsilon}$ in $\Mbar$ for layer $h$. We first argue that proving \cref{thm:geniklemma0} reduces to showing the following lemma. To state the result, recall that $\cS_{h,\epsilon} \ldef  \crl*{  s\in \cS_h  \colon	\max_{\pi \in \Pibar_{\epsilon}}\dbar^{\pi}(s) \geq \epsilon}$ is the set of states that are $\eps$-reachable by $\Pibar_\eps$ in $\cMbar$.
\begin{lemma}
  \label{lem:prop}
  Assuming points 1.~and 2.~in \cref{thm:geniklemma0} hold, and if $n$ is chosen large enough such that $8A  S^2 HC \cdot\veps_{\stat}(n,\frac{\delta}{SH^2})\leq \epsilon$ for some absolute constant $C>0$ independent of all problem parameters, then for all $t\in[h-1]$, with probability at least $1-\delta/H^2$, the learned partial policies $\crl*{\pihat\ind{i,t}}_{i\in[S]}$ in $\ikdptab$ have the property that for all $i \in \cS_{h,\epsilon}$,
  \begin{align}
    \dbar^{\pi_\star\ind{i}\circ_{t+1} \pihat\ind{i,t+1}}(i)  - \dbar^{\pi_\star\ind{i}\circ_{t}\pihat\ind{i,t}}(i)\leq \frac{\epsilon}{2H}, \quad \text{where}\quad \pi_\star\ind{i} \in \argmax_{\pi \in \Pibar_\epsilon} \dbar^{\pi}(i).\label{eq:prop}
  \end{align}
\end{lemma}
For each $i\in\cS_{h,\eps}$, $\pi_\star\ind{i}$ in \pref{eq:prop} denotes the policy in the truncated class $\Pibar_{\epsilon}$ that maximizes the probability of visiting $i$ at layer $h$. Informally, \cref{eq:prop} states that if we execute $\pi_\star\ind{i}$ up to layer $t-1$ (inclusive), then switch to the learned partial policy $\pihat\ind{i,t}$ for the remaining steps (i.e.~execute $\pi_\star\ind{i}\circ_{t} \pihat\ind{i,t}$), then the probability of reaching state $i$ in layer $h$ is close to what is achieved by running $\pi_\star\ind{i}\circ_{t+1} \pihat\ind{i,t}$. In other words, $\pihat\ind{i,t}$ is near-optimal in an average-case sense. We now show that \cref{thm:geniklemma0} follows from \cref{lem:prop}.
\begin{proof}[Proof of \cref{thm:geniklemma0}]
	For $t \in [h-1]$, let $\cE_t$ denote the success event of \cref{lem:prop}. Let us condition on the event $\cE \coloneqq \bigcap_{t \in [h-1]} \cE_t$. Fix $\i \in \cS_{h,\epsilon}$. Summing the \lhs of \pref{eq:prop} over $t=1,\dots,h-1$ for $i=\i$ and telescoping, we have that
	\begin{align}
		\dbar^{\pihat\ind{\i,1}}(\i)  \geq 	\max_{\pi \in \Pibar_\epsilon} \dbar^{\pi}(\i) - \frac{\epsilon}{2} \geq  \frac{1}{2}	\max_{\pi \in \Pibar_\epsilon} \dbar^{\pi}(\i),\label{eq:newest}
	\end{align}
where the last inequality follows by the fact that $\max_{\pi \in \Pibar_\epsilon} \dbar^{\pi}(\i)\geq \epsilon$ (since $\i \in \cS_{h,\epsilon}$).
Since this conclusion holds uniformly for all $\i\in\cS_{h,\epsilon}$, we have that under the event $\cE$, the output $\Psi\ind{h} \coloneqq \{ \pihat\ind{i,1}\colon i \in[S]\}$ of \cref{alg:IKDP-tab} is a $(1/2,\epsilon)$-policy cover relative to $\Pibar_\epsilon$ for layer $h$. Finally, by a union bound, we have $\P[\cE] \geq 1 - \P[\cE^c] \geq 1 -\sum_{t\in[h-1]}\sum_{i\in[S]} \P[(\cE\ind{i}_t)^c]\geq 1 - \delta/H$, which completes the proof. \end{proof}

  \begin{remark}
    \label{rem:performance}
It is also possible to derive \pref{eq:newest} from \cref{lem:prop} using the performance difference lemma \citep{kakade2003sample} with a specific state-action value function; this perspective will be useful when we generalize our analysis from the tabular to the BMDP setting. To see how the performance difference lemma can be applied to obtain \pref{eq:newest}, fix $i\in\brk{S}$ and consider the \emph{state-action value function} ($Q$-function) at layer $t$ with respect to the partial policy $\pihat\ind{i,t}\in \Pim^{t:h-1}$ for the MDP $\wbar \cM$ with rewards $r\ind{i}_\tau(s)=\mathbf{1}\{s=i\} \cdot \mathbf{1}\{\tau = h\}$, for $\tau\in[h]$; that is, \loose
  \begin{align}
  	Q_t^{\hat \pi\ind{i,t}}(s,a;i) = r\ind{i}_{t}(s) + \Ebar^{\hat \pi\ind{i,t}}\left[\left.\sum_{\tau=t+1}^h r\ind{i}_\tau(\s_\tau) \ \right\mid\  \s_t = s,\a_t = a\right].\label{eq:Qfunction} 
  \end{align}
Thanks to the choice of reward functions, we have 
\begin{align}
	Q_t^{\hat \pi\ind{i,t}}(s,a;i)  &= \Pbar^{ \pihat\ind{i,t+1}}[\s_h=i\mid \s_t =s, \a_t = a], \label{eq:FK} \\  \shortintertext{and thus}    \dbar^{\pihat\ind{i,1}}(i) - \dbar^{\pi_\star\ind{i}}(i) &= \E\left[	Q_1^{\hat \pi\ind{i,t}}(\s_1,\hat \pi\ind{i,t}(\s_1);i) - Q_1^{\pi_\star\ind{i}}(\s_1,\pi_\star\ind{i}(\s_1);i)  \right]. \label{eq:lhs}
\end{align}
Thus, by the performance difference lemma, the \rhs of \eqref{eq:lhs} can be bounded by 
\begin{align}
	\sum_{t=1}^{h-1}\Ebar^{\pi_\star\ind{i}} \left[Q^{\pihat\ind{i,t}}_t(\s_t, \pi_\star\ind{i}(\s_t); i)  -Q^{\pihat\ind{i,t}}_t(\s_t, \pihat ^{(i,t)}(\s_t); i)\right]. \label{eq:cont}
	\end{align}
        Thanks to \pref{eq:FK}, the quantity in \eqref{eq:cont} is simply $\sum_{t=1}^{h-1} \big(\dbar^{\pi_\star\ind{i}\circ_{t+1} \pihat\ind{i,t+1}}(i)  - \dbar^{\pi_\star\ind{i}\circ_{t}\pihat\ind{i,t}}(i)\big)$, which can directly be bounded using \cref{lem:prop} to arrive at the conclusion in \pref{eq:newest}.
  \end{remark}

     It remains to prove \cref{lem:prop}.
  To prove the result, we first use the multi-step inverse kinematics objective to establish a certain ``local'' optimality guarantee. We combine this with the assumption that $\Psi\ind{1},\ldots,\Psi\ind{h-1}$ are policy covers, along with certain structural properties of the extended MDP $\cMbar$, to conclude the result.

\paragraph{A local optimality guarantee from multi-step inverse kinematics}
Fix $1\leq t<h$ and a state $i\in\cS_{h,\epsilon}$, and let $\{\pihat\ind{i,t+1}\colon i\in[S]\}$ be the partial policies constructed by $\ikdptab$ at layer $t+1$. As the first step toward constructing the policy $\pihat \ind{i,t}$, $\ikdptab$ computes an estimator $\fhat\ind{i,t}\colon \brk{S}^2\to\Delta(\cA)$ by solving the multi-step inverse kinematics objective in \pref{line:inversenew20}. This entails predicting the probability of the action $\a_t$ conditioned on the states $\s_t$ and $\s_h$, under the process $(\s_t,\a_t,\s_h) \sim \P^{\unif(\Psi\ind{t})\circ_t \unifa \circ_{t+1}\pihat\ind{i,t+1}}$.\footnote{{Note that $\unifa$ denotes the policy that samples $\a_t$ uniformly from $\cA$, not $\cAbar$.}} The following result gives a generalization guarantee for $\fhat\ind{i,t}$ under this process.
	\begin{lemma}[Conditional density estimation guarantee]
          \label{lem:regtab}
          Fix $t \in \brk{h-1}$. Let $n\geq 1$ and $\delta\in(0,1)$ be given, and define $\veps_\stat(n,\delta)\coloneqq n^{-1/2}\cdot  \sqrt{S^2 A\ln n + \ln (1/\delta)}$. Assume that the policies in $\Psi\ind{t}$ never take the terminal action $\afrak$. Then, there exists an absolute constant $C>0$ (independent of $t,h$, and other problem parameters) such that for all $i\in[S]$ the solution $\fhat\ind{i,t}$ to the conditional density estimation problem in \cref{line:inversenew20} of \cref{alg:IKDP-tab} has that with probability at least $1-\delta$,
		\begin{align}\label{eq:mle_tabular}
			\Ebar^{\unif(\Psi\ind{t})\circ_t \unifa \circ_{t+1} \pihat\ind{i,t+1}} \left[ \sum_{a\in\cA}\left(  \fhat\ind{i,t}(a\mid \s_t, \s_h)	-  P\ind{i,t}_{\bayes}(a\mid \s_t ,\s_h) \right)^2 \right]\leq   C^2 \cdot \veps_\stat^2(n,\delta) ,
		\end{align}
                where \begin{equation}\label{eq:bayes_tabular}
                  P\ind{i,t}_{\bayes}(a\mid s ,s')	\coloneqq
     \frac{\Pbar^{\pihat\ind{i,t+1}}[\s_h=s'\mid
            \s_t=s,\a_t=a]}{Z\ind{i,t}(s,s')}, \ \ \text{for}\ \ Z\ind{i,t}(s,s')\coloneqq  \sum_{a'\in\cA}\Pbar^{
            \pihat\ind{i,t+1}}[\s_h=s'\mid \s_t=s,\a_t=a'].
          \end{equation}
	\end{lemma}
        \pref{lem:regtab} is a consequence of a standard generalization bound for conditional density estimation. The \emph{Bayes-optimal regression function} $\Pbayes\ind{i,t}$ represents the true conditional probability for $\a_t$ under the process $(\s_t,\a_t,\s_h) \sim \P^{\unif(\Psi\ind{t})\circ_t \unifa \circ_{t+1}\pihat\ind{i,t+1}}$. This quantity is useful as a proxy for another quantity we refer to as \emph{forward kinematics}:
\begin{align}
\fk\ind{t}(i\mid s,a ) \coloneqq   \Pbar^{ \pihat\ind{i,t+1}}[\s_h=i\mid \s_t =s, \a_t = a]. \label{eq:fk}
\end{align}
The utility of forward kinematics is somewhat more immediate: It represents the probability that we reach state $i$ at layer $h$ if we start from $\s_t=s$, take action $\a_t=a$, and then roll out with $\pihat\ind{i,t+1}$; equivalently $\fk\ind{t}(i\mid{}s,a)$ is the Q-function for the reward function $\indic\crl{\s_h=i}$---see \pref{eq:FK}. Hence, by the principle of dynamic programming, it is natural to choose 
\begin{align}
\pihat\ind{i,t}(s)=\argmax_{a\in\cA}\fk\ind{t}(i\mid s,a ).\label{eq:fk_opt}
\end{align}
\ikdptab does not directly compute the forward kinematics, and hence cannot directly define $\pihat\ind{i,t}$ based on \pref{eq:fk_opt}. Instead, we compute
\begin{align}
\pihat\ind{i,t}(s)=\argmax_{a\in\cA}\Pbayes\ind{i,t}(a\mid s, i).\label{eq:bayes_pihat}
\end{align}
To see that this is equivalent, observe that $P\ind{i,t}_{\bayes}(a\mid s ,i)$ is a ratio of two quantities: The numerator is exactly $\fk\ind{t}(i\mid{}s,a)$, and the denominator is a ``constant'' whose value does not depend on $a$. With some manipulation, we can use this fact to relate suboptimality with respect to $\fk\ind{t}$ to the regression error in \pref{eq:mle_tabular}, leading to the following ``local'' optimality guarantee for $\pihat\ind{i,t}$ (see \cref{app:geniklemma0proof} for a proof).
\begin{lemma}[Local optimality guarantee]
\label{lem:proppost}
Consider the setting of \cref{thm:geniklemma0} and let $t\in[h-1]$. Then, there is an event $\cE_t$ of probability at least $1-\delta/H^2$ under which the learned partial policies $\crl*{\pihat\ind{i,t}}_{i\in\brk{S}}$ and $\crl{\pihat \ind{i,t+1}}_{i\in[S]}$ in $\ikdptab$ have the property that for all $i \in \cS_{h}$, 
\begin{align}
	\sum_{\pi \in \Psi\ind{t}}	\dbar^{\pi}(s_t)	\left(	\max_{a\in \cA}	\fk\ind{t}(i \mid s_t, a) - \fk\ind{t}(i \mid s_t, \pihat\ind{i,t}(s_t))\right) \leq 2S AC\veps_\stat(n, \delta /(SH^2)), \quad \forall s_t \in \cS_t,\label{eq:high}
	\end{align}
where $\veps_\stat(\cdot,\cdot)$ and $C>0$ are as in \cref{lem:regtab}; here $C>0$ is an absolute constant independent of problem parameters.
\end{lemma}

\begin{remark}
  For the tabular setting where $\s_h$ is observed, it is also possible to estimate the function $\fk\ind{t}(i\mid s,a )$ directly. However, in the BDMP setting, estimating forward kinematics is \emph{not possible} because states are not observed. We will see that in spite of this, the multi-step inverse kinematics objective used in \ikdp still serves as a useful proxy for the forward kinematics.
\end{remark}
We now use \cref{lem:proppost} to prove \cref{lem:prop}.

\begin{proof}[Proof of \cref{lem:prop}] To prove \cref{lem:prop}, we translate the local suboptimality guarantee in \pref{eq:high} to the global guarantee in \pref{eq:prop}. Fix $\i \in \cS_{h,\epsilon}$ and let us abbreviate $\veps_\stat' \equiv C\cdot \veps_\stat(n, \delta /(S H^2))$, where $\veps_\stat(\cdot,\cdot)$ and $C>0$ are as in \cref{lem:regtab}. Condition on the event $\cE_t$ of \cref{lem:proppost}. We begin by writing the \lhs of \cref{eq:prop} in a form that is closer to the \lhs of \pref{eq:high}:
\begin{align}
\dbar^{\pi_\star^{(\i)}\circ_{t+1} \pihat\ind{\i,t+1}}(\i)  - \dbar^{\pi_\star^{(\i)}\circ_{t}\pihat\ind{\i,t}}(\i)
  &= \sum_{s\in \cS_t\cup\{\tfrak_t\}}	\dbar^{\pi_\star^{(\i)}}(s)\cdot \left( \fk\ind{t}(\i\mid s, \pi_\star^{(\i)}(s) ) - \fk\ind{t}(\i \mid s, \pihat\ind{\i,t}(s))\right), \label{eq:propo}
\end{align}
where we use the convention that $\pihat\ind{\i,t}(\tfrak_t)=\afrak$; this equality follows by the definition of $\fk\ind{t}$ in \pref{eq:fk}.  
Now, we bound the \rhs of \cref{eq:propo} in terms of the \lhs of \pref{eq:high} by using that $\Psi\ind{t}$ is a relative policy cover. In particular, since $\Psi\ind{t}$ is an $(1/2,{\epsilon})$-policy cover relative to $\Pibar_{\epsilon}$ at layer $t$, and since $\pi_\star\ind{\i}\in\Pibar_\eps$, we have that for all $s_t \in \cS_{t,\epsilon}$, 
\begin{align}
&	\dbar^{\pi_\star^{(\i)}}(s_t) \left(\fk\ind{t}(\i \mid s_t,\pi_\star^{(\i)}(s_t)) - \fk\ind{t}(\i\mid s_t, \pihat\ind{\i,t}(s_t))  \right) \nn \\
	& \leq 	\dbar^{\pi_\star^{(\i)}}(s_t)	\left(\max_{a\in \wbar\cA} \fk\ind{t}(\i\mid s_t,a) - \fk\ind{t}(\i\mid s_t, \pihat\ind{\i,t}(s_t)) \right), \nn \\ 
	&= 	\dbar^{\pi_\star^{(\i)}}(s_t)	\left(\max_{a\in \cA} \fk\ind{t}(\i\mid s_t,a) - \fk\ind{t}(\i\mid s_t, \pihat\ind{\i,t}(s_t)) \right), \label{eq:india} \\ 
	& \leq  2	\sum_{\pi \in \Psi\ind{t}}	\dbar^{\pi}(s_t)	\left(\max_{a\in \cA} \fk\ind{t}(\i\mid s_t,a) - \fk\ind{t}(\i\mid s_t, \pihat\ind{\i,t}(s_t)) \right),\nn \\
	& \leq   4S A\veps_\stat',\label{eq:forget000}
\end{align}
where \cref{eq:india} follows from the fact $\fk\ind{t}(\i\mid s_t,\fraka)=0$ (since $\fraka$ is the action leading to the terminal state $\tfrak_{t+1}$ from any state at layer $t$), so that $\max_{a\in \wbar\cA} \fk\ind{t}(\i\mid s_t,a)=\max_{a\in \cA} \fk\ind{t}(\i\mid s_t,a)$; \cref{eq:forget000} follows from \cref{eq:high} in \cref{lem:proppost}. On the other hand, by \cref{lem:bla}, we have that for all $s_t \in \cS_t \setminus \cS_{t,\epsilon}$, $\pi_\star^{(\i)}(s_t)=\fraka$. Therefore,
\begin{align}
	\forall s_t \in \cS_t\setminus \cS_{t,\epsilon}, \quad			\fk\ind{t}(\i \mid s_t,\pi_\star^{(\i)}(s_t))  = 	\fk\ind{t}(\i \mid s_t,\afrak)  =0.\label{eq:xxx}
\end{align}
Using this together with the fact that $ \fk\ind{t}(\i\mid s_t, \pihat\ind{\i,t}(s_t))\geq 0$ and \cref{eq:forget000} implies that
\begin{align}
	\forall s_t \in \cS_{t},\quad 		\dbar^{\pi_\star^{(\i)}}(s_t) \left(\fk\ind{t}(\i \mid s_t,\pi_\star^{(\i)}(s_t)) - \fk\ind{t}(\i\mid s_t, \pihat\ind{\i,t}(s_t))  \right)\leq 4S A\veps_\stat'.\label{eq:tosum}
\end{align}
Now, by choosing $n$ large enough such that $8 HS^2 A C\veps_{\stat}(n,\frac{\delta}{ SH^2}) \leq  \epsilon$ (as in the lemma's statement), we have $8 HS^2 A \veps_\stat' \leq \epsilon$ by definition of $\veps_\stat'$. Using this and summing \eqref{eq:tosum} over $s_t\in \cS_t$ in \eqref{eq:tosum} we have that
\begin{align}
  \sum_{s\in \cS_t\cup\{\tfrak_t\}}	\dbar^{\pi_\star^{(\i)}}(s)\cdot \left( \fk\ind{t}(\i\mid s, \pi_\star^{(\i)}(s) ) - \fk\ind{t}(\i \mid s, \pihat\ind{\i,t}(s))\right)
  \leq \frac{\epsilon}{2H},  \label{eq:preperf}
\end{align}
where we have used that $\fk\ind{t}(\i\mid\term_t,\cdot)=0$.
\end{proof}

	\subsection{From Tabular MDPs to Block MDPs}
	\label{sec:block}

	We now give an overview of the proof of \pref{thm:policycover}. The proof builds on the techniques in \pref{sec:warmup} and follows the same structure, but requires non-trivial changes to accommodate the general BMDP setting. We highlight the most important similarities and differences below, with the full proof deferred to \pref{sec:BMDP}.

Recall that on the algorithmic side, the main change in moving from the tabular setting to the general BMDP setting is that the latent states $\s_h$ are unobserved. To address this, the multi-step inverse kinematics objective in \ikdp (\pref{line:inversenew2}) differs from the simplified version in \ikdptab by incorporating estimation of a decoder $\phihat\ind{t}\in\Phi$ at each step $t\in\brk{h-1}$. Here, a critical property of the multi-step inverse kinematics objective is that the Bayes-optimal regression function (the BMDP analogue of \pref{eq:bayes_tabular}) only depends on the observations $\x_t$ and $\x_h$ through $\phistar(\x_t)$ and $\phistar(\x_h)$, which ensures that the conditional density estimation problem in \pref{line:inversenew2} is always well-specified.

\paragraph{The need for non-Markovian policies}
\ikdp also differs from \ikdptab in how we construct the partial policy collection $\crl{\pihat\ind{i,t}}_{i\in\brk{S}}$ for layer $t\in\brk{h-1}$ from the collection $\crl{\pihat\ind{i,t+1}}_{i\in\brk{S}}$ learned at layer $t+1$. The construction in \cref{line:nonmark} of \ikdp, as discussed in \cref{sec:musik}, leads to policies that are \emph{non-Markovian} (that is, history-dependent). This complicates the analysis because we cannot appeal to the performance difference lemma in the same fashion \pref{sec:warmup} (see \cref{rem:performance}), where it was used to relate the local suboptimality for each policy to global suboptimality. Before giving an overview for how we overcome this challenge, we first give a more detailed explanation as to \emph{why} $\ikdp$ builds non-Markovian policies.

Fix $h\in\brk{H}$. Recall that in the tabular setting, for each backward step $t\in\brk{h-1}$, each partial policy $\pihat\ind{i,t}$ constructed in $\ikdptab$ is designed to target the state $i\in \cS_h$. In the BMDP setting, the states $\s_h$ are unobserved, and it is no longer the case that the partial policy $\pihat\ind{i,t} \in \Pinm^{t:h-1}$ constructed in $\ikdp$ targets the state $i\in\cS_h$. Indeed, while we will show that each partial policy $\pihat\ind{i,t}$ (approximately) targets \emph{some} state in $\cS_h$, the algorithm has no way of knowing which one.\footnote{Unless additional assumptions are added, the latent representation may only be learned up to an unknown permutation.} An additional challenge, which motivates the composition rule in \pref{line:nonmark} of \ikdp, is that for each $i\in\brk{S}$, the suffix policy $\pihat\ind{i,t+1}$ and the one-step policy $\ahat\ind{i,t}$ learned in \pref{line:second} may target \emph{different latent states}, so it does not suffice to simply construct $\pihat\ind{i,t}$ by composing them.
This motivates the second key difference between the multi-step inverse kinematics objectives used in \ikdp versus \ikdptab. The objective in $\ikdp$ predicts both actions and \emph{indices of roll-out policies} (instead of just actions, as in the tabular case) in order to learn to associate partial policies at successive layers. In particular, recall that \pref{line:second} of \ikdp defines
\[
  (\ahat\ind{i,t}(x), \iotahat\ind{i,t}(x))=
  \argmax_{(a,j)} \fhat\ind{t}((a,j) \mid
  \phihat\ind{t}(x),i), \quad x\in \cX_t.
\]
As described in \cref{sec:musik}, one should interpret $j=\iotahat\ind{i,t}(x)$ as the \emph{most likely} (or most closely associated) roll-out policy $\pihat\ind{j,t+1}$ given that the (decoded) latent state at layer $h$ is $i\in\brk{S}$ and $x\in\cX_t$ is the current observation at layer $t$. With this in mind, the composition rule in \pref{line:nonmark} constructs $\pihat\ind{i,t}$ via
\[
  \hat  \pi^{(i,t)}(x_{t:\tau})\coloneqq\left\{
\begin{array}{ll}
                            \ahat\ind{i,t}(x_t),&\quad \tau=t,\;\;
                                                  x_t\in\cX_t,\\
                            \pihat
                            ^{(\iotahat\ind{i,t}(x_t),t+1)}(x_{t+1:\tau}),&\quad\tau\in\range{t+1}{h-1},\;\;
                                                                   x_{t:\tau}\in \cX_t\times \dots \times\cX_{\tau}.
                          \end{array}
\right.
\]
For layers $t+1,\ldots,h-1$, this construction follows the policy $\pihat\ind{\iotahat\ind{i,t}(x_t),t+1}$ which---per the discussion above---is most associated with the decoded state $i\in\brk{S}$. At layer $t$, we select $a_t=\ahat\ind{i,t}(x_t)$, which maximizes the probability of reaching the decoded latent state $i\in\brk{S}$ when we roll-out with $\pihat\ind{\iotahat\ind{i,t}(x_t),t+1}$. The construction, while intuitive, is non-Markovian, since for layers $t+1$ and onward the policy depends on $x_t$ through $\iotahat\ind{i,t}(x_t)$.

\paragraph{Analysis by induction}
The proof of \pref{thm:policycover} follows the same high-level structure as \pref{thm:trend} (\musiktab), and we use the same induction strategy: For each layer $h\in\brk{H}$, we assume that $\Psi\ind{1},\dots, \Psi\ind{h-1}$ are
approximate policy covers relative to $\Pibar_\eps$ for $\cMbar$, then show that the collection $\Psi\ind{h}$ produced by $\ikdp(\Psi\ind{1},\dots,\Psi\ind{h-1}, \Phi,n)$ is an approximate cover with high probability whenever this holds. As with the tabular setting, a key component in our proof is to work with the extended BMDP and truncated policy class throughout the induction, and only pass back to the original BMDP at the end.

The following result (proven in \pref{app:geniklemmaproof}) is our main theorem concerning the performance of \ikdp, and serves as the BMDP analogue of \cref{thm:geniklemma0}.
	\begin{theorem}[Main Theorem for \ikdp]
		\label{thm:geniklemma}
                Let $\epsilon,\delta \in(0,1)$ and $h\in[H]$ be given, and define $\veps_\stat(n,\delta)\coloneqq n^{-1/2}  \sqrt{S^3 A\ln n + \ln (|\Phi|/\delta)}$. Assume that:
                \begin{enumerate}
                \item \ikdp is invoked with $\Psi\ind{1},\ldots, \Psi\ind{h-1}$ satisfying \cref{eq:inductouter}.
        \item The policies in $\Psi\ind{1},\ldots, \Psi\ind{h-1}$ never take the terminal action $\afrak$.
        \item The parameter $n$ is chosen such that $8 A  S^4 HC\veps_{\stat}(n,\frac{\delta}{ H^2})\leq \epsilon$, for some absolute constant $C>0$ independent of $h$ and other problem parameters.
      \end{enumerate}
      Then, with probability at least $1-\frac{\delta}{H}$, the collection $\Psi\ind{h}$ produced by $\ikdp(\Psi\ind{1},\dots,\Psi\ind{h-1},\Phi,n)$ is an $(1/2,\epsilon)$-policy cover relative to $\Pibar_{\epsilon}$ in $\Mbar$ for layer $h$. In addition, $\Psi\ind{h}\subseteq \Pinm^{1:h-1}$.
	\end{theorem}
        We close the section by highlighting some key differences between the proof of this result and its tabular counterpart (\cref{thm:geniklemma0}).\loose
       
\paragraph{An alternative to \cref{lem:prop}}
Recall that in the tabular setting, the proof of \cref{thm:geniklemma0} relied on \cref{lem:prop} and the performance difference lemma (see \cref{rem:performance}). In the BMDP setting, \cref{lem:prop} does not necessarily hold since, unlike in the tabular setting, successive partial policies $\pihat\ind{i,t} \in \Pinm^{t:h-1}$ and $\pihat\ind{i,t+1} \in \Pinm^{t+1:h-1}$ may target different states at layer $h$ despite sharing the same index $i\in[S]$. For this reason, we use a modified version of \cref{lem:prop}, together with a generalized version of the performance difference lemma.  
\begin{lemma}[BMDP counterpart to \cref{lem:prop}]
	\label{lem:propnew}
	There is an absolute constant $C>0$ such that for all $t\in[h-1]$, with probability at least $1-\delta/H^2$, the learned partial policies $\crl*{\pihat \ind{i,t}}_{i\in\brk{S}}$ and $\crl*{\pihat \ind{i,t+1}}_{i\in[S]}$ in $\ikdp$ have the property that for all $s_h \in \cS_{h,\epsilon}$, there exists $\i\in [S]$ such that
	\begin{align}
0	\leq 	\sum_{\pi \in \Psi\ind{t}}	\bar{d}^{\pi}(s_t) \Ebar_{\x_t \sim q(\cdot \mid s_t)}	\left[	\max_{a\in \cA, j \in[S]}	Q_t^{\pihat\ind{j,t+1}}(\x_t,a;s_h) - V_t^{\pihat\ind{\i,t}}(\x_t;s_h) \right] \leq 2S^3 A C \veps_\stat(n, \tfrac{\delta}{H^2}), \ \ \forall s_t \in \cS_t, \label{eq:propnew}
	\end{align}
where $Q_t^{\pihat\ind{j,t+1}}(x_t,a;s_h)\coloneqq \Pbar^{\pihat\ind{j,t}}[\s_h = s_h \mid \x_t = x_t, \a_t = a]$, $V_t^{\pihat\ind{\i,t}}(x_t;s_h) \coloneqq Q_t^{\pihat\ind{\i,t+1}}(x_t,\pihat\ind{\i,t}(x_t);s_h)$, and $\veps_\stat(n,\delta')\coloneqq n^{-1/2}  \sqrt{S^3 A\ln n + \ln (|\Phi|/\delta')}$.
\end{lemma}
This result is proven in \cref{sec:errorbound}. To see the similarity between \cref{lem:propnew} and \cref{lem:prop}, note that the main quantity that the latter bounds (i.e.~the quantity on the \rhs of \pref{eq:prop}) can also be written as a difference between $Q$; see \cref{rem:performance}. 
 Once \cref{lem:propnew} is established, it can be shown to imply \cref{thm:geniklemma} using a generalized variant of the performance difference lemma (\cref{lem:performancediffbmdp}).

\paragraph{Establishing \cref{eq:propnew} using multi-step inverse kinematics}
To show that \pref{eq:propnew} holds, we use the structure of the multi-step inverse kinematics objective in \pref{line:inversenew2} of \ikdp, as well as the non-Markov policy construction outlined in the prequel. In particular, we show that the multi-step inverse kinematics objective acts as a proxy for the forward kinematics given by 
\begin{align}
\P^{\pihat\ind{i,t+1}}[\s_h=\phi_\star(x_h)\mid \s_t = \phi_\star(x_t), \a_t = a], \nn
\end{align}
for $i\in[S]$, $x_t \in \cX_t$ and $x_h \in \cX_h$. We use this to show that up to statistical error, the partial policies $\crl{\pihat\ind{i,t}}_{i\in[S]}$ constructed from $\crl{\pihat\ind{i,t+1}}_{i\in[S]}$ i) identify (using observations at layer $t$) the best action at layer $t$,  and ii) identify the best partial policy from $\crl{\pihat\ind{j,t+1}}_{j\in[S]}$ to switch to from layer $t+1$ onwards.

Beyond the multi-step inverse kinematics objective and non-Markov policy construction, the proof of \cref{thm:geniklemma} uses the extended BMDP in a similar fashion to the tabular setting. We make use of the fact that for each layer $t\in\brk{h-1}$,  the policies in $\Pibar_{\epsilon}$ always play the terminal action $\afrak$ on observations emitted from states in $\cS_{t,\epsilon}$, and the generalized performance difference lemma (\cref{lem:performancediffbmdp}) is specifically designed to take advantage of this. This allows us to ``write off'' these states (analogous to \cref{eq:xxx} in the proof of \cref{lem:prop}), and use the policy cover property for $\Psi\ind{1},\ldots,\Psi\ind{h-1}$ to control the error for states in $\cS_{t,\eps}$; see \cref{sec:proofbmdp} for details. However, there is some added complexity stemming from the non-Markovian nature of $\crl{\pihat\ind{i,t}}_{i\in\brk{S}}$.

  }

 \icml{
   	 	\section{Proof Techniques}
 	\label{sec:key}

\icml{
        We find it somewhat surprising that \musik attains rate-optimal sample complexity in spite of forgoing optimism. The proof of \pref{thm:policycover}, which we sketch  in \pref{sec:overview},
        has two main components. For the first component, we prove that the multi-step inverse kinematics objective learns a decoder that can be used to drive exploration; this formalizes the intuition in \pref{sec:musik}. With this established, proving that \musik succeeds under minimum reachability (\pref{def:reachability}) is somewhat straightforward, but proving that the algorithm 1) succeeds in  absence of this assumption, and 2) achieves optimal sample complexity is more involved. For this component of the proof, we use a new analysis tool we refer to as an \emph{extended BMDP} which, in tandem with another tool we refer to as the \emph{truncated policy class}, allows one to emulate certain consequences of reachability even when the condition does not hold. These techniques, which we anticipate will find broader use in the context of non-optimistic algorithms based on policy covers, appear to be new even for tabular reinforcement learning.

\textbf{\emph{Due to space limitations, an in-depth overview of the analysis of \musik is deferred to \pref{sec:overview}.}} 

As a teaser, in this section we introduce the most important technical tools used in the proof of \cref{thm:policycover}, the extended BMDP and truncated policy class, which, play a key role in providing tight guarantees for \musik (and more broadly, non-optimistic algorithms) in the absence of minimum reachability.
\emph{We recommend reading the full overview in \pref{sec:overview} before diving into the full proof of \pref{thm:policycover} (\pref{sec:BMDP}). }
        }

\arxiv{
  In this section, we give a teaser for the most important technical tools used in the proof of \cref{thm:policycover}; see \cref{sec:overview} for an in-depth overview of the analysis that builds on these ideas. In the sequel, we will introduce two analysis tools, the \emph{extended BMDP} and \emph{truncated policy class}, which, play a key role in providing tight guarantees for \musik (and more broadly, non-optimistic algorithms) in the absence of minimum reachability.
  }

\paragraph{Analysis in Extended BMDP}
\musik proceeds by inductively building a sequence of policy covers $\Psi\ind{1},\ldots,\Psi\ind{H}$. A key invariant maintained by the algorithm is that for each layer $h$, $\Psi\ind{1},\ldots,\Psi\ind{h-1}$ provide good coverage for layers $1,\ldots,h-1$, and thus can be used to gather data which will allow us to efficiently learn $\Psi\ind{h}$. Prior approaches that build policy covers in a similar fashion \citep{du2019provably,misra2020kinematic} require the minimum reachability assumption (\pref{def:reachability}) to ensure that for each $h$, $\Psi\ind{h}$ \emph{uniformly} covers all states in $\cS_h$. In the absence of reachability, we inevitably must sacrifice certain hard-to-reach states, which requires a more refined analysis. In particular, we must show that the effect of ignoring hard-to-reach states at earlier layers do not compound as the algorithm progresses.

\zm{Double the notation $\tfrak_h$ is consistent.}
To provide such an analysis, we make use of an extended BMDP $\Mbar$. The extended BMDP $\Mbar$ augments $\cM$ by adding $H$ ``terminal'' states $\tfrak_{1:H}$ and one additional \emph{terminal} action $\afrak$ as follows:
\icml{{\bf I})  The latent state space is $\wbar \cS \coloneqq \bigcup_{h=1}^H \wbar \cS_h$, where $\wb{\cS}_h\ldef\cS_h\cup\crl{\tfrak_h}$; {\bf II}) the action space is $\cAbar\ldef\cA \cup\crl{\afrak}$, where $\afrak$ is an action that deterministically transitions to $\term_{h+1}$ from every state at layer $h\in[H-1]$; and {\bf III)} For $h\in[H-1]$, taking any action in $\wbar\cA$ at state $\tfrak_h$ transitions to $\tfrak_{h+1}$ deterministically.}
We assume the state $\tfrak_h$ emits itself as an observation and we write $\wb{\cX}_h \coloneqq  \cX_h \cup\{\tfrak_h\}$, for all $h\in[H]$. The dynamics of $\cMbar$ are otherwise identical to $\cM$, and for any policy $\pi \in \Pibar_{\nm} \coloneqq \left\{\pi \colon \bigcup_{h=1}^H (\wbar \cX_{1}\times \dots \times \wbar\cX_h) \rightarrow \Abar\right\}$, we define $\Pbar^{\pi}\coloneqq \P^{\Mbar,\pi}$, $\Ebar^{\pi}\coloneqq \E^{\Mbar,\pi}$, and $\dbar^{\pi}(s)\coloneqq \P^{\Mbar,\pi}[\s_h = s]$, for all $s\in \cS_h$ and $h \in[H]$.
\arxiv{
\begin{align*}
	\Pbar^{\pi}\coloneqq \P^{\Mbar,\pi}, \quad  \Ebar^{\pi}\coloneqq \E^{\Mbar,\pi},\quad  \text{and} \quad \dbar^{\pi}(s)\coloneqq \P^{\Mbar,\pi}[\s_h = s], \quad \text{for all } s\in \cS_h \text{ and } h \in[H].
\end{align*}
}
\paragraph{Truncated policy class} On its own, the extended BMDP is not immediately useful. The most important idea behind our analysis is to combine it with a restricted sub-class of policies we refer to as the truncated policy class. Define $\Pibarm  \coloneqq \{ \pi\colon \bigcup_{h=1}^H \wbar\cX_h \rightarrow \Abar\}$. For $\epsilon \in(0,1)$, we define a sequence of policy classes $\Pibar_{0, \epsilon},\dots,\Pibar_{H, \epsilon}$, inductively, starting from $\Pibar_{0, \epsilon}=\Pibarm$ and letting $\Pibar_{t, \epsilon}$ be the set for which $\pi \in \Pibar_{t, \epsilon}$ if and only if
\icml{
\begin{align}
&\exists \pi'\in \Pibar_{t-1, \epsilon}\ \text{ such that } \forall h \in[H], \forall s\in \wbar\cS_h, \forall x\in \phi^{-1}_\star(s), \nn \\	&\pi(x) = \left\{\begin{array}{ll} \fraka, & \text{if } h=t \text{ and } \max_{\tilde \pi\in \Pibar_{t-1, \epsilon}} \dbar^{\tilde \pi}(s)< \epsilon, \\   \pi'(x), & \text{otherwise}.    \end{array}    \right. \nn %
\end{align}
}
\arxiv{
	\begin{align}
\exists \pi'\in \Pibar_{t-1, \epsilon}, \forall h \in[H], \forall s\in \cS_h, \forall x\in \phi^{-1}_\star(s), \ 	\pi(x) = \left\{\begin{array}{ll} \fraka, & \text{if } h=t \text{ and } \max_{\tilde \pi\in \Pibar_{t-1, \epsilon}} \dbar^{\tilde \pi}(s)< \epsilon, \\   \pi'(x), & \text{otherwise}.    \end{array}    \right.  \label{eq:define00}
	\end{align}
}
Restated informally, the class $\Pibar_{t,\eps}$ is identical to $\Pibar_{t-1,\eps}$, except that at layer $t$, all policies in the class take the terminal action $\afrak$ in latent states $s$ for which $\max_{\tilde \pi\in \Pibar_{t-1, \epsilon}} \dbar^{\tilde \pi}(s)< \epsilon$.

We define the \emph{truncated policy class} as $\Pibar_{\epsilon} \coloneqq \Pibar_{H,\epsilon}$. The truncated policy class satisfies two fundamental technical properties. First, by construction, all policies in the class take the terminal action $\afrak$ when they encounter states that are not $\eps$-reachable by $\Pibar_\eps$. The next lemma formalizes this.
\begin{lemma}
	\label{lem:bla}
	Let $\epsilon\in(0,1)$ be given, and define $\cS_{h,\epsilon} \ldef  \{  s\in \cS_h  \colon	\max_{\pi \in \Pibar_{\epsilon}}\dbar^{\pi}(s) \geq \epsilon\}.$
	Then for all $h \in[H]$ if $s\in \cS_{h }\setminus\cS_{h,\epsilon}$, then $\pi(x)=\fraka$, for all $x\in \phi^{-1}_\star(s)$ and $\pi \in \Pibar_\epsilon$.
\end{lemma}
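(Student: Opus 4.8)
The plan is to exploit a simple structural observation about the construction of the truncated policy class: for a state $s\in\cS_h$, the occupancy $\dbar^{\pi}(s)$ depends only on the restriction of $\pi$ to layers $1,\dots,h-1$, whereas the $t$-th stage of the inductive construction of $\Pibar_{t,\epsilon}$ overwrites the policy only at layer $t$. Consequently the stages $t\ge h$ of the construction are invisible to the occupancies of layer-$h$ states, which is exactly what is needed to reconcile the reachability threshold applied at stage $h$ (phrased in terms of $\Pibar_{h-1,\epsilon}$) with the threshold defining $\cS_{h,\epsilon}$ (phrased in terms of $\Pibar_{\epsilon}=\Pibar_{H,\epsilon}$).

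Concretely, the first step is to establish the following auxiliary claim by induction on $t$: for every $h\in[H]$, every $s\in\cS_h$, and every $t\ge h-1$,
\[
\max_{\pi\in\Pibar_{t,\epsilon}}\dbar^{\pi}(s)=\max_{\pi\in\Pibar_{h-1,\epsilon}}\dbar^{\pi}(s).
\]
The base case $t=h-1$ is immediate. For the inductive step, recall that every $\pi\in\Pibar_{t+1,\epsilon}$ is obtained from some $\pi'\in\Pibar_{t,\epsilon}$ by replacing, at layer $t+1$ only, the action on observations emitted by low-reachability states with $\fraka$; since $t+1\ge h$, the policies $\pi$ and $\pi'$ agree on layers $1,\dots,h-1$, and hence $\dbar^{\pi}(s)=\dbar^{\pi'}(s)$ for $s\in\cS_h$. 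Conversely, applying this (deterministic) stage-$(t+1)$ rewrite to any $\pi'\in\Pibar_{t,\epsilon}$ produces a policy in $\Pibar_{t+1,\epsilon}$ with the same layer-$h$ occupancy. Thus the maxima over $\Pibar_{t+1,\epsilon}$ and $\Pibar_{t,\epsilon}$ coincide, and the claim follows from the inductive hypothesis.

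Given the claim, the lemma follows quickly. Fix $h$ and $s\in\cS_h\setminus\cS_{h,\epsilon}$, so that $\max_{\pi\in\Pibar_{\epsilon}}\dbar^{\pi}(s)<\epsilon$; instantiating the claim at $t=H$ gives $\max_{\pi\in\Pibar_{h-1,\epsilon}}\dbar^{\pi}(s)<\epsilon$ as well. Now take any $\pi\in\Pibar_{\epsilon}=\Pibar_{H,\epsilon}$ and unfold the chain $\pi=\pi^{(H)},\pi^{(H-1)},\dots,\pi^{(0)}$ witnessing its membership, with $\pi^{(t)}\in\Pibar_{t,\epsilon}$ obtained from $\pi^{(t-1)}\in\Pibar_{t-1,\epsilon}$. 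At stage $t=h$ the condition $\max_{\tilde\pi\in\Pibar_{h-1,\epsilon}}\dbar^{\tilde\pi}(s)<\epsilon$ holds, so the construction forces $\pi^{(h)}(x)=\fraka$ for all $x\in\phi^{-1}_\star(s)$; and every later stage $t\in\range{h+1}{H}$ modifies the policy only at layer $t\neq h$, leaving this action untouched. Therefore $\pi(x)=\pi^{(h)}(x)=\fraka$ for all $x\in\phi^{-1}_\star(s)$, as claimed. The only genuinely non-routine point is the occupancy-stability claim itself---recognizing that the reachability thresholds used at stage $h$ of the construction must be tied back to those defining $\cS_{h,\epsilon}$; once this is isolated, the rest is bookkeeping about which layers each stage of the truncation touches.
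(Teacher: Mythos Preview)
Your proof is correct and follows essentially the same approach as the paper: your auxiliary claim is precisely the paper's \cref{lem:pih_max} (the occupancy-stability result $\max_{\pi\in\Pibar_{t,\epsilon}}\dbar^{\pi}(s)=\max_{\pi\in\Pibar_{h-1,\epsilon}}\dbar^{\pi}(s)$ for $t\ge h-1$), and your ``unfold the chain'' argument is a slightly more direct rephrasing of the paper's induction showing that any $\pi\in\Pibar_{\epsilon}$ agrees on layer $h$ with some $\pi'\in\Pibar_{h,\epsilon}$.
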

Second, in spite of the fact that policies in $\Pibar_{\epsilon}$ always take the terminal action on states with low visitation probability, they can still achieve near-optimal visitation probability for all states in $\cMbar$ (up to additive error).
\begin{lemma}[Approximation for truncated policies]
	\label{lem:critical}
	Let $\epsilon\in(0,1)$ be given. For all $h \in[H]$ and $s\in \cS_h$, 
	\label{lem:fall}
	\begin{align}
		\max_{\pi \in \Pibarm} \dbar^{\pi}(s) \leq \max_{\pi \in \Pibar_{\epsilon}} \dbar^{\pi}(s) +S \epsilon.\nn 
	\end{align}
\end{lemma}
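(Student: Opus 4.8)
}
The plan is to fix an arbitrary target state $s\in\cS_h$, start from a policy $\pi_0\in\argmax_{\pi\in\Pibarm}\dbar^{\pi}(s)$, and then peel off one layer of truncation at a time, tracking how much visitation probability of $s$ is lost at each layer. Concretely, for $t=1,\dots,H$ I would let $\pi_t$ be the policy obtained from $\pi_{t-1}\in\Pibar_{t-1,\epsilon}$ exactly as in the inductive definition of the truncated class: $\pi_t$ coincides with $\pi_{t-1}$ at every layer other than $t$, and at layer $t$ it plays $\fraka$ on the set $\cB_t\ldef\{s'\in\cS_t:\max_{\tilde\pi\in\Pibar_{t-1,\epsilon}}\dbar^{\tilde\pi}(s')<\epsilon\}$ while agreeing with $\pi_{t-1}$ on all other observations. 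By construction $\pi_t$ is a valid element of $\Pibar_{t,\epsilon}$, hence in particular $\pi_H\in\Pibar_{H,\epsilon}=\Pibar_{\epsilon}$.

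The heart of the argument is the per-layer bound $\dbar^{\pi_{t-1}}(s)-\dbar^{\pi_t}(s)\le|\cS_t|\cdot\epsilon$. To establish it I would condition on $\s_t$ and use that $\pi_{t-1}$ and $\pi_t$ are Markovian and identical on all layers $\neq t$: the law of $\s_t$ is the same under both (it is determined by the actions at layers $<t$), and for every $s'\notin\cB_t$ the conditional law of $\s_h$ given $\s_t=s'$ is also the same, since the two policies behave identically from $s'$ onward (for $s'=\tfrak_t$ both never reach $\cS_h$). The only discrepancy comes from states $s'\in\cB_t$: from such a state $\pi_t$ plays $\fraka$ and is thereafter absorbed into the terminal chain $\tfrak_{t+1},\tfrak_{t+2},\dots$, so it contributes zero to the occupancy of any state in $\cS_h$. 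Hence
\[
\dbar^{\pi_{t-1}}(s)-\dbar^{\pi_t}(s)=\sum_{s'\in\cB_t}\dbar^{\pi_{t-1}}(s')\,\Pbar^{\pi_{t-1}}[\s_h=s\mid\s_t=s']\le\sum_{s'\in\cB_t}\dbar^{\pi_{t-1}}(s'),
\]
and since $\pi_{t-1}\in\Pibar_{t-1,\epsilon}$, each summand is at most $\max_{\tilde\pi\in\Pibar_{t-1,\epsilon}}\dbar^{\tilde\pi}(s')<\epsilon$ by the definition of $\cB_t$; as $|\cB_t|\le|\cS_t|$, the claimed bound follows.

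Finally I would telescope over $t=1,\dots,H$ and use that the layered decomposition $\cS=\bigcup_{t=1}^H\cS_t$ is disjoint, so that $\sum_{t=1}^H|\cS_t|=S$, giving
\[
\max_{\pi\in\Pibarm}\dbar^{\pi}(s)=\dbar^{\pi_0}(s)\le\dbar^{\pi_H}(s)+S\epsilon\le\max_{\pi\in\Pibar_{\epsilon}}\dbar^{\pi}(s)+S\epsilon,
\]
which is the desired inequality. (One could even restrict the telescoping sum to $t\le h$, since truncating at a layer beyond $h$ cannot change $\dbar^{\pi}(s)$ for $s\in\cS_h$, but the crude bound $S\epsilon$ already suffices.) The only delicate point is the conditioning step — making precise that the two occupancies agree on all states up to and including layer $t$, and that playing $\fraka$ permanently absorbs the trajectory into the terminal chain — but this is elementary bookkeeping with the extended-BMDP dynamics rather than a genuine obstacle, consistent with the remark that the proofs in this subsection are elementary.
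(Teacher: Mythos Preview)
Your proof is correct and follows essentially the same layer-by-layer peeling and telescoping strategy as the paper's proof. The one notable (and nice) simplification is that by defining the ``bad'' set $\cB_t$ directly via $\max_{\tilde\pi\in\Pibar_{t-1,\epsilon}}\dbar^{\tilde\pi}(s')<\epsilon$ and tracking a single policy chain $\pi_0,\ldots,\pi_H$, you avoid the detour through $\cS_{t,\epsilon}$ (defined via $\Pibar_\epsilon$) and hence the auxiliary \cref{lem:pih_max} that the paper invokes to reconcile the two sets; your argument is self-contained.
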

The proofs for these results (and other results in this subsection) are elementary, and are given in \pref{app:structural}.       
Building on these properties, our proof of \pref{thm:policycover} makes use of two key ideas:
\begin{enumerate}[leftmargin=*]
	\item Even though the extended BMDP $\cMbar$ does not necessarily enjoy minimum reachability (\pref{def:reachability}), if we restrict ourselves to competing against policies in $\Pibar_\eps$, \pref{lem:bla} will allow us to ``emulate'' certain properties enjoyed by $\eps$-reachable MDPs. This in turn will imply that if we only wish to learn a policy cover that has good coverage relative to $\Pibar_\eps$, \pref{alg:GenIk} will succeed.
	\item By \pref{lem:critical}, we lose little by restricting our attention to the class $\Pibar_\eps$. This will allow us to transfer any guarantees we achieve with respect to the extended BMDP $\cMbar$ and truncated policy class $\Pibar_\eps$ back to the original BMDP $\cM$ and unrestricted policy class $\Pim$.
\end{enumerate}
\arxiv{
We will make the first point precise in the sections that follow (\cref{sec:warmup,sec:block}).}
\icml{
We will make the first point precise in \cref{sec:warmup,sec:block}.} For now, we formalize the second point via another technical result, \pref{lem:transfer}. To do so, we introduce the notion of a \emph{relative policy cover} (generalizing \cref{def:polcover101}).
\begin{definition}[Relative policy cover]
	\label{def:polcover}
	Let $\alpha, \veps\in[0,1)$ be given. Consider a BMDP $\cM'$, and let $\Pi$ and $\Psi$ be two sets of policies. We say that $\Psi$ is an $(\alpha,\veps)$-policy cover relative to $\Pi$ for layer $h$ in $\cM'$, if
	\arxiv{
	\begin{align}
		\max_{\pi \in \Psi} d^{\cM',\pi}(s) \geq  \alpha \cdot \max_{\pi \in \Pi} d^{\cM',\pi}(s) \quad \text{for all } \  s\in \cS_h \ \text{ such that }\  \max_{\pi \in \Pi
		} d^{\cM',\pi}(s)  \geq \veps. \nn
	\end{align}
}
	\icml{\begin{align}
		\max_{\pi \in \Psi} d^{\cM',\pi}(s) \geq  \alpha \cdot \max_{\pi \in \Pi} d^{\cM',\pi}(s), \nn 
		\end{align} 
	for all $s\in \cS_h$ such that $\max_{\pi \in \Pi
} d^{\cM',\pi}(s)  \geq \veps$. 
}
\end{definition} 

\begin{lemma}[Policy cover transfer]
	\label{lem:transfer}
	Let $\veps\in(0,1)$ be given, and define $\eps\ldef\veps/(2S)$.
	Let $\Psi$ be a set of policies for $\cMbar$ that never take the terminal action $\afrak$. If $\Psi$ is a $(1/2,\epsilon)$-policy cover relative to $\Pibar_\epsilon$ in $\Mbar$ for all layers, then $\Psi$ is a $(1/4,\veps)$-policy cover relative to $\Pim$ in the $\cM$ for all layers.
\end{lemma}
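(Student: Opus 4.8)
The plan is to push everything into the extended BMDP $\Mbar$, invoke the approximation guarantee for truncated policies (\cref{lem:critical}), apply the hypothesis on $\Psi$, and then transfer back to $\cM$ using the fact that occupancies in $\cM$ and $\Mbar$ coincide for any policy that never plays the terminal action $\afrak$.

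First I would record two elementary facts. (i) Because $\Mbar$ has the same initial distribution, transition kernel, and emission distribution as $\cM$ on all non-terminal states and actions, any policy $\pi$ (Markovian or not) that never plays $\afrak$ generates trajectories supported entirely on $\cX_1\times\dots\times\cX_H$, and hence $\dbar^{\pi}(s)=d^{\cM,\pi}(s)$ for all $h\in[H]$ and $s\in\cS_h$. This applies to every $\pi\in\Psi$ by assumption, and — after extending such a policy arbitrarily to the terminal states — to every $\pi\in\Pim$. (ii) In particular $\Pim$ embeds into $\Pibarm$, so $\max_{\pi\in\Pibarm}\dbar^{\pi}(s)\ge\max_{\pi\in\Pim}d^{\cM,\pi}(s)$ for all $s$.

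Now fix a layer $h$ and a state $s\in\cS_h$ with $M\coloneqq\max_{\pi\in\Pim}d^{\cM,\pi}(s)\ge\veps$; the goal is to show $\max_{\pi\in\Psi}d^{\cM,\pi}(s)\ge M/4$. Combining (ii) with \cref{lem:critical} gives
\begin{align}
\max_{\pi\in\Pibar_{\eps}}\dbar^{\pi}(s)\;\ge\;\max_{\pi\in\Pibarm}\dbar^{\pi}(s)-S\eps\;\ge\;M-S\eps\;=\;M-\veps/2.\nn
\end{align}
Since $M\ge\veps$ and $\eps=\veps/(2S)\le\veps/2$, the quantity $M-\veps/2$ is bounded below both by $M/2$ and by $\veps/2\ge\eps$. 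The bound by $\eps$ shows that $s$ meets the reachability threshold in \cref{def:polcover} for the class $\Pibar_{\eps}$, so the hypothesis that $\Psi$ is a $(1/2,\eps)$-policy cover relative to $\Pibar_{\eps}$ in $\Mbar$ yields $\max_{\pi\in\Psi}\dbar^{\pi}(s)\ge\tfrac12\max_{\pi\in\Pibar_{\eps}}\dbar^{\pi}(s)\ge\tfrac12\cdot\tfrac{M}{2}=\tfrac{M}{4}$. Applying (i) to the policies in $\Psi$ turns this into $\max_{\pi\in\Psi}d^{\cM,\pi}(s)=\max_{\pi\in\Psi}\dbar^{\pi}(s)\ge M/4$, which is exactly the claim; since $h$ and $s$ were arbitrary, $\Psi$ is a $(1/4,\veps)$-policy cover relative to $\Pim$ in $\cM$ for all layers.

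The proof is essentially bookkeeping: the only thing with any content is fact (i) (occupancies transfer between $\cM$ and $\Mbar$ for $\afrak$-free policies) and the arithmetic choice $\eps=\veps/(2S)$, which is small enough that the $S\eps$ slack in \cref{lem:critical} swallows at most half of $M$ whenever $M\ge\veps$, while still leaving $\max_{\pi\in\Pibar_{\eps}}\dbar^{\pi}(s)$ above the threshold $\eps$. I do not anticipate a genuine obstacle; the purpose of the lemma is precisely to license running the rest of the analysis entirely inside $\Mbar$ against the truncated class $\Pibar_{\eps}$.
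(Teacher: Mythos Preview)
Your proof is correct and follows essentially the same approach as the paper's: both use the embedding $\Pim\subseteq\Pibarm$ together with \cref{lem:critical} to lower-bound $\max_{\pi\in\Pibar_{\eps}}\dbar^{\pi}(s)$ by $M-S\eps$, verify the $\eps$-threshold, invoke the $(1/2,\eps)$-cover hypothesis, and transfer back to $\cM$ via the fact that $\afrak$-free policies have identical occupancies in $\cM$ and $\Mbar$. The only cosmetic difference is that the paper phrases the final arithmetic as $M\le 2\,d^{\pi^{(s)}}(s)+S\eps$ and then rearranges using $S\eps\le M/2$, whereas you go directly through $M-S\eps\ge M/2$; these are equivalent.
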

\pref{lem:transfer} implies that for any $\veps$, letting $\eps\ldef{}\veps/2S$, if we can construct a set $\Psi$ that acts as a
$(1/2,\eps)$-policy cover relative to
$\Pibar_{\epsilon}$ in $\Mbar$, then $\Psi$ will also be a $(1/4,\veps)$-policy cover relative to $\Pim$ in the original BMDP $\cM$, which is ultimately
what we wish to accomplish. \arxiv{This allows us to restrict our attention to the former goal going forward.}
\icml{This allows us to restrict our attention to the former goal in the analysis.}

We refer the reader to \cref{sec:overview} for the overview of the analysis of \pref{thm:policycover}, which builds on the tools presented in this section, and to \cref{sec:BMDP} for the full proof. We anticipate that these techniques will find broader use in RL.\arxiv{: For example, they can be used to show that $\homer$ (without any modifications) successfully learns an $\veps$-optimal policy using $O(1/\veps^3$) samples (which is better than $\texttt{BRIEE}$ but worse than $\musik$) \emph{without} requiring minimum reachability. }

  }

	\section{Experiments}
	\label{sec:experiments}

\newcommand{\cbr}[1]{\{#1\}}
\newcommand{\rbr}[1]{(#1)}
\newcommand{\actions}{\{a_i\}}
\newcommand{\action}{a}
\newcommand{\states}{\{s_i\}}
\newcommand{\state}{s}
\newcommand{\policy}{\pi}
\newcommand{\obs}{x}
\newcommand{\briee}{\texttt{BRIEE}}

\arxiv{
As a validation experiment, we evaluate the
performance of \musik on
the challenging ``diabolical combination lock'' (``\comblock'')
environment \citep{misra2020kinematic,zhang2022efficient}, which
combines high-dimensional observations with anti-shaped, sparse
rewards, necessitating representation learning and systematic
exploration\footnote{The code for our experiments is available at \url{https://github.com/zmhammedi/MusIK.git}.}. We compare $\musik$ to $\homer$ \citep{misra2020kinematic} and
$\briee$ \citep{zhang2022efficient} which, amongst provably efficient
algorithms, have the best known empirical performance
\citep{zhang2022efficient}. 

\paragraph{Environment} The \comblock environment of
\citet{misra2020kinematic,zhang2022efficient} is parameterized
by the horizon $H$. There are $A=10$ actions, and at each layer $h\in[H]$, there are $N=3$ states $s_{h,1}, s_{h,2}, s_{h,3}\in \cS_h$, where $s_{h,1},s_{h,2}$
are ``good'' states and $s_{h,3}$ is a ``bad'' terminal state. For each layer $h\in[H]$, there exists a pair
of ``good'' actions $u_{h,1}, u_{h,2} \in \cA$ such that taking action
$u_{h,j}$ in state $s_{h,j}$, for $j\in\{1,2\}$, leads to one of the
good states $\{s_{h+1,1}, s_{h+1,2}\}$ at next layer with equal
probability.  All actions $a_h\not\in \{u_{h,1}, u_{h,2}\}$ lead to
the bad state $s_{h+1,3}$ deterministically. The sequences of good actions $u_{1:H,1}$ and $u_{1:H,2}$ are sampled uniformly at random from the set of actions $\cA$ when the environment is initialized, and are unknown to the learner. 

For $h=H$, the agent receives a reward of $1$ if action $u_{H,j}$ is
taken in state $s_{H,j}$, for $j\in\{1,2\}$, and receives reward of 0
otherwise. For $h<H$, the agent receives an anti-shaped reward of
$0.1$ for choosing any action $a_h \neq u_{h,j}$ in state $s_{h,j}$, for
$j\in\{1,2\}$, and receives a reward of 0 otherwise (in particular,
the agent never receives a reward in the bad state $s_{h,3}$). This anti-shaped
reward encourages the agent to take actions that lead to the bad state
$s_{h+1,3}$, from which it is not possible to reach the good states
$\{s_{H,1}, s_{H,2}\}$ at layer $H$ and achieve the optimal reward of
$1$.

The agent does not observe the states $\{\s_h\}$ directly, and instead
receives observations $\{\x_h\}$. For each $h$, the observation $\x_h$
is a $d$-dimensional vector, where $d\ldef{}2^{\ceil{\log_2
    (H+N+1)}}$, obtained by concatenating the one-hot vector of the latent state $\s_h$ and the one hot vector of the layer index $h$, followed by adding noise sampled from $\cN(0,0.1)$ in one dimension, padding with zeros if necessary, and multiplying with a Hadamard matrix. Strictly speaking, the CombLock
environment is more challenging than a Block MDP, since two latent states can emit the
same observation due to the addition of the Gaussian noise in the observation process.

Since the good actions $\{u_{h,1}, u_{h,2}\}_{h\in[H]}$ are not known
to the agent, deliberate exploration is required to learn a policy
that maximizes the reward function (note that it is only possible to achieve reward $1$ if the agent selects a good action for all
$h\in\brk{H}$). For example, when the horizon is set to $H=100$, the probability
of finding the optimal policy through naive uniform exploration is $10^{-100}$. In
addition, representation learning is required to recover the latent
state $\s_h$ from the observation $\x_h$ at each layer, with the best
decoder depending on the layer $h$.

\paragraph{$\musik$ implementation} We use \musik to compute a
policy cover for the \comblock, then apply \psdp with the cover to
optimize the reward function. Following the approach taken with
$\homer$ in
\cite{misra2020kinematic}, we take advantage of the fact that in the CombLock
environment, the optimal policy can be learned by composing optimal
homing policies for each layer (this is
not true for general BMDPs). In particular, we take advantage of this composability property to implement a more sample-efficient
version of $\musik$ (see \cref{alg:IKDPcomp} in
\cref{app:experiments}). We parametrize all models with neural networks and optimize $\musik$ and $\psdp$'s objectives
using stochastic gradient descent via \texttt{PyTorch}. 

\paragraph{Baselines} 
As baselines, we use $\homer$ \citep{misra2020kinematic} and
$\briee$ \citep{zhang2022efficient}. Amongst provably efficient
algorithms, these methods are known to have the best empirical performance
\citep{zhang2022efficient} on the \comblock environment\footnote{We compare only against other model-free
	methods, and do not consider model-based approaches \citep{uehara2022,zhang2022making,ren2022latent}.}. The \homer{} algorithm has the same
structure as \musik: it first learns a policy cover, then
uses the cover within $\psdp$ to learn a near-optimal policy. The
$\briee$ algorithm does not explicitly learn a policy cover, but
rather interleaves exploration and exploitation using optimism. We do not reproduce
  $\briee$ and $\homer$, and instead report the results from
  \citet{zhang2022efficient}.

  Additional details for the experimental setup are
given in \pref{app:experiments}.
\icml{
	\begin{figure}
		\centering
		\includegraphics[width=\linewidth]{./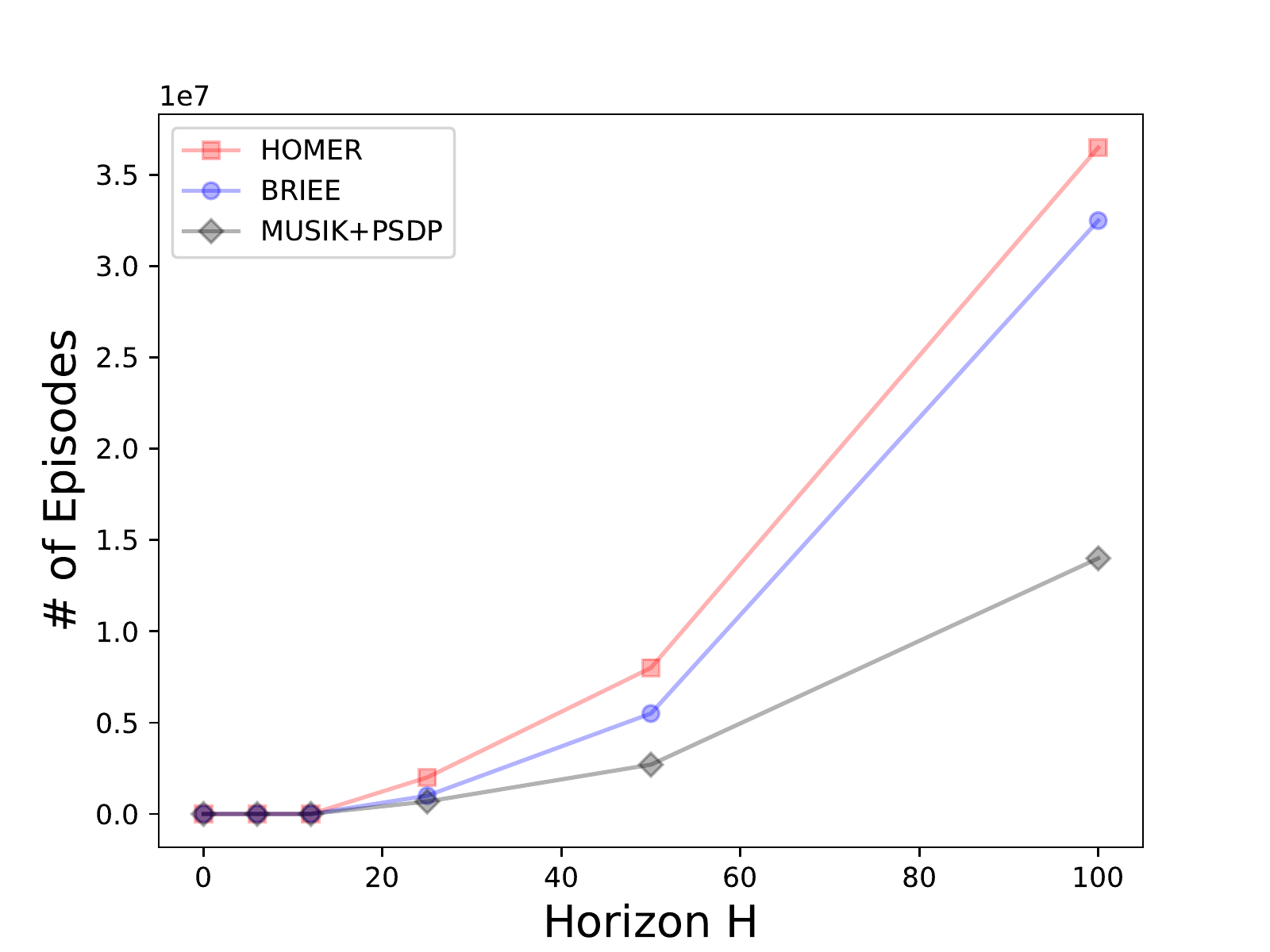}
		\caption{Number of episodes required to identify the optimal
			policy, as a function of the horizon $H$.}
		\label{fig:horizon-plot} 
	\end{figure}
}
\arxiv{
		\begin{figure}
		\centering
		\includegraphics[width=.5\linewidth]{./plot.pdf}
		\caption{Number of episodes required to identify the optimal
			policy, as a function of the horizon $H$.}
		\label{fig:horizon-plot} 
	\end{figure}
}
\paragraph{Evaluation and results} 
\cref{fig:horizon-plot} reports the number of episodes (or, number of
sampled trajectories) required for each method to identify the optimal
policy, as a function of the horizon $H$.\footnote{We declare the returned policy $\pihat$ to be
  optimal if the average reward over $50$ trajectories is $1$.} For $\musik$, we plot the number of episodes required to find the optimal policy across five different initialization
seeds. For $\briee$ and $\homer$, we report the number of episodes required to find the optimal policy across three out of five different initialization seeds; note that this only improves the results for
the baseline methods compared to \musik.

We find that for small values of $H$, all methods
have similar performance, but for large horizon, $\musik$
outperforms the baselines. For $H=100$, $\musik$ is able to
find the optimal policy using almost three times fewer episodes than
$\homer$ and $\briee$.
This suggests that the
multi-step inverse kinematics objective in \musik may indeed carry
practical (as opposed to just theoretical)
benefits over alternative representation learning approaches. 
}

\icml{
	As a validation experiment, we evaluate the
	performance of \musik on
	the challenging ``diabolical combination lock'' (``\comblock'')
	environment \cite{misra2020kinematic,zhang2022efficient}, which
	combines high-dimensional observations with anti-shaped, sparse
	rewards, necessitating representation learning and systematic
	exploration.

	\paragraph{Environment} We adopt the diabolical combination
        lock (\comblock) environment from
	\citet{misra2020kinematic,zhang2022efficient}, which is parameterized
	by the horizon $H$ and number of actions $A=10$. At each layer $h$, there are $N=3$ states $s_{h,1}, s_{h,2}, s_{h,3}\in \cS_h$, where $s_{h,1},s_{h,2}$
	are ``good'' states and $s_{h,3}$ is a ``bad'' terminal state. For each layer $h$, there exists a pair
	of ``good'' actions $u_{h,1}, u_{h,2} \in \cA$ such that taking action
	$u_{h,j}$ in state $s_{h,j}$, for $j\in\{1,2\}$, leads to one of the
	good states $\{s_{h+1,1}, s_{h+1,2}\}$ at next layer with equal
	probability.  All actions $a_h\not\in \{u_{h,1}, u_{h,2}\}$ lead to
	the bad state $s_{h+1,3}$ deterministically. The sequences of
        good actions $u_{1:H,1}$ and $u_{1:H,2}$ are sampled uniformly
        at random from the set of actions $\cA$ when the environment
        is initialized and are unknown to the learner. The optimal reward of 1 can only be achieved at the states $s_{H,1}$ and
        $s_{H,2}$ (we postpone the details of the reward and observation processes to \cref{app:experiments}).

	Since the good actions $\{u_{h,1}, u_{h,2}\}_{h\in[H]}$ are not known
	to the agent, deliberate exploration is required to learn a policy
	that maximizes the reward function; note it is only possible
	achieve reward $1$ if the agent selects a good action for all
	$h\in\brk{H}$. For example, when the horizon is set to $H=100$, the probability
	of finding the optimal policy through naive uniform exploration is $10^{-100}$. In
	addition, representation learning is required to recover the latent
	state $\s_h$ from the observation $\x_h$ at each layer, with the best
	decoder depending on the layer $h$.

		\begin{figure}
			\centering
			\includegraphics[width=\linewidth]{./plot.pdf}
			\caption{Number of episodes required to identy the optimal
				policy, as a function of the horizon $H$ for the CombLock experiment.}
			\label{fig:horizon-plot} 
		\end{figure}
	\paragraph{Evaluation and results} 
	We compare $\musik$ to $\homer$ \citep{misra2020kinematic} and
	$\briee$ \citep{zhang2022efficient} which, amongst provably efficient
	algorithms, have the best known empirical
        performance.\footnote{We compare only against other model-free
          methods, and do not consider model-based approaches \citep{uehara2022,zhang2022making,ren2022latent}.} For \musik, we adopt the decoder class $\Phi \coloneqq
\{\phi_W\colon x \mapsto\argmax_{i\in[N]} W x  \mid W\in
\reals^{N\times d} \}$, which is the same as that used in \citep{misra2020kinematic}. See \cref{app:experiments} for implementation details.
        We do not reproduce $\briee$ and $\homer$, and instead report the results from \citet{zhang2022efficient}. %

	\cref{fig:horizon-plot} reports the number of episodes (or, number of
	sampled trajectories) required for each method to identify the optimal
	policy, as a function of the horizon $H$; we declare the returned policy $\pihat$ to be
		optimal if the average reward over $50$ trajectories is $1$.
For $\musik$, we plot the
\emph{worst-case} number of episodes across 5 different initialization
seeds. For $\briee$ and $\homer$, we only report the median (instead of
the worst-case) number of trajectories over 5 different seeds required to
	find the optimal policy; note that this only improves the results for
	the baseline methods compared to \musikp.
We find that for small values of $H$, all methods
	have similar performance, but for large horizon, $\musik$
	outperforms the baselines. In particular, for $H=100$, $\musik$ is able to
	find the optimal policy using almost three times fewer episodes than
	$\homer$ and $\briee$.
	This suggests that the
	multi-step inverse kinematics objective in \musik may indeed carry
	practical (as opposed to just theoretical)
	benefits over the alternative representation learning approaches used
	in \homer{} and \briee{}. Performing a large scale evaluation is a
	promising direction for future research.
	
}

\arxiv{        
	\section{Discussion}
	\label{sec:discussion}
Our results suggest a number of exciting directions for future
research. First, while \musik attains rate-optimal sample complexity
with respect to the accuracy parameter $\veps>0$, it has loose
dependence on the parameters $H$, $S$, and $A$, similar to other
efficient algorithms
\citep{misra2020kinematic,zhang2022efficient}. We anticipate that
achieving efficiently achieving \emph{minimax optimal} sample
complexity will require further algorithmic improvements, as well as
refinements to our analysis techniques. More broadly, we are excited to explore whether our
techniques can be applied beyond the basic BMDP model, with possible
examples including representation learning with linear function approximation
\citep{agarwal2020flambe,modi2021model,uehara2022}, and Block MDPs
with exogenous noise \citep{efroni2021provable,efroni2022sample}.

\subsection*{Acknowledgements}
We thank John Langford, Dipendra Misra, and Akshay Krishnamurthy for
several helpful discussions. ZM and AR acknowledge support from the ONR through awards N00014-20-1-2336 and N00014-20-1-2394.

         }
	
	\clearpage
	
	\bibliography{refs}
	
	\clearpage
	
	\appendix
\icml{
	\onecolumn
\renewcommand{\contentsname}{Contents of Appendix}
\addtocontents{toc}{\protect\setcounter{tocdepth}{2}}
{
  \hypersetup{hidelinks}
  \tableofcontents
}
\clearpage
}

\icml{
\part{Additional Details and Results}

\section{Omitted Tables and Pseudocode}
\label{sec:omitted}

        \begin{table}[tp]
        	               \caption{Comparison of sample complexity required learn an $\veps$-optimal
        		policy.
        		For approaches that require a minimum
        		reachability assumption $\eta_{\min} \coloneqq \min_{s\in
        			\cS}\max_{\pi \in \Pim}d^{\pi}(s)$ denotes
        		the reachability parameter. $\Phi$ and $\Psi$ denote the decoder and model classes, respectively. 
        	}
        	\label{tb:resultscomp}
          \renewcommand{\arraystretch}{1.6}
		\fontsize{9}{10}\selectfont
		\centering 
		\begin{tabular}{ccccc}
			\hline
			& Sample complexity & Model-free &
                                                           Comp. efficient
                  &\makecell{Rate-optimal \\ $1/\veps^2$-sample
                  comp.} \\
			\hline
			$\olive$ \citep{jiang2017contextual} & $\frac{A^2 H^3 S^3 \ln |\Phi|}{\veps^2}$ & Yes & No & Yes \\
			$\texttt{MOFFLE}$ \citep{modi2021model} & $\frac{A^{13 }H^8 S^7 \ln |\Phi|}{(\veps^2\eta_{\min} \wedge \eta_{\min}^5)}$ & Yes& Yes &  No \\
			$\homer$ \citep{misra2020kinematic} &
			$\frac{A H S^6 (S^2 A^3+\ln |\Phi|)}{(\veps^2 \wedge \eta_{\min}^3)}$ &Yes & Yes &  No\\
			$\texttt{Rep-UCB}$ \citep{uehara2022} &$\frac{ A^2 H^5 S^4 \ln (|\Phi|\textcolor{red!70!black}{|\Psi|}) }{\veps^2}$ & No & Yes&  Yes \\
			$\texttt{BRIEE}$ \citep{zhang2022efficient} & $ \frac{A^{14} H^9 S^8 \ln |\Phi|}{\veps^4}$& Yes & Yes & No  \\
			$\musik$ (this paper) & $\frac{A^2 H^4 S^{10} (A S^3 + \ln |\Phi|)}{\veps^2}$& \textbf{Yes} &  \textbf{Yes} &  \textbf{Yes}\\
			\hline
		\end{tabular}
	\end{table}

 \arxiv{
 	\begin{algorithm}[t]
 	\caption{Execute non-Markov partial policy
          produced by $\musik$.}
 	\label{alg:gen}
 	\begin{algorithmic}[1]\onehalfspacing
 		\Require 
                ~
        \begin{itemize}[leftmargin=*]
        \item Indices $t,h\in[H]$ such that $t<h$.
        \item Index $i\in\brk{S}$.
          \hfill\algcommentlight{Index for policy
            $\pihat\ind{i,t}\in\Pinm^{t:h}$ produced in \cref{line:nonmark} of \cref{alg:IKDP}.}
                    \item                 Initial observation $x_t\in
        \cX_t$.
                      \item Functions $(\hat f\ind{t},\phihat\ind{t}),\dots, (\hat
        f\ind{h-1}, \phihat\ind{h-1})$ produced in 
        \cref{line:inversenew2} of \cref{alg:IKDP}.
                  \end{itemize}
 	    \State Set $\bj_{t-1} = i$.
 		\For{$\tau=t,\ldots, h-1$}
 		\State Compute $(\a_\tau, \bm{j}_\tau)\in \argmax_{(a,j)} \fhat\ind{\tau}((a,j) \mid \phihat\ind{\tau}(\x_\tau),\bm{j}_{\tau-1})$. \label{line:argmax}
 		\State Play action $\a_\tau$ at layer $\tau$ and
                observe $\x_{\tau+1}$.
 		\EndFor
                \State \textbf{Return:} Partial trajectory
                $(\a_{t:h-1},\x_{t:h})$ generated by $\pihat\ind{i,t}\in
                \Pinm^{t:h-1}$ (\cref{line:nonmark}).
 	\end{algorithmic}
 \end{algorithm}	

}

\icml{
 
\begin{algorithm}[ht]
	\caption{Execute non-Markov partial policy
		produced by $\musik$.}
	\label{alg:gen}
	\begin{algorithmic}[1]\onehalfspacing
		\Require 
		~Indices $t,h\in [H]$ and $i\in\brk{S}$ (Indexes policy
			$\pihat\ind{i,t}\in\Pinm^{t:h-1}$ produced in \cref{line:nonmark} of \cref{alg:IKDP}).
		Initial observation $x_t\in
		\cX_t$. Functions $(\hat f\ind{t},\phihat\ind{t}),\dots, (\hat
		f\ind{h-1}, \phihat\ind{h-1})$ produced in 
		\cref{line:inversenew2}.
		\State Set $\bj_{t-1} = i$.
		\For{$\tau=t,\ldots, h-1$}
		\State $(\a_\tau, \bm{j}_\tau)\gets \argmax\limits_{(a,i)} \fhat\ind{\tau}((a,i) \mid \phihat\ind{\tau}(\x_\tau),\bm{j}_{\tau-1})$ \label{line:argmax}
		\State Play action $\a_\tau$ at layer $\tau$ and
		observe $\x_{\tau+1}$.
		\EndFor
		\State \textbf{Return:} Partial trajectory
		$(\a_{t:h-1},\x_{t:h})$ generated by $\pihat\ind{i,t}\in
		\Pinm^{t:h-1}$ (\cref{line:nonmark} of \cref{alg:IKDP}).
	\end{algorithmic}
\end{algorithm}

}

	\section{Application to Reward-Based RL: Planning with an Approximate Cover}
	\label{sec:rewardsetting} 
\arxiv{
 To conclude the section, we show how the policy cover learned by $\musik$ can be used to optimize any downstream reward function of interest.}
\icml{In this section, we show how the policy cover learned by $\musik$ can be used to optimize any downstream reward function of interest. }
 For the results that follow, we assume that at each layer $h\in\brk{H}$, the learner observes a reward $\br_h\in\brk*{0,1}$ in addition to the observation $\x_h\in\cX$, so that trajectories take the form $(\x_1,\a_1,\br_1),\ldots,(\x_H,\a_H,\br_H)$.  We will make the following standard BMDP assumption \citep{misra2020kinematic,zhang2022efficient}, which asserts that the mean reward function depends only on the latent state, not the full observation.
\begin{assumption}[Realizability]
	\label{assum:reward}
	For all $h\in[H]$, there exists $\rbar_h\colon \cS \times \cA \rightarrow [0,1]$ such that $\E[\br_h \mid \x_h=x,\a_h=a]=\rbar_h(\phi_\star(x),a)$. 
\end{assumption}

\begin{algorithm}
	\caption{$\texttt{PSDP}$: Policy Search by Dynamic Programming
		(variant of \citet{bagnell2003policy})}
	\label{alg:PSDP}
	\begin{algorithmic}[1]\onehalfspacing
		\Require Policy cover $\Psi\ind{1},\dots,\Psi\ind{H}$. Decoder class $\Phi$. Number of samples $n$.
		\For{$h=H, \dots, 1$} 
		\State $\cD\ind{h} \gets\emptyset$. 
		\For{$n$ times}
		\State Sample $(\x_h, \a_h, \bm{r}_{h:H})\sim
		\unif(\Psi\ind{h})\circ_h \unif (\cA) \circ_{h+1} \pihat \ind{h+1}$.
		\State Update dataset: $\cD\ind{h} \gets \cD\ind{h} \cup \left\{\left(\x_h, \a_h, \sum_{t =h}^H \br_{t}\right)\right\}$.
		\EndFor
		\State Solve regression:
		\[(\fhat\ind{h},\phihat\ind{h}) \gets\argmin_{f \colon [S]\times \cA\rightarrow [0,H-h +1], \phi \in \Phi}  \sum_{(x, a, R)\in\cD} (f(\phi(x),a)-R)^2.\] \label{eq:mistake}
		\State Define $\pihat\ind{h}\in\Pim^{h:H}$ via
		\[
		\pihat \ind{h}(x) = \left\{
		\begin{array}{ll}
			\argmax_{a\in \cA} \fhat\ind{h}(\phihat\ind{h}(x),a),&\quad x\in\cX_h,\\
			\pihat\ind{h}(x),& \quad x\in \cX_{t},\;\;  t\in [h+1 \ldotst H].
		\end{array}
		\right.
		\]
		\EndFor
		\State \textbf{Return:} Near-optimal policy $\pihat \ind{1}\in \Pim$. 
	\end{algorithmic}
\end{algorithm}

\paragraph{The PSDP algorithm}
To optimize rewards, we take a somewhat standard approach and appeal to a variant of the Policy Search by Dynamic Programming (\psdp) algorithm of \citet{bagnell2003policy,misra2020kinematic}. $\psdp$ uses the approximate policy cover produced by \musik as part of a dynamic programming scheme, which constructs a near-optimal policy in a layer-by-layer fashion. In particular, starting from layer $H$, $\psdp$ first constructs a partial policy $\hat
\pi\ind{H}\in \Pim^{H:H}$ using data collected with $\Psi\ind{H}$, then moves back a layer and constructs a partial policy
$\pihat \ind{H-1}\in \Pim^{H-1:H}$ using data collected with $\Psi\ind{H-1}$ and $\pihat\ind{H}$,
and so on, until the first layer is reached. The variant of $\psdp$ we present here differs slightly from the original version in \cite{bagnell2003policy,misra2020kinematic}, with the main difference being that instead of using a policy optimization sub-routine to compute the policy for each layer, we appeal to least-squares regression (see \cref{eq:mistake} of \cref{alg:PSDP}) to estimate a $Q$-function, and then select the greedy policy this function induces.

The following result, proven in \cref{app:PSDPthmproof}, provides the main sample complexity guarantee for \psdp.\footnote{This result does not immediately follow from prior work \citep{misra2020kinematic} because it allows for an $(\alpha,\veps)$-policy cover with $\veps>0$; previous work only handles the case where $\veps=0$.}
\begin{theorem}
	\label{thm:PSDPthm}
	Let $\alpha$, $\veps$, $\delta \in(0,1)$ be given. Suppose that \cref{assum:real,assum:reward} hold, and that for all $h\in\brk{H}$:
	\begin{enumerate}
		\item $\Psi\ind{h}$ is a $(\alpha, \eps)$-approximate cover for layer $h$, where $\eps\coloneqq \veps/(2SH^2)$.
		\item $|\Psi\ind{h}|\leq S$.
	\end{enumerate}
	Then, for appropriately chosen $n\in\bbN$, the policy $\pihat\ind{1}$ returned by \pref{alg:PSDP} satisfies
	\begin{align}
		\E^{\pihat\ind{1}}\left[\sum_{h=1}^H \br_h\right]\geq    \max_{\pi \in \Pim}  \E^{\pi}\left[\sum_{h=1}^H \br_h\right] - \veps \nn
	\end{align}
	with probability at least $1-\delta$. Furthermore, the total number of sampled trajectories used by the algorithm is bounded by
	\begin{align}
	\nn
		\bigoht(1)\cdot \frac{ H^5 S^6 (S A + \ln (|\Phi|/\delta))}{ \alpha^2 \veps^2}.
	\end{align}   
\end{theorem}

\paragraph{Sample complexity to find an $\veps$-suboptial policy with $\musik+\psdp$} From \cref{thm:PSDPthm}, to find an $\veps$-suboptimal policy, $\psdp$ requires an $(\alpha,\eps)$-approximate cover for all layers, where $\eps \ldef \veps/(2 SH^2)$. Focusing only on dependence on the accuracy parameter $\veps$, it follows from the results in \pref{sec:main_theorem} that $\musik$ can generate an $(1/4, \eps)$-approximate cover using $\wtilde O(1/\veps^2)$ trajectories (see \eqref{eq:trajectories}). Thus, the total number of trajectories required to find an $\veps$-suboptimal policy in reward-based RL using $\musik+\psdp$ scales with $\wtilde O(1/\veps^2)$. To the best of our knowledge, this is the first computationally efficient approach that gives $\bigoht(1/\veps^2)$ sample complexity for reward-based reinforcement learning in BMDPs (without reachability).

\section{Details for Experiments}
\label{app:experiments}
In this section, we give the details of the experiments. We provide the full code in the supplementary material.

\paragraph{Environment} We adopt the \comblock environment from
\citet{misra2020kinematic,zhang2022efficient}, which is parameterized
by the horizon $H$ and number of actions $A=10$. At each layer $h\in[H]$, there are $N=3$ states $s_{h,1}, s_{h,2}, s_{h,3}\in \cS_h$, where $s_{h,1},s_{h,2}$
are ``good'' states and $s_{h,3}$ is a ``bad'' terminal state. For each layer $h\in[H]$, there exists a pair
of ``good'' actions $u_{h,1}, u_{h,2} \in \cA$ such that taking action
$u_{h,j}$ in state $s_{h,j}$ (for $j\in\{1,2\}$) leads to one of the
good states $\{s_{h+1,1}, s_{h+1,2}\}$ at next layer with equal
probability.  All actions $a_h\not\in \{u_{h,1}, u_{h,2}\}$ lead to
the bad state $s_{h+1,3}$ deterministically. The sequences of good actions $u_{1:H,1}$ and $u_{1:H,2}$ are sampled uniformly at random from the set of actions $\cA$ when the environment is initialized and are unknown to the learner. 

For $h=H$, the agent receives a reward of $1$ if action $u_{H,j}$ is
taken in state $s_{H,j}$ (for $j\in\{1,2\}$), and receives reward of 0
otherwise. For $h<H$, the agent receives an anti-shaped reward of
$0.1$ for choosing any action $a_h \neq u_{h,j}$ in state $s_{h,j}$, for
$j\in\{1,2\}$, and receives a reward of 0 otherwise (in particular,
the agent never receives a reward in the bad state $s_{h,3}$). This anti-shaped
reward encourages the agent to take actions that lead to the bad state
$s_{h+1,3}$, from which it is not possible to reach the good states
$\{s_{H,1}, s_{H,2}\}$ at layer $H$ and achieve the optimal reward of
$1$.

The agent does not observe the states $\{\s_h\}$ directly, and instead
receives observations $\{\x_h\}$. For each $h$, the observation $\x_h$
is a $d$-dimensional vector, where $d\ldef{}2^{\ceil{\log_2
		(H+N+1)}}$, obtained by concatenating the one-hot vector of the latent state $\s_h$ and the one hot vector of the layer index $h$, followed by adding noise sampled from $\cN(0,0.1)$ in one dimension, padding with zeros if necessary, and multiplying with a Hadamard matrix.   
Strictly speaking, the CombLock
environment is more challenging than a Block MDP, since two latent states can emit the
same observation due to the addition of the Gaussian noise in the observation process.

Since the good actions $\{u_{h,1}, u_{h,2}\}_{h\in[H]}$ are not known
to the agent, deliberate exploration is required to learn a policy
that that maximizes the reward function (note it is only possible
achieve reward $1$ if the agent selects a good action for all
$h\in\brk{H}$). For example, when the horizon is set to $H=100$, the probability
of finding the optimal policy through naive uniform exploration is $10^{-100}$. In
addition, representation learning is required to recover the latent
state $\s_h$ from the observation $\x_h$ at each layer, with the best
decoder depending on the layer $h$. 

\paragraph{Implementation of \musik} We use $\musik$ to learn a policy
cover that we then use in $\psdp$ to find a near-optimal policy in the
CombLock environment. In this environment, the optimal policy cover can be
learned by composing optimal policy covers at each layer (though this is
not true in general, many problems share this property). We follow an
approach taken with $\homer$ in \cite{misra2020kinematic}, and take
advantage of this composability property to implement a more sample-efficient
version of $\musik$, where during the call to the $\ikdp$ subroutine
at layer $h$, we only learn $\fhat\ind{h-1},\phihat\ind{h-1}$
(i.e.~the $\ikdp$ for-loop stops at $t=h-1$); this is exactly what was done in \cite{misra2020kinematic}. This version of $\musik$, which we name $\musik.\texttt{comp}$, is displayed in \cref{alg:IKDPcomp} (we write the full algorithm without a reference to an external ($\ikdp$) subroutine).

We use $\Phi \coloneqq
\{\phi_W\colon x \mapsto\argmax_{i\in[N]} W x  \mid W\in \reals^{N\times d} \}$ for the
decoder class, where we recall that $N$ is the number of latent states per layer in the CombLock environment---this is exactly the same decoder class as the one used by \cite{misra2020kinematic} for $\homer$. Given the observation process in the CombLock environment, there exists a matrix $W_\star \in \reals^{N\times d}$ such that the true decoder $\phi_\star$ is given by $\phi_{W_\star}$. To learn $W_\star$, we use the set of differentiable maps $\Phi'\coloneqq \{ x \mapsto\mathrm{softmax} (W x)  \mid W\in \reals^{N\times d}\}$ during training (this is reflected in the objective in the next display). Further, we make a slight
(empirically-motivated) modification to the conditional density
estimation problem in Line~\ref{line:inversenew2} of $\ikdp$, where we
instead solve 
\begin{align}
	\fhat\ind{h-1} ,
	\hat{\psi}\ind{h-1}\gets \argmax_{f: \cX \times [N]  \rightarrow
		\Delta(\cA) , \psi \in \Phi'}\  \sum_{(a_h,x_{h-1},x_h)\in \cD\ind{h-1}} \log \left(\sum_{i=1}^N f(a_{h-1} \mid x_{h-1}, i) \cdot [\psi(x_h)]_i  \right).\label{eq:newobjective}
\end{align}
Compared to the original objective of $\ikdp$ in Line \ref{line:inversenew2} of \cref{alg:IKDP}, we no longer need to predict the index $i_{h-1}$ of the future roll-out policies (since the for-loop of $\ikdp$ now stops at $t=h-1$, there are no future roll-outs). Another difference is that we do not use a decoder at layer $h-1$; we use $f(a\mid x,j)$ instead of $f(a\mid \phi(x),j)$ (this helps with the training). For each $j\in[N]$, we instantiate $f(\cdot \mid \cdot, j)$ with a two-layer neural network with $\tanh$ activation, input dimension $d$ and output dimension $A$, where the output is pushed through a softmax so that $f(\cdot \mid x, j)$ is a distribution over actions for any $x\in \cX$. We use $\texttt{Adam}$ for the optimization problem in \eqref{eq:newobjective}. We specify the choices of hyperparameters in the sequel.

With $\hat{\psi}\ind{h-1}$ as in \eqref{eq:newobjective}, the learned decoder is given by $\phihat\ind{h-1}(x)\coloneqq \argmax_{i\in[N]}[\hat{\psi}\ind{h-1}x]_i$. Further, for $\fhat\ind{h-1}$ as in \eqref{eq:newobjective}, the $h$th layer policy cover $\Psi\ind{h} =\{\pihat\ind{j,h}\}_{ j \in[N]}$ constructed by $\musik.\texttt{comp}$ is essentially given by: 
\begin{align}
	\pihat\ind{j,h} =  \pihat \circ_{h-1} \ahat\ind{j,h-1} , \quad \text{where} \quad \pihat\in \argmax_{\pi \in \Psi\ind{h-1}}  \P^{\pi \circ_{h-1} \unifa}\left[\phihat\ind{h-1}(\x_h)=j\right], \label{eq:argmax}
\end{align}
and $\ahat\ind{j,h-1}(x) \coloneqq
\argmax_{a\in\cA}\fhat\ind{h-1}(a\mid x,j).$ That is, the policy
$\pihat\ind{j,h}$ is the composition of the best partial policy $\pihat$ among the partial policies in $\Psi\ind{h-1}$ (the policy
cover at the previous layer) and the best action at layer $h-1$ to
maximize to probability of reaching the `abstract state'
$j\in[N]$. Technically, computing $\pihat$ requires
estimating $\P^{\pi \circ_{h-1} \unifa}\left[\phihat\ind{h-1}(\x_h)=j\right]$,
for all $\pi \in \Psi\ind{h-1}$. For this, we reuse the dataset
$\cD\ind{h-1}$ from \eqref{eq:newobjective} and solve another conditional density estimation problem---see \eqref{eq:anothercond} in \cref{alg:IKDPcomp}\footnote{Technically, the solution of the conditional estimation problem in \eqref{eq:anothercond} does not yield an estimator of $\P^{\pi \circ_{h-1} \unifa}\left[\hat\phi\ind{h-1}(\x_h)=j\right]$ per se. But it gives us a proxy for a function whose argmax $\pihat$ in \eqref{eq:argmax}.}. 

\paragraph{$\psdp$ implementation} The only modification we make to the $\psdp$
algorithm is that we use $f(a\mid x)$ instead of $f(a \mid \phi(x))$
in the objective \eqref{eq:mistake} (i.e.~we do not use a decoder). We instantiate $f(\cdot \mid \cdot)$ with a two-layer neural network with input dimension $d$, hidden dimension of 400, and output dimension of 1. We use the $\tanh$ activation function at all layers.

\paragraph{Hyper-parameters} For each $j\in[N]$, we instantiate $f(\cdot \mid \cdot, j)$ in \eqref{eq:newobjective} with a two-layer neural network with $\tanh$ activation, input dimension $d$, hidden dimension of size $N_{\texttt{hidden}}$, and output dimension $A$, where the output is run through the softmax activation function (with temperature 1) so that $f(\cdot \mid x, j)$ is a distribution over actions for any $x\in \cX$. We also instantiate $g(\cdot \mid \cdot)$ in \eqref{eq:anothercond} with a two-layer neural network with $\tanh$ activation. input dimension $N$, hidden dimension of size 400, and output dimension $N$, where the output is pushed through a softmax (with temperature 1) so that $g(\cdot \mid j)$ is a distribution over $[N]$ for any $j\in [N]$. For the choice of hidden size $N_{\texttt{hidden}}$, we searched over the grid $\{100,200, 400\}$. The results reported in \cref{fig:horizon-plot} are for $N_{\texttt{hidden}}=400$.

We optimize the parameters of $(f,\theta)$ [resp.~$g$] in \eqref{eq:newobjective} [resp.~\eqref{eq:anothercond}] using $\texttt{Adam}$ with the default parameters in $\texttt{PyTorch}$. We use a batch size of $\min(n,N_{\texttt{batch}})$, where $n$ is as in \cref{alg:IKDPcomp}, and perform $N_{\texttt{update}}$ gradient updates. For the batch size $N_{\texttt{batch}}$ and number of updates $N_{\texttt{updates}}$, we searched over the girds $\{512,1024,2048,4096,8196\}$ and $\{64, 128, 256\}$, respectively. The results reported in \cref{fig:horizon-plot} are for $N_{\texttt{batch}}=8196$ and $N_{\texttt{updates}}=128$.

\paragraph{Baselines} 
As baselines, we use $\homer$ \citep{misra2020kinematic} and
$\briee$ \citep{zhang2022efficient}. Amongst provably efficient
algorithms, these methods are known to have the best empirical performance
\citep{zhang2022efficient} on the \comblock environment. The \homer{} algorithm has the same
structure as \musik: it first learns a policy cover, then
uses the cover within $\psdp$ to learn a near-optimal policy. The
$\briee$ algorithm does not explicitly learn a policy cover, but
rather interleaves exploration and exploitation using optimism.
We do not reproduce
$\briee$ and $\homer$, and instead report the results from
\citet{zhang2022efficient}.

\begin{algorithm}[htp]
	\caption{$\musik.\texttt{comp}$: Variant of $\musik$ for composable policy covers (version of $\musik$ used in the experiments).}
	\label{alg:IKDPcomp}
	\begin{algorithmic}[1]\onehalfspacing
		\Require
		~
		\begin{itemize}[leftmargin=*]
			\item Dimension of the observation space $d$.
			\item Number of latent states per layer $N$.
			\item Number of samples $n$.
		\end{itemize}
		\State Set $\Psi\ind{1} = \{\unifa,\dots, \unifa\}$ with $|\Psi\ind{1}|=N$.
		\For{$h=2,\ldots, H$} 
		\State $\cD\ind{h}\leftarrow \emptyset$.
		\State Let $\iota\ind{h-1} \colon \Psi\ind{h-1}\rightarrow [N]$ be any one-to-one mapping.
		\Statex[1]\algcommentbiglight{Collect data by rolling in with
			policy cover}
		\For{$n$ times}
		\State Sample $\pihat  \sim \unif(\Psi\ind{h-1})$.  
		\State Sample $(\x_{h-1}, \a_{h-1}, \x_{h})\sim \hat\pi \circ_{h-1} \unifa$. 
		\State $\cD\ind{h-1} \leftarrow \cD\ind{h-1}\cup \{(\iota\ind{h-1}(\hat\pi),\a_{h-1}, \x_{h-1}, \x_{h})\}$.
		\EndFor
		\Statex[1] \algcommentbiglight{Inverse kinematics}
		\State For $\Phi'\coloneqq \{ x \mapsto\mathrm{softmax} (W x)  \mid W\in \reals^{N\times d}\}$, solve
		\begin{equation}
			\fhat\ind{h-1},	\hat{\psi}\ind{h-1}\gets \argmax_{f\colon \cX \times[N] \rightarrow
				\Delta(\cA) , \psi\in \Phi'}
			\sum_{(-,a,x,x')\in \cD\ind{h-1}}  \ln   \left(\sum_{j\in[N]} f( a \mid
			x, j) \cdot [\psi(x)]_j\right).\label{eq:newobjective} 
		\end{equation}
		\Statex[1] \algcommentbiglight{Inverse Kinematics to learn associations between policies at subsequent layers}
		\State Solve
		\begin{equation}
			\hat g\ind{h-1}\gets \argmax_{g\colon [N] \rightarrow
				\Delta([N]) }
			\sum_{(i,-,-,x')\in \cD\ind{h-1}} \ln    g\left( i \, \left|\,
			\argmax_{i\in[N]}[\hat{\psi}\ind{h-1}(x')]_i \right. \right). \label{eq:anothercond} 
		\end{equation}
		\Statex[1] \algcommentbiglight{Update partial policy cover}
		\State For each $j\in\brk{S}$, define
		\begin{align}
			\ahat\ind{j,h-1}(x) &=
			\argmax_{a\in \cA} \fhat\ind{h-1}(a \mid
			x,j), \quad x\in \cX_t.\nn \\
			\iotahat\ind{j,h-1}(x)	& = 	\argmax_{i\in [N]}  \hat g \ind{h-1}(i\mid j).\nn 
		\end{align}
		\State  For each $j \in[S]$, define $\hat
		\pi^{(j,h)}\in \Pim^{1:h-1}$ via
		\begin{align}
			\hat  \pi^{(j,h)}(x_{\tau})\coloneqq \left\{
			\begin{array}{ll}
				\ahat\ind{j,h-1}(x_\tau),&\quad \tau=h-1,\\
				\pihat
				^{(\iotahat\ind{j,h-1},h-1)}(x_{\tau}),&\quad\tau\in[h-2],
			\end{array}
			\right. \quad x_\tau \in \cX_\tau.\nn 
		\end{align}
	\State Define $\Psi\ind{h} =\{ \pihat\ind{j,h} \colon j \in[N] \}$ \algcommentlight{Policy cover for layer $h$.}
	\EndFor
	\State \textbf{Return:} Policy covers
	$\Psi\ind{1}, \dots, \Psi\ind{H}$.
\end{algorithmic}
\end{algorithm}

 }

\icml{
\newpage
\part{Analysis}
\label{part:analysis}

  \section{Organization}
  \cref{part:analysis} of the appendix contains the proof of our main
  result, \cref{thm:policycover}, as well as other proofs. This
  section is organized as follows.
  \begin{itemize}
  \item First, in \cref{sec:overview}, we give an informal overview of
    the analysis of \cref{thm:policycover}, using the tools introduced
    in \cref{sec:key} as a starting point. In particular:
    \begin{itemize}
    \item \cref{sec:warmup} introduces and analyzes a simplified
      version of \musik intended for tabular reinforcement learning as
      a warm-up exercise.
    \item \cref{sec:block} builds on this development to showcase the
      main ideas behind the proof of \cref{thm:policycover}.
    \end{itemize}
  \item \cref{app:structural} provides proofs for the structural
    results introduced in \cref{sec:key}.
  \item \cref{sec:tabular} contains proofs for the tabular warm-up
    exercise in \cref{sec:overview}
  \item \cref{sec:BMDP} contains the proof of our main result,
    \cref{thm:policycover}. For background on the key ideas, we recommend reading the overview in
    \cref{sec:overview}.
  \item \cref{app:PSDPthmproof} contains proofs for the extensions to
    reward-based RL in \cref{sec:rewardsetting}.
  \end{itemize}

  }

       \icml{
       	\section{Overview of Analysis}
       	\label{sec:overview}
\icml{
  In this section, we give an overview of the analysis of our main result, \cref{thm:policycover}, with the full proof deferred to  \cref{sec:BMDP}. First, in \cref{sec:warmup} we show how to analyze a simplified version of \musik for the \emph{tabular} setting in which the state $\s_h$ is directly observed. Then, in \cref{sec:block}, we build on these developments to give a proof sketch for the full Block MDP setting.
  }

\arxiv{
  In this section, we give an overview of the analysis of our main result for \musik, \cref{thm:policycover}, with the full proof deferred to  \cref{sec:BMDP}. First, in \cref{sec:keytools} we introduce two analysis tools, the \emph{extended BMDP} and \emph{truncated policy class}, which play a key role in providing tight guarantees for \musik (and more broadly, non-optimistic algorithms) in the absence of minimum reachability. Then, as a warmup (\cref{sec:warmup}), we show how to analyze a simplified version of \musik for the \emph{tabular} setting in which the state $\s_h$ is directly observed (i.e., $\cX=\cS$ and $\bx_h=\bs_h$ almost surely). Finally, in \cref{sec:block}, we build on these developments to give a proof sketch for the full Block MDP setting.
  }

\arxiv{\subsection{Key Analysis Tools: Extended BMDP and Truncated Policy Class}
\label{sec:keytools}
Recall that \musik proceeds by inductively building a sequence of policy covers $\Psi\ind{1},\ldots,\Psi\ind{H}$. A key invariant maintained by the algorithm is that for each layer $h$, the previous covers $\Psi\ind{1},\ldots,\Psi\ind{h-1}$ provide good coverage for layers $1,\ldots,h-1$, and thus can be used to efficiently gather data to build the next cover $\Psi\ind{h}$. Prior approaches that build policy covers in this inductive fashion \citep{du2019provably,misra2020kinematic} require the assumption of minimum reachability (\pref{def:reachability}) to ensure that for each $h$, $\Psi\ind{h}$ \emph{uniformly} covers all possible states in $\cS_h$. In the absence of reachability, we inevitably must sacrifice certain hard-to-reach states, which necessitates a more refined analysis. In particular, we must show that the effects of ignoring hard-to-reach states at earlier layers do not compound as the algorithm proceeds forward.

To provide such an analysis, we make use of a tool we refer to as the \emph{extended BMDP} $\Mbar$. The extended BMDP $\Mbar$ augments $\cM$ by adding a set of $H$ \emph{terminal states} $\tfrak_{1:H}$ and one additional \emph{terminal action} $\afrak$ as follows:
	\begin{enumerate}
		\item The latent state space is $\wbar \cS \coloneqq \bigcup_{h=1}^H \wbar \cS_h$, where $\wb{\cS}_h\ldef\cS_h\cup\crl{\tfrak_h}$. 
		\item The action space is $\cAbar\ldef\cA \cup\crl{\afrak}$. Here, $\afrak$ is a ``terminal action'' that causes the latent state to deterministically transition to $\term_{h+1}$ from every state at layer $h\in[H-1]$.
		\item For $h\in[H-1]$, taking any action in $\wbar\cA$ at latent state $\tfrak_h$ transitions to $\tfrak_{h+1}$ deterministically.\footnote{The reason we introduce $H$ states $\tfrak_{1:H}$ instead of a single self-looping state $\tfrak$ is to keep the convention that the state space is layered.}
                \end{enumerate}
            The dynamics of $\cMbar$ (including the initial state distribution) are otherwise identical to $\cM$. We assume the state $\tfrak_h$ emits itself as an observation and we write $\wb{\cX}_h \coloneqq  \cX_h \cup\{\tfrak_h\}$, for all $h\in[H]$. We will use the convention that $\phi(\tfrak_h)=\tfrak_h$, for all $\phi\in\Phi$ and $h\in[H]$. For any policy $\pi \in \Pibar_{\nm} \coloneqq \left\{\pi \colon \bigcup_{h=1}^H (\wbar\cX_{1}\times \dots \times \wbar\cX_h) \rightarrow \Abar\right\}$, we define 
	\begin{align*}
		\Pbar^{\pi}\coloneqq \P^{\Mbar,\pi}, \quad  \Ebar^{\pi}\coloneqq \E^{\Mbar,\pi},\quad  \text{and} \quad \dbar^{\pi}(s)\coloneqq \P^{\Mbar,\pi}[\s_h = s], \quad \text{for all } s\in \cS_h \text{ and } h \in[H].
	\end{align*}
	\paragraph{Truncated policy class} On its own, the extended BMDP is not immediately useful. The main idea behind our analysis is to combine it with a restricted sub-class of policies we refer to as the \emph{truncated policy class.} Define $\Pibarm  \coloneqq \{ \pi\colon \bigcup_{h=1}^H \wbar\cX_h \rightarrow \Abar\}$. For $\epsilon \in(0,1)$, we define a sequence of policy classes $\Pibar_{0, \epsilon},\dots,\Pibar_{H, \epsilon}$, inductively, starting from $\Pibar_{0, \epsilon}=\Pibarm$:
	\begin{align}
		\pi \in \Pibar_{t, \epsilon} & \iff \nn \\  &  \hspace{-.1cm} \exists \pi'\in \Pibar_{t-1, \epsilon}: \forall h \in[H], \forall s\in \wbar\cS_h, \forall x\in \phi^{-1}_\star(s), \ 	\pi(x) = \left\{\begin{array}{ll} \fraka, & \text{if } h=t \text{ and } \max_{\tilde \pi\in \Pibar_{t-1, \epsilon}} \dbar^{\tilde \pi}(s)< \epsilon, \\   \pi'(x), & \text{otherwise}.    \end{array}    \right.  \label{eq:define0}
	\end{align}
	Restated informally, the class $\Pibar_{t,\eps}$ is identical to $\Pibar_{t-1,\eps}$, except that at layer $t$, all policies in the class $\Pibar_{t,\eps}$ take the terminal action $\afrak$ in latent states $s$ for which $\max_{\tilde \pi\in \Pibar_{t-1, \epsilon}} \dbar^{\tilde \pi}(s)< \epsilon$.
	
	We define the \emph{truncated policy class} as $\Pibar_{\epsilon} \coloneqq \Pibar_{H,\epsilon}$. The truncated policy class satisfies two fundamental technical properties. First, by construction, all policies in the class take the terminal action $\afrak$ when they encounter states that are not $\eps$-reachable by $\Pibar_\eps$. 	Second, in spite of the fact that policies in $\Pibar_{\epsilon}$ always take the terminal action on states with low visitation probability, they can still achieve near-optimal visitation probability for all states in $\cMbar$ (up to additive error). The following lemmas formalize these properties.
	\begin{lemma}[Behavior on low-reachability states]
		\label{lem:bla}
		Let $\epsilon\in(0,1)$ be given, and define
		\begin{align}
		  \label{eq:truncated_reachable}
			\cS_{h,\epsilon} \ldef  \crl*{  s\in \cS_h  \colon	\max_{\pi \in \Pibar_{\epsilon}}\dbar^{\pi}(s) \geq \epsilon}.
		\end{align}
		Then, for all $h \in[H]$ if $s\in \cS_{h }\setminus\cS_{h,\epsilon}$, then for all $\pi\in\Pibar_\eps$, $\pi(x)=\fraka$ for all $x\in \phi^{-1}_\star(s)$.
	\end{lemma}
	\begin{lemma}[Approximation for truncated policies]
		\label{lem:critical}
		Let $\epsilon\in(0,1)$ be given. For all $h \in[H]$ and $s\in \cS_h$, 
		\begin{align}
		         \label{eq:truncated_approximation}
			\max_{\pi \in \Pibarm} \dbar^{\pi}(s) \leq \max_{\pi \in \Pibar_{\epsilon}} \dbar^{\pi}(s) +S \epsilon.
		\end{align}
	\end{lemma}
	The proofs for these results (and other results in this subsection) are elementary, and are given in \pref{app:structural}.       
	Building on these properties, our proof of \pref{thm:policycover} makes use of two key ideas:
	\begin{enumerate}
		\item Even though the extended BMDP $\cMbar$ does not necessarily enjoy minimum reachability (\pref{def:reachability}), if we restrict ourselves to competing against policies in $\Pibar_\eps$, \pref{lem:bla} will allow us to ``emulate'' certain properties enjoyed by $\eps$-reachable BMDPs. This in turn will imply that if are satisfied with learning a policy cover with good coverage ``relative'' to $\Pibar_\eps$, \pref{alg:GenIk} will succeed.
		\item By \pref{lem:critical}, we lose little by restricting our attention to the class $\Pibar_\eps$. This will allow us to transfer any guarantees we achieve with respect to the extended BMDP $\cMbar$ and truncated policy class $\Pibar_\eps$ back to the original BMDP $\cM$ and unrestricted policy class $\Pim$.
	\end{enumerate}
	We make the first point precise in the sections that follow (\cref{sec:warmup,sec:block}). Before proceeding, we formalize the second point via another technical result, \pref{lem:transfer}. To do so, we introduce the notion of a \emph{relative policy cover}.
	\begin{definition}[Relative policy cover]
		\label{def:polcover}
		Let $\alpha, \veps\in[0,1)$ be given. Consider a BMDP $\cM'$, and let $\Pi$ and $\Psi$ be two sets of policies. We say that $\Psi$ is an $(\alpha,\veps)$-policy cover relative to $\Pi$ in $\cM'$ for layer $h$ if
		\begin{align}
			\max_{\pi \in \Psi} d^{\cM',\pi}(s) \geq  \alpha \cdot \max_{\pi \in \Pi} d^{\cM',\pi}(s) \quad \text{for all } \  s\in \cS_h \ \text{ such that }\  \max_{\pi \in \Pi} d^{\cM',\pi}(s)  \geq \veps. \nn
		\end{align}
	\end{definition} 
	\begin{lemma}[Policy cover transfer]
	\label{lem:transfer}
		Let $\veps\in(0,1)$ be given, and define $\eps\ldef\veps/2S$.
		Let $\Psi$ be a set of policies for $\cMbar$ that never take the terminal action $\afrak$. If $\Psi$ is a $(1/2,\epsilon)$-policy cover relative to $\Pibar_\epsilon$ in $\Mbar$ for all layers, then $\Psi$ is a $(1/4,\veps)$-policy cover relative to $\Pim$ in the $\cM$ for all layers.
	\end{lemma}
	\pref{lem:transfer} implies that for any $\veps$, letting $\eps\ldef{}\veps/2S$, if we can construct a set $\Psi$ that acts as a
	$(1/2,\eps)$-policy cover relative to
	$\Pibar_{\epsilon}$ in $\Mbar$, then $\Psi$ will also be a $(1/4,\veps)$-policy cover relative to $\Pim$ in the original BMDP $\cM$, as desired. This allows us to restrict our attention to the former goal going forward. }

	\subsection{Warm-Up: Multi-Step Inverse Kinematics for Tabular MDPs}
	\label{sec:warmup}
        \begin{algorithm}[htp]
          \caption{$\musiktab$: Multi-Step Inverse Kinematics (tabular variant)}
          \label{alg:musik_tabular}
          \begin{algorithmic}[1]\onehalfspacing
            \Require Number of samples $n$.
            \State Set $\Psi\ind{1}= \emptyset$.
            \For{$h=2\ldots, H$} 
            \State Let
            $\Psi\ind{h}=\ikdptab(\Psi\ind{1},\dots,\Psi\ind{h-1},
            n)$.\quad\algcommentlight{\pref{alg:IKDP-tab}.}
            \label{line:ikdp1}
            \EndFor
            \State \textbf{Return:} Policy covers $\Psi\ind{1},\dots,\Psi\ind{H}$. 
          \end{algorithmic}
	\end{algorithm}

In this section, we use the extended BMDP, truncated policy class, and relevant structural results introduced in
prequel to analyze a simplified version of \musik for the \emph{tabular} setting in which the state $\s_h$ is directly observed (a special case of the BMDP in which $\cX=\cS$ and $\bx_h=\bs_h$ almost surely). The tabular setting preserves the most important challenges in removing reachability, and will serve as a useful warm-up exercise for the full BMDP setting. Our analysis will also give a taste for how the multi-step inverse kinematics objective in \ikdp (\pref{line:inversenew2}) allows one to approximately implement dynamic programming.

\paragraph{$\musik$ and $\ikdp$ for tabular MDPs} \pref{alg:musik_tabular} (\musiktab) and \pref{alg:IKDP-tab} (\ikdptab) are simplified variants of \musik and \ikdp tailored to the tabular setting. \musiktab is identical to \musik, except that the subroutine \ikdp is replaced by \ikdptab. $\ikdptab$ has the same structure as $\ikdp$, but does not require access to a decoder class $\Phi$, since the states are observed directly. The algorithm takes advantage of a slightly simplified multi-step inverse kinematics objective (\cref{line:inversenew20} of \cref{alg:IKDP-tab}) which involves directly predicting actions based on the latent states. Recall that for iteration $t\in[h-1]$, the full version of $\ikdp$ uses observations to predict \emph{pairs} $(\a_t,\bi_t)$, where $\a_t$ is the action played at layer $t$ and $\bi_t\in\brk{S}$ is the (random) index of the partial policy executed after layer $t$. \ikdptab does
not require randomizing over the index $\bi_t$, and instead solves a separate regression problem for each state $i\in\brk{S}$ (representing the state being targeted at layer $h$), predicting only the action $\a_t$; we will highlight the need for the randomization over indices $\bi_t$ when we return to the BMDP setting in the sequel (\pref{sec:block}).

The following theorem, an analogue of \cref{thm:policycover} for tabular MDPs, provides the main guarantee for $\musiktab$.
\begin{theorem}[Main theorem for \musiktab]
	\label{thm:trend}
	Let $\veps,\delta \in(0,1)$ be given, and let  $n\geq 1$ be chosen such that
    \begin{align}
	n \geq \frac{c A^2 S^6 H^2 \left( S^2 A \ln n + \ln (S H^2/\delta)\right)}{\veps^2},\label{eq:condn} 
	\end{align} for some absolute constant $c>0$ independent of all problem parameters. Then, with probability at least $1-\delta$, the collections $\Psi\ind{1},\dots,\Psi\ind{H}$ produced by \musiktab are $(1/4,\veps)$-policy covers for layers $1$ through $H$.
\end{theorem}
\paragraph{Analysis by induction}
To prove \cref{thm:trend}, we proceed by induction over the layers $h=1,\dots,H$. Leveraging the extended MDP and truncated policy class, we will show that for each layer $h\in[H]$, if the collections $\Psi\ind{1},\dots, \Psi\ind{h-1}$ produced by $\ikdptab$ have the property that
\begin{align}
  \Psi\ind{1},\dots, \Psi\ind{h-1} \text{ are $(1/2,\epsilon)$-policy covers relative to $\Pibar_{\epsilon}$ in $\Mbar$ for layers $1$ through $h-1$,} \label{eq:inductouter}
\end{align}
then with high probability, the collection $\Psi_{h}$ produced by $\ikdptab(\Psi_{1:h-1},n)$ will be a  $(1/2,\epsilon)$-policy cover relative to $\Pibar_{\epsilon}$ in $\Mbar$ for layer $h$. Formally, we will prove the following result.
\begin{theorem}[Main theorem for $\ikdptab$]
	\label{thm:geniklemma0}
	Let $\epsilon,\delta \in(0,1)$ and $h\in[H]$ be given and define $\veps_{\stat}(n, \delta')\coloneqq  \sqrt{ n^{-1}\prn{S^2 A \ln n+ \ln (1/\delta')}}$. Assume that:
        \begin{enumerate}
        \item \emikdptab is invoked with $\Psi\ind{1},\ldots, \Psi\ind{h-1}$ satisfying \cref{eq:inductouter}.
        \item The policies in $\Psi\ind{1},\ldots, \Psi\ind{h-1}$ never take the terminal action $\afrak$.
        \item The parameter $n$ is chosen such that $8A  S^2 HC \cdot\veps_{\stat}(n,\frac{\delta}{SH^2})\leq \epsilon$ for some absolute constant $C>0$ independent of all problem parameters.
      \end{enumerate}
      Then, with probability at least $1-\frac{\delta}{H}$, the collection $\Psi\ind{h}$ produced by $\emikdptab(\Psi\ind{1},\dots,\Psi\ind{h-1},n)$ is an $(1/2,\epsilon)$-policy cover relative to $\Pibar_{\epsilon}$ in $\Mbar$ for layer $h$. In addition, $\Psi\ind{h}\subseteq \Pim^{1:h-1}$.
      \end{theorem}
      With this result in hand, the proof of \pref{thm:trend} follows swiftly.
      \begin{proof}[{Proof of \cref{thm:trend}}]
  Let $\delta,\veps\in(0,1)$ be given and let $\epsilon \coloneqq \veps/(2S)$. Let $\veps_\stat(\cdot, \cdot)$ and $C$ be as in \cref{thm:geniklemma0}; here $C$ is an absolute constant independent of all problem parameters. Let $\cE_h$ denote the event that \ikdptab succeeds as in \cref{thm:geniklemma0} for layer $h\in[H]$ with parameters $\delta$ and $\eps$, and define $\cE\coloneqq \bigcap_{h\in[H]}\cE_h$. Observe that by \pref{thm:geniklemma0} and the union bound, we have $\P[\cE]\geq 1 -\delta$. For $n$ large enough such that $8A  S^2 HC \cdot\veps_{\stat}(n,\frac{\delta}{SH^2})\leq \epsilon$ (which is implied by the condition on $n$ in the theorem's statement for $c=2^5 C$), \cref{thm:geniklemma0} implies that under $\cE$, the output $\Psi\ind{1},\dots,\Psi\ind{H}$ of $\musik$ are $(1/2,\epsilon)$-policy covers relative to $\Pibar_{\epsilon}$ in $\wbar \cM$ for layers 1 to $H$, respectively. We conclude by appealing to \cref{lem:transfer}, which now implies that $\Psi\ind{1},\ldots,\Psi\ind{H}$ are $(1/4,\veps)$-policy covers relative to $\Pim$ in $\cM$.

\begin{algorithm}[t]
  \caption{$\ikdptab:$ Inverse Kinematics for Dynamic Programming (tabular variant)}
  \label{alg:IKDP-tab}
  \begin{algorithmic}[1]\onehalfspacing
    \Require
    ~
    \begin{itemize}[leftmargin=*]
    \item Approximate covers
      $\Psi\ind{1},\dots,\Psi\ind{h-1}$ for layers $1$
      to $h-1$, where $\Psi\ind{t} \subseteq\Pim^{1:t-1}$.
    \item Number of samples $n$.
    \end{itemize}
    \For{$t=h-1,\ldots, 1$} \label{line:mainiter0}
    \State $\cD\ind{t}\leftarrow \emptyset$.
    \Statex[1]\algcommentbiglight{Collect data by rolling in with
      policy cover and rolling out with partial policy}
    \For{$i\in[S]$} \label{line:for2}
    \For{$n$ times}
    \State Sample $(\s_t, \a_t, \s_{h})\sim \unif(\Psi\ind{t}) \circ_t \unifa\circ_{t+1} \pihat ^{(i,t+1)}$.
    \State $\cD\ind{t} \leftarrow \cD\ind{t}\cup \{(\a_t, \s_t, \s_{h})\}$.
    \EndFor
    \Statex[2] \algcommentbiglight{Inverse kinematics}
    \State \label{line:inversenew20}
    \begin{equation}
      \fhat\ind{i,t} \in \mathrm{argmax}_{f\colon  [S]^2 \rightarrow \Delta_{A}} \sum_{(a,s,s')\in \cD\ind{t}} \ln  f(a \mid s, s').
      \label{eq:ik_tabular}
    \end{equation}
    \Statex[2] \algcommentbiglight{Update partial policy cover}
    \State Define $\ahat\ind{i,t}(s)\in \argmax_{a\in \cA} \fhat\ind{i,t}(a \mid s,i)$.
    \State  Define $\hat\pi^{(i,t)}\in \Pim^{t:h-1}$  via\label{line:policy_update_tabular}
    \begin{align}
      \hat  \pi^{(i,t)}(s)\coloneqq \left\{
      \begin{array}{ll}
        \ahat\ind{i,t}(s),&\quad s\in\cS_t,\\
        \pihat
        ^{(i,\tau)}(s),&\quad  s\in \cS_\tau,\;\;\tau\in\range{t+1}{h-1}.
      \end{array}
                         \right.
                         \label{eq:policy_update_tabular}
    \end{align}
    \EndFor
    \EndFor
    
    \State \textbf{Return:} Policy cover
    $\Psi\ind{h}=\{\hat\pi^{(i,1)}\colon i
    \in[S]\}\subseteq \Pim^{1:h-1}$ for layer $h$.
  \end{algorithmic}
\end{algorithm}

We now compute the total number of trajectories used by the algorithm.
Recall that when invoked with parameter $n\in\bbN$, \musiktab invokes $H-1$ instances of $\ikdptab$, each with parameter $n$. Each instance of $\ikdptab$ uses $n$ trajectories for each layer $t\in[h-1]$ and $i\in[S]$ (see \cref{line:mainiter0,line:for2} of \cref{alg:IKDP-tab}), so the total number of trajectories used by $\musiktab$ is at most
		\begin{align}
                  \bigoht(1)\cdot\frac{A^2 S^7 H^4 \left( S^2 A + \ln (|\Phi|S H^2/\delta)\right)}{\veps^2}.\nn 
		\end{align}
\end{proof}

\subsubsection{Proof Sketch for \creftitle{thm:geniklemma0}}
We now sketch the proof of \pref{thm:geniklemma0}. The most important feature of the proof is that the guarantee on which we induct, \pref{eq:inductouter}, is stated with respect to the extended MDP and truncated policy class. We work in the extended MDP throughout the proof, and only pass back to the original MDP $\cM$ and full policy class $\Pim$ in the proof of \pref{thm:trend} (see above) \emph{once the induction is completed}.

Let $h\in[H]$ and $\epsilon>0$ be fixed, and assume that \cref{eq:inductouter} holds (that is, $\Psi\ind{1},\dots, \Psi\ind{h-1}$ are $(1/2,\epsilon)$-policy covers relative to $\Pibar_{\epsilon}$ in $\Mbar$ for layers $1$ through $h-1$). We will prove that the collection $\Psi\ind{h}$ produced by $\emikdptab(\Psi\ind{1},\dots,\Psi\ind{h-1},n)$ is an $(1/2,\epsilon)$-policy cover relative to $\Pibar_{\epsilon}$ in $\Mbar$ for layer $h$. We first argue that proving \cref{thm:geniklemma0} reduces to showing the following lemma. To state the result, recall that $\cS_{h,\epsilon} \ldef  \crl*{  s\in \cS_h  \colon	\max_{\pi \in \Pibar_{\epsilon}}\dbar^{\pi}(s) \geq \epsilon}$ is the set of states that are $\eps$-reachable by $\Pibar_\eps$ in $\cMbar$.
\begin{lemma}
  \label{lem:prop}
  Assuming points 1.~and 2.~in \cref{thm:geniklemma0} hold, and if $n$ is chosen large enough such that $8A  S^2 HC \cdot\veps_{\stat}(n,\frac{\delta}{SH^2})\leq \epsilon$ for some absolute constant $C>0$ independent of all problem parameters, then for all $t\in[h-1]$, with probability at least $1-\delta/H^2$, the learned partial policies $\crl*{\pihat\ind{i,t}}_{i\in[S]}$ in $\ikdptab$ have the property that for all $i \in \cS_{h,\epsilon}$,
  \begin{align}
    \dbar^{\pi_\star\ind{i}\circ_{t+1} \pihat\ind{i,t+1}}(i)  - \dbar^{\pi_\star\ind{i}\circ_{t}\pihat\ind{i,t}}(i)\leq \frac{\epsilon}{2H}, \quad \text{where}\quad \pi_\star\ind{i} \in \argmax_{\pi \in \Pibar_\epsilon} \dbar^{\pi}(i).\label{eq:prop}
  \end{align}
\end{lemma}
For each $i\in\cS_{h,\eps}$, $\pi_\star\ind{i}$ in \pref{eq:prop} denotes the policy in the truncated class $\Pibar_{\epsilon}$ that maximizes the probability of visiting $i$ at layer $h$. Informally, \cref{eq:prop} states that if we execute $\pi_\star\ind{i}$ up to layer $t-1$ (inclusive), then switch to the learned partial policy $\pihat\ind{i,t}$ for the remaining steps (i.e.~execute $\pi_\star\ind{i}\circ_{t} \pihat\ind{i,t}$), then the probability of reaching state $i$ in layer $h$ is close to what is achieved by running $\pi_\star\ind{i}\circ_{t+1} \pihat\ind{i,t}$. In other words, $\pihat\ind{i,t}$ is near-optimal in an average-case sense. We now show that \cref{thm:geniklemma0} follows from \cref{lem:prop}.
\begin{proof}[Proof of \cref{thm:geniklemma0}]
	For $t \in [h-1]$, let $\cE_t$ denote the success event of \cref{lem:prop}. Let us condition on the event $\cE \coloneqq \bigcap_{t \in [h-1]} \cE_t$. Fix $\i \in \cS_{h,\epsilon}$. Summing the \lhs of \pref{eq:prop} over $t=1,\dots,h-1$ for $i=\i$ and telescoping, we have that
	\begin{align}
		\dbar^{\pihat\ind{\i,1}}(\i)  \geq 	\max_{\pi \in \Pibar_\epsilon} \dbar^{\pi}(\i) - \frac{\epsilon}{2} \geq  \frac{1}{2}	\max_{\pi \in \Pibar_\epsilon} \dbar^{\pi}(\i),\label{eq:newest}
	\end{align}
where the last inequality follows by the fact that $\max_{\pi \in \Pibar_\epsilon} \dbar^{\pi}(\i)\geq \epsilon$ (since $\i \in \cS_{h,\epsilon}$).
Since this conclusion holds uniformly for all $\i\in\cS_{h,\epsilon}$, we have that under the event $\cE$, the output $\Psi\ind{h} \coloneqq \{ \pihat\ind{i,1}\colon i \in[S]\}$ of \cref{alg:IKDP-tab} is a $(1/2,\epsilon)$-policy cover relative to $\Pibar_\epsilon$ for layer $h$. Finally, by a union bound, we have $\P[\cE] \geq 1 - \P[\cE^c] \geq 1 -\sum_{t\in[h-1]}\sum_{i\in[S]} \P[(\cE\ind{i}_t)^c]\geq 1 - \delta/H$, which completes the proof. \end{proof}

  \begin{remark}
    \label{rem:performance}
It is also possible to derive \pref{eq:newest} from \cref{lem:prop} using the performance difference lemma \citep{kakade2003sample} with a specific state-action value function; this perspective will be useful when we generalize our analysis from the tabular to the BMDP setting. To see how the performance difference lemma can be applied to obtain \pref{eq:newest}, fix $i\in\brk{S}$ and consider the \emph{state-action value function} ($Q$-function) at layer $t$ with respect to the partial policy $\pihat\ind{i,t}\in \Pim^{t:h-1}$ for the MDP $\wbar \cM$ with rewards $r\ind{i}_\tau(s)=\mathbf{1}\{s=i\} \cdot \mathbf{1}\{\tau = h\}$, for $\tau\in[h]$; that is, \loose
  \begin{align}
  	Q_t^{\hat \pi\ind{i,t}}(s,a;i) = r\ind{i}_{t}(s) + \Ebar^{\hat \pi\ind{i,t}}\left[\left.\sum_{\tau=t+1}^h r\ind{i}_\tau(\s_\tau) \ \right\mid\  \s_t = s,\a_t = a\right].\label{eq:Qfunction} 
  \end{align}
Thanks to the choice of reward functions, we have 
\begin{align}
	Q_t^{\hat \pi\ind{i,t}}(s,a;i)  &= \Pbar^{ \pihat\ind{i,t+1}}[\s_h=i\mid \s_t =s, \a_t = a], \label{eq:FK} \\  \shortintertext{and thus}    \dbar^{\pihat\ind{i,1}}(i) - \dbar^{\pi_\star\ind{i}}(i) &= \E\left[	Q_1^{\hat \pi\ind{i,t}}(\s_1,\hat \pi\ind{i,t}(\s_1);i) - Q_1^{\pi_\star\ind{i}}(\s_1,\pi_\star\ind{i}(\s_1);i)  \right]. \label{eq:lhs}
\end{align}
Thus, by the performance difference lemma, the \rhs of \eqref{eq:lhs} can be bounded by 
\begin{align}
	\sum_{t=1}^{h-1}\Ebar^{\pi_\star\ind{i}} \left[Q^{\pihat\ind{i,t}}_t(\s_t, \pi_\star\ind{i}(\s_t); i)  -Q^{\pihat\ind{i,t}}_t(\s_t, \pihat ^{(i,t)}(\s_t); i)\right]. \label{eq:cont}
	\end{align}
        Thanks to \pref{eq:FK}, the quantity in \eqref{eq:cont} is simply $\sum_{t=1}^{h-1} \big(\dbar^{\pi_\star\ind{i}\circ_{t+1} \pihat\ind{i,t+1}}(i)  - \dbar^{\pi_\star\ind{i}\circ_{t}\pihat\ind{i,t}}(i)\big)$, which can directly be bounded using \cref{lem:prop} to arrive at the conclusion in \pref{eq:newest}.
  \end{remark}

     It remains to prove \cref{lem:prop}.
  To prove the result, we first use the multi-step inverse kinematics objective to establish a certain ``local'' optimality guarantee. We combine this with the assumption that $\Psi\ind{1},\ldots,\Psi\ind{h-1}$ are policy covers, along with certain structural properties of the extended MDP $\cMbar$, to conclude the result.

\paragraph{A local optimality guarantee from multi-step inverse kinematics}
Fix $1\leq t<h$ and a state $i\in\cS_{h,\epsilon}$, and let $\{\pihat\ind{i,t+1}\colon i\in[S]\}$ be the partial policies constructed by $\ikdptab$ at layer $t+1$. As the first step toward constructing the policy $\pihat \ind{i,t}$, $\ikdptab$ computes an estimator $\fhat\ind{i,t}\colon \brk{S}^2\to\Delta(\cA)$ by solving the multi-step inverse kinematics objective in \pref{line:inversenew20}. This entails predicting the probability of the action $\a_t$ conditioned on the states $\s_t$ and $\s_h$, under the process $(\s_t,\a_t,\s_h) \sim \P^{\unif(\Psi\ind{t})\circ_t \unifa \circ_{t+1}\pihat\ind{i,t+1}}$.\footnote{{Note that $\unifa$ denotes the policy that samples $\a_t$ uniformly from $\cA$, not $\cAbar$.}} The following result gives a generalization guarantee for $\fhat\ind{i,t}$ under this process.
	\begin{lemma}[Conditional density estimation guarantee]
          \label{lem:regtab}
          Fix $t \in \brk{h-1}$. Let $n\geq 1$ and $\delta\in(0,1)$ be given, and define $\veps_\stat(n,\delta)\coloneqq n^{-1/2}\cdot  \sqrt{S^2 A\ln n + \ln (1/\delta)}$. Assume that the policies in $\Psi\ind{t}$ never take the terminal action $\afrak$. Then, there exists an absolute constant $C>0$ (independent of $t,h$, and other problem parameters) such that for all $i\in[S]$ the solution $\fhat\ind{i,t}$ to the conditional density estimation problem in \cref{line:inversenew20} of \cref{alg:IKDP-tab} has that with probability at least $1-\delta$,
		\begin{align}\label{eq:mle_tabular}
			\Ebar^{\unif(\Psi\ind{t})\circ_t \unifa \circ_{t+1} \pihat\ind{i,t+1}} \left[ \sum_{a\in\cA}\left(  \fhat\ind{i,t}(a\mid \s_t, \s_h)	-  P\ind{i,t}_{\bayes}(a\mid \s_t ,\s_h) \right)^2 \right]\leq   C^2 \cdot \veps_\stat^2(n,\delta) ,
		\end{align}
                where \begin{equation}\label{eq:bayes_tabular}
                  P\ind{i,t}_{\bayes}(a\mid s ,s')	\coloneqq
     \frac{\Pbar^{\pihat\ind{i,t+1}}[\s_h=s'\mid
            \s_t=s,\a_t=a]}{Z\ind{i,t}(s,s')}, \ \ \text{for}\ \ Z\ind{i,t}(s,s')\coloneqq  \sum_{a'\in\cA}\Pbar^{
            \pihat\ind{i,t+1}}[\s_h=s'\mid \s_t=s,\a_t=a'].
          \end{equation}
	\end{lemma}
        \pref{lem:regtab} is a consequence of a standard generalization bound for conditional density estimation. The \emph{Bayes-optimal regression function} $\Pbayes\ind{i,t}$ represents the true conditional probability for $\a_t$ under the process $(\s_t,\a_t,\s_h) \sim \P^{\unif(\Psi\ind{t})\circ_t \unifa \circ_{t+1}\pihat\ind{i,t+1}}$. This quantity is useful as a proxy for another quantity we refer to as \emph{forward kinematics}:
\begin{align}
\fk\ind{t}(i\mid s,a ) \coloneqq   \Pbar^{ \pihat\ind{i,t+1}}[\s_h=i\mid \s_t =s, \a_t = a]. \label{eq:fk}
\end{align}
The utility of forward kinematics is somewhat more immediate: It represents the probability that we reach state $i$ at layer $h$ if we start from $\s_t=s$, take action $\a_t=a$, and then roll out with $\pihat\ind{i,t+1}$; equivalently $\fk\ind{t}(i\mid{}s,a)$ is the Q-function for the reward function $\indic\crl{\s_h=i}$---see \pref{eq:FK}. Hence, by the principle of dynamic programming, it is natural to choose 
\begin{align}
\pihat\ind{i,t}(s)=\argmax_{a\in\cA}\fk\ind{t}(i\mid s,a ).\label{eq:fk_opt}
\end{align}
\ikdptab does not directly compute the forward kinematics, and hence cannot directly define $\pihat\ind{i,t}$ based on \pref{eq:fk_opt}. Instead, we compute
\begin{align}
\pihat\ind{i,t}(s)=\argmax_{a\in\cA}\Pbayes\ind{i,t}(a\mid s, i).\label{eq:bayes_pihat}
\end{align}
To see that this is equivalent, observe that $P\ind{i,t}_{\bayes}(a\mid s ,i)$ is a ratio of two quantities: The numerator is exactly $\fk\ind{t}(i\mid{}s,a)$, and the denominator is a ``constant'' whose value does not depend on $a$. With some manipulation, we can use this fact to relate suboptimality with respect to $\fk\ind{t}$ to the regression error in \pref{eq:mle_tabular}, leading to the following ``local'' optimality guarantee for $\pihat\ind{i,t}$ (see \cref{app:geniklemma0proof} for a proof).
\begin{lemma}[Local optimality guarantee]
\label{lem:proppost}
Consider the setting of \cref{thm:geniklemma0} and let $t\in[h-1]$. Then, there is an event $\cE_t$ of probability at least $1-\delta/H^2$ under which the learned partial policies $\crl*{\pihat\ind{i,t}}_{i\in\brk{S}}$ and $\crl{\pihat \ind{i,t+1}}_{i\in[S]}$ in $\ikdptab$ have the property that for all $i \in \cS_{h}$, 
\begin{align}
	\sum_{\pi \in \Psi\ind{t}}	\dbar^{\pi}(s_t)	\left(	\max_{a\in \cA}	\fk\ind{t}(i \mid s_t, a) - \fk\ind{t}(i \mid s_t, \pihat\ind{i,t}(s_t))\right) \leq 2S AC\veps_\stat(n, \delta /(SH^2)), \quad \forall s_t \in \cS_t,\label{eq:high}
	\end{align}
where $\veps_\stat(\cdot,\cdot)$ and $C>0$ are as in \cref{lem:regtab}; here $C>0$ is an absolute constant independent of problem parameters.
\end{lemma}

\begin{remark}
  For the tabular setting where $\s_h$ is observed, it is also possible to estimate the function $\fk\ind{t}(i\mid s,a )$ directly. However, in the BDMP setting, estimating forward kinematics is \emph{not possible} because states are not observed. We will see that in spite of this, the multi-step inverse kinematics objective used in \ikdp still serves as a useful proxy for the forward kinematics.
\end{remark}
We now use \cref{lem:proppost} to prove \cref{lem:prop}.

\begin{proof}[Proof of \cref{lem:prop}] To prove \cref{lem:prop}, we translate the local suboptimality guarantee in \pref{eq:high} to the global guarantee in \pref{eq:prop}. Fix $\i \in \cS_{h,\epsilon}$ and let us abbreviate $\veps_\stat' \equiv C\cdot \veps_\stat(n, \delta /(S H^2))$, where $\veps_\stat(\cdot,\cdot)$ and $C>0$ are as in \cref{lem:regtab}. Condition on the event $\cE_t$ of \cref{lem:proppost}. We begin by writing the \lhs of \cref{eq:prop} in a form that is closer to the \lhs of \pref{eq:high}:
\begin{align}
\dbar^{\pi_\star^{(\i)}\circ_{t+1} \pihat\ind{\i,t+1}}(\i)  - \dbar^{\pi_\star^{(\i)}\circ_{t}\pihat\ind{\i,t}}(\i)
  &= \sum_{s\in \cS_t\cup\{\tfrak_t\}}	\dbar^{\pi_\star^{(\i)}}(s)\cdot \left( \fk\ind{t}(\i\mid s, \pi_\star^{(\i)}(s) ) - \fk\ind{t}(\i \mid s, \pihat\ind{\i,t}(s))\right), \label{eq:propo}
\end{align}
where we use the convention that $\pihat\ind{\i,t}(\tfrak_t)=\afrak$; this equality follows by the definition of $\fk\ind{t}$ in \pref{eq:fk}.  
Now, we bound the \rhs of \cref{eq:propo} in terms of the \lhs of \pref{eq:high} by using that $\Psi\ind{t}$ is a relative policy cover. In particular, since $\Psi\ind{t}$ is an $(1/2,{\epsilon})$-policy cover relative to $\Pibar_{\epsilon}$ at layer $t$, and since $\pi_\star\ind{\i}\in\Pibar_\eps$, we have that for all $s_t \in \cS_{t,\epsilon}$, 
\begin{align}
&	\dbar^{\pi_\star^{(\i)}}(s_t) \left(\fk\ind{t}(\i \mid s_t,\pi_\star^{(\i)}(s_t)) - \fk\ind{t}(\i\mid s_t, \pihat\ind{\i,t}(s_t))  \right) \nn \\
	& \leq 	\dbar^{\pi_\star^{(\i)}}(s_t)	\left(\max_{a\in \wbar\cA} \fk\ind{t}(\i\mid s_t,a) - \fk\ind{t}(\i\mid s_t, \pihat\ind{\i,t}(s_t)) \right), \nn \\ 
	&= 	\dbar^{\pi_\star^{(\i)}}(s_t)	\left(\max_{a\in \cA} \fk\ind{t}(\i\mid s_t,a) - \fk\ind{t}(\i\mid s_t, \pihat\ind{\i,t}(s_t)) \right), \label{eq:india} \\ 
	& \leq  2	\sum_{\pi \in \Psi\ind{t}}	\dbar^{\pi}(s_t)	\left(\max_{a\in \cA} \fk\ind{t}(\i\mid s_t,a) - \fk\ind{t}(\i\mid s_t, \pihat\ind{\i,t}(s_t)) \right),\nn \\
	& \leq   4S A\veps_\stat',\label{eq:forget000}
\end{align}
where \cref{eq:india} follows from the fact $\fk\ind{t}(\i\mid s_t,\fraka)=0$ (since $\fraka$ is the action leading to the terminal state $\tfrak_{t+1}$ from any state at layer $t$), so that $\max_{a\in \wbar\cA} \fk\ind{t}(\i\mid s_t,a)=\max_{a\in \cA} \fk\ind{t}(\i\mid s_t,a)$; \cref{eq:forget000} follows from \cref{eq:high} in \cref{lem:proppost}. On the other hand, by \cref{lem:bla}, we have that for all $s_t \in \cS_t \setminus \cS_{t,\epsilon}$, $\pi_\star^{(\i)}(s_t)=\fraka$. Therefore,
\begin{align}
	\forall s_t \in \cS_t\setminus \cS_{t,\epsilon}, \quad			\fk\ind{t}(\i \mid s_t,\pi_\star^{(\i)}(s_t))  = 	\fk\ind{t}(\i \mid s_t,\afrak)  =0.\label{eq:xxx}
\end{align}
Using this together with the fact that $ \fk\ind{t}(\i\mid s_t, \pihat\ind{\i,t}(s_t))\geq 0$ and \cref{eq:forget000} implies that
\begin{align}
	\forall s_t \in \cS_{t},\quad 		\dbar^{\pi_\star^{(\i)}}(s_t) \left(\fk\ind{t}(\i \mid s_t,\pi_\star^{(\i)}(s_t)) - \fk\ind{t}(\i\mid s_t, \pihat\ind{\i,t}(s_t))  \right)\leq 4S A\veps_\stat'.\label{eq:tosum}
\end{align}
Now, by choosing $n$ large enough such that $8 HS^2 A C\veps_{\stat}(n,\frac{\delta}{ SH^2}) \leq  \epsilon$ (as in the lemma's statement), we have $8 HS^2 A \veps_\stat' \leq \epsilon$ by definition of $\veps_\stat'$. Using this and summing \eqref{eq:tosum} over $s_t\in \cS_t$ in \eqref{eq:tosum} we have that
\begin{align}
  \sum_{s\in \cS_t\cup\{\tfrak_t\}}	\dbar^{\pi_\star^{(\i)}}(s)\cdot \left( \fk\ind{t}(\i\mid s, \pi_\star^{(\i)}(s) ) - \fk\ind{t}(\i \mid s, \pihat\ind{\i,t}(s))\right)
  \leq \frac{\epsilon}{2H},  \label{eq:preperf}
\end{align}
where we have used that $\fk\ind{t}(\i\mid\term_t,\cdot)=0$.
\end{proof}

	\subsection{From Tabular MDPs to Block MDPs}
	\label{sec:block}

	We now give an overview of the proof of \pref{thm:policycover}. The proof builds on the techniques in \pref{sec:warmup} and follows the same structure, but requires non-trivial changes to accommodate the general BMDP setting. We highlight the most important similarities and differences below, with the full proof deferred to \pref{sec:BMDP}.

Recall that on the algorithmic side, the main change in moving from the tabular setting to the general BMDP setting is that the latent states $\s_h$ are unobserved. To address this, the multi-step inverse kinematics objective in \ikdp (\pref{line:inversenew2}) differs from the simplified version in \ikdptab by incorporating estimation of a decoder $\phihat\ind{t}\in\Phi$ at each step $t\in\brk{h-1}$. Here, a critical property of the multi-step inverse kinematics objective is that the Bayes-optimal regression function (the BMDP analogue of \pref{eq:bayes_tabular}) only depends on the observations $\x_t$ and $\x_h$ through $\phistar(\x_t)$ and $\phistar(\x_h)$, which ensures that the conditional density estimation problem in \pref{line:inversenew2} is always well-specified.

\paragraph{The need for non-Markovian policies}
\ikdp also differs from \ikdptab in how we construct the partial policy collection $\crl{\pihat\ind{i,t}}_{i\in\brk{S}}$ for layer $t\in\brk{h-1}$ from the collection $\crl{\pihat\ind{i,t+1}}_{i\in\brk{S}}$ learned at layer $t+1$. The construction in \cref{line:nonmark} of \ikdp, as discussed in \cref{sec:musik}, leads to policies that are \emph{non-Markovian} (that is, history-dependent). This complicates the analysis because we cannot appeal to the performance difference lemma in the same fashion \pref{sec:warmup} (see \cref{rem:performance}), where it was used to relate the local suboptimality for each policy to global suboptimality. Before giving an overview for how we overcome this challenge, we first give a more detailed explanation as to \emph{why} $\ikdp$ builds non-Markovian policies.

Fix $h\in\brk{H}$. Recall that in the tabular setting, for each backward step $t\in\brk{h-1}$, each partial policy $\pihat\ind{i,t}$ constructed in $\ikdptab$ is designed to target the state $i\in \cS_h$. In the BMDP setting, the states $\s_h$ are unobserved, and it is no longer the case that the partial policy $\pihat\ind{i,t} \in \Pinm^{t:h-1}$ constructed in $\ikdp$ targets the state $i\in\cS_h$. Indeed, while we will show that each partial policy $\pihat\ind{i,t}$ (approximately) targets \emph{some} state in $\cS_h$, the algorithm has no way of knowing which one.\footnote{Unless additional assumptions are added, the latent representation may only be learned up to an unknown permutation.} An additional challenge, which motivates the composition rule in \pref{line:nonmark} of \ikdp, is that for each $i\in\brk{S}$, the suffix policy $\pihat\ind{i,t+1}$ and the one-step policy $\ahat\ind{i,t}$ learned in \pref{line:second} may target \emph{different latent states}, so it does not suffice to simply construct $\pihat\ind{i,t}$ by composing them.
This motivates the second key difference between the multi-step inverse kinematics objectives used in \ikdp versus \ikdptab. The objective in $\ikdp$ predicts both actions and \emph{indices of roll-out policies} (instead of just actions, as in the tabular case) in order to learn to associate partial policies at successive layers. In particular, recall that \pref{line:second} of \ikdp defines
\[
  (\ahat\ind{i,t}(x), \iotahat\ind{i,t}(x))=
  \argmax_{(a,j)} \fhat\ind{t}((a,j) \mid
  \phihat\ind{t}(x),i), \quad x\in \cX_t.
\]
As described in \cref{sec:musik}, one should interpret $j=\iotahat\ind{i,t}(x)$ as the \emph{most likely} (or most closely associated) roll-out policy $\pihat\ind{j,t+1}$ given that the (decoded) latent state at layer $h$ is $i\in\brk{S}$ and $x\in\cX_t$ is the current observation at layer $t$. With this in mind, the composition rule in \pref{line:nonmark} constructs $\pihat\ind{i,t}$ via
\[
  \hat  \pi^{(i,t)}(x_{t:\tau})\coloneqq\left\{
\begin{array}{ll}
                            \ahat\ind{i,t}(x_t),&\quad \tau=t,\;\;
                                                  x_t\in\cX_t,\\
                            \pihat
                            ^{(\iotahat\ind{i,t}(x_t),t+1)}(x_{t+1:\tau}),&\quad\tau\in\range{t+1}{h-1},\;\;
                                                                   x_{t:\tau}\in \cX_t\times \dots \times\cX_{\tau}.
                          \end{array}
\right.
\]
For layers $t+1,\ldots,h-1$, this construction follows the policy $\pihat\ind{\iotahat\ind{i,t}(x_t),t+1}$ which---per the discussion above---is most associated with the decoded state $i\in\brk{S}$. At layer $t$, we select $a_t=\ahat\ind{i,t}(x_t)$, which maximizes the probability of reaching the decoded latent state $i\in\brk{S}$ when we roll-out with $\pihat\ind{\iotahat\ind{i,t}(x_t),t+1}$. The construction, while intuitive, is non-Markovian, since for layers $t+1$ and onward the policy depends on $x_t$ through $\iotahat\ind{i,t}(x_t)$.

\paragraph{Analysis by induction}
The proof of \pref{thm:policycover} follows the same high-level structure as \pref{thm:trend} (\musiktab), and we use the same induction strategy: For each layer $h\in\brk{H}$, we assume that $\Psi\ind{1},\dots, \Psi\ind{h-1}$ are
approximate policy covers relative to $\Pibar_\eps$ for $\cMbar$, then show that the collection $\Psi\ind{h}$ produced by $\ikdp(\Psi\ind{1},\dots,\Psi\ind{h-1}, \Phi,n)$ is an approximate cover with high probability whenever this holds. As with the tabular setting, a key component in our proof is to work with the extended BMDP and truncated policy class throughout the induction, and only pass back to the original BMDP at the end.

The following result (proven in \pref{app:geniklemmaproof}) is our main theorem concerning the performance of \ikdp, and serves as the BMDP analogue of \cref{thm:geniklemma0}.
	\begin{theorem}[Main Theorem for \ikdp]
		\label{thm:geniklemma}
                Let $\epsilon,\delta \in(0,1)$ and $h\in[H]$ be given, and define $\veps_\stat(n,\delta)\coloneqq n^{-1/2}  \sqrt{S^3 A\ln n + \ln (|\Phi|/\delta)}$. Assume that:
                \begin{enumerate}
                \item \ikdp is invoked with $\Psi\ind{1},\ldots, \Psi\ind{h-1}$ satisfying \cref{eq:inductouter}.
        \item The policies in $\Psi\ind{1},\ldots, \Psi\ind{h-1}$ never take the terminal action $\afrak$.
        \item The parameter $n$ is chosen such that $8 A  S^4 HC\veps_{\stat}(n,\frac{\delta}{ H^2})\leq \epsilon$, for some absolute constant $C>0$ independent of $h$ and other problem parameters.
      \end{enumerate}
      Then, with probability at least $1-\frac{\delta}{H}$, the collection $\Psi\ind{h}$ produced by $\ikdp(\Psi\ind{1},\dots,\Psi\ind{h-1},\Phi,n)$ is an $(1/2,\epsilon)$-policy cover relative to $\Pibar_{\epsilon}$ in $\Mbar$ for layer $h$. In addition, $\Psi\ind{h}\subseteq \Pinm^{1:h-1}$.
	\end{theorem}
        We close the section by highlighting some key differences between the proof of this result and its tabular counterpart (\cref{thm:geniklemma0}).\loose
       
\paragraph{An alternative to \cref{lem:prop}}
Recall that in the tabular setting, the proof of \cref{thm:geniklemma0} relied on \cref{lem:prop} and the performance difference lemma (see \cref{rem:performance}). In the BMDP setting, \cref{lem:prop} does not necessarily hold since, unlike in the tabular setting, successive partial policies $\pihat\ind{i,t} \in \Pinm^{t:h-1}$ and $\pihat\ind{i,t+1} \in \Pinm^{t+1:h-1}$ may target different states at layer $h$ despite sharing the same index $i\in[S]$. For this reason, we use a modified version of \cref{lem:prop}, together with a generalized version of the performance difference lemma.  
\begin{lemma}[BMDP counterpart to \cref{lem:prop}]
	\label{lem:propnew}
	There is an absolute constant $C>0$ such that for all $t\in[h-1]$, with probability at least $1-\delta/H^2$, the learned partial policies $\crl*{\pihat \ind{i,t}}_{i\in\brk{S}}$ and $\crl*{\pihat \ind{i,t+1}}_{i\in[S]}$ in $\ikdp$ have the property that for all $s_h \in \cS_{h,\epsilon}$, there exists $\i\in [S]$ such that
	\begin{align}
0	\leq 	\sum_{\pi \in \Psi\ind{t}}	\bar{d}^{\pi}(s_t) \Ebar_{\x_t \sim q(\cdot \mid s_t)}	\left[	\max_{a\in \cA, j \in[S]}	Q_t^{\pihat\ind{j,t+1}}(\x_t,a;s_h) - V_t^{\pihat\ind{\i,t}}(\x_t;s_h) \right] \leq 2S^3 A C \veps_\stat(n, \tfrac{\delta}{H^2}), \ \ \forall s_t \in \cS_t, \label{eq:propnew}
	\end{align}
where $Q_t^{\pihat\ind{j,t+1}}(x_t,a;s_h)\coloneqq \Pbar^{\pihat\ind{j,t}}[\s_h = s_h \mid \x_t = x_t, \a_t = a]$, $V_t^{\pihat\ind{\i,t}}(x_t;s_h) \coloneqq Q_t^{\pihat\ind{\i,t+1}}(x_t,\pihat\ind{\i,t}(x_t);s_h)$, and $\veps_\stat(n,\delta')\coloneqq n^{-1/2}  \sqrt{S^3 A\ln n + \ln (|\Phi|/\delta')}$.
\end{lemma}
This result is proven in \cref{sec:errorbound}. To see the similarity between \cref{lem:propnew} and \cref{lem:prop}, note that the main quantity that the latter bounds (i.e.~the quantity on the \rhs of \pref{eq:prop}) can also be written as a difference between $Q$; see \cref{rem:performance}. 
 Once \cref{lem:propnew} is established, it can be shown to imply \cref{thm:geniklemma} using a generalized variant of the performance difference lemma (\cref{lem:performancediffbmdp}).

\paragraph{Establishing \cref{eq:propnew} using multi-step inverse kinematics}
To show that \pref{eq:propnew} holds, we use the structure of the multi-step inverse kinematics objective in \pref{line:inversenew2} of \ikdp, as well as the non-Markov policy construction outlined in the prequel. In particular, we show that the multi-step inverse kinematics objective acts as a proxy for the forward kinematics given by 
\begin{align}
\P^{\pihat\ind{i,t+1}}[\s_h=\phi_\star(x_h)\mid \s_t = \phi_\star(x_t), \a_t = a], \nn
\end{align}
for $i\in[S]$, $x_t \in \cX_t$ and $x_h \in \cX_h$. We use this to show that up to statistical error, the partial policies $\crl{\pihat\ind{i,t}}_{i\in[S]}$ constructed from $\crl{\pihat\ind{i,t+1}}_{i\in[S]}$ i) identify (using observations at layer $t$) the best action at layer $t$,  and ii) identify the best partial policy from $\crl{\pihat\ind{j,t+1}}_{j\in[S]}$ to switch to from layer $t+1$ onwards.

Beyond the multi-step inverse kinematics objective and non-Markov policy construction, the proof of \cref{thm:geniklemma} uses the extended BMDP in a similar fashion to the tabular setting. We make use of the fact that for each layer $t\in\brk{h-1}$,  the policies in $\Pibar_{\epsilon}$ always play the terminal action $\afrak$ on observations emitted from states in $\cS_{t,\epsilon}$, and the generalized performance difference lemma (\cref{lem:performancediffbmdp}) is specifically designed to take advantage of this. This allows us to ``write off'' these states (analogous to \cref{eq:xxx} in the proof of \cref{lem:prop}), and use the policy cover property for $\Psi\ind{1},\ldots,\Psi\ind{h-1}$ to control the error for states in $\cS_{t,\eps}$; see \cref{sec:proofbmdp} for details. However, there is some added complexity stemming from the non-Markovian nature of $\crl{\pihat\ind{i,t}}_{i\in\brk{S}}$.

        }

	\section{Proofs for Structural Results for Extended BMDP}
	\label{app:structural}

In this section, we prove the main structural results concerning the extended BMDP and truncated policy class introduced in \cref{sec:keytools}. We first recall the definition of the truncated policy class. For $\epsilon \in(0,1)$, let $\Pibar_{0, \epsilon},\dots,\Pibar_{H, \epsilon}$ be the policies defined recursively as follows: $\Pibar_{0, \epsilon}=\Pibarm$ and for all $t\in [H]$, $\pi \in \Pibar_{t, \epsilon}$ if and only if there exists $\pi'\in \Pibar_{t-1, \epsilon} $ such that for all $h\in[H]$, $s\in \wbar\cS_h$, and $x\in \phi^{-1}_\star(s)$,
\begin{align}
	\pi(x) \coloneqq \left\{\begin{array}{ll} \fraka, & \text{if } h=t \text{ and } \max_{\tilde \pi\in \Pibar_{t-1, \epsilon}} \dbar^{\tilde \pi}(s)< \epsilon, \\   \pi'(x), & \text{otherwise}.    \end{array}    \right.  \label{eq:define101}
\end{align}
Finally, we let $\Pibar_{\epsilon} \coloneqq \Pibar_{H,\epsilon}$.

The proofs in this section make use of the following lemma.
        \begin{lemma}
          \label{lem:pih_max}
          For all $h \in[H]$, it holds that
          \begin{align}
          	\forall s\in \cS_h, \quad  \max_{\pi\in \Pibar_{h-1,\epsilon}} \dbar^{\pi}(s) =  \max_{\pi\in \Pibar_{\epsilon}} \dbar^{\pi}(s) .\label{eq:key}
          \end{align}
        \end{lemma}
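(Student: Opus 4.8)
The plan is to exploit two elementary facts. First, for a state $s\in\cS_h$, the occupancy $\dbar^{\pi}(s)=\Pbar^{\pi}[\s_h=s]$ in the extended BMDP $\Mbar$ depends only on the behavior of $\pi$ at layers $1,\dots,h-1$: since $\Mbar$ is layered and every $\pi\in\Pibarm$ is Markovian, the law of $\s_h$ under $\pi$ is produced from $\s_1\sim T(\cdot\mid\emptyset)$ and the transitions $\s_{\tau+1}\sim T(\cdot\mid\s_\tau,\a_\tau)$ with $\a_\tau=\pi(\x_\tau)$, $\x_\tau\sim q(\cdot\mid\s_\tau)$, for $\tau<h$, so nothing $\pi$ does at layers $\ge h$ can affect it. Second, by the inductive definition \eqref{eq:define101}, a policy $\pi\in\Pibar_{t,\epsilon}$ differs from its ``parent'' $\pi'\in\Pibar_{t-1,\epsilon}$ only at layer $t$ (where it equals $\afrak$ on a certain set of states and equals $\pi'$ elsewhere), and conversely every $\pi'\in\Pibar_{t-1,\epsilon}$ generates such a $\pi\in\Pibar_{t,\epsilon}$.

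Combining these, I would fix $s\in\cS_h$ and argue that for every $t\ge h$,
\[
\max_{\pi\in\Pibar_{t-1,\epsilon}}\dbar^{\pi}(s)\;=\;\max_{\pi\in\Pibar_{t,\epsilon}}\dbar^{\pi}(s).
\]
Indeed, since $t\ge h$, ``differs only at layer $t$'' implies ``agrees on all of layers $1,\dots,h-1$'', so by the first fact $\pi$ and its parent $\pi'$ have identical occupancy at $s$; hence the set $\{\dbar^{\pi}(s):\pi\in\Pibar_{t,\epsilon}\}$ coincides with $\{\dbar^{\pi'}(s):\pi'\in\Pibar_{t-1,\epsilon}\}$ (each element of the former arises from the latter via the parent map, and each element of the latter appears in the former via its generated policy), and the two maxima agree. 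Finally, chaining this identity for $t=h,h+1,\dots,H$ yields $\max_{\pi\in\Pibar_{h-1,\epsilon}}\dbar^{\pi}(s)=\max_{\pi\in\Pibar_{H,\epsilon}}\dbar^{\pi}(s)=\max_{\pi\in\Pibar_{\epsilon}}\dbar^{\pi}(s)$, which is exactly \eqref{eq:key}.

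I do not anticipate a genuine obstacle: the argument is bookkeeping built on the layered Markovian structure of $\Mbar$. The one point that warrants a careful statement is the first fact---that the layer-$h$ occupancy is insensitive to a policy's behavior at layers $h$ and beyond---since it is precisely what makes the truncations at layers $t\ge h$ irrelevant for visiting states in $\cS_h$; but this is immediate from the definition of the process on $\Mbar$, and given it the remainder is a one-line induction over $t$.
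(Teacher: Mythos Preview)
Your proposal is correct and follows essentially the same approach as the paper: both argue that for each $t\ge h$ one has $\max_{\pi\in\Pibar_{t-1,\epsilon}}\dbar^{\pi}(s)=\max_{\pi\in\Pibar_{t,\epsilon}}\dbar^{\pi}(s)$ because a policy in $\Pibar_{t,\epsilon}$ and its parent in $\Pibar_{t-1,\epsilon}$ agree on layers $1,\dots,t-1\supseteq\{1,\dots,h-1\}$ and hence have identical occupancy at $s\in\cS_h$, then chain these equalities for $t=h,\dots,H$. The paper proves the two inequalities separately via the parent map in each direction, whereas you phrase it as the two sets of occupancies coinciding; these are the same argument.
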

        \begin{proof}[\pfref{lem:pih_max}]
We will show that for all $t\in[h\ldotst{}H]$,
\begin{align}
	\forall s\in \cS_h, \quad \max_{\pi \in \Pibar_{t-1,\epsilon}}\dbar(s)= 	\max_{\pi \in \Pibar_{t,\epsilon}}\dbar(s). \label{eq:tele}
\end{align}
This implies \cref{eq:key} by summing both sides of \cref{eq:tele} over $t=h,\dots, H$, telescoping, and using that $\Pibar_{\epsilon}=\Pibar_{H, \epsilon}$. To prove the result, let $t\in[h\ldotst{}H]$, $s\in \cS_h$, and $\tilde\pi \in \argmax_{\pi'\in \Pibar_{t-1,\epsilon}} \dbar^{\pi'}(s)$. Further, let $\pi\in \Pibar_{t, \epsilon}$ be as in \cref{eq:define101} with $\pi'=\tilde \pi$. In this case, by \cref{eq:define101}, we have $\tilde \pi(x)=\pi(x)$, for all $x\in \phi^{-1}(s')$, $s'\in \cS_{\tau}$, and $\tau \leq [t-1]$. Using this and the fact that $s\in \cS_h$ and $t\geq h$, we have 
\begin{align}
	\max_{\breve \pi\in \Pibar_{t-1,\epsilon}} \dbar^{\breve \pi}(s) =\dbar^{\tilde \pi}(s)= \dbar^{\pi}(s) \leq \max_{\breve \pi \in \Pibar_{t, \epsilon}} \dbar^{\breve \pi}(s).\nn
\end{align} 
We now show the inequality in the other direction. Let $t\in[h\ldotst{}H]$, $s\in \cS_h$, and $\tilde \pi \in \argmax_{ \breve\pi\in \Pibar_{t,\epsilon}} \dbar^{\breve \pi}(s)$. Further, let $\pi'\in \Pibar_{t-1, \epsilon}$ be as in \cref{eq:define101} for $\pi = \tilde \pi$. In this case, by \cref{eq:define101}, we have $\tilde \pi(x)=\pi'(x)$, for all $x\in \phi^{-1}(s')$, $s'\in \cS_{\tau}$, and $\tau \in [t-1]$. Using this and the fact that $s\in \cS_h$ and $t\geq h$, we have 
\begin{align}
	\max_{\breve \pi\in \Pibar_{t,\epsilon}} \dbar^{\breve \pi}(s) =\dbar^{\tilde \pi}(s)= \dbar^{\pi'}(s) \leq \max_{\breve \pi \in \Pibar_{t-1, \epsilon}} \dbar^{\breve \pi}(s).\nn
\end{align} 
This shows \cref{eq:tele} and completes the proof.
        \end{proof}

\subsection{Proof of \creftitle{lem:bla}}
	\label{app:blaproof}

	\begin{proof}[\pfref{lem:bla}]
		Fix $h \in[H]$. %
		We proceed by induction on $t$ to show that 
		\begin{align}
			\forall t \in [h\ldotst{}H], \forall \pi\in \Pibar_{t, \epsilon}, \exists \pi'\in \Pibar_{h,\epsilon}\colon  \forall  s\in \cS_{h}, \forall x\in \phi^{-1}_\star(s),  \quad \pi(x)=\pi'(x). \label{eq:induct}
		\end{align}
		For $t=h$, \cref{eq:induct} holds trivially. Now, we suppose that \cref{eq:induct} holds for $t \in [h\ldotst{}H-1]$, and show that it holds for $t+1$. By definition of $\Pibar^{(t+1)}_{\epsilon}$ (\cref{eq:define101}), there exists $\tilde \pi\in \Pibar_{t, \epsilon}$ such that 
		\begin{align}
	\forall \tau\in [t], 	\forall s\in \cS_{\tau},  \forall  x\in \phi^{-1}_\star(s), \quad 	\pi(x)=\tilde \pi(x). \label{eq:tocombine}
		\end{align} 
		Now, by the induction hypothesis, there exists $\pi'\in \Pibar_{h,\epsilon}$ such that $\tilde \pi(x)=\pi'(x)$, for all $x\in \phi_\star^{-1}(s)$ and $s\in \cS_{h}$. Combining this with \cref{eq:tocombine} and the fact that $t\geq h$ implies that \cref{eq:induct} holds for $t+1$, which concludes the induction. 
		
		Now, by instantiating \cref{eq:induct} with $t=H$ and recalling that $\Pibar_{\epsilon}=\Pibar_{\epsilon}^{(H)}$ (by definition), we get that 
		\begin{align}
			\forall \pi\in \Pibar_{\epsilon}, \exists \pi'\in \Pibar_{h,\epsilon}\colon \forall s\in \cS_{h}, \forall x\in \phi^{-1}(s), \quad \pi(x)=\pi'(x). \label{eq:postinduct}
		\end{align}
By \cref{lem:pih_max}, this implies that for any $\sfrak\in \cS_h \setminus \cS_{h,\epsilon}$, $\max_{\pi\in \Pibar^{(h-1)}_{\epsilon}} \dbar^{\pi}(\sfrak)\leq \epsilon$. It follows that for all $\pi' \in \Pibar^{(h)}_\epsilon$ and $x\in \phi^{-1}_\star(\sfrak)$, we have $\pi'(x)=\afrak$, by definition of $\Pibar_{h,\epsilon}$; see \cref{eq:define101}. 
		This together with \cref{eq:postinduct} implies that $\pi(x)=\afrak$ for all $x\in \phi^{-1}_\star(\sfrak)$ and $\pi \in \Pibar_\epsilon$, as desired.
	\end{proof}

        	\subsection{Proof of \creftitle{lem:critical}
                  (Approximation for Truncated Policy Class)}
	\label{app:criticalproof}

	\begin{proof}[\pfref{lem:critical}]
		We will show that for all $t \in [H]$, $h \in[H]$, and $s\in \cS_h$, 
		\begin{align}
			\max_{\pi \in \Pibar_{t-1, \epsilon}} \dbar^{\pi}(s) \leq \max_{\pi \in \Pibar_{t, \epsilon}} \dbar^{\pi}(s) + |\cS_t| \epsilon.\label{eq:foal}
		\end{align}
		With this established, summing \cref{eq:foal} over $t$, telescoping, and using that $\sum_{t\in[H]}|\cS_t|=S$ implies the desired result. 
		
		Let $t\in[H]$, $h \in [H]$ and $s\in \cS_h$.  Further, let $\tilde{\pi} \in \argmax_{\pi \in \Pibar_{t-1, \epsilon}} \dbar^{\pi}(s)$ and let $\pi \in \Pibar_{t, \epsilon}$ be as in \cref{eq:define101} for $\pi'=\tilde \pi$. First, suppose that $h \leq t$. Then, $\dbar^{\pi}(s) = \dbar^{\tilde{\pi}}(s)$ since $\pi|_{\cX_\tau}\equiv \tilde{\pi}|_{\cX_\tau}$ for all $\tau<t$, and so by our choice of $\tilde{\pi}$ and that $\pi\in \Pibar_{t,\epsilon}$, we have 
		\begin{align}
			\max_{\breve \pi \in \Pibar_{t-1, \epsilon}}\dbar^{\breve \pi}(s)= \dbar^{\tilde{\pi}}(s)= \dbar^{\pi}(s)\leq \max_{\breve \pi \in \Pibar_{t, \epsilon}}\dbar^{\breve \pi}(s).\nn
		\end{align}
		Now suppose that $h> t$. We will use that I) $\pi|_{\cX_\tau}= \tilde{\pi}|_{\cX_\tau}$, for all $\tau \neq t$; and II)  $\pi(x)= \tilde{\pi}(x)$, for all $x\in \phi^{-1}_\star(s')$ and $s'\in \cS_{t,\epsilon}$, by definition of $\cS_{t,\epsilon}$ and $\pi$, and \cref{lem:pih_max}. We note that I) implies that $\dbar^{\tilde{\pi}}(s')=\dbar^{\pi}(s')$, for all $s' \in \cS_t$, and the combination of I) and II) implies that $\Pbar^{\pi}[\s_h = s\mid \s_t= s'] =\Pbar^{\tilde{\pi}}[\s_h = s\mid \s_t= s']$, for all $s'\in \cS_{t,\epsilon}$. Using these facts, we have 
		\begin{align}
			\dbar^{\tilde{\pi}}(s) & =\sum_{s'\in \cS_t} \Pbar^{\tilde{\pi}}[\s_h = s\mid \s_t= s'] \cdot \dbar^{\tilde{\pi}}(s'),\nn \\
			& =\sum_{s'\in \cS_{t,\epsilon}} \Pbar^{\tilde{\pi}}[\s_h = s\mid \s_t= s'] \cdot \dbar^{\tilde{\pi}}(s')+ \sum_{s'\in \cS_t\setminus \cS_{t,\epsilon}} \Pbar^{\tilde{\pi}}[\s_h = s\mid \s_t= s'] \cdot \dbar^{\tilde{\pi}}(s'),\nn \\
			& = \sum_{s'\in \cS_{t,\epsilon}} \Pbar^{\pi}[\s_h = s\mid \s_t= s'] \cdot \dbar^{\pi}(s') + \sum_{s'\in \cS_t \setminus \cS_{t,\epsilon}} \Pbar^{\tilde{\pi}}[\s_h = s\mid \s_t= s'] \cdot \dbar^{\tilde{\pi}}(s'),\nn \\
			& \leq  \dbar^{\pi}(s)+ \sum_{s'\in \cS_t \setminus \cS_{t,\epsilon}} \Pbar^{\tilde{\pi}}[\s_h = s\mid \s_t= s'] \cdot \dbar^{\tilde{\pi}}(s'),\nn \\
			& \leq \dbar^{\pi}(s) + |\cS_t| \epsilon,\label{eq:lastone}
		\end{align}
		where the last inequality follows because $\dbar^{\tilde{\pi}}(s')\leq \epsilon$ for all $s' \in \cS_t\setminus \cS_{t,\epsilon}$, which follows from the definition of $\cS_{t,\epsilon}$, \cref{lem:pih_max}, and $\tilde{\pi}\in \Pibar_{t-1, \epsilon}$. %
                Combining \cref{eq:lastone} with the fact that $\tilde{\pi} \in \argmax_{\pi \in \Pibar_{t-1, \epsilon}} \dbar^{\pi}(s)$ and $\pi\in \Pibar_{t,\epsilon}$, we have that
                \begin{align}
			\max_{\breve \pi \in \Pibar_{t-1, \epsilon}}\dbar^{\breve \pi}(s)= \dbar^{\tilde{\pi}}(s)\leq  \dbar^{\pi}(s) + |\cS_t|\epsilon\leq \max_{\breve \pi \in \Pibar_{t, \epsilon}}\dbar^{\breve \pi}(s)+|\cS_t|\epsilon.\nn
		\end{align}
	\end{proof}

\subsection{Proof of \creftitle{lem:transfer}}
\label{app:transferproof}

\begin{proof}[\pfref{lem:transfer}]
	Let $\Pibarm$ be as in \cref{lem:critical}, and note that since $\Pim \subseteq \Pibarm$, we have for any $s\in\cS$,
	\begin{align}
		\max_{\pi \in \Pim} d^{\pi}(s) \leq  \max_{\pi \in \Pibarm} \dbar^{\pi}(s)\leq \max_{\pi \in \Pibar_{\epsilon}} \bar d^{\pi}(s) + S\epsilon. \label{eq:sweet}
	\end{align}
	where the last inequality follows by \cref{lem:critical}. Now, fix $s\in \cS$ such that $	\max_{\pi \in \Pim} d^{\pi}(s) \geq \veps$. Using \cref{eq:sweet} and that $\epsilon =\veps/(2S)$, we have $\max_{\pi \in \Pibar_{\epsilon}} \bar d^{\pi}(s)\geq \veps/2\geq \epsilon.$ Thus, since $\Psi$ is a $(1/2,\epsilon)$-policy cover relative to $\Pibar_\epsilon$ in $\Mbar$ for all layers, there exists $\pi\ind{s}\in \Psi$ such that 
	\begin{align}
		\frac{1}{2}	\max_{\pi \in \Pibar_\epsilon} \bar d^{\pi}(s) \leq  	 \bar d^{\pi\ind{s}}(s) =  d^{\pi\ind{s}}(s), \nn 
	\end{align}
	where the equality follows from the assumption that policies in $\Psi$ never take the terminal action $\afrak$. Combining this inequality with \pref{eq:sweet}, we have that
        \[
          \max_{\pi \in \Pim} d^{\pi}(s) \leq 2{}d^{\pi\ind{s}}(s)
          + S\epsilon.
        \]
        Since $S\eps=\frac{\veps}{2}\leq{}\frac{1}{2}\max_{\pi \in \Pim} d^{\pi}(s)$, rearranging gives
                \[
                  \frac{1}{4}\max_{\pi \in \Pim} d^{\pi}(s) \leq {}d^{\pi\ind{s}}(s),
                \]
                which concludes the proof.
              \end{proof}

              \begin{remark}
                The proof of \pref{lem:transfer} actually gives a result slightly stronger than what is stated in the lemma. Namely, it suffices for $\Psi$ to be a $(1/2, \veps/2)$-policy cover relative to $\Pibar_\eps$ in $\cMbar$ (as opposed to a $(1/2, \eps)$-policy cover). We state the weaker result because our analysis of \pref{alg:IKDP} does not take advantage of the stronger result.
              \end{remark}

	 \section{Proofs for Tabular MDPs}
	 \label{sec:tabular}

\subsection{Proof of \creftitle{lem:regtab} (MLE Guarantee for Tabular MDPs)}
\label{app:regtabproof}

To prove a guarantee for the minimizer $\fhat\ind{j,t}$ for the conditional density estimation problem in \cref{line:inversenew20} of \cref{alg:IKDP-tab}, we first derive the expression of the Bayes-optimal solution of this problem.
\begin{lemma}
	\label{lem:bayes0}
	Let $h\in [H]$, $t\in[h-1]$, $i\in[S]$, and consider the Bayes-optimal solution $P\ind{i,t}_{\bayes}$ of the problem in \cref{line:inversenew20} of \cref{alg:IKDP-tab}; that is,
	\begin{align}
		P_{\bayes}\ind{i,t} \in \argmax_{P\colon \cS_t \times \cS_h \rightarrow \Delta(\cA\times [S])} \E^{\unif(\Psi\ind{t})\circ_t \unifa \circ_{t+1}  \pihat\ind{i,t+1}} \left[ \ln   P(\a_t\mid \s_t, \s_h)\right]. \label{eq:normal0}
	\end{align}
	Then, for any $a\in\cA$, $s \in \cS_t$, and $s' \in \cS_h$, $P_{\bayes}\ind{i,t}$ satisfies
	\begin{align}
		P\ind{i,t}_{\bayes}(a\mid s, s') \coloneqq \frac{\Pbar^{\pihat\ind{i,t+1}}[\s_h=s' \mid \s_t=s,\a_t=a]}{ \sum_{a'\in \cA} \Pbar^{  \pihat\ind{i,t+1}}[\s_h=s' \mid \s_t=s,\a_t=a']}. \nn 
	\end{align}
\end{lemma}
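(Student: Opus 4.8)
The plan is to recognize the optimization in \cref{eq:normal0} as a standard log-loss (maximum-likelihood) problem, whose population maximizer is the true conditional law of the label given the covariates. First I would abbreviate $\mu \coloneqq \unif(\Psi\ind{t})\circ_t \unifa \circ_{t+1}\pihat\ind{i,t+1}$ for the data-generating process, so the objective reads $\E^{\mu}[\ln P(\a_t \mid \s_t,\s_h)]$ over $P$ mapping $\cS_t\times\cS_h$ into $\Delta(\cA)$. Conditioning on $(\s_t,\s_h)=(s,s')$ and using the elementary identity $\E^{\mu}[\ln P(\a_t\mid s,s')\mid \s_t=s,\s_h=s'] = \E^{\mu}[\ln \Pbar^{\mu}(\a_t\mid s,s')\mid \s_t=s,\s_h=s'] - \kl{\Pbar^{\mu}(\cdot\mid s,s')}{P(\cdot\mid s,s')}$ together with nonnegativity of the KL divergence (Gibbs' inequality), I would conclude that any maximizer $P\ind{i,t}_\bayes$ must satisfy $P\ind{i,t}_\bayes(a\mid s,s') = \Pbar^{\mu}[\a_t=a\mid\s_t=s,\s_h=s']$ on every pair $(s,s')$ reachable with positive probability under $\mu$; on the remaining ($\mu$-null) pairs the objective does not depend on $P(\cdot\mid s,s')$, so the value there is immaterial.

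The second step is to compute this conditional probability explicitly. Applying Bayes' rule for $\a_t$ given $(\s_t,\s_h)$ and chaining through $\s_t$ then $\a_t$, I would write $\Pbar^{\mu}[\a_t=a\mid\s_t=s,\s_h=s']$ as a ratio whose numerator is $\Pbar^{\mu}[\a_t=a\mid\s_t=s]\cdot\Pbar^{\mu}[\s_h=s'\mid\s_t=s,\a_t=a]$ and whose denominator is the sum of the analogous quantity over $a'\in\cA$. Two simplifications finish it: (i) under $\mu$ the action $\a_t$ is sampled from $\unifa$ independently of $\s_t$, so $\Pbar^{\mu}[\a_t=a'\mid\s_t=s]=1/A$ for every $a'$ and these factors cancel between numerator and denominator; and (ii) conditioned on $\{\s_t=s,\a_t=a\}$, the trajectory from layer $t+1$ onward is generated by $\pihat\ind{i,t+1}$, so by the Markov property $\Pbar^{\mu}[\s_h=s'\mid\s_t=s,\a_t=a]=\Pbar^{\pihat\ind{i,t+1}}[\s_h=s'\mid\s_t=s,\a_t=a]$, and similarly for each $a'$. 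Substituting yields exactly the claimed formula, with normalizer $Z\ind{i,t}(s,s')=\sum_{a'\in\cA}\Pbar^{\pihat\ind{i,t+1}}[\s_h=s'\mid\s_t=s,\a_t=a']$.

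There is no serious obstacle here; the statement is essentially bookkeeping about conditional probabilities in a layered MDP. The only points that need care are: the handling of zero-probability conditioning events, where $P\ind{i,t}_\bayes$ is pinned down only up to an arbitrary choice on a $\mu$-null set (harmless, since such pairs never enter downstream arguments); and cleanly spelling out the Markov decomposition, namely that the roll-in distribution $\unif(\Psi\ind{t})$ affects only the marginal of $\s_t$ and drops out of the conditional law of $\s_h$ given $(\s_t,\a_t)$, which is determined entirely by the latent transitions under $\pihat\ind{i,t+1}$. I would also note in passing that the process stays within the original state space $\cS$ throughout (the policies in $\Psi\ind{t}$ never play $\afrak$ by hypothesis, and $\unifa$ only plays actions in $\cA$), so $\Pbar^{\mu}$ coincides with $\P^{\mu}$ on all relevant events and the use of $\Pbar$ in the statement is just for notational consistency with the rest of the analysis.
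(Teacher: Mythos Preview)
Your proposal is correct and follows essentially the same approach as the paper: identify the log-loss maximizer as the true conditional $\Pbar^{\mu}[\a_t=a\mid\s_t=s,\s_h=s']$, then compute this via Bayes' rule, cancel the uniform $1/A$ factors, and use that the roll-in affects only the marginal of $\s_t$ while the roll-out $\pihat\ind{i,t+1}$ determines the conditional of $\s_h$ given $(\s_t,\a_t)$. Your added remarks on the $\mu$-null set and on $\Pbar$ versus $\P$ (since no terminal action is played) mirror the paper's treatment.
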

\begin{proof}[\pfref{lem:bayes0}]
	Fix $a\in\cA$ and $(s,s')\in \cS_{t}\times \cS_{h}$. The solution $P_{\bayes}\ind{i,t} $ of the problem in \cref{eq:normal0} satisfies 
	\begin{align}
		P_{\bayes}\ind{i,t} (a\mid s,s') & = 	 \P^{\unifa \circ_{t+1} \pihat\ind{i,t+1}}[\a_t=a\mid \s_t=s,\s_h=s'], \nn \\
		& =  \frac{	\P^{\pihat\ind{i,t+1}}[\s_h=s' \mid \s_t=s,\a_t=a] \cdot \P^{\unifa \circ_{t+1} \pihat\ind{i,t+1}}[\a_t=a \mid \s_t=s]}{\sum_{a'\in \cA} \P^{\pihat\ind{i,t+1}}[\s_h=s' \mid \s_t=x,\a_t=a'] \cdot \P^{\unifa\circ_{t+1}  \pihat\ind{i,t+1}}[\a_t=a'\mid \s_t=s]}, \label{eq:pass0}
	\end{align} 
	where the last equality follows by Bayes Theorem; in particular the fact that \begin{align*}
		\mu [A\mid B, C]=\frac{\mu[B\mid A, C]\cdot \mu[A\mid C]}{\mu[B\mid C]}
                                                                                      \end{align*} applied with $A=\{\a_t =a\}$, $B= \{\s_h=s' \}$, $C= \{\s_t = s\}$, and $\mu\equiv \Pbar^{\unifa\circ \pihat\ind{i,t+1}}$. Now, by combining \cref{eq:pass0} with the fact
                                                                                      that $\a_t$ is independent of $\s_t$, we get that 
	\begin{align}
		P_{\bayes}\ind{i,t} (a\mid s,s')	& = \frac{\P^{\pihat\ind{i,t+1}}[\s_h=s' \mid \s_t=s,\a_t=a] \cdot \P^{\unifa\circ_{t+1} \pihat\ind{i,t+1}}[\a_t=a]}{\sum_{a'\in \cA} \P^{\pihat\ind{i,t+1}}[\s_h=s' \mid \s_t=s,\a_t=a'] \cdot \P^{\unifa \circ_{t+1} \pihat\ind{i,t+1}}[\a_t=a']}, \nn \\ 
		& = \frac{\P^{\pihat\ind{i,t+1}}[\s_h=s' \mid \s_t=s,\a_t=a] }{\sum_{a'\in \cA} \P^{\pihat\ind{i,t+1}}[\s_h=s' \mid \s_t=s,\a_t=a']}, \label{eq:bayes20}
	\end{align}
	where \cref{eq:bayes20} follows because $\a_t\sim\unifa$. 
	Now, since the partial policy $\pihat\ind{i,t+1}$ never takes the terminal action, we have $\P^{\pihat\ind{i,t+1}}[\s_h=s' \mid \s_t=s,\a_t=a]=\Pbar^{\pihat\ind{i,t+1}}[\s_h=s' \mid \s_t=s,\a_t=a]$ (the \lhs has $\P$ while the \rhs has $\Pbar$) for all $a\in\cA$ and $i\in[S]$. This, together with \cref{eq:bayes20} implies \begin{align} P\ind{i,t}_{\bayes}(a\mid s,s') =  \frac{\Pbar^{\pihat\ind{i,t+1}}[\s_h=s' \mid \s_t=s,\a_t=a]}{\sum_{a'\in \cA} \Pbar^{\pihat\ind{i,t+1}}[\s_h=s' \mid \s_t=s,\a_t=a']}. \nn 
	\end{align}
\end{proof}
\begin{proof}[{Proof of \cref{lem:regtab}}]%
	Fix $t\in[h-1]$ and $i\in[S]$. By \cref{lem:bayes0}, $P\ind{i,t}_{\bayes}\in \{f\colon [S^2]\rightarrow \Delta_A\}$ is the Bayes-optimal solution of the conditional density estimation problem in \cref{line:inversenew20} of \cref{alg:IKDP-tab}. And so, by a standard guarantee for log-loss conditional density estimation (see, e.g.,~\citet[Proposition E.2]{chen2022partially}),\footref{fn:hellinger} there exists an absolute constant $C'>0$ (independent of $t$, $h$, and other problem parameters) such that with probability at least $1-\delta$, 
	\begin{align}
		{\E}^{\unif(\Psi\ind{t})\circ_t \unifa \circ_{t+1} \pihat\ind{i,t+1}} \left[ \sum_{a\in\cA}\left(  \fhat\ind{i,t}(a\mid \s_t,\s_h))	-   P\ind{i,t}_{\bayes}(a\mid \s_t ,\s_h) \right)^2 \right]\leq   \tilde\veps^2_\stat(n,\delta), \label{eq:theoen0}
	\end{align}
	where $\tilde \veps^2_\stat(n,\delta)\coloneqq C' \ln   \cN_{\cF}(1/n)  + C' \ln (1/\delta)$ and $ \cN_{\cF}(\veps)$ denotes the $\veps$-covering number of the set $\cF \coloneqq \{f\colon [S]^2 \rightarrow \Delta_{A}\}$ in $\ell_{\infty}$-distance. It is easy to verify that $\cN_{\cF}(1/n)\leq n^{A S^2}$, and so by setting $C^2 \coloneqq C'$ we have
	\begin{align}
		\tilde \veps_{\stat}^2(n,\delta) \leq C^2 \cdot \veps_{\stat}^2(n,\delta). \label{eq:bound0}
	\end{align} 
	Now, since $\afrak$ is never taken by the partial policies $(\pihat ^{(j)}_{\tau:h-1})_{j\in[S]}$, $\tau \in[h-1]$, in \cref{alg:IKDP} or by the policies in $\Psi\ind{2}, \dots, \Psi\ind{h-1}$ (by assumption), the guarantee in \cref{eq:theoen0} also holds in $\Mbar$. Combining this with \cref{eq:bound0} completes the proof.
\end{proof}

\subsection{Proof of \creftitle{lem:proppost} (Local Optimality Guarantee)}
\label{app:geniklemma0proof}

\begin{proof} [\pfref{lem:proppost}]
Let $\veps_\stat' \coloneqq C\cdot \veps_\stat(n, \delta/(SH^2))$, where $\veps_\stat(n,\delta)$ and $C>0$ are as in \cref{lem:regtab}. We will show that for any $t\in[h-1]$ in \cref{alg:IKDP-tab}, there exists an event $\cE_t$ of probability at least $1-\delta/H^2$ under which the learned partial policies $\{\pihat\ind{i,t}\}_{i\in[S]}$ and $\{\pihat\ind{i,t+1}\}_{i\in[S]}$ are such that for any $i\in \cS_{h}$, we have 
	\begin{align}
	\sum_{\pi \in \Psi\ind{t}}	\dbar^{\pi}(s_t)	\left(	\max_{a\in \cA}	Q_t^{\pihat\ind{i,t+1}}(s_t,a;i) - Q^{\pihat\ind{\i,t+1}}_{t}(s_t, \pihat\ind{i,t}(s_t);i) \right) \leq 2S A\veps_\stat', \quad \forall s_t \in \cS_t.\label{eq:flight20} %
\end{align}
	where $Q^{\pihat\ind{i,t+1}}_{t}(\cdot; i)$ is the $Q$-function at layer $t$ with respect to the partial policy $\pihat\ind{i,t+1}$ for the BMDP $\wbar \cM$ with rewards $r\ind{i}_\tau(s)=\mathbf{1}\{s=i\}\cdot \mathbf{1}\{\tau=h\}$, for $\tau\in[h]$ (see \pref{eq:Qfunction}). This implies the desired result of the lemma, since $\fk\ind{t}(i\mid s,a)=Q_t^{\pihat\ind{i,t+1}}(s,a;i)$; see \pref{eq:FK} and \pref{eq:fk}. We write \eqref{eq:flight20} in terms of $Q$-functions (instead of $\fk\ind{t}$) to highlight similarities with the analysis of \musik{} in the more general BMDP setting.
	
	Fix $t\in[h-1]$ and $\i\in \cS_{h}$. Further, let $\cS^{+}_t$ be the subset of states defined by \begin{align}&\cS^{+}_t \coloneqq \left\{s \in \cS_t   \colon \sum_{\pi \in \Psi\ind{t}} \dbar^{\pi}(s)\sum_{a\in\cA}  P\ind{\i,t}(\i\mid s,a) > 0 \right\},
		\nn\\
		\text{where}\quad &	 P\ind{\i,t}(s'\mid s,a)\coloneqq \Pbar^{\pihat\ind{\i,t+1}}[\s_h=s'\mid \s_t=s,\a_t=a]. \label{eq:things}
	\end{align}
By \cref{lem:regtab} and Jensen's inequality, there is an event $\cE^{(\i)}_t$ of probability at least $1-\delta/(SH^2)$ under which the solution $\fhat\ind{\i,t}$ of the conditional density estimation problem in \cref{line:inversenew20} of \cref{alg:IKDP-tab} satisfies,
	\begin{align}
		\Ebar_{\s_t\sim \unif(\Psi\ind{t}),\a' \sim
			\unifa}  \left[ \sum_{s_h\in \cS_h}  P\ind{\i,t}(s_h\mid \s_t,\a') \max_{a\in\cA}\left| \err\ind{\i,t}(a,\s_t,s_h)\right| \right]\leq   \veps_\stat', \label{eq:neverbefore0}
		\end{align}
where $\err\ind{\i,t}(a,s_t,s_h)  \coloneqq \fhat\ind{\i,t}(a\mid  s_t, s_h)	-  P\ind{\i,t}_{\bayes}(a \mid s_t,s_h)$ and 
\begin{align}  P\ind{\i,t}_{\bayes}(a \mid s,s') \coloneqq \frac{ P\ind{\i,t}(s'\mid s,a)}{ \sum_{a'\in\cA}P\ind{\i,t}(s'\mid s,a')}. \label{eq:definition0}
	\end{align}
	In what follows, we condition on $\cE_t \coloneqq
	\bigcap_{j\in[S]}\cE\ind{j}_t$. Now, fix
	$\j\in \cS^{+}_t.$ From \cref{eq:neverbefore0}, we
	have that
	\begin{align}
		S A\veps_\stat'& \geq 	\sum_{\pi \in \Psi\ind{t}} \sum_{a'\in\cA}\sum_{s_h\in\cS_h}\dbar^{\pi}(\j) P\ind{\i,t}(s_h\mid \j,a') \max_{a\in\cA}\left| \err\ind{\i,t}(a,\j,s_h) \right|,\nn  \\
		& \geq 	\sum_{\pi \in \Psi\ind{t}}\sum_{a'\in \cA}\dbar^{\pi}(\j) P\ind{\i,t}(\i\mid \j,a') \max_{a\in\cA}\left| \err\ind{\i,t}(a,\j,\i) \right|,\nn \\
		& = \sum_{\pi \in \Psi\ind{t}}\sum_{a'\in \cA}\dbar^{\pi}(\j) P\ind{\i,t}(\i\mid \j,a') \max_{a\in\cA}\left| \fhat\ind{\i,t}(a\mid \j, \i)	-  P\ind{\i,t}_{\bayes}(a \mid \j,\i)\right|.\nn 
	\end{align}
	By rearranging and using the fact that $\sum_{\pi \in \Psi\ind{t}}\sum_{a'\in \cA}\dbar^{\pi}(\j)	 P\ind{\i,t}(\i\mid \j,a')>0$ (since $\j\in \cS^{+}_t$), we get
	\begin{align}
		\max_{a\in\cA}\left| \fhat\ind{\i,t}(a\mid \j, \i)	-  P\ind{\i,t}_{\bayes}(a \mid \j,\i)\right|\leq  \frac{S A\veps_\stat'}{\sum_{\pi \in \Psi\ind{t}}\dbar^{\pi}(\j)	\cdot \sum_{a'\in \cA} P\ind{\i,t}(\i\mid \j,a')}. \label{eq:dedant0}
	\end{align}
	Now, let $\ahat\ind{\i,t}(s) \in \argmax_{a\in \cA} \fhat\ind{\i,t}(a\mid s, \i)$ and note that $\ahat\ind{\i,t}(s) = \pihat\ind{\i,t}(s)$, where $\pihat\ind{\i,t}$ is as in \cref{alg:IKDP-tab}. With this, \cref{eq:dedant0} and the fact that $|\|y\|_{\infty}-\|z\|_{\infty}|\leq \|y-z\|_{\infty}$, for all $y,z\in \reals^{A}$, we have that
	\begin{align}
		\max_{a\in \cA}  P\ind{\i,t}_{\bayes}(a \mid \j, \i)& \leq \fhat\ind{\i,t}(\ahat\ind{\i,t}(\j)\mid \j, \i) + \frac{S A\veps_\stat'}{\sum_{\pi \in \Psi\ind{t}}\dbar^{\pi}(\j)	\cdot \sum_{a'\in \cA} P\ind{\i,t}(\i\mid \j,a')}, \nn \\
		& \leq    P\ind{\i,t}(\ahat\ind{\i,t}(\j) \mid  \j, \i) +  \frac{2S A\veps_\stat'}{\sum_{\pi \in \Psi\ind{t}}\dbar^{\pi}(\j)	\cdot \sum_{a'\in \cA} P\ind{\i,t}(\i\mid \j,a')}.  \quad (\text{by \cref{eq:dedant0} again}) \label{eq:control}
	\end{align}
	With this in hand, we have that 
	\begin{align}
		\max_{a\in \cA}	Q_t^{\pihat\ind{\i,t+1}}(\j,a;\i)&=\max_{a\in \cA}  \Pbar^{\pihat\ind{\i,t+1}}[\s_h=\i \mid \s_t=\j,\a_t=a],\nn \\ 
		&=\max_{a\in \cA} P\ind{\i,t}(\i\mid \j,a),  \quad \text{(by definition---see \cref{eq:things})}\nn \\ &  = \max_{a\in \cA} P\ind{\i,t}_{\bayes}(a \mid \j, \i)\sum_{a'\in \cA}P\ind{\i,t}(\i\mid \j,a'),  \quad (\text{by def.~of $P\ind{i,t}_{\bayes}$ in \cref{eq:definition0}})\nn  \\ &  \leq  P\ind{\i,t}(\ahat\ind{\i,t}(\j) \mid \j, \i)  \sum_{a'\in \cA}	P\ind{\i,t}(\i\mid \j,a')+ \frac{2S A\veps_\stat'}{\sum_{\pi \in \Psi\ind{t}}\dbar^{\pi}(\j)},  \quad (\text{by \cref{eq:control}})\nn \\ 
		& =    \Pbar^{\pihat\ind{\i,t+1}}[\s_h=\i \mid \s_t=\j, \a_t = \ahat\ind{\i,t}(\j)] + \frac{2S A\veps_\stat'}{\sum_{\pi \in \Psi\ind{t}}\dbar^{\pi}(\j)}, \quad (\text{by \cref{eq:definition0}}) \nn \\
		& =Q^{\pihat\ind{\i,t+1}}_{t}(\j, \pihat\ind{\i,t}(\j);\i) + \frac{2S A\veps_\stat'}{\sum_{\pi \in \Psi\ind{t}}\dbar^{\pi}(\j)},\nn
	\end{align}
	where the last equality follows by definition of $\pihat\ind{\i,t}$ in \cref{alg:IKDP-tab}.

	The argument above implies that
	\begin{align}
		\sum_{\pi \in \Psi\ind{t}}	\dbar^{\pi}(s_t)	\left(	\max_{a\in \cA}	Q_t^{\pihat\ind{\i,t+1}}(s_t,a;\i) -Q^{\pihat\ind{\i,t+1}}_{t}(s_t, \pihat\ind{\i,t}(s_t);\i)  \right) \leq 2S A\veps_\stat', \quad \forall s_t \in \cS^{+}_t. \label{eq:flight0}
	\end{align}
	On the other hand, for any $s_t \notin \cS^{+}_t$, we have \begin{align}\sum_{\pi \in \Psi\ind{t}}	\dbar^{\pi}(s_t) \max_{a\in\cA} Q_t^{\pihat\ind{\i,t+1}}(s_t,a;\i)\leq  \sum_{\pi \in \Psi\ind{t}}\dbar^{\pi}(s_t)\cdot \sum_{a'\in \cA}	 P\ind{\i,t}(\i\mid s_t,a') =0, \nn 
	\end{align}
	by definition of $\cS^{+}_{t}$. This, combined with the fact that $Q_t^{\pihat\ind{\i,t+1}}(\cdot,\cdot;\i)\geq 0$ implies that \cref{eq:flight0} also holds for $s_t\in \cS_t\setminus \cS^{+}_t$. Thus, we have that 
	\begin{align}
		\sum_{\pi \in \Psi\ind{t}}	\dbar^{\pi}(s_t)	\left(	\max_{a\in \cA}	Q_t^{\pihat\ind{\i,t+1}}(s_t,a;\i) - Q^{\pihat\ind{\i,t+1}}_{t}(s_t, \pihat\ind{\i,t}(s_t);\i) \right) \leq 2S A\veps_\stat', \quad \forall s_t \in \cS_t.\nn
	\end{align}
\end{proof}

	\section{Proofs for \bmdps}
        \label{sec:BMDP}
	\subsection{MLE Guarantee for \bmdps}

We now state and prove a guarantee for the minimizer
$(\fhat\ind{t}, \phihat\ind{t})$ of the conditional density
estimation problem in \cref{line:inversenew2} of
\cref{alg:IKDP} under realizability. We first
derive the expression of the Bayes-optimal solution of
this problem (we express this solution as a function of probability measures in the extended BMDP, which will be convenient in the proof of \cref{thm:policycover}).
\begin{lemma}
	\label{lem:bayes}
	Let $h\in [H]$ and $t\in[h-1]$ be given, and
	define 
	\begin{align}
		P\ind{t}_{\bayes}((a,j)\mid s, s') \coloneqq \frac{\Pbar^{\pihat\ind{j,t+1}}[\s_h=s' \mid \s_t=s,\a_t=a]}{ \sum_{a'\in \cA, i\in [S]} \Pbar^{ \pihat\ind{i,t+1}}[\s_h=s' \mid \s_t=s,\a_t=a']}, \label{eq:good}
	\end{align}
with $\pihat\ind{j,t+1}$ as in \cref{alg:IKDP}. Consider the solution to the unconstrained
	maximum problem 
	\begin{align}
		\wtilde P_{\bayes}\ind{t} \in \argmax_{\wtilde P\colon \cX_t \times \cX_h \rightarrow \Delta(\cA\times [S])} \E_{\bi_t\sim \unif([S])}\E^{\unif(\Psi\ind{t})\circ_t \unifa \circ_{t+1}  \pihat\ind{\bi_t,t+1}} \left[ \ln  \wtilde P\ind{t}((\a_t,\bi_t)\mid \x_t, \x_h)\right]. \label{eq:normal}
	\end{align}
	Then, for any $a\in\cA$, $j\in[S]$, $x \in \cX_t$, and $x' \in
	\cX_h$, letting $s = \phi_\star(x)$ and $s' = \phi_\star(x')$, $\wtilde P_{\bayes}\ind{t}$ satisfies
	\begin{align}
		\wtilde P\ind{t}_{\bayes}((a,j)\mid x, x') =
		P\ind{t}_{\bayes}((a,j)\mid s, s').\nn 
	\end{align}
	In addition, $(P\ind{t}_{\bayes}, \phi_\star)$ is the Bayes-optimal
	solution to the maximum likelihood problem in \cref{line:inversenew2} of \cref{alg:IKDP}; that is,
	\begin{align}
		( P_{\bayes}\ind{t}, \phi_\star) \in \argmax_{f\colon [S]^2 \rightarrow \Delta(\cA\times [S]), \phi \in \Phi} \E_{\bi_t\sim \unif([S])}\E^{\unif(\Psi\ind{t})\circ_t \unifa \circ_{t+1}  \pihat\ind{\bi_t,t+1}} \left[ \ln   f((\a_t,\bi_t)\mid \phi(\x_t), \phi(\x_h))\right]. \nn 
	\end{align}
\end{lemma}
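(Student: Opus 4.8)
\textbf{Proof plan for \cref{lem:bayes}.} The plan is to compute the unconstrained Bayes-optimal conditional density explicitly by a Bayes-rule calculation, and then to observe that the resulting function factors through $\phi_\star$, so that the constrained problem over $\Phi$ (which contains $\phi_\star$ by \cref{assum:real}) attains the same optimum. First I would recall the elementary fact that for any fixed sampling distribution over tuples $(\a_t,\bi_t,\x_t,\x_h)$, the population log-loss $\E[\ln \wtilde P((\a_t,\bi_t)\mid \x_t,\x_h)]$ is maximized pointwise (for each value of the conditioning pair $(\x_t,\x_h)$) by the true conditional law $\wtilde P\ind{t}_{\bayes}((a,j)\mid x,x') = \P[(\a_t,\bi_t)=(a,j)\mid \x_t=x,\x_h=x']$, under the data-generating process in \eqref{eq:normal}. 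This is just the standard ``KL is minimized at the truth'' argument and requires no work beyond citing it.

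Next I would evaluate this conditional law under the specific process: $\bi_t\sim\unif([S])$, then roll in with $\unif(\Psi\ind{t})$ to layer $t$, play $\a_t\sim\unifa$, then roll out with $\pihat\ind{\bi_t,t+1}$. By Bayes' theorem,
\begin{align}
\P[(\a_t,\bi_t)=(a,j)\mid \x_t=x,\x_h=x'] = \frac{\P[\x_h=x'\mid \x_t=x,\a_t=a,\bi_t=j]\cdot \P[\a_t=a,\bi_t=j\mid \x_t=x]}{\sum_{a',i}\P[\x_h=x'\mid \x_t=x,\a_t=a',\bi_t=i]\cdot \P[\a_t=a',\bi_t=i\mid \x_t=x]}. \nn
\end{align}
Here $\a_t$ and $\bi_t$ are both sampled independently of $\x_t$ (and of each other), so $\P[\a_t=a,\bi_t=j\mid \x_t=x] = \tfrac{1}{A}\cdot\tfrac{1}{S}$ is a constant that cancels between numerator and denominator. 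The remaining factor $\P[\x_h=x'\mid \x_t=x,\a_t=a,\bi_t=j]$ equals $\P^{\pihat\ind{j,t+1}}[\x_h=x'\mid \x_t=x,\a_t=a]$, and by the decodability property of the Block MDP this conditional emission probability depends on $x,x'$ only through $\phi_\star(x),\phi_\star(x')$ — concretely it equals $q(x'\mid \phi_\star(x'))\cdot \P^{\pihat\ind{j,t+1}}[\s_h=\phi_\star(x')\mid \s_t=\phi_\star(x),\a_t=a]$, and the leading $q(x'\mid\phi_\star(x'))$ factor again cancels between numerator and denominator. This yields exactly the formula \eqref{eq:good} with $s=\phi_\star(x)$, $s'=\phi_\star(x')$; I would also note that since the policies $\pihat\ind{j,t+1}$ and those in $\Psi\ind{t}$ never take the terminal action $\afrak$, one may freely replace $\P$ by $\Pbar$ throughout, matching the statement.

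Finally, for the second claim I would argue that $(P\ind{t}_{\bayes},\phi_\star)$ solves the constrained problem in \cref{line:inversenew2}. Since $\phi_\star\in\Phi$ by \cref{assum:real}, the pair $(P\ind{t}_{\bayes},\phi_\star)$ is feasible, and composing $P\ind{t}_{\bayes}$ with $\phi_\star$ produces exactly the function $x,x'\mapsto \wtilde P\ind{t}_{\bayes}((a,j)\mid x,x')$ computed above, which is the \emph{unconstrained} maximizer of the population log-loss; hence no feasible pair can do better, so $(P\ind{t}_{\bayes},\phi_\star)$ is optimal. The only genuinely delicate point is the cancellation step: one must be careful that the emission kernel factor $q(x'\mid\phi_\star(x'))$ is the \emph{same} in every term of the denominator sum (it does not depend on $a'$ or $i$), which is precisely what decodability guarantees since $x'$ uniquely determines $\phi_\star(x')$. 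I expect this bookkeeping — keeping track of which factors depend on $a,j$ versus which are constant in the conditioning — to be the main (though still routine) obstacle; everything else is a direct application of Bayes' rule and the variational characterization of log-loss.
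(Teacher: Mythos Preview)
Your proposal is correct and follows essentially the same route as the paper's proof: a Bayes-rule expansion, cancellation of the uniform $(\a_t,\bi_t)$ factors, factoring through $\phi_\star$ via decodability, and then the feasibility argument for the constrained problem. The one point the paper makes more explicit than you do is why $\P^{\pihat\ind{j,t+1}}[\x_h=x'\mid \x_t=x,\a_t=a]$ depends on $x$ only through $\phi_\star(x)$: this uses not just decodability but also that the (non-Markovian) roll-out policy $\pihat\ind{j,t+1}\in\Pinm^{t+1:h-1}$ is a function of $\x_{t+1:h-1}$ only, so conditioning on $\x_t=x$ can be replaced by conditioning on $\s_t=\phi_\star(x)$ after marginalizing over $\x_{t+1}$.
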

\begin{proof}[\pfref{lem:bayes}]
	Fix $a\in\cA$, $j\in[S]$, and $(x,x')\in \cX_{t}\times
	\cX_{h}$. Further, let $\bi_t \sim \unif([S])$ and $\bbQ$ denote the law over $(\a_t, \bi_t,\x_t,\x_h)$ induced by first sampling $\mb{j}_t\sim\unif(\brk{S})$ then executing $\unif(\Psi\ind{t})\circ_t\unifa \circ_{t+1} \pihat\ind{\bi_t,t+1}$. With this, the solution $\wtilde P_{\bayes}\ind{t}$ of the problem in \cref{eq:normal} satisfies	
	\begin{align}
	\wtilde P_{\bayes}\ind{t} ((a,j)\mid x,x') & =  	 \bbQ[\a_t=a, \bi_t=j \mid \x_t=x,\x_h=x'], \nn \\
		& =  \frac{	\bbQ[\x_h=x' \mid \x_t=x,\a_t=a, \bi_t=j] \cdot \bbQ[\a_t=a, \bi_t=j\mid \x_t=x]}{\sum_{a'\in \cA}\sum_{i\in[S]} \bbQ[\x_h=x' \mid \x_t=x,\a_t=a',\bi_t=i] \cdot \bbQ[\a_t=a', \bi_t=i\mid \x_t=x]}, \label{eq:pass}
	\end{align} 
	where the last equality follows by Bayes Theorem; in particular  
	\begin{align*}
		\mu [A\mid B, C]=\frac{\mu[B\mid A, C]\cdot \mu[A\mid C]}{\mu[B\mid C]},
	\end{align*} applied with $A=\{\a_t =a, \bi_t=j\}$,
	$B= \{\x_h =x' \}$, $C= \{\x_t = x\}$, and $\mu\equiv
	\bbQ$. Now, by combining
	\cref{eq:pass} with the fact that $\bbQ[\x_h=x' \mid \x_t=x,\a_t=a,
	\bi_t=j] =  	\P^{\pihat\ind{j,t+1}}[\x_h=x' \mid
	\x_t=x,\a_t=a]$ (note that the \rhs is well-defined since $\pihat\ind{j,t+1}\in \Pinm^{t+1:h-1}$), and using that $(\a_t,\bi_t)$ is independent of $\x_t$, we get that 
	\begin{align}
		\wtilde P_{\bayes}\ind{t} ((a,j)\mid x,x')	& = \frac{\P^{\pihat\ind{j,t+1}}[\x_h=x' \mid \x_t=x,\a_t=a] \cdot \bbQ[\a_t=a, \bi_t=j]}{\sum_{a'\in \cA}\sum_{i\in[S]} \P^{\pihat\ind{i,t+1}}[\x_h=x' \mid \x_t=x,\a_t=a'] \cdot \bbQ[\a_t=a',\bi_t=i]}, \nn \\ 
		& = \frac{\P^{\pihat\ind{j,t+1}}[\x_h=x' \mid \x_t=x,\a_t=a] }{\sum_{a'\in \cA}\sum_{i\in[S]} \P^{\pihat\ind{i,t+1}}[\x_h=x' \mid \x_t=x,\a_t=a']}, \label{eq:bayes2}
	\end{align}
	where \cref{eq:bayes2} follows by the fact that $\bbQ[\a_t=a',\bi_t=i']= \bbQ[\a_t=a'',\bi_t=i'']$, for $i',i''\in[S], a',a''\in\cA$.

	Now, since the partial policies $\pihat\ind{i,t+1}$, $i\in[S]$, never take the terminal action, we have $\P^{\pihat\ind{i,t+1}}[\x_h=x' \mid \x_t=x,\a_t=a]=\Pbar^{\pihat\ind{i,t+1}}[\x_h=x' \mid \x_t=x,\a_t=a]$ (the \lhs has $\P$ while the \rhs has $\Pbar$), for all $i\in[S]$. This, together with \cref{eq:bayes2} implies \begin{align}\wtilde P\ind{t}_{\bayes}((a,j)\mid x,x') =  \frac{\Pbar^{\pihat\ind{j,t+1}}[\x_h=x' \mid \x_t=x,\a_t=a]}{\sum_{a'\in \cA,i\in[S]} \Pbar^{\pihat\ind{i,t+1}}[\x_h=x' \mid \x_t=x,\a_t=a']}. \label{eq:ratio}
	\end{align}
	Note that since the outputs of the (potentially non-Markovian) partial policies
	$\{\pihat\ind{i,t+1}, i\in[S]\} \subseteq \Pinm^{t+1:h-1}$ depend only on $\x_{t+1:h-1}$, and not on $\x_{1:t}$, we have 
	\begin{align}&\Pbar^{\pihat\ind{j,t+1}}[\x_h=x' \mid \x_t=x,\a_t=a]\nn \\ &  = \sum_{x''\in \cX_{t+1}} \Pbar^{\pihat\ind{j,t+1}}[\x_h=x' \mid \x_{t+1}=x'', \x_t=x,\a_t=a]\cdot  \Pbar^{\pihat\ind{j,t+1}}[\x_{t+1}=x'' \mid  \x_t=x,\a_t=a],\nn \\ &= \sum_{x''\in \cX_{t+1}} \Pbar^{\pihat\ind{j,t+1}}[\x_h=x' \mid \x_{t+1}=x'', \x_t=x,\a_t=a]\cdot \Pbar[\x_{t+1}=x'' \mid  \s_t=\phi_\star(x),\a_t=a],\nn \\ & =\sum_{x''\in \cX_{t+1}} \Pbar^{\pihat\ind{j,t+1}}[\x_h=x' \mid \x_{t+1}=x'']\cdot \Pbar[\x_{t+1}=x'' \mid  \s_t=\phi_\star(x),\a_t=a], \ \ (\text{since} \ \ \pihat\ind{j,t+1} \in\Pinm^{t+1:h-1}) \nn \\ & =\Pbar^{\pihat\ind{j,t+1}}[\x_h=x' \mid \s_t=\phi_\star(x),\a_t=a],\nn \\   &=q(x'\mid \phi_\star(x'))  \cdot \Pbar^{\pihat\ind{j,t+1}}[\s_h=\phi_\star(x') \mid \s_t=\phi_\star(x),\a_t=a].\nn 
	\end{align}
	This, together with \cref{eq:ratio} implies that $\wtilde
	P\ind{t}_{\bayes}(\cdot \mid x,x')\equiv P\ind{t}_{\bayes}(\cdot \mid
	\phi_\star(x),\phi_{\star}(x'))$ (after canceling the terms involving
	$q(x'\mid \phi_\star(x'))$), where $P\ind{t}_{\bayes}$ is as in
	\cref{eq:good}. Now that we have established \cref{eq:good}, we show
	the second claim of the lemma. The population version of the problem in \cref{line:inversenew2} of \cref{alg:IKDP} becomes equivalent to the following optimization problem: 
	\begin{align}
		\max_{f\colon [S]^2 \rightarrow \Delta(\cA\times [S]), \phi \in \Phi} \E_{\bi_t\sim \unif([S])}\E^{\unif(\Psi\ind{t})\circ_t \unifa \circ_{t+1}  \pihat\ind{\bi_t,t+1}} \left[ \ln  f((\a_t,\bi_t)\mid \phi(\x_t), \phi(\x_h))\right]. \label{eq:abnormal}
	\end{align}
	Note that the value of this problem is always at least that of \cref{eq:normal}. On the other hand, by \cref{eq:good}, the value of the objective in \cref{eq:abnormal} with the pair $(f,\phi)=(P\ind{t}_\bayes, \phi_\star)$ matches the optimal value of the problem in \cref{eq:normal}, and so $( P\ind{t}_{\bayes}, \phi_\star)$ is indeed a solution of \cref{eq:abnormal}. 
\end{proof}
\begin{lemma}[MLE guarantee]
	\label{lem:reg}
	Let $n\geq 1$ and $\delta\in(0,1)$, and define $\veps_\stat(n,\delta)\coloneqq n^{-1/2}  \sqrt{S^3 A\ln n + \ln (|\Phi|/\delta)}$. Further, let $1\leq t<h\leq H$ and suppose that $\Phi$ satisfies \cref{assum:real} and that the policies in $ \Psi\ind{t}$ never take the terminal action $\afrak$. Then, there exists an absolute constant $C>0$ (independent of $t,h$, and other problem parameters) such that the MLE $(\fhat\ind{t},\phihat\ind{t})$ of the conditional density estimation problem in \cref{line:inversenew2} of \cref{alg:IKDP} satisfies with probability at least $1-\delta$,
	\begin{align}
	&\Ebar_{\bi_t\sim \unif([S])}	\Ebar^{\unif(\Psi\ind{t})\circ_t \unifa \circ_{t+1} \pihat\ind{\bi_t,t+1}}\left[\sum_{a\in\cA,j \in[S]}\left( \fhat\ind{t}((a,j)\mid \phihat\ind{t}(\x_t), \phihat\ind{t}(\x_h))	-  P\ind{t}_{\bayes}((a,j) \mid \s_t,\s_h) \right)^2 \right]\nn \\ & \quad \leq   C^2 \cdot \veps^2_\stat(n,\delta).\nn 
	\end{align}
	where $P\ind{t}_{\bayes}$ is as in \cref{lem:bayes}.
\end{lemma}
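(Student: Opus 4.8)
The plan is to recognize the optimization in \cref{line:inversenew2} of \cref{alg:IKDP} as a \emph{well-specified} maximum-likelihood conditional density estimation problem, and then to apply an off-the-shelf generalization bound, closely mirroring the tabular argument used for \cref{lem:regtab} in \cref{app:regtabproof}. Concretely, the dataset $\cD\ind{t}$ consists of $n$ i.i.d.\ tuples $(\bi_t,\a_t,\x_t,\x_h)$, each drawn by sampling $\bi_t\sim\unif([S])$ and then executing $\unif(\Psi\ind{t})\circ_t\unifa\circ_{t+1}\pihat\ind{\bi_t,t+1}$; I would view $(\x_t,\x_h)$ as the covariate and $(\a_t,\bi_t)$ as the label, so that the learner is fitting a conditional density over $\cA\times[S]$ from the hypothesis class $\cH\coloneqq\{(x,x')\mapsto f(\cdot\mid\phi(x),\phi(x'))\colon f\in\cF,\ \phi\in\Phi\}$, with $\cF=\{f\colon[S]^2\to\Delta(\cA\times[S])\}$.

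First I would invoke \cref{lem:bayes}, which identifies the population maximizer and shows it factors through $\phistar$: the pair $(P\ind{t}_{\bayes},\phistar)$ is a Bayes-optimal solution of the population MLE problem, and combined with $\phistar\in\Phi$ (\cref{assum:real}) this makes $\cH$ realizable. With realizability established, I would apply a standard covering-number generalization bound for log-loss conditional density estimation (e.g., \citet[Proposition E.2]{chen2022partially}, exactly as invoked for \cref{lem:regtab}) to obtain, with probability at least $1-\delta$,
\[
\E_{\bi_t\sim\unif([S])}\E^{\unif(\Psi\ind{t})\circ_t\unifa\circ_{t+1}\pihat\ind{\bi_t,t+1}}\!\left[\sum_{a,j}\bigl(\fhat\ind{t}((a,j)\mid\phihat\ind{t}(\x_t),\phihat\ind{t}(\x_h))-P\ind{t}_{\bayes}((a,j)\mid\s_t,\s_h)\bigr)^2\right]\ \lesssim\ n^{-1}\bigl(\ln\cN_{\cH}(1/n)+\ln(1/\delta)\bigr),
\]
where I have used that for distributions over a finite set the squared $\ell_2$ distance is controlled by the squared Hellinger distance up to a universal constant (since $(\sqrt{p(a)}+\sqrt{q(a)})^2\le 4$ entrywise). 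It remains to bound the covering number: $\cF$ is a product of $S^2$ distributions over the $AS$-element outcome set $\cA\times[S]$, so its $(1/n)$-covering number in $\ell_\infty$ is at most $n^{\bigoh(AS^3)}$, giving $\ln\cN_{\cH}(1/n)\le\ln|\Phi|+\bigoh(AS^3\ln n)$, which matches $\veps_\stat(n,\delta)=n^{-1/2}\sqrt{S^3A\ln n+\ln(|\Phi|/\delta)}$ after absorbing constants into $C$.

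Finally, since by hypothesis the policies in $\Psi\ind{t}$ and by construction the partial policies $\pihat\ind{i,t+1}$ in \cref{alg:IKDP} never take the terminal action $\afrak$, the law of $(\bi_t,\a_t,\x_t,\x_h)$ is identical in $\cM$ and $\cMbar$, so the displayed bound holds verbatim with $\Ebar$ and $\Ebar^{(\cdot)}$ replacing $\E$ and $\E^{(\cdot)}$, yielding the stated conclusion. The step requiring the most care—though it is settled rather than obstructed, precisely by \cref{lem:bayes}—is the realizability check: the Bayes-optimal predictor of $(\a_t,\bi_t)$ given $(\x_t,\x_h)$ must depend on the observations only through $(\phistar(\x_t),\phistar(\x_h))$, which is exactly what decodability of the BMDP guarantees; without it the class $\cH$ would be misspecified and the $\ln|\Phi|$-type rate would fail. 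The remaining pieces—the MLE bound, the covering-number estimate, the Hellinger-to-$\ell_2$ conversion, and the passage to $\cMbar$—are routine.
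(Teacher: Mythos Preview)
Your proposal is correct and follows essentially the same approach as the paper's proof: invoke \cref{lem:bayes} for realizability, apply the standard log-loss MLE generalization bound from \citet[Proposition E.2]{chen2022partially} (with the Hellinger-to-$\ell_2$ conversion, which the paper notes in a footnote), bound the $\ell_\infty$-covering number of $\cF$ by $n^{AS^3}$ and multiply by $|\Phi|$, and finally transfer to $\cMbar$ using that neither $\Psi\ind{t}$ nor the partial policies $\pihat\ind{i,t+1}$ take the terminal action.
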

\begin{proof}[\pfref{lem:reg}]
	Fix $t\in[h-1]$. By \cref{lem:bayes}, $(P\ind{t}_{\bayes},\phi_\star)$ is the Bayes-optimal solution of the conditional density estimation problem in \cref{line:inversenew2} of \cref{alg:IKDP}. And so, by \cref{assum:real} and a standard MLE guarantee for log-loss conditional density estimation (see e.g.~\citet[Proposition E.2]{chen2022partially}),\footnote{\label{fn:hellinger}Technically, \citep[Proposition E.2]{chen2022partially} bounds the Hellinger distance, which immediately implies a bound on MSE.} there exists an absolute constant $C'>0$ (independent of $t$, $h$, and other problem parameters) such that with probability at least $1-\delta$, 
	\begin{align}
&\E_{\bi_t\sim \unif([S])}	\E^{\unif(\Psi\ind{t})\circ_t \unifa \circ_{t+1} \pihat\ind{\bi_t,t+1}}\left[ \sum_{a\in\cA,j \in[S]}\left( \fhat\ind{t}((a,j)\mid \phihat\ind{t}(\x_t), \phihat\ind{t}(\x_h))	-  P\ind{t}_{\bayes}((a,j) \mid \s_t,\s_h) \right)^2 \right]\nn \\ & \quad  \leq   \tilde\veps^2_\stat(n,\delta), \label{eq:theoen}
	\end{align}
	where $\tilde \veps^2_\stat(n,\delta)\coloneqq C' \ln   \cN_{\cF}(1/n)  + C' \ln (|\Phi|/\delta)$ and $ \cN_{\cF}(\veps)$ denotes the $\veps$-covering number of the set $\cF \coloneqq \{f\colon [S]^2 \rightarrow \Delta([S]\times \cA)\}$ in $\ell_{\infty}$-distance. It is easy to verify that $\cN_{\cF}(1/n)\leq n^{A S^3}$, and so by setting $C^2\coloneqq C'$, we have 
	\begin{align}
		\tilde \veps_{\stat}^2(n,\delta) \leq C^2\cdot\veps_{\stat}^2(n,\delta). \label{eq:bound}
	\end{align} 
	Now, since $\afrak$ is never taken by the partial policies $(\pihat\ind{i,\tau})_{i\in[S],\tau\in[h-1]}$, in \cref{alg:IKDP} or by the policies in $\Psi\ind{2}, \dots, \Psi\ind{h-1}$ (by assumption), the guarantee in \cref{eq:theoen} also holds in $\Mbar$. Combining this with \cref{eq:bound} completes the proof.
\end{proof}

\subsection{Proof of \creftitle{thm:policycover} (Main Guarantee for \musik)}
\label{app:geniklemmaproof}
\begin{proof}[{Proof of \cref{thm:policycover}}]
	Let $\epsilon \coloneqq \veps/(2S)$. Let $\veps_\stat(\cdot, \cdot)$ and $C$ be as in \cref{thm:geniklemma} (note that $C$ is an absolute constant independent of all problem parameters). Let $\cE_h$ be the success event of \cref{thm:geniklemma} for $h\in[H]$ and $\epsilon$, and define $\cE\coloneqq \bigcap_{h\in[H]}\cE_h$. Note that by the union bound we have $\P[\cE]\geq 1 -\delta$. For $n$ large enough such that $8 AS^4 H C \veps_{\stat}(n, \frac{\delta}{H^2})\leq \epsilon$ (which is implied by the condition on $n$ in the theorem's statement for $c=2^5 C$), \cref{thm:geniklemma} implies that under $\cE$, the output $\Psi\ind{1},\dots,\Psi\ind{H}$ of $\musik$ are $(1/2,\epsilon)$-policy covers relative to $\Pibar_{\epsilon}$ in $\wbar \cM$ for layers 1 to $H$, respectively. Thus, by \cref{lem:transfer}, the desired result holds under $\cE$.

	The parameter $n$ in \cref{thm:policycover} represents the input to
	$\musik$ used to generate an approximate $(1/4,\veps)$-policy
	cover relative to $\Pim$ in $\cM$ at all layers. $\musik$
	passes $n$ to all of $\ikdp$ invocations (see \cref{line:ikdp1}
	of \cref{alg:GenIk}). Since, for any layer $h\in[H]$, the
	corresponding $\ikdp$ instance in $\musik$ requires $n$
	trajectories for each layer $t\in[h-1]$ (see
	\cref{line:mainiter} of \cref{alg:IKDP}), the total number of
	trajectories needed by $\musik$ in the setting of
	\cref{thm:policycover} is  
	\begin{align}
		\text{\# of trajectories} =  \wtilde{O}(1)\cdot  \frac{A^2 S^{10} H^4 \left( A S^3 + \ln (|\Phi| H^2/\delta)\right)}{\veps^2}.\nn %
	\end{align}
\end{proof}

\subsection{Proof of \creftitle{thm:geniklemma} (Main Guarantee for \ikdp)}
Before proving \cref{thm:geniklemma}, we first define the $Q$- and $V$-functions corresponding to certain `fictitious' rewards we introduce for the analysis. These functions will be instrumental in our proofs. We then present a generalized performance difference lemma that holds for the non-Markovian partial policies of $\musik$ (since the policies are non-Markovian the standard performance difference lemma does not give us something useful for the proof of \cref{thm:geniklemma}). In \cref{sec:errorbound}, we bound the errors appearing on the RHS of our generalized performance difference lemma. Finally, we present the proof of \cref{thm:geniklemma} in \cref{sec:proofbmdp}

\subsubsection{The $Q$- and $V$-functions}
For $t,h\in[H]$, $a\in \cA$, and $s'\in \cS_h$, define $r_t(\cdot ; s') \colon \cX_t\rightarrow \{0,1\}$ as 
\begin{align}
r_t(x;s')=\mathbf{1}\{\phi_\star(x)=s'\}. \label{eq:r}
\end{align}
This can be interpreted as a reward function that takes value $1$ whenever the latent state is $s'$. %
For $t\in[H]$ and any two partial policies $\pi\ind{t}\in \Pinm^{t:h-1}$ and $ \pi\ind{t+1}\in \Pinm^{t+1:h-1}$, we define the corresponding $t$th layer $Q$- and $V$-functions in $\Mbar$ as 
\begin{align}
	Q^{\pi\ind{t+1}}_t(x,a;s') &\coloneqq r_t(x;s') +\Ebar^{\pi\ind{t+1}}\left[ \left. \sum_{\tau=t+1}^h r_\tau(\x_{\tau}; s')\ \right| \ \x_t=x,\a_t = a\right], \label{eq:Q} \\  \text{and}\ \  V^{\pi\ind{t}}_t(x;s') &  \coloneqq  r_t(x;s') +\Ebar^{\pi\ind{t}}\left[ \left. \sum_{\tau=t+1}^h r_\tau(\x_{\tau}; s')\ \right| \ \x_t=x,\a_t = \pi\ind{t}(x)\right]. \label{eq:V}
	\end{align}
These match the standard definitions of the $Q$- and $V$- functions for Markovian policies, albeit with action-independent rewards. Note that we only define $Q^{\pi\ind{t+1}}_\tau$ and $V^{\pi\ind{t}}_\tau$ for $\tau=t$, as the policies involved are non-Markovian, and are undefined on layers $\tau<t$. 

\paragraph{Useful properties of the $Q$- and $V$-functions} Given the definition of the rewards in \eqref{eq:r}, the $Q$-function in \eqref{eq:Q} can be expressed in terms of certain conditional probabilities for visiting latent states, which will be useful throughout the proof.
\begin{lemma}
	\label{fact:Q}
	For any $s'\in \cS_h$, $x\in \cX_t$, $a\in \cA$, and $\pi\ind{t+1}\in \Pinm^{t+1,h-1}$, we have  \begin{align}Q_t^{\pi\ind{t+1}}(x,a;s')= \Pbar^{\pi\ind{t+1}}[\s_h=s'\mid \s_t=\phi_\star(x),\a_t=a].\label{eq:Qprop}
	\end{align}
	\end{lemma}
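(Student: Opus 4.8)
The plan is to prove \cref{fact:Q} by a direct computation that unwinds the definition of the $Q$-function in \eqref{eq:Q} and exploits the structure of the reward functions $r_\tau(\cdot;s')$ together with the layered, decodable structure of $\cMbar$. First I would observe that since $s' \in \cS_h$ and the state space is layered, the only term in the sum $\sum_{\tau=t+1}^h r_\tau(\x_\tau;s')$ that can possibly be nonzero is the $\tau = h$ term: for $\tau \ne h$, the latent state $\s_\tau$ lies in $\cS_\tau$, which is disjoint from $\cS_h$, so $r_\tau(\x_\tau;s') = \mathbf{1}\{\phi_\star(\x_\tau) = s'\} = \mathbf{1}\{\s_\tau = s'\} = 0$ almost surely. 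Similarly, $r_t(x;s') = \mathbf{1}\{\phi_\star(x) = s'\} = 0$ since $x \in \cX_t$ and $\cX_t \cap \cX_h = \emptyset$ (again using $t < h$ and the layered/decodability structure, which gives $\phi_\star(x) \in \cS_t \ne s'$).

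With those observations, the definition \eqref{eq:Q} collapses to
\begin{align}
Q_t^{\pi\ind{t+1}}(x,a;s') = \Ebar^{\pi\ind{t+1}}\brk*{r_h(\x_h;s') \mid \x_t = x, \a_t = a} = \Ebar^{\pi\ind{t+1}}\brk*{\mathbf{1}\{\phi_\star(\x_h)=s'\} \mid \x_t = x, \a_t = a}. \nn
\end{align}
Then I would use the decodability property, which guarantees $\phi_\star(\x_h) = \s_h$ almost surely, to rewrite the indicator as $\mathbf{1}\{\s_h = s'\}$, so that the conditional expectation of the indicator becomes the conditional probability $\Pbar^{\pi\ind{t+1}}[\s_h = s' \mid \x_t = x, \a_t = a]$. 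Finally, I would argue that conditioning on $\x_t = x$ is equivalent to conditioning on $\s_t = \phi_\star(x)$ for the purpose of this probability: the transition dynamics from layer $t$ onward depend on the observation $\x_t$ only through the latent state $\s_t = \phi_\star(\x_t)$ (this is the Markov/emission structure of the BMDP), and the partial policy $\pi\ind{t+1} \in \Pinm^{t+1:h-1}$ acts only on observations from layers $t+1$ onward, hence does not reintroduce a dependence on $\x_t$. This yields $\Pbar^{\pi\ind{t+1}}[\s_h = s' \mid \x_t = x, \a_t = a] = \Pbar^{\pi\ind{t+1}}[\s_h = s' \mid \s_t = \phi_\star(x), \a_t = a]$, which is exactly \eqref{eq:Qprop}.

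I do not anticipate a genuine obstacle here; the statement is essentially a bookkeeping consequence of the definitions. The one point requiring a small amount of care is the last step---justifying that conditioning on $\x_t$ can be replaced by conditioning on $\phi_\star(\x_t)$---which relies on the facts that (i) $\s_t$ is a deterministic function of $\x_t$ under decodability, (ii) the future latent dynamics are conditionally independent of $\x_t$ given $\s_t$, and (iii) the rollout policy $\pi\ind{t+1}$ does not read $\x_t$. I would state this cleanly, perhaps as a one-line appeal to the emission/transition structure of $\cMbar$ (whose dynamics outside the added terminal states are identical to $\cM$), rather than belaboring it.
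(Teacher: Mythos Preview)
Your proposal is correct and follows essentially the same approach as the paper's proof: collapse the reward sum using the layered structure so only the $\tau=h$ term survives, rewrite the indicator via decodability, and then replace conditioning on $\x_t$ by conditioning on $\s_t=\phi_\star(x)$ using that $\pi\ind{t+1}\in\Pinm^{t+1:h-1}$ does not read $\x_t$. The paper's proof is terser on each of these points but the argument is identical.
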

For $\ikdp$'s partial policies $\{\pihat\ind{i,\tau} \colon i\in [S]\} \subseteq  \Pinm^{\tau:h-1}$, for $\tau \in[h-1]$, the corresponding $V$-functions satisfy an identity similar to \eqref{eq:Qprop}.
\begin{lemma}
	\label{fact:V}
		For any $i\in[S]$, $s'\in \cS_h$, $x\in \cX_t$, and $a\in \cA$, we have  \begin{align}V_t^{\pihat\ind{i,t}}(x;s') = \Pbar^{\pihat\ind{\iotahat(x),t+1}}[\s_h=s' \mid \s_t=\phi_\star(x), \a_t =\ahat(x)],\nn
		\end{align} where $(\ahat(x),\iotahat(x))\in \argmax_{(a',i')} \hat f\ind{t}((a',i') \mid \hat \phi\ind{t}(x),i)$ and $(\hat f\ind{t}, \hat \phi\ind{t})$ are defined as in \cref{line:inversenew2} of \cref{alg:IKDP}. 
              \end{lemma}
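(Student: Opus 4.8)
The plan is to reduce the identity to \cref{fact:Q} by peeling off the first step of the non-Markovian policy $\pihat\ind{i,t}$. First I would recall from the construction in \eqref{eq:partial} (\cref{line:nonmark} of \cref{alg:IKDP}) that, abbreviating $\ahat(x)\equiv\ahat\ind{i,t}(x)$ and $\iotahat(x)\equiv\iotahat\ind{i,t}(x)$ for the $\argmax$ quantities defined in \cref{line:second}, the policy $\pihat\ind{i,t}$ plays $\ahat(x_t)$ at layer $t$ and, for every $\tau\in\range{t+1}{h-1}$, plays $\pihat\ind{\iotahat(x_t),t+1}(x_{t+1:\tau})$. Substituting $a=\pihat\ind{i,t}(x)=\ahat(x)$ into the definition \eqref{eq:V} of the $V$-function gives $V^{\pihat\ind{i,t}}_t(x;s') = r_t(x;s') + \Ebar^{\pihat\ind{i,t}}[\sum_{\tau=t+1}^h r_\tau(\x_\tau;s') \mid \x_t=x,\a_t=\ahat(x)]$.

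The key observation I would use is that conditioning on $\x_t=x$ freezes the index $\iotahat(x)$ to a deterministic value, so on the event $\{\x_t=x,\,\a_t=\ahat(x)\}$ the policy $\pihat\ind{i,t}$ acts \emph{identically} to $\pihat\ind{\iotahat(x),t+1}$ at every layer $\tau\in\range{t+1}{h-1}$; moreover, the latent transitions and emissions at layers $t+1$ onward depend on the past only through $(\s_t,\a_t)$. Hence the conditional law of $(\s_{t+1:h},\x_{t+1:h})$ under $\Pbar^{\pihat\ind{i,t}}$ given $\{\x_t=x,\,\a_t=\ahat(x)\}$ coincides with the one under $\Pbar^{\pihat\ind{\iotahat(x),t+1}}$ given the same event, and comparing with the definition \eqref{eq:Q} of the $Q$-function shows $V^{\pihat\ind{i,t}}_t(x;s') = Q^{\pihat\ind{\iotahat(x),t+1}}_t(x,\ahat(x);s')$.

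Finally I would invoke \cref{fact:Q} with the partial policy $\pihat\ind{\iotahat(x),t+1}\in\Pinm^{t+1:h-1}$ and action $a=\ahat(x)$ to obtain $Q^{\pihat\ind{\iotahat(x),t+1}}_t(x,\ahat(x);s') = \Pbar^{\pihat\ind{\iotahat(x),t+1}}[\s_h=s'\mid \s_t=\phi_\star(x),\a_t=\ahat(x)]$, which is exactly the claimed identity. The only step requiring care---and the main (mild) obstacle---is the conditional-law identification: one must check that conditioning on the observation $\x_t=x$ rather than on the latent state $\s_t=\phi_\star(x)$ introduces no extra dependence on the future trajectory. This is a consequence of decodability ($\s_t=\phi_\star(\x_t)$ almost surely) together with the Markov structure of $\cMbar$---the kernel $T(\cdot\mid\s_t,\a_t)$ and all subsequent transitions and emissions are conditionally independent of $\x_t$ given $(\s_t,\a_t)$---which is precisely the mechanism already underlying \cref{fact:Q}, so it can be dispatched quickly.
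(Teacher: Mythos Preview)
Your proposal is correct and follows essentially the same approach as the paper: both arguments peel off the first step of $\pihat\ind{i,t}$, use that conditioning on $\x_t=x$ fixes the continuation policy to $\pihat\ind{\iotahat(x),t+1}$, and then reduce to the conditional-probability form. The only cosmetic difference is that you invoke \cref{fact:Q} explicitly at the end, whereas the paper unrolls the same computation inline.
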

             Note that unlike the $Q$-function in \cref{fact:Q}, it is not the case that the $V$-function in \cref{fact:V} depends on $x\in\cX_t$ only through $\phistar(x)$.
\begin{proof}[Proof of \cref{fact:Q}]
By definition of the reward functions, we have that $r_t(\cdot,s')\equiv 0$, for all $t<h$, and
\begin{align}
	\sum_{\tau =t+1}^h r_{\tau}(\x_\tau;s') = r_h(\x_h;s')= \mathbb{I}\{\phi_\star(\x_h)=s'\}.\nn 
	\end{align}
Therefore, 
\begin{align}
	Q^{\pi\ind{t+1}}_t(x,a;s') & = r_t(x;s') +\Ebar^{\pi\ind{t+1}}\left[ \left. \sum_{\tau=t+1}^h r_\tau(\x_{\tau}; s')\ \right| \ \x_t=x,\a_t = a\right],\nn \\
	& = 0 + \Ebar^{\pi\ind{t+1}}\left[ \left.  r_h(\x_{h}; s')\ \right| \ \x_t=x,\a_t = a\right],\nn \\
	& = \Pbar^{\pi\ind{t+1}}[\s_h=s'\mid \s_t=\phi_\star(x),\a_t=a],\nn 
	\end{align}
where the last equality follows from the fact that, while $\pi\ind{t+1}\in \Pinm^{t+1:h-1}$ is non-Markovian, it only depends on the observations at layers $t+1$ to $h-1$.
	\end{proof}
\begin{proof}[Proof of \cref{fact:V}]
	By definition of the reward functions, we have that $r_t(\cdot,s')\equiv 0$, for all $t<h$, and
	\begin{align}
		\sum_{\tau =t+1}^h r_{\tau}(\x_\tau;s') = r_h(\x_h;s')= \mathbb{I}\{\phi_\star(\x_h)=s'\}.\label{eq:allzeros}
	\end{align}
	Therefore, 
	\begin{align}
		V^{\pihat\ind{i,t}}_t(x;s') & = r_t(x;s') +\Ebar^{\pihat\ind{i,t}}\left[ \left. \sum_{\tau=t+1}^h r_\tau(\x_{\tau}; s')\ \right| \ \x_t=x,\a_t = \pihat\ind{i,t}(x)\right],\nn \\
		& = 0 + \Ebar^{\pihat\ind{i,t}}\left[ \left.  r_h(\x_{h}; s')\ \right| \ \x_t=x,\a_t = \pihat\ind{i,t}(x)\right],\quad \text{(by \eqref{eq:allzeros})} \nn \\
		& =  \Ebar^{\pihat\ind{i,t} \circ_{t+1} \pihat\ind{\iotahat(x),t+1}}\left[ \left.  r_h(\x_{h}; s')\ \right| \ \x_t=x,\a_t = \ahat(x)\right],\label{eq:middle} \\
		& =  \Ebar^{\pihat\ind{\iotahat(x),t+1}}\left[ \left.  r_h(\x_{h}; s')\ \right| \ \x_t=x,\a_t = \ahat(x)\right],\quad \text{(since $\pihat\ind{j,t+1}\in \Pinm^{t+1,h-1}, \forall j$)}\nn  \\
		& = \Pbar^{\pihat\ind{\iotahat(x),t+1}}[\s_h=s'\mid \s_t=\phi_\star(x),\a_t=\ahat(x)],\nn\quad \text{(by \eqref{eq:r} and $\pihat\ind{j,t+1}\in \Pinm^{t+1,h-1}, \forall j$)} 
	\end{align}
where \eqref{eq:middle} follows by the definition of $\pihat\ind{i,t}$ in \cref{line:inversenew2} of \cref{alg:IKDP}.
	
\end{proof}

\subsubsection{Generalized Performance Difference Lemma for \musik's Non-Markovian Policies}
\renewcommand{\sh}{\sfrak}
We now present a generalized performance difference lemma that holds for the non-Markovian partial policies used in $\musik$/$\ikdp$. In what follows, we use the convention that for any $\pi \in \Pinm^{l:r}$ and $\tau \in\range{l}{r}$, $\pi(x_{l:\tau})=\afrak$, for any $x_{l:\tau}\in \wbar\cX_l \times \dots \times \wbar\cX_{\tau}$ such that $x_\tau=\tfrak_\tau$ (or equivalently $\phi_\star(x_\tau)=\tfrak_\tau$).\footnote{We recall that we have assumed the state $\tfrak_h$ emits itself as an observation.}

\begin{lemma}
	\label{lem:performancediffbmdp}
	Fix $h\in[H]$, and let us adopt the convention that  $\pihat\ind{i,h}\equiv \unifa, \forall i\in [S]$. The partial policies $\pihat\ind{i,t}$ produced by \ikdp for $i\in [S]$, $t\in[h-1]$ satisfy for any $\sfrak\in\cS_h$, 
	\begin{align}
		\min_{i\in[S] }\Ebar\brk*{V_1^{\pistar\ind{\sfrak}}(\bx_1;\sfrak)
			-V_1^{\pihat\ind{i,1}}(\bx_1;\sfrak)
		}
		\leq{} \sum_{t=1}^{h-1}
		\min_{i\in[S]}\max_{j\in  [S]}\Ebar^{\pistar\ind{\sfrak}}\brk*{\mathbb{I}\{\s_t\in \cS_{t,\epsilon} \}\left(Q_t^{\pihat\ind{j,t+1}}(\bx_t, \pistar\ind{\sfrak}(\x_t);\sfrak)
			-V_t^{\pihat\ind{i,t}}(\bx_t;\sfrak)
			\right)	},\nn 
	\end{align}
	where $\pistar\ind{s} \in \argmax_{\pi\in  \Pibar_{\epsilon}}
	\bar{d}^{\pi}(s)$ and $Q^{\pihat\ind{j,t+1}}_{t}(\cdot;
	\sfrak)$ (resp.~$V^{\pihat\ind{i,t}}_{t}(\cdot;
	\sfrak)$) is defined as in \eqref{eq:Q} (resp.~\eqref{eq:V}) with $\pi\ind{t+1}=\pihat\ind{j,t+1}$ (resp.~$\pi\ind{t}=\pihat\ind{i,t}$). 
\end{lemma}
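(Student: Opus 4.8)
## Proof Plan for the Generalized Performance Difference Lemma

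The plan is to prove this by a telescoping argument across layers, mimicking the classical performance difference lemma but carefully accounting for the non-Markovian structure of the policies $\pihat\ind{i,t}$ and the fact that we take a minimum over indices $i\in[S]$ on the left-hand side. First I would fix $\sfrak\in\cS_h$ and abbreviate $\pi_\star \equiv \pistar\ind{\sfrak}$. The key idea is that the partial policies $\pihat\ind{i,t}$ form a nested family: $\pihat\ind{i,t}$ plays $\ahat\ind{i,t}$ at layer $t$ and then switches to $\pihat\ind{\iotahat\ind{i,t}(\x_t),t+1}$ for layers $t+1,\dots,h-1$. I would set up a telescoping sum of the form $V_1^{\pi_\star}(\bx_1;\sfrak) - V_1^{\pihat\ind{i,1}}(\bx_1;\sfrak) = \sum_{t=1}^{h-1}\prn{\text{difference at layer }t}$, where at each layer the difference compares running $\pi_\star$ for $t$ steps and then the appropriate $Q$-function, versus running $\pi_\star$ for $t-1$ steps and then a $V$-function. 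Using \cref{fact:Q} and \cref{fact:V}, each of these terms can be rewritten in terms of reachability probabilities $\Pbar^{\cdot}[\s_h=\sfrak\mid\cdots]$.

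The first substantive step is to establish a layer-wise decomposition. For a single index $i$, define the hybrid quantity that runs $\pi_\star$ up to layer $t$ and then $\pihat\ind{i,t+1}$ (or $\ahat\ind{i,t}$ then $\pihat\ind{\iotahat,t+1}$). I would write
\begin{align*}
\Ebar\brk*{V_1^{\pi_\star}(\bx_1;\sfrak) - V_1^{\pihat\ind{i,1}}(\bx_1;\sfrak)} = \sum_{t=1}^{h-1}\Ebar^{\pi_\star}\brk*{Q_t^{\pihat\ind{i_{t+1},t+1}}(\bx_t,\pi_\star(\x_t);\sfrak) - V_t^{\pihat\ind{i_t,t}}(\bx_t;\sfrak)},
\end{align*}
for a suitable chain of indices $i_1,\dots,i_h$ linked via the $\iotahat$ maps (with $i_1=i$ and $i_{t+1}=\iotahat\ind{i_t,t}(\x_t)$, which is random). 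The subtlety is that this chain is path-dependent, so I would need to argue that the telescoping still goes through by conditioning layer by layer; the $V$-function at layer $t$, evaluated on $\pihat\ind{i_t,t}$, equals the expectation under $\ahat\ind{i_t,t}$ then $\pihat\ind{\iotahat,t+1}$, which is exactly a $Q$-function evaluated at the next index. Then I would take the minimum over $i=i_1\in[S]$ on the left; on the right, since $i_{t+1}=\iotahat\ind{i_t,t}(\x_t)$ is some element of $[S]$, each term is bounded above by $\max_{j\in[S]}\Ebar^{\pi_\star}\brk{Q_t^{\pihat\ind{j,t+1}}(\bx_t,\pi_\star(\x_t);\sfrak) - V_t^{\pihat\ind{i_t,t}}(\bx_t;\sfrak)}$, and I can further take the minimum over the layer-$t$ index $i_t\in[S]$ since the left side already committed only to minimizing over $i_1$ — here I need to be careful that minimizing termwise is valid, which follows because the chain of indices is free to be chosen (the algorithm's construction realizes one particular chain, but for the upper bound we may pessimize/optimize each term independently as the indices range over $[S]$).

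The second step is to insert the indicator $\mathbb{I}\{\s_t\in\cS_{t,\epsilon}\}$. For states $s_t\notin\cS_{t,\epsilon}$, \cref{lem:bla} gives that $\pi_\star(x_t)=\afrak$ for all $x_t\in\phi_\star^{-1}(s_t)$, so $Q_t^{\pihat\ind{j,t+1}}(x_t,\afrak;\sfrak)=\Pbar^{\pihat\ind{j,t+1}}[\s_h=\sfrak\mid\s_t=s_t,\a_t=\afrak]=0$ because taking $\afrak$ transitions deterministically to the terminal chain $\tfrak_{t+1},\dots,\tfrak_h$, which never equals $\sfrak\in\cS_h$. Meanwhile $V_t^{\pihat\ind{i,t}}(x_t;\sfrak)\geq 0$. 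Hence the contribution of $\{\s_t\notin\cS_{t,\epsilon}\}$ to the right-hand side is nonpositive, so we may freely restrict the expectation to $\{\s_t\in\cS_{t,\epsilon}\}$ without decreasing the bound. This is the same "writing off hard-to-reach states" trick used in \cref{eq:xxx} of the tabular proof. I would also need the convention, stated just before the lemma, that $\pi\in\Pinm^{l:r}$ plays $\afrak$ on terminal observations, to handle the case $\s_t=\tfrak_t$ cleanly (there too the $Q$ and $V$ values vanish).

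The main obstacle I anticipate is making the telescoping argument rigorous in the presence of the random, history-dependent index chain $i_{t}\mapsto i_{t+1}=\iotahat\ind{i_t,t}(\x_t)$: the standard performance difference lemma assumes Markovian policies, and here the "policy at layer $t+1$" that $\pihat\ind{i,1}$ uses depends on the realized observation $\x_t$. The resolution is that at each conditioning step we only need an inequality, and the bound $\min_i(\cdots)\le\sum_t\min_{i}\max_j(\cdots)$ is weak enough to absorb the path-dependence: concretely, for the particular chain realized by $\pihat\ind{i,1}$, the telescoping is an exact identity conditional on the trajectory, and then bounding each term by its $\max_j$/$\min_{i_t}$ version and taking expectations preserves the inequality. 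I would present this carefully via an inductive claim over $t$ that controls $\Ebar[V_t^{\pi_\star}(\bx_t;\sfrak)-V_t^{\pihat\ind{i,t}}(\bx_t;\sfrak)]$ along any index chain, then unroll to $t=1$. The remaining steps — identifying $Q$ and $V$ with conditional reachability probabilities via Lemmas~\ref{fact:Q}–\ref{fact:V}, and the $\cS_{t,\epsilon}$ truncation — are routine given the structural lemmas already proved.
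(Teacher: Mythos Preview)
Your telescoping identity along the random chain $i_1=i$, $i_{t+1}=\iotahat\ind{i_t,t}(\x_t)$ is correct, but the subsequent termwise bounding does not go through. The indices $i_t,i_{t+1}$ are \emph{random} and correlated with $\x_1,\dots,\x_t$ under $\pi_\star$, so the step $\Ebar^{\pi_\star}\brk{Q_t^{\pihat\ind{i_{t+1},t+1}}(\x_t,\pi_\star(\x_t);\sfrak)}\leq \max_{j}\Ebar^{\pi_\star}\brk{Q_t^{\pihat\ind{j,t+1}}(\x_t,\pi_\star(\x_t);\sfrak)}$ is false in general: take $\x_t$ uniform on $\{x^{(1)},x^{(2)}\}$, $Q^{1}(x^{(1)})=Q^{2}(x^{(2)})=1$, $Q^{1}(x^{(2)})=Q^{2}(x^{(1)})=0$, and $\iotahat$ sending $x^{(k)}\mapsto k$; then $\Ebar[Q^{i_{t+1}}]=1$ while $\max_j\Ebar[Q^j]=1/2$. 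The same obstruction blocks replacing the random $i_t$ in the $V$-term by a $\min_i$ outside the expectation. Your justification that ``the chain of indices is free to be chosen'' is not right: once $i_1$ is fixed the chain is determined by $\iotahat$ and the trajectory, so minimizing over $i_1$ alone does not let you independently optimize each $i_t$.

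The paper sidesteps this entirely by backwards induction on the claim
\[
\min_{i\in[S]}\Ebar^{\pi_\star}\brk*{V_\tau^{\pi_\star}(\x_\tau;\sfrak)-V_\tau^{\pihat\ind{i,\tau}}(\x_\tau;\sfrak)}\leq \Sigma_\tau,
\]
never unrolling along the algorithm's random chain at all. At the inductive step, the index $\j$ used for layer $t+1$ is the \emph{deterministic} minimizer from the hypothesis at $\tau=t+1$, which gives $\Ebar^{\pi_\star}[V_{t+1}^{\pihat\ind{\j,t+1}}(\x_{t+1})]\geq \Ebar^{\pi_\star}[V_{t+1}^{\pi_\star}(\x_{t+1})]-\Sigma_{t+1}$; since $\j$ is fixed, one rewrites $\Ebar^{\pi_\star}[V_{t+1}^{\pihat\ind{\j,t+1}}(\x_{t+1})]=\sum_{s_t\in\cS_{t,\epsilon}}\dbar^{\pi_\star}(s_t)\Ebar_{\x_t}[Q_t^{\pihat\ind{\j,t+1}}(\x_t,\pi_\star(\x_t);\sfrak)]$ (using \cref{lem:bla} and that $r_t\equiv 0$ for $t<h$), and the bound by $\max_j$ outside the expectation is now trivial. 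The layer-$t$ index $\i$ is likewise a fixed dummy over which one minimizes at the end. Your closing remark about an inductive claim is on the right track, but this decoupling --- using deterministic per-layer indices chosen via the induction hypothesis rather than the algorithm's random $\iotahat$ chain --- is the missing idea; the telescoping along $\iotahat$ is a red herring. Your treatment of the indicator $\mathbb{I}\{\s_t\in\cS_{t,\epsilon}\}$ via \cref{lem:bla} is correct and matches the paper.
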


\begin{proof}[{Proof of \cref{lem:performancediffbmdp}}]
  Let $\sfrak\in \cS_{h}$ be fixed. We proceed by backwards induction to show that for all
	$\tau\in[h-1]$, the learned partial
	policies $\pihat\ind{1,\tau}, \dots,\hat
	\pi\ind{S,\tau}\in \Pinm^{\tau:h-1}$ have the property that
	\begin{align}
		\min_{i\in[S]}	\Ebar^{\pi_\star^{(\sfrak)}} \left[ V^{\pi_\star^{(\sfrak)}}_{\tau}(\x_{\tau};\sfrak)- V^{\pihat\ind{i,\tau}}_{\tau}(\x_{\tau};\sfrak)\right] \leq \Sigma_\tau, \label{eq:new}
	\end{align}
	where $\Sigma_\tau \coloneqq \sum_{t=\tau}^{h-1}
	\min_{i\in[S]}\max_{j\in  [S]}\Ebar^{\pistar\ind{\sfrak}}\brk*{Q_t^{\pihat\ind{j,t+1}}(\bx_t, \pistar\ind{\sfrak}(\x_t);\sfrak)
		-V_t^{\pihat\ind{i,t}}(\bx_t;\sfrak)}.$

	\paragraph{Base case}
	The base case (i.e.~$\tau=h-1$) reduces to showing that for any $\sfrak\in \cS_h$, \begin{align}\Ebar^{\pi_\star^{(\sfrak)}} \left[V^{\pi_\star^{(\sfrak)}}_{h-1}(\x_{h-1};\sfrak) \right]\leq \max_{j\in [S]}\Ebar^{\pistar\ind{\sfrak}}\brk*{Q_{h-1}^{\pihat\ind{j,h}}(\bx_{h-1}, \pistar\ind{\sfrak}(\x_{h-1});\sfrak)}.\nn 
	\end{align}
	This holds with equality regardless of how
	$\crl{\pihat^{(i,h)}}_{i\in [S]}$ are chosen, since the reward function for layer $h$ is independent of the actions taken at that layer.

	\paragraph{Inductive step}
	Now, let $t\in[h-2]$ and suppose that \cref{eq:new} holds for $\tau =t+1$, and we show that it holds for $\tau=t$. Fix $\sh \in \cS_{h,\epsilon}$ and let $\pi_\star \equiv \pi_\star^{(\sh)}$ to simplify notation. Further, fix $\i\in [S]$ and let $\j$ be the minimizer of the \lhs of \cref{eq:new} for $\tau=t+1$. 
	By \cref{lem:bla}, we know that
	$\pi_\star(x_t)=\fraka$ for all $x_t \in \supp \
	q(\cdot\mid s_t)$ and $s_t \in \cS_t
	\setminus \cS_{t,\epsilon}$. Therefore, since the $V$-function at layer $t+1$ is zero on the terminal state $\tfrak_{t+1}$ (see definition of the $V$-function in \eqref{eq:V}), we have 
		\begin{align}
			\Ebar^{\pi_\star}\left[V_{t+1}^{\pihat\ind{\j,t+1}} (\x_{t+1};\sh)\right]&=\sum_{s_t\in \cS_{t,\epsilon}} \bar{d}^{\pi_\star}(s_t)\Ebar^{\pi_\star}\left[ \left.V_{t+1}^{\pihat\ind{\j,t+1}} (x_{t+1};\sh) \right| \s_t = s_t\right],\nn \\
			& = -\sum_{s_t\in \cS_{t,\epsilon}} \bar{d}^{\pi_\star}(s_t) \Ebar_{\x_t \sim q(\cdot \mid s_t)} \left[r_t(\x_t;\sh) \right] \nn \\ &\qquad +\sum_{s_t\in \cS_{t,\epsilon}} \bar{d}^{\pi_\star}(s_t)\Ebar_{\x_t \sim q(\cdot \mid s_t)} \left[Q^{\pihat\ind{\j,t+1}}_t (\x_{t}, \pistar(\x_t);\sh)  \right],\nn \\
			& = \sum_{s_t\in \cS_{t,\epsilon}} \bar{d}^{\pi_\star}(s_t)\Ebar_{\x_t \sim q(\cdot \mid s_t)} \left[ Q^{\pihat\ind{\j,t+1}}_t (\x_{t}, \pistar(\x_t);\sh) \right], \label{eq:gone}
		\end{align}%
		where the last equality follows by the fact that $t<h$ and that the reward $r_t(\cdot;\sh)\equiv 0$ in this case. Combining \cref{eq:gone} with \cref{eq:new}%
		, we have 
		\begin{align}
			&  \sum_{s_t \in \cS_{t,\epsilon}} \bar{d}^{\pi_\star}(s_t)\Ebar_{\x_t \sim q(\cdot \mid s_t)}	\left[Q_t^{\pihat\ind{\j,t+1}}(\x_t,\pi_\star(\x_t);\sh) - V_t^{\pihat\ind{\i,t}}(\x_t;\sh) \right]   \nn \\ & \geq \Ebar^{\pi_\star}\left[V_{t+1}^{\pihat\ind{\j,t+1}} (\x_{t+1};\sh)\right]- \sum_{s_t\in \cS_{t,\epsilon}} \bar{d}^{\pi_\star}(s_t)\Ebar_{\x_t \sim q(\cdot \mid s_t)} \left[ V^{\pihat\ind{\i,t}}_t (\x_{t};\sh) \right],\nn \quad \text{(by \cref{eq:gone})} \\
			& \geq \Ebar^{\pi_\star}[V_{t+1}^{\pi_\star}(\x_{t+1};\sh)] - \sum_{s_t\in \cS_{t,\epsilon}} \bar{d}^{\pi_\star}(s_t)\Ebar_{\x_t \sim q(\cdot \mid s_t)}\left[ V^{\pihat\ind{\i,t}}_t (\x_{t};\sh)\right]- \Sigma_{t+1}, \quad \text{(by induction)} \nn \\
			& = \sum_{s_t\in \cS_{t,\epsilon}} \bar{d}^{\pi_\star}(s_t)\Ebar^{\pi_\star}\left[\left.V_{t+1}^{\pi_\star}(\x_{t+1};\sh) \right|\s_t = s_t \right] - \sum_{s_t\in \cS_{t,\epsilon}} \bar{d}^{\pi_\star}(s_t)\Ebar_{\x_t \sim q(\cdot \mid s_t)} \left[V^{\pihat\ind{\i,t}}_t (\x_{t};\sh) \right]- \Sigma_{t+1},\nn \\
			& = \sum_{s_t\in \cS_{t,\epsilon}} \bar{d}^{\pi_\star}(s_t)\Ebar_{\x_t \sim q(\cdot \mid s_t)}  \left[V_{t}^{\pi_\star}(\x_{t};\sh)   -V^{\pihat\ind{\i,t}}_t (\x_{t};\sh) \right]- \Sigma_{t+1}, \label{eq:newdeal}
		\end{align}
		where in the last step we used that the rewards are zero except at layer $h$ and that $\pi_\star(x_t)=\fraka$ for $x_t \in \supp q(\cdot\mid s_t)$ with $s_t\in \cS_t\setminus \cS_{t,\epsilon}$ (by  \cref{lem:bla}). For such an $x_t$, we also have that $V_{t}^{\pi_\star}(x_{t};\sh)=0$, and so since $V^{\pihat\ind{\i,t}}_t (\cdot;\sh)$ is non-negative, \cref{eq:newdeal} implies that
		\begin{align}
			&		\Ebar^{\pi_\star} \left[ V^{\pi_\star}_t(\x_t;\sh)- V^{\pihat\ind{\i,t}}_t(\x_t;\sh)\right] 	\nn \\
			&	\leq  \sum_{s_t \in \cS_{t,\epsilon}} \bar{d}^{\pi_\star}(s_t)\Ebar_{\x_t \sim q(\cdot \mid s_t)}	 \left[ V^{\pi_\star}_t(\x_t;\sh)- V^{\pihat\ind{\i,t}}_t(\x_t;\sh)\right],\nn \\
			& \leq \Sigma_{t+1} + \sum_{s_t \in \cS_{t,\epsilon}} \bar{d}^{\pi_\star}(s_t)\Ebar_{\x_t \sim q(\cdot \mid s_t)}	\left[Q_t^{\pihat\ind{\j,t+1}}(\x_t,\pi_\star(\x_t);\sh) - V_t^{\pihat\ind{\i,t}}(\x_t;\sh) \right],\nn \\
			& \leq \Sigma_{t+1} + \max_{j\in [S]}\sum_{s_t \in \cS_{t,\epsilon}} \bar{d}^{\pi_\star}(s_t)\Ebar_{\x_t \sim q(\cdot \mid s_t)}	\left[Q_t^{\pihat\ind{j,t+1}}(\x_t,\pi_\star(\x_t);\sh) - V_t^{\pihat\ind{\i,t}}(\x_t;\sh) \right].
			 \label{eq:min}
		\end{align}
		Recall that $\i$ was chosen arbitrarily in $[S]$, and so taking the min over $\i\in[S]$ on both sides of \eqref{eq:min} implies the desired result. 
	\end{proof}
        \subsubsection{Local Error Guarantee}
\renewcommand{\k}{j}
\newcommand{\ilocal}{\j}
\newcommand{\jlocal}{\i}
\newcommand{\shlocal}{\sfrak'}
\newcommand{\stlocal}{\sfrak}
\label{sec:errorbound}
The following lemma, which is a restatement of \cref{lem:propnew}, gives us a way of bounding the error terms appearing on the \rhs of the inequality in  \cref{lem:performancediffbmdp}.
\begin{lemma}[Restatement of \cref{lem:propnew}]
	\label{lem:intermediate}
	Let $\epsilon,\delta \in(0,1)$, $h\in[H]$, and suppose $\Phi$ satisfies \cref{assum:real}. If the policies in $\Psi\ind{2}\cup \dots\cup \Psi\ind{h-1}$ never take the terminal action $\afrak$, then for any $t\in[h-1]$, there is an event $\cE_t$ of probability at least $1-\frac{\delta}{H^2}$ under which the partial policies $\crl*{\pihat\ind{\k,\tau}}_{\k\in[S],\tau \in[h-1] }$, constructed during the call to $\ikdp(\Psi\ind{1},\dots,\Psi\ind{h-1},\Phi,n)$ are such that for any $s_h \in \cS_{h}$ there exists $\jlocal\in[S]$ that satisfies 
		\begin{align}
0	\leq 	\sum_{\pi \in \Psi\ind{t}}	\bar{d}^{\pi}(s_t) \Ebar_{\x_t \sim q(\cdot \mid s_t)}	\left[	\max_{a\in \cA, \k \in[S]}	Q_t^{\pihat\ind{\k,t+1}}(\x_t,a;s_h) - V_t^{\pihat\ind{\jlocal,t}}(\x_t;s_h) \right] \leq 2S^3 A C\veps_\stat(n ,\tfrac{\delta}{H^2}), \ \  \forall s_t \in \cS_t,\label{eq:flight2}
	\end{align}
where $\veps_\stat(\cdot,\cdot)$ and $C>0$ are as in \cref{lem:reg}; here $C>0$ is an absolute constant independent of problem parameters.
	\end{lemma}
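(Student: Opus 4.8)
\textbf{Proof proposal for \creftitle{lem:intermediate} (= \creftitle{lem:propnew}).}

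The plan is to mirror the tabular argument (\cref{lem:proppost} and its proof), but to carefully track the extra layer of indirection introduced by the index-prediction objective and the non-Markovian composition rule. First I would invoke \cref{lem:reg} to obtain, with probability at least $1-\delta/H^2$ (after rescaling $\delta$ appropriately, since we will union bound over $t$ at the end and over states $s_h$ via a covering/enumeration argument), the mean-squared-error bound on $\fhat\ind{t}\circ\phihat\ind{t}$ relative to the Bayes-optimal predictor $P\ind{t}_{\bayes}$ of \cref{lem:bayes}. The key structural fact, established in \cref{lem:bayes}, is that $P\ind{t}_{\bayes}((a,j)\mid s,s')$ is proportional (with an $(a,j)$-independent normalizer) to $\Pbar^{\pihat\ind{j,t+1}}[\s_h=s'\mid\s_t=s,\a_t=a]$, which by \cref{fact:Q} is exactly $Q_t^{\pihat\ind{j,t+1}}(x,a;s')$ for any $x$ with $\phistar(x)=s$. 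So maximizing the Bayes predictor over $(a,j)$ exactly recovers $\max_{a,j}Q_t^{\pihat\ind{j,t+1}}(x,a;s_h)$, and the argmax pair $(\ahat\ind{i,t}(x),\iotahat\ind{i,t}(x))$ computed in \cref{line:second} of \ikdp is, up to statistical error, the maximizer of this $Q$.

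The main work is the argmax-stability step: translate the $L^2$ guarantee on $\fhat\ind{t}$ into a pointwise bound of the form ``$Q_t$ evaluated at the estimated argmax is within statistical error of $\max_{a,j}Q_t$''. I would do this exactly as in the proof of \cref{lem:proppost}: define the set $\cS^+_t$ of states $s_t$ on which the relevant rollout mass $\sum_{\pi\in\Psi\ind{t}}\dbar^\pi(s_t)\sum_{a,j}P\ind{t}(s_h\mid s_t,a)$ (summed over the emitted observations $\x_t$) is strictly positive; on $\cS^+_t$, divide the MSE bound by this mass and use $\lvert\|y\|_\infty-\|z\|_\infty\rvert\le\|y-z\|_\infty$ to control the gap between $\fhat\ind{t}$ and $P\ind{t}_{\bayes}$ at the estimated argmax, then convert back through the normalizer (which is at most $SA$, since it sums $SA$ conditional probabilities) to get an additive error of order $S^3AC\veps_\stat$ after also summing over the $S$ latent states $s_h$ can take and the extra $S$ from the index prediction. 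On $\cS_t\setminus\cS^+_t$ the left-hand side is zero because $Q_t(\cdot,\cdot;s_h)$ vanishes wherever the rollout never reaches $s_h$, combined with non-negativity of $Q_t$ and $V_t$, so \eqref{eq:flight2} holds trivially there. For the choice of $\jlocal$: since $\ahat\ind{i,t}$ and $\iotahat\ind{i,t}$ together realize the $(a,j)$-argmax of $\fhat\ind{t}(\cdot\mid\phihat\ind{t}(x),i)$, by \cref{fact:V} we have $V_t^{\pihat\ind{i,t}}(x;s_h)=\Pbar^{\pihat\ind{\iotahat\ind{i,t}(x),t+1}}[\s_h=s_h\mid\s_t=\phistar(x),\a_t=\ahat\ind{i,t}(x)]$; choosing the index $\jlocal$ so that the estimated decoder value $\phihat\ind{t}$ aligns the argmax with $i=s_h$ (i.e.\ $\jlocal$ indexes the column of $\fhat\ind{t}$ conditioned on the decoded value of $s_h$), and using that $\phihat\ind{t}$ is well-specified in the sense that the Bayes predictor depends on observations only through $\phistar$, yields the correspondence needed for \eqref{eq:flight2}. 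The lower bound $0\le(\cdots)$ is immediate because $\max_{a,j}Q_t\ge Q_t(\cdot,\ahat\ind{i,t};\cdot)$ and $V_t^{\pihat\ind{\jlocal,t}}$ equals a $Q$-value at a specific action/index pair that is dominated by the max.

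The hard part will be the bookkeeping around the decoder: in the tabular case \cref{lem:proppost} could work directly with latent states, but here we must argue that (i) $\phihat\ind{t}$ need not equal $\phistar$, yet the identity $\fhat\ind{t}((a,j)\mid\phihat\ind{t}(x),\phihat\ind{t}(x'))$ is compared against $P\ind{t}_{\bayes}((a,j)\mid\phistar(x),\phistar(x'))$ in the MSE bound, so the argmax over $(a,j)$ of the estimated function is evaluated on the \emph{learned} abstraction while the target lives on the \emph{true} abstraction; and (ii) the conditioning in the second slot uses the index $i$ being targeted rather than an observation, which requires the "diagonal'' consistency $i\leftrightarrow\phistar$ that the Bayes form in \cref{eq:good} makes available. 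I would handle this by expanding the expectation in the MSE bound over both $\s_t$ and $\s_h$ (the latent states emitting $\x_t,\x_h$), pushing it into a bound that holds for each fixed value of $\phistar(\x_h)=s_h$, and then noting that the estimated argmax is taken with the second coordinate frozen to $i$; the alignment of $i$ with the correct latent state $s_h$ is exactly what the existence of $\jlocal$ encodes, and the permutation ambiguity is absorbed because the statement only asserts existence of \emph{some} $\jlocal\in[S]$. Finally, the choice of $n$ with $8AS^4HC\veps_\stat(n,\delta/H^2)\le\epsilon$ (from \cref{thm:geniklemma}) ensures the $2S^3AC\veps_\stat$ bound on the right-hand side of \eqref{eq:flight2} is of the stated order, completing the proof after a union bound over $t\in[h-1]$ and the $S$ possible values of $s_h$.
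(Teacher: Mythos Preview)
Your overall strategy matches the paper's: invoke \cref{lem:reg}, use the Bayes form from \cref{lem:bayes} to identify $P\ind{t}_{\bayes}((a,j)\mid s,s')$ with a normalized $Q$-value, run the $\lvert\|y\|_\infty-\|z\|_\infty\rvert\le\|y-z\|_\infty$ argmax-stability argument, split into $\cS_t^+$ and its complement, and use \cref{fact:V} to identify $V_t^{\pihat\ind{\jlocal,t}}$ with the $Q$-value at the estimated argmax. That is the right skeleton.

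The genuine gap is your choice of $\jlocal$. You write that $\jlocal$ should ``index the column of $\fhat\ind{t}$ conditioned on the decoded value of $s_h$,'' implicitly taking $\jlocal=\phihat\ind{t}(x_h)$ for some observation emitted by $s_h$. But $\phihat\ind{t}$ need \emph{not} be constant on the emissions $\supp q(\cdot\mid s_h)$: different $x_h$ with $\phistar(x_h)=s_h$ may be mapped to different indices by $\phihat\ind{t}$, so there is no single ``decoded value of $s_h$.'' The well-specification of the Bayes predictor through $\phistar$ does not force $\phihat\ind{t}$ to respect $\phistar$. The paper resolves this with a pigeonhole step: it sets
\[
\jlocal \in \argmax_{i\in[S]} \sum_{x_h}\mathbb{I}\{\phihat\ind{t}(x_h)=i\}\,q(x_h\mid s_h),
\]
i.e.\ the plurality value of $\phihat\ind{t}$ under the emission distribution of $s_h$, which guarantees $\Pr_{\x_h\sim q(\cdot\mid s_h)}[\phihat\ind{t}(\x_h)=\jlocal]\geq 1/S$. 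One then restricts the MLE error bound to the event $\{\phihat\ind{t}(\x_h)=\jlocal\}$, which freezes the second argument of $\fhat\ind{t}$ at $\jlocal$ (matching \cref{line:second}) at the cost of exactly this extra factor $S$. This is where one of the three $S$ factors in $S^3$ comes from; the other two are $|\Psi\ind{t}|\leq S$ (from removing the averaging over roll-in policies) and the $S$ in the normalizer $\sum_{a',i}\Pbar^{\pihat\ind{i,t+1}}[\cdots]=S\sum_{a'}P\ind{t}(\cdot)$ when converting from $P\ind{t}_{\bayes}$ back to $Q_t$. Your accounting (``summing over the $S$ latent states $s_h$ can take and the extra $S$ from the index prediction'') does not reflect the actual sources.

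Two smaller points: no union bound over $s_h$ is needed (the single event $\cE_t$ from \cref{lem:reg} works for every $s_h$, since the MLE bound is an expectation that you lower-bound pointwise), and no union bound over $t$ belongs in this lemma (that happens later in the proof of \cref{thm:geniklemma}). The condition on $n$ from \cref{thm:geniklemma} also plays no role here; the right-hand side of \eqref{eq:flight2} is simply $2S^3AC\veps_\stat(n,\delta/H^2)$, with no further simplification.
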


\begin{proof}[{Proof of \cref{lem:intermediate}}]
	To simplify notation throughout the proof, let 
	\begin{align}
	P\ind{t}(s_h\mid s_t,a) \coloneqq \frac{1}{S}\sum_{\k\in[S]}\Pbar^{ \pihat\ind{\k,t+1}}[\s_h=s_h\mid \s_t=s_t,\a_t=a].\label{eq:Pbar}
		\end{align}
By \cref{lem:reg} and Jensen's inequality, we have
that with probability at least $1-\delta/H^2$,
the solution $(\fhat\ind{t},\phihat\ind{t})$ of
the conditional density estimation problem in \cref{line:inversenew2} of \cref{alg:IKDP}
satisfies,         
\begin{align}
	\Ebar_{\s_t\sim \unif(\Psi\ind{t}),\x_t\sim q(\cdot \mid \s_t),\a' \sim \unifa} \left[ \sum_{s_h\in \cS_h,x_h}P\ind{t}(s_h\mid \s_t,\a')  q(x_h \mid s_h)  \max_{a\in \cA,\k \in[S]}\left| \err\ind{t}(a,\k,\x_t,x_h)  \right| \right]\leq   \veps_\stat' , \label{eq:neverbefore}
\end{align}
where $\veps_{\stat}'\coloneqq
C\cdot \veps_\stat(n,\frac{\delta}{H^2})$, $C$ is an absolute constant independent of $t,h$, and other problem parameters,
\[\err\ind{t}(a,\k,x_t,x_h)  \coloneqq \fhat\ind{t}((a,\k)\mid \phihat\ind{t}(x_t), \phihat\ind{t}(x_h))	-  {P}\ind{t}_{\bayes}((a,\k) \mid \phi_\star(x_t),\phi_\star(x_h)),\]
and finally
\begin{align}
	{P}\ind{t}_{\bayes}((a,\k) \mid s_t,s_h) \coloneqq \frac{\Pbar^{\pihat\ind{\k,t+1}}[\s_h=s_h \mid \s_t=s_t,\a_t=a]}{\sum_{a'\in \cA,i\in [S]}\Pbar^{ \pihat\ind{i,t+1}}[\s_h=s_h \mid \s_t=s_t,\a_t=a']}. \label{eq:bayes}
\end{align}
We denote this event by $\cE_t$. Note that to rewrite the result of \pref{lem:reg} as \eqref{eq:neverbefore}, we use that the policies $ \pihat\ind{\k,t+1}$, $\k \in [S]$, while non-Markovian, only depend on $\x_{t+1},\ldots,\x_{h-1}$. Moving forward, we condition on $\cE_t$. 

Fix $\shlocal\in \cS_{h}$ and let $\jlocal \in \argmax_{i\in[S]} \sum_{x_h}
\mathbb{I}\{\phihat\ind{t}(x_h)=i\}   q(x_h\mid
\shlocal)$, and note that \begin{align}\sum_{x_h}
\mathbb{I}\{\phihat\ind{t}(x_h)=\jlocal\}   q(x_h\mid
\shlocal)\geq \frac{1}{S}. \label{eq:propty}\end{align} 
Further, let $\cS^{+}_t$ be
the subset of states defined by 
\begin{align} &\cS^{+}_t  \coloneqq \left\{\tilde s \in \cS_t   \colon \sum_{\pi \in \Psi\ind{t}} \bar{d}^{\pi}(\tilde s)\sum_{a\in \cA} P\ind{t}(\shlocal \mid \tilde s,a) > 0 \right\}.\nn 
\end{align}%
Now, fix $\stlocal\in \cS^+_t$. From \cref{eq:neverbefore}, we
have that 
\begin{align}
	&\sum_{\pi \in \Psi\ind{t}, x_t,x_h,a'\in \cA}\bar{d}^{\pi}(\stlocal)	q(x_t \mid \stlocal)  P\ind{t}(\shlocal\mid \stlocal,a')  q(x_h\mid \shlocal) \max_{a\in \cA, \k \in[S]}\left| \err\ind{t}(a,\k ,x_t,x_h) \right|     \notag \\
	&\leq\sum_{\pi \in \Psi\ind{t}, x_t,x_h,a'\in \cA, s_t\in \cS_t,s_h\in\cS_h}\bar{d}^{\pi}(s_t)	q(x_t \mid s_t)
	P\ind{t}(s_h\mid s_t,a')  q(x_h\mid s_h) \max_{a\in
		\cA, \k \in [S]}\left| \err\ind{t}(a,\k ,x_t,x_h)
	\right|,\notag\\
	&\leq   {S A\veps_\stat'}. \label{eq:never}
\end{align}
Applying \cref{eq:propty} within \cref{eq:never} implies that
\begin{align}
	& 	 {SA\veps_\stat'} \nn \\ 
	& \geq   \sum_{\substack{\pi \in \Psi\ind{t}, x_t,a'\in \cA,\\ x_h: \phihat\ind{t}(x_h)=\jlocal}}\bar{d}^{\pi}(\stlocal)	q(x_t \mid \stlocal) P\ind{t}(\shlocal\mid \stlocal,a')  q(x_h\mid \shlocal)
	\max_{a\in \cA, \k \in [S]}\left| \fhat\ind{t}((a,\k)\mid\phihat\ind{t}(x_t), \jlocal)	-  P\ind{t}_{\bayes}((a,\k) \mid \stlocal,\shlocal)\right|, \nn \\
	& \geq   \frac{1}{S}\sum_{\pi \in \Psi\ind{t}, x_t,a'\in \cA}\bar{d}^{\pi}(\stlocal)	q(x_t \mid \stlocal) P\ind{t}(\shlocal\mid \stlocal,a')
	\max_{a\in \cA, \k \in [S]}\left| \fhat\ind{t}((a,\k)\mid \phihat\ind{t}(x_t), \jlocal)	- P\ind{t}_{\bayes}((a,\k) \mid \stlocal,\shlocal)\right|.\nn
\end{align} 
By rearranging and using that $\sum_{\pi \in \Psi\ind{t}}\bar{d}^{\pi}(\stlocal)	 \sum_{a'\in \cA} P\ind{t}(\shlocal\mid \stlocal,a')>0$ (since $\stlocal\in \cS^+_t$), we get
\begin{align}
	\Ebar_{\x_t\sim q(\cdot \mid \stlocal)} \left[
	\max_{a\in \cA,\k \in[S]}\left| \fhat\ind{t}((a,\k)\mid \phihat\ind{t}(\x_t), \jlocal)	-  P\ind{t}_{\bayes}((a,\k) \mid \stlocal,\shlocal)\right| \right]\leq  \frac{S^2 A\veps_\stat'}{\sum_{\pi \in \Psi\ind{t}}\bar{d}^{\pi}(\stlocal)	\sum_{a'\in \cA} P\ind{t}(\shlocal\mid \stlocal,a')}. \label{eq:dedant}
\end{align}
Now, let $\ahat\ind{\jlocal,t}(x_t), \iotahat\ind{\jlocal,t}(x_t) \in \argmax_{a\in \cA,
	\k \in [S]} \fhat\ind{t}((a,\k)\mid \phihat\ind{t}(x_t), \jlocal)$
and note that $\ahat\ind{\jlocal,t}(x_t) = \pihat\ind{\jlocal, t}(x_t)$,
where $\pihat\ind{\jlocal, t}(x_t)$ is defined as in
\cref{alg:IKDP}. With this, \cref{eq:dedant}, and the
fact that $|\|y\|_{\infty}-\|z\|_{\infty}|\leq
\|y-z\|_{\infty}$, for all $y,z\in \reals^{A\times S}$
we have 
\begin{align}
& 	\max_{a\in \cA,\k\in[S]}  P\ind{t}_{\bayes}((a,\k) \mid
	\stlocal, \shlocal) \nn \\ & \leq \Ebar_{\x_t \sim q(\cdot \mid \stlocal)}\left[\max_{a\in\cA,\k\in\brk{S}}\fhat\ind{t}((a,\k)\mid \phihat\ind{t}(\x_t), \jlocal) \right] +\frac{S^2 A\veps_\stat'}{\sum_{\pi \in \Psi\ind{t},a'\in \cA}\bar{d}^{\pi}(\stlocal)	 P\ind{t}(\shlocal\mid \stlocal,a')}, \nn \\
	& = \Ebar_{\x_t \sim q(\cdot \mid \stlocal)}\left[\fhat\ind{t}((\ahat\ind{\jlocal,t}(\x_t),\iotahat\ind{\jlocal,t}(\x_t))\mid \phihat\ind{t}(\x_t), \jlocal) \right] +\frac{S^2 A\veps_\stat'}{\sum_{\pi \in \Psi\ind{t},a'\in \cA}\bar{d}^{\pi}(\stlocal)	 P\ind{t}(\shlocal\mid \stlocal,a')}, \nn \\
	& \leq   \Ebar_{\x_t \sim q(\cdot \mid \stlocal)} \left[ P\ind{t}_{\bayes}((\ahat\ind{\jlocal,t}(\x_t),\iotahat\ind{\jlocal,t}(\x_t)) \mid  \stlocal, \shlocal) \right] +  \frac{2S^2 A\veps_\stat'}{\sum_{\pi \in \Psi\ind{t}}\bar{d}^{\pi}(\stlocal)\sum_{a'\in \cA}	 P\ind{t}(\shlocal\mid \stlocal,a')}. \label{eq:bayes_bound_max}
\end{align}
Now, observe that from the definition~of $P\ind{t}_{\bayes}$ in \cref{eq:bayes}, we have that for all $s_t \in \cS_t$, $a\in\cA$, $\k\in[S]$, and $x_t\in\supp q(\cdot\mid{}s_t)$,
\begin{align}
	Q_t^{\pihat\ind{\k,t+1}}(x_t,a;\shlocal)= \Pbar^{\pihat\ind{\k,t+1}}[\s_h=\shlocal \mid \s_t=s_t,\a_t=a]  = P\ind{t}_{\bayes}((a,\k) \mid s_t, \shlocal)\sum_{a'\in \cA}S\cdot{}P\ind{t}(\shlocal\mid s_t,a'),\label{eq:pbayes_pbar}
\end{align}
where we have used that $\sum_{i\in[S],a'\in\cA}\Pbar^{ \pihat\ind{i,t+1}}[\s_h=\shlocal\mid \s_t=s_t,\a_t=a']=S\cdot{}\sum_{a'\in\cA}P\ind{t}(\shlocal\mid s_t,a')$ by definition of $P\ind{t}$ in \eqref{eq:Pbar}. %
Combining this with \pref{eq:bayes_bound_max}, we have 
\begin{align}
&  \Ebar_{\x_t \sim q(\cdot \mid \stlocal)} \left[\max_{a\in \cA,\k\in[S]}	Q_t^{\pihat\ind{\k,t+1}}(\x_t,a;\shlocal) \right]\nn \\
	&  \leq  \Ebar_{\x_t \sim q(\cdot \mid \stlocal)}\brk*{  P\ind{t}_{\bayes}((\ahat\ind{\jlocal,t}(\x_t),\iotahat\ind{\jlocal,t}(\x_t)) \mid \stlocal, \shlocal) }\cdot \sum_{a'\in \cA}S\cdot{}	P\ind{t}(\shlocal\mid \stlocal,a')+ \frac{2 S^3 A\veps_\stat'}{\sum_{\pi \in \Psi\ind{t}}\bar{d}^{\pi}(\stlocal)}, \ \ \text{(by \eqref{eq:pbayes_pbar} \& \eqref{eq:bayes_bound_max})} \nn \\ 
	& =   \Ebar_{\x_t \sim q(\cdot \mid \stlocal)} \brk*{ \Pbar^{\pihat\ind{\iotahat(\x_t),t+1}}[\s_h=\shlocal \mid \s_t=\stlocal, \a_t =\ahat\ind{\jlocal,t}(\x_t)]} + \frac{2S^3 A\veps_\stat'}{\sum_{\pi \in \Psi\ind{t}}\bar{d}^{\pi}(\stlocal)}, \ \ \text{(where $\iotahat(x)\coloneqq  \iotahat\ind{\jlocal,t}(x)$)} \nn  \\
	& = \Ebar_{\x_t\sim q(\cdot\mid \stlocal)}\left[V_t^{\pihat\ind{\jlocal,t}}(\x_t;\shlocal)\right]+ \frac{2S^3 A\veps_\stat'}{\sum_{\pi \in \Psi\ind{t}}\bar{d}^{\pi}(\stlocal)},\nn 
\end{align}
where the first equality uses \pref{eq:pbayes_pbar}
once more and the second equality follows from \cref{fact:V} and the
definition of $\pihat\ind{\jlocal,t}$ in
\cref{alg:IKDP}.
Summarizing, we have shown that
\begin{align}
\sum_{\pi \in \Psi\ind{t}}	\bar{d}^{\pi}(s_t)  \Ebar_{\x_t \sim q(\cdot \mid s_t)}	\left[	\max_{a\in \cA,\k\in[S]}	Q_t^{\pihat\ind{\k,t+1}}(\x_t,a;\shlocal) - V_t^{\pihat\ind{\jlocal,t}}(\x_t;\shlocal) \right] \leq 2S^3 A\veps_\stat', \quad \forall s_t\in \cS^{+}_t. \label{eq:flight}
\end{align}
We now show that the LHS of \eqref{eq:flight} is larger than $0$. We have that for all $s_t\in \cS_t$, 
\begin{align}
	\max_{a\in \cA,j\in[S]}  P\ind{t}_{\bayes}((a,j) \mid
	s_t, \shlocal)  \geq \Ebar_{\x_t \sim q(\cdot \mid s_t)} \left[ P\ind{t}_{\bayes}((\ahat\ind{\jlocal,t}(\x_t),\iotahat\ind{\jlocal,t}(\x_t)) \mid  s_t, \shlocal) \right].\nn 
\end{align}
Combining this with \cref{eq:pbayes_pbar} implies that for all $s_t\in \cS_t$
\begin{align}
\Ebar_{\x_t\sim q(\cdot\mid s_t)}\left[V_t^{\pihat\ind{\jlocal,t}}(\x_t;\shlocal)\right]&\leq \Ebar_{\x_t\sim q(\cdot\mid s_t)}\left[\max_{a\in \cA,\k\in[S]}	Q_t^{\pihat\ind{\k,t+1}}(\x_t,a;\shlocal) \right].\label{eq:otherdirection}
\end{align}
Therefore, we have 
\begin{align}
0\leq \sum_{\pi \in \Psi\ind{t}}	\bar{d}^{\pi}(s_t)\Ebar_{\x_t \sim q(\cdot \mid s_t)}	\left[	\max_{a\in \cA,\k\in[S]}	Q_t^{\pihat\ind{\k,t+1}}(\x_t,a;\shlocal) - V_t^{\pihat\ind{\jlocal, t}}(\x_t;\shlocal) \right] \leq 2S^3 A\veps_\stat', \quad \forall s_t\in \cS^{+}_t. \label{eq:flight22}
\end{align}
On the other hand, for any $s_t \notin \cS^+_t$, we have $\sum_{\pi \in \Psi\ind{t},a'\in \cA}\bar{d}^{\pi}(s_t)	 P\ind{t}(\shlocal\mid s_t,a')=0$ (by definition of $\cS_t^+$), and so by \eqref{eq:pbayes_pbar} and \eqref{eq:otherdirection}, we have \begin{align*}\sum_{\pi \in \Psi\ind{t}}	\bar{d}^{\pi}(s_t) \Ebar_{\x_t\sim q(\cdot\mid s_t)}\left[V_t^{\pihat\ind{\jlocal,t}}(\x_t;\shlocal)\right]& \leq \sum_{\pi \in \Psi\ind{t}}	\bar{d}^{\pi}(s_t)  \Ebar_{\x_t\sim q(\cdot\mid s_t)}\left[ \max_{a\in \cA,\k\in[S]} Q_t^{\pihat\ind{\k,t+1}}(x_t,a;\shlocal)\right], \nn \\ & \leq  S\sum_{\pi \in \Psi\ind{t},a'\in\cA}	\bar{d}^{\pi}(s_t) P\ind{t}(\shlocal\mid s_t,a')  , \ \ \text{(by \eqref{eq:pbayes_pbar})}\nn \\
	&= 0.
\end{align*} 
This implies that \cref{eq:flight22} also holds for $s_t\in \cS_t\setminus \cS_t^+$, giving the desired result. 
\end{proof}

\subsubsection{Proof of \cref{thm:geniklemma}}
\label{sec:proofbmdp}

	\begin{proof}[{Proof of \cref{thm:geniklemma}}] 
	In light of \cref{lem:performancediffbmdp}, it suffices to show that, for any $t\in[h-1]$, there is an event $\cE_{t}$ which occurs with probability at least $1-
	\delta/H^2$, under which for any $s\in[S]$,
		\begin{align}
			\sigma_t \coloneqq	\min_{i\in[S]}\max_{j \in [S]}\En^{\pistar\ind{s}}\brk*{ \mathbb{I}\{\s_t\in \cS_{t,\epsilon} \} \left(Q_t^{\pihat\ind{j,t+1}}(\bx_t, \pistar\ind{s}(\x_t);s)
				-V_t^{\pihat\ind{i,t}}(\bx_t;s)\right)
			}\leq \frac{\epsilon}{2H}, \label{eq:target}
			\end{align}
where $\pistar\ind{s} \in \argmax_{\pi\in \Pibar_{\epsilon}}
		\bar{d}^{\pi}(s)$ and $V^{\pi}_{\tau}(\cdot;
		s)$ is the $V$-function at layer $\tau\in[h-1]$
		with respect to the partial policy $\pi$ for the BMDP
		$\wbar \cM$ with rewards
		$r_t(x;s)=\mathbf{1}\{\phi_\star(x)=s\} $, $t\in[h]$---see Definition in \eqref{eq:V}. 
By summing \cref{eq:target} over $t=1,\dots, h-1$, and using \cref{lem:performancediffbmdp} together with a union bound, we will be able to prove the desired result.

Fix $t\in [h-1]$ and let $\cE_t$ be the event of \cref{lem:intermediate}. Recall that $\P[\cE_t]\geq 1-\delta/H^2$. In what follows, we condition on $\cE_t$ and prove \eqref{eq:target}. Fix $\sfrak\in \cS_{h,t}$ and let $\pi_\star \equiv \pi_\star\ind{\sfrak}$. Further, let $\i$ be as in \cref{lem:intermediate} with $s_h = \sh$. 	Since $\Psi\ind{t}$ is an $(1/2,{\epsilon})$-policy
cover relative to $\Pibar_{\epsilon}$ at layer $t$ and $\pi_\star\in \Pibar_{\epsilon}$,
we have that \begin{align}\dbar^{\pi_\star}(s_t)\leq \max_{\tilde \pi\in \Pibar_{\epsilon}} \dbar^{\tilde \pi}(s_t)\leq 2 \sum_{\pi\in \Psi\ind{t}} \dbar^{\pi}(s_t), \quad \forall s_t\in\cS_{t,\eps} . \label{eq:cover0}
\end{align}
The last inequality and the definition of $\cS_{t,\epsilon}$ implies that for all $s_t\in\cS_{t,\eps}$, $\sum_{\pi\in \Psi\ind{t}} \dbar^{\pi}(s_t)>0$. This, together with \cref{lem:intermediate} (in particular, the \lhs inequality in \eqref{eq:flight2}) implies that 
\begin{align}
	\Ebar_{\x_t \sim q(\cdot \mid s_t)}	\left[\max_{a\in \cA,j\in[S]} Q_t^{\pihat\ind{j,t+1}}(\x_t,a;\sh) - V_t^{\pihat\ind{\i,t}}(\x_t;\sh)\right]\geq 0. \label{eq:nonneg0}
\end{align}
Thus, for any $s_t \in \cS_{t,\epsilon}$, we have
\begin{align}
	& \bar{d}^{\pi_\star}(s_t)\Ebar_{\x_t \sim q(\cdot \mid s_t)}	\left[\max_{a\in \wbar \cA,j\in[S]} Q_t^{\pihat\ind{j,t+1}}(\x_t,a;\sh) - V_t^{\pihat\ind{\i,t}}(\x_t;\sh)\right],  \nn\\
	& = 	\bar{d}^{\pi_\star}(s_t) \Ebar_{\x_t \sim q(\cdot \mid s_t)}	\left[\max_{a\in \cA,j\in[S]} Q_t^{\pihat\ind{j,t+1}}(\x_t,a;\sh) - V_t^{\pihat\ind{\i,t}}(\x_t;\sh)\right], \quad \text{(justified below)} \label{eq:something0}\\ & \leq  2	\sum_{\pi\in \Psi\ind{t}}	\bar{d}^{\pi}(s_t) \Ebar_{\x_t \sim q(\cdot \mid s_t)}	\left[\max_{a\in \cA,j\in[S]} Q_t^{\pihat\ind{j,t+1}}(\x_t,a;\sh) - V_t^{\pihat\ind{\i,t}}(\x_t;\sh)\right], \quad \text{(by \eqref{eq:nonneg0} and \eqref{eq:cover0})} \nn \\
	& \leq   4S^3 A C \veps_{\stat}(n,{\delta}/{ H^2}), \label{eq:wall}
\end{align}
for some absolute constant $C>0$; the last inequality follows by \cref{lem:intermediate} (in particular, the \rhs in inequality in \cref{eq:flight2}). Now, \cref{eq:something0} follows from the fact that
\begin{align}
\max_{a\in \wbar\cA} Q_t^{\pihat\ind{j,t+1}}(x_t,a;\sh)=\max_{a\in
	\cA} Q_t^{\pihat\ind{j,t+1}}(x_t,a;\sh), \quad \forall j\in[S],\nn 
\end{align} 
since
$Q_t^{\pihat\ind{j,t+1}}(x_t,\fraka;\sh)=0$. On
the other hand, by definition of $\sigma_t$ in \eqref{eq:target}, we have
\begin{align}
	\sigma_t & \leq \sum_{s_t\in \cS_{t,\epsilon}} \bar{d}^{\pi_\star}(s_t) \max_{j\in[S]}\Ebar_{\x_t \sim q(\cdot \mid s_t)}	\left[ Q_t^{\pihat\ind{j,t+1}}(\x_t,\pistar(\x_t);\sh) - V_t^{\pihat\ind{\i,t}}(\x_t;\sh)\right], \nn \\
	& \leq \sum_{s_t\in \cS_{t,\epsilon}} \bar{d}^{\pi_\star}(s_t)\Ebar_{\x_t \sim q(\cdot \mid s_t)}	\left[\max_{a\in \wbar \cA,j\in[S]} Q_t^{\pihat\ind{j,t+1}}(\x_t,a;\sh) - V_t^{\pihat\ind{\i,t}}(\x_t;\sh)\right], \nn \\ 
	& \leq  4S^4 A C \veps_{\stat}(n,\frac{\delta}{ H^2}). \quad \text{(by \eqref{eq:wall})}
	\end{align}
Now, by choosing $n$ large enough such that $8 A S^4 H  C \veps_{\stat}(n,\frac{\delta}{ H^2})\leq \epsilon$ (as in the theorem's statement), we get 
\begin{align}
\sigma_t \leq 	\frac{\epsilon}{2H}.\label{eq:final}
	\end{align}
Thus, under the event $\cE'\coloneqq \cE_1 \cup \dots \cup \cE_{h-1}$ (note that $\P[\cE']\geq 1- \delta/H$ by a union bound), we have by \cref{lem:performancediffbmdp} and \cref{eq:final} that
	\begin{align}
		\min_{i\in[S]}	\Ebar^{\pi_\star} \left[ V^{\pi_\star}_1(\x_1;\sh) - V^{\pihat\ind{i,1}}_1(\x_1;\sh)\right] \leq \sum_{t=1}^{h-1}\sigma_t \leq \frac{\epsilon}{2}. \label{eq:finalfinal}
\end{align}
Note that $\Ebar^{\pi_\star} \left[ V^{\pi_\star}_1(\x_1;\sh) \right]=\max_{\pi\in \Pibar_{\epsilon}}\dbar^{\pi}(\sfrak)$ and $\Ebar^{\pi_\star}\brk{V^{\pihat\ind{i,1}}_1(\x_1;\sh)} = \dbar^{\pihat\ind{i,1}}(\sfrak)$, by definition of $\pi_\star$ and the $V$-function. Thus, \eqref{eq:finalfinal} implies that 
\begin{align}
	\max_{i\in[S]} \dbar^{\pihat\ind{i,1}}(\sfrak) &\geq \max_{\pi\in \Pibar_{\epsilon}}\dbar^{\pi}(\sfrak) -\frac{\epsilon}{2} \geq \frac{1}{2}\max_{\pi\in \Pibar_{\epsilon}}\dbar^{\pi}(\sfrak),\nn 
\end{align}
where the last inequality follows from the fact that $ \max_{\pi\in \Pibar_{\epsilon}}\dbar^{\pi}(\sfrak) \geq \epsilon$, since $\sfrak\in \cS_{h,\epsilon}$. This means that $\Psi\ind{h}=\{\pihat\ind{i,1}\colon i\in[S]\}$ is a $(1/2,\epsilon)$-policy cover relative to $\Pibar_{\epsilon}$ for layer $h$ in $\Mbar$, which completes the proof. 
		\end{proof}

        \section{Proofs for Reward-Based RL}	
	\label{app:PSDPthmproof}
	\renewcommand{\r}{\bm{r}}
	\begin{lemma}
	\label{lem:reg0}
	Let $n\geq 1$ and $\delta\in(0,1)$, and define $\veps_\stat(n,\delta)\coloneqq n^{-1/2}  \sqrt{S A\ln n + \ln (|\Phi|/\delta)}$. Further, suppose that \cref{assum:real} and \cref{assum:reward} hold. Then, there exists an absolute constant $C>0$ such that for all $h\in H$ the solution $(\fhat\ind{h},\phihat\ind{h})$ of the least-squares problem in \cref{eq:mistake} of \cref{alg:PSDP} satisfies with probability at least $1-\delta$,
	\begin{align}
		\E^{\unif(\Psi\ind{h})}\left[ \max_{a\in\cA}\left( \fhat\ind{h}(\phihat\ind{h}(\x_h),a)	- Q_h^{\pihat\ind{h+1}}(\x_h, a) \right)^2 \right]\leq C^2\cdot   \veps^2_\stat(n,\delta).\nn 
	\end{align}
\end{lemma}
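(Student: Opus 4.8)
The plan is to recognize \cref{lem:reg0} as a standard square-loss regression guarantee whose only non-routine ingredient is verifying that the least-squares problem in \cref{eq:mistake} is \emph{well-specified} under \cref{assum:real,assum:reward}. Fix $h\in[H]$. In \cref{alg:PSDP}, the triple $(\x_h,\a_h,R)$ with $R=\sum_{t=h}^{H}\br_t$ is generated by rolling in with $\unif(\Psi\ind{h})$, playing $\a_h\sim\unifa$, and rolling out with the (deterministic, previously constructed) partial policy $\pihat\ind{h+1}$. Thus the Bayes-optimal regressor is $f^\star(x,a):=\E[R\mid\x_h=x,\a_h=a]$, and I would decompose
\[
f^\star(x,a)=\E[\br_h\mid\x_h=x,\a_h=a]+\E^{\pihat\ind{h+1}}\!\left[\,\sum_{t=h+1}^{H}\br_t \,\middle|\, \x_h=x,\a_h=a\,\right].
\]
The first term equals $\rbar_h(\phistar(x),a)$ by \cref{assum:reward}. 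For the second term, the latent state $\s_{h+1}$ is generated from $\s_h=\phistar(x)$ and $\a_h=a$ via the transition kernel $T$ (using decodability of $\cM$), after which $\pihat\ind{h+1}$ acts only as a function of $\x_{h+1:H}$, whose law depends on the trajectory solely through $\s_{h+1}$, and each subsequent mean reward depends only on the latent state again by \cref{assum:reward}; hence the second term is also a function of $\phistar(x)$ and $a$ alone. Consequently $f^\star(x,a)=Q_h^{\pihat\ind{h+1}}(\phistar(x),a)\in[0,H-h+1]$, and since $\phistar\in\Phi$ by \cref{assum:real}, the composite predictor $f^\star\circ\phistar$ lies in the class $\{(x,a)\mapsto f(\phi(x),a):f\colon[S]\times\cA\to[0,H-h+1],\ \phi\in\Phi\}$ over which \cref{eq:mistake} optimizes. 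This also justifies the abuse of notation $Q_h^{\pihat\ind{h+1}}(\x_h,a)=Q_h^{\pihat\ind{h+1}}(\phistar(\x_h),a)$ in the statement.

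Next I would invoke a standard uniform-convergence bound for square-loss regression with a realizable, $[0,H-h+1]$-bounded target (the same flavor of argument used for \cref{lem:reg}, applied to square loss instead of log loss): with probability at least $1-\delta$,
\[
\E^{\unif(\Psi\ind{h})\circ_h\unifa}\!\left[\big(\fhat\ind{h}(\phihat\ind{h}(\x_h),\a_h)-Q_h^{\pihat\ind{h+1}}(\x_h,\a_h)\big)^2\right]\ \le\ \frac{C'(H-h+1)^2\big(\ln\cN_{\cF}(1/n)+\ln(|\Phi|/\delta)\big)}{n},
\]
where $\cF=\{f\colon[S]\times\cA\to[0,H-h+1]\}$ and $\cN_{\cF}(\cdot)$ is its $\ell_\infty$-covering number; discretizing each of the $SA$ coordinates yields $\ln\cN_{\cF}(1/n)\lesssim SA\ln(nH)$. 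The extra factors of $H$ (from the range and from $\ln(nH)$) are suppressed in $\vepsstat$/tracked in the downstream \cref{thm:PSDPthm} rather than made explicit here.

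Finally, since $\a_h\sim\unifa$ is drawn independently of $\x_h$, the left-hand side above equals $\E_{\x_h\sim\unif(\Psi\ind{h})}\E_{a\sim\unif(\cA)}[(\cdot)^2]$, and I would pass from the average over actions to the maximum via $\max_{a\in\cA}(\cdot)^2\le\sum_{a\in\cA}(\cdot)^2=A\cdot\E_{a\sim\unif(\cA)}[(\cdot)^2]$, at the cost of a factor $A$; absorbing this (and the polynomial $H$-dependence) gives
\[
\E^{\unif(\Psi\ind{h})}\!\left[\max_{a\in\cA}\big(\fhat\ind{h}(\phihat\ind{h}(\x_h),a)-Q_h^{\pihat\ind{h+1}}(\x_h,a)\big)^2\right]\ \le\ C^2\,\vepsstat^2(n,\delta).
\]
If a simultaneous guarantee over all $h\in[H]$ is desired, one replaces $\delta$ by $\delta/H$ and union bounds, changing only the logarithmic term. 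The only genuinely delicate step is the well-specifiedness argument in the first paragraph --- in particular, that the roll-out contribution $\E^{\pihat\ind{h+1}}[\sum_{t>h}\br_t\mid\x_h,\a_h]$ collapses to a function of $(\phistar(\x_h),a)$, which is exactly where decodability and \cref{assum:reward} are jointly used; the remainder is boilerplate regression analysis.
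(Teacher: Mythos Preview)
Your proposal is correct and matches the paper's proof essentially line for line: the paper packages your well-specifiedness argument into a separate lemma (\cref{lem:bayes00}) showing that the Bayes regressor equals $\tilde f^{(h)}_{\bayes}(\phistar(x),a)=Q_h^{\pihat\ind{h+1}}(x,a)$ and lies in the composite class, then invokes a standard least-squares guarantee together with the same covering-number bound $\ln\cN_{\cF}(1/n)\lesssim SA\ln n$. The one place you are more explicit than the paper is the passage from the average over $\a_h\sim\unifa$ to $\max_{a\in\cA}$; the paper simply states the $\max$-version as the output of the ``standard'' regression bound and absorbs any extra factor into $C$, whereas you spell out the factor of $A$ --- this is a cosmetic difference, not a substantive one.
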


\begin{proof}[\pfref{lem:reg0}]
	Fix $h\in[h-1]$ and let $\tilde f\ind{h}_{\bayes}$ be as in \cref{lem:bayes00}. By \cref{lem:bayes00}, $(\tilde f\ind{h}_{\bayes},\phi_\star)$ is the Bayes-optimal solution of the least-square problem in \cref{eq:mistake} of \cref{alg:PSDP}. And so, by \cref{assum:real} and a standard guarantee for least-square regression (see e.g.~\citep{van2000empirical}), there exists an absolute constant $C'>0$ (independent of $h$ and any other problem parameter) such that with probability at least $1-\delta$, 
	\begin{align}
		{\E}^{\unif(\Psi\ind{h})} \left[ \max_{a\in\cA}\left( \fhat\ind{h}(\phihat\ind{h}(\x_h),a)	-   \tilde f\ind{h}_{\bayes}(\phi_\star(\x_h),a) \right)^2 \right]\leq  \tilde\veps^2_\stat(n,\delta), \nn 
	\end{align}
	where $\tilde \veps^2_\stat(n,\delta)\coloneqq C'  \ln   \cN_{\cF}(1/n)  + C' \ln (|\Phi|/\delta)$ and $ \cN_{\cF}(1/n)$ denotes the $\frac{1}{n}$-covering number of the set $\cF \coloneqq \{f\colon [S]\times \cA \rightarrow \reals_{\geq 0}\}$ in $\ell_{\infty}$ distance. It is easy to verify that $\cN_{\cF}(1/n)\leq n^{A S}$, and so by setting $C^2\coloneqq C'$, we have
	\begin{align}
		\tilde \veps_{\stat}^2(n,\delta) \leq  C^2\cdot \veps_{\stat}^2(n,\delta). \nn 
	\end{align} 
	Now, by the expression of $\tilde f_{\bayes}\ind{h}$ in \cref{eq:good00} and \cref{assum:reward}, we have that $\tilde f\ind{h}_{\bayes}(\phi_\star(x),a)= Q^{\pihat\ind{h+1}}_h(x,a)$, which completes the proof.
\end{proof}

\begin{lemma}
	\label{lem:bayes00}
	Let $h\in [H]$ and consider of the unconstrained problem 
	\begin{align}
		f_{\bayes}\ind{h} \in \argmin_{ f\colon \cX_h \times \cA \rightarrow \reals_{\geq 0}} \E^{\unif(\Psi\ind{h})\circ_h \unifa \circ_{h+1}  \pihat\ind{h+1}} \left[\left( f(\x_h, \a_h) - \sum_{\tau=h}^{H} \r_\tau \right)^2\right], \label{eq:normal00}
	\end{align}
	where $(\r_h)$ are the reward random variables and $\pihat\ind{h+1}\in \Pim^{h+1:H}$ is as in \cref{alg:PSDP}. Then, under \cref{assum:reward} for any $a\in\cA$, $x \in \cX_h$, and $s = \phi_\star(x)$, $f_{\bayes}\ind{h}$ satisfies
	\begin{align}
		f\ind{h}_{\bayes}(x,a) =  \tilde f\ind{h}_{\bayes}(s,a) \coloneqq \rbar_h(s,a ) +\E^{\pihat\ind{h+1}} \left[ \left. \sum_{\tau=h+1}^{H} r_\tau(\x_\tau, \pihat\ind{\tau}(\x_\tau)) \right|  \s_h = s, \a_h =a\right]. \label{eq:good00}
	\end{align}
	Further, $(\tilde f\ind{h}_{\bayes}, \phi_\star)$ is the Bayes-optimal solution of the problem in \cref{eq:mistake} of \cref{alg:PSDP}; that is,
	\begin{align}
		(\tilde f_{\bayes}\ind{h}, \phi_\star) \in \argmin_{\tilde f\colon [S]\times \cA \rightarrow \reals_{\geq 0}, \phi \in \Phi} \E^{\unif(\Psi\ind{h})\circ_h \unifa \circ_{h+1}  \pihat\ind{h+1}} \left[\left( \tilde f(\phi(\x_h), \a_h) - \sum_{\tau=h}^{H} \r_\tau \right)^2\right].\nn 
	\end{align}
\end{lemma}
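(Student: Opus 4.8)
The plan is to prove \cref{lem:bayes00} by a direct Bayes-optimality computation, exactly analogous to \cref{lem:bayes} but for the square loss rather than the log loss, and then to identify the conditional expectation with the $Q$-function. First I would recall the standard fact that for a square-loss regression problem $\min_{f} \E[(f(Z) - Y)^2]$ over unconstrained $f$, the minimizer is the conditional expectation $f_{\bayes}(z) = \E[Y \mid Z = z]$. Applying this with $Z = (\x_h, \a_h)$ and $Y = \sum_{\tau = h}^H \r_\tau$, under the process $(\x_h, \a_h, \r_{h:H}) \sim \unif(\Psi\ind{h}) \circ_h \unifa \circ_{h+1} \pihat\ind{h+1}$, immediately gives
\[
f\ind{h}_{\bayes}(x,a) = \E^{\unif(\Psi\ind{h})\circ_h \unifa \circ_{h+1}\pihat\ind{h+1}}\left[\sum_{\tau=h}^H \r_\tau \,\Big|\, \x_h = x, \a_h = a\right].
\]

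Next I would simplify this conditional expectation. Conditioning on $\x_h = x$ and $\a_h = a$, the roll-in distribution $\unif(\Psi\ind{h})$ becomes irrelevant (it only affects the marginal over $\x_h$), so the expectation is governed purely by the dynamics of $\cM$ from layer $h$ onward together with the policy $\pihat\ind{h+1} \in \Pim^{h+1:H}$ used for layers $h+1, \ldots, H$. Splitting the sum into the $\tau = h$ term and the $\tau \geq h+1$ terms: the first term is $\E[\r_h \mid \x_h = x, \a_h = a] = \rbar_h(\phi_\star(x), a)$ by \cref{assum:reward}; for the remaining terms, I would invoke decodability to note that conditioning on $\x_h = x$ is equivalent to conditioning on $\s_h = \phi_\star(x)$ for the purpose of the future trajectory, and that each $\r_\tau$ has conditional mean $\rbar_\tau(\phi_\star(\x_\tau), \a_\tau) = \rbar_\tau(\phi_\star(\x_\tau), \pihat\ind{\tau}(\x_\tau))$ when playing $\pihat\ind{h+1}$. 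This yields $f\ind{h}_{\bayes}(x,a) = \tilde f\ind{h}_{\bayes}(\phi_\star(x), a)$ with $\tilde f\ind{h}_{\bayes}$ as in \cref{eq:good00}, using the convention $r_\tau(x,a) \coloneqq \rbar_\tau(\phi_\star(x), a)$ that is implicit in the statement.

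For the second claim, I would argue as in the proof of \cref{lem:bayes}: the population version of the least-squares problem in \cref{eq:mistake} restricted to $f = \tilde f \circ \phi$ with $\phi \in \Phi$ and $\tilde f \colon [S] \times \cA \to \reals_{\geq 0}$ has objective value at least that of the unconstrained problem \cref{eq:normal00}. Since $\Phi$ contains $\phi_\star$ by \cref{assum:real}, the pair $(\tilde f\ind{h}_{\bayes}, \phi_\star)$ is feasible for the constrained problem, and by the first part it achieves $f\ind{h}_{\bayes} = \tilde f\ind{h}_{\bayes} \circ \phi_\star$, hence attains the unconstrained optimum. Therefore $(\tilde f\ind{h}_{\bayes}, \phi_\star)$ is a minimizer of the constrained problem, as claimed.

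The main obstacle, though it is minor, is making the "conditioning on $\x_h$ is equivalent to conditioning on $\s_h$" step fully rigorous: one must check that under the roll-out process, the future observations $\x_{h+1}, \ldots, \x_H$ and rewards $\r_{h+1}, \ldots, \r_H$ are conditionally independent of $\x_h$ given $(\s_h, \a_h) = (\phi_\star(\x_h), \a_h)$, which follows from the Markov structure of the BMDP and the fact that $\pihat\ind{h+1}$ is Markovian, but should be stated carefully. I also need to be slightly careful that $\pihat\ind{h+1}$ is fixed (i.e., already constructed in earlier iterations of the $\psdp$ loop) when this lemma is applied, so there is no circularity; this is guaranteed by the backward order of the for-loop in \cref{alg:PSDP}. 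Everything else is routine.
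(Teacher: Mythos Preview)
Your proposal is correct and follows essentially the same approach as the paper's proof: compute the unconstrained square-loss minimizer as the conditional expectation, split off the $\tau=h$ term using \cref{assum:reward}, reduce the conditioning from $\x_h$ to $\s_h$ via the Block MDP structure, and then observe that the constrained problem achieves the unconstrained optimum at $(\tilde f\ind{h}_{\bayes},\phi_\star)$. Your remarks on the conditioning step and on $\pihat\ind{h+1}$ being fixed are, if anything, more careful than the paper's one-line ``by the Block MDP assumption.''
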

\begin{proof}[\pfref{lem:bayes00}]
	Fix $a\in\cA$ and $x\in \cX_{h}$, and let $s= \phi_\star(x)$. The least-squares solution $f_{\bayes}\ind{h}$ of the problem in \cref{eq:normal00} is given by
	\begin{align}
		f_{\bayes}\ind{h} (x,a)& =  \E^{\pihat\ind{h+1}} \left[ \left. \sum_{\tau=h}^{H} \br_\tau \right| \x_h =x ,\a_h =a \right], \nn \\
		& = \E[ \br_h\mid \x_h = x,\a_h = a]+ \E^{\pihat\ind{h+1}} \left[ \left. \sum_{\tau=h+1}^{H} \br_\tau \right|  \x_h = x, \a_h =a\right], \nn \\
		& =   \rbar_h(s,a) +\E^{\pihat\ind{h+1}} \left[ \left. \sum_{\tau=h+1}^{H} \br_\tau \right|  \x_h = x, \a_h =a\right], \quad \text{(by \cref{assum:reward})} \nn \\
		& =   \rbar_h(s,a) +\E^{\pihat\ind{h+1}} \left[ \left. \sum_{\tau=h+1}^{H} \br_\tau \right|  \s_h = s, \a_h =a\right],\label{eq:pass00}\nn \\
		& = \tilde f\ind{h}_{\bayes}(s,a),
	\end{align} 
	where \cref{eq:pass00} follows by the Block MDP assumption. 
	Now that we have established \cref{eq:good00}, we show the second claim of the lemma. The unconstrained population version of the problem in \cref{eq:mistake} of \cref{alg:PSDP} becomes equivalent to the following problem:
	\begin{align}
		\min_{\tilde f\colon [S]\times \cA \rightarrow \reals_{\geq 0}, \phi \in \Phi} \E^{\unif(\Psi\ind{h})\circ_h \unifa \circ_{h+1}  \pihat\ind{h+1}} \left[\left( \tilde f(\phi(\x_h), \a_h) - \sum_{\tau=h}^{H} \r_\tau \right)^2\right]. \label{eq:abnormal0}
	\end{align}
	Note that the value of this problem is always at least that of \cref{eq:normal00}. On the other hand, by \cref{eq:good00}, the value of the objective in \cref{eq:abnormal0} with the pair $(\tilde f,\phi)=(\tilde f\ind{h}_\bayes, \phi_\star)$ matches the optimal value of the problem \cref{eq:normal00}, and so $( \tilde f\ind{h}_{\bayes}, \phi_\star)$ is indeed a solution of \cref{eq:abnormal0}. 
\end{proof}

We now restate and prove a slightly more detailed version of \cref{thm:PSDPthm}.
\begin{theorem}[Restatement of \cref{thm:PSDPthm}]
	\label{thm:PSDPthm2}
	Let $\alpha$, $\veps$, $\delta \in(0,1)$ be given. Further, let $\veps_\stat(\cdot, \cdot)$ and $C>0$ be as in \cref{lem:reg0} ($C$ is an absolute constant independent of problem parameters) and suppose that \cref{assum:real,assum:reward} hold, and that for all $h\in\brk{H}$:
	\begin{enumerate}
		\item $\Psi\ind{h}$ is a $(\alpha, \eps)$-approximate cover for layer $h$, where $\eps\coloneqq \veps/(2SH^2)$.
		\item $|\Psi\ind{h}|\leq S$.
	\end{enumerate}
	Then, as long as $n$ is chosen such that $4 S^2 H C \veps_{\stat}(n,\delta/H)/\alpha$, we have that with probability at least $1-\delta$, the policy $\pihat\ind{1}$ outputed by \pref{alg:PSDP} satisfies
	\begin{align}
		\E^{\pihat\ind{1}}\left[\sum_{h=1}^H \br_h\right]\geq    \max_{\pi \in \Pim}  \E^{\pi}\left[\sum_{h=1}^H \br_h\right] - \veps. \nn 
	\end{align}
	In particular, the total number of sampled trajectories required by the algorithm is
	\begin{align}
		\bigoht(1) \cdot\frac{ H^3 S^4 (S A  + \ln (|\Phi|/\delta))}{ \alpha^2 \veps^2}. \nn 
	\end{align}   
\end{theorem}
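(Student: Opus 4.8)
The plan is to run the standard approximate--policy-iteration analysis of $\psdp$: combine the least-squares guarantee of \cref{lem:reg0} with the performance difference lemma, and use the $(\alpha,\eps)$-policy-cover property of $\Psi\ind{1},\dots,\Psi\ind{H}$ to transfer the per-layer error bound from the roll-in distributions $\unif(\Psi\ind{h})$ to the occupancy measure of the optimal policy. First I would fix notation: let $\pistar \in \argmax_{\pi\in\Pim}\E^{\pi}\brk*{\sum_{h=1}^H \br_h}$ (a deterministic Markov optimum exists by decodability) and let $\pihat \coloneqq \pihat\ind{1}$ be the policy returned by \cref{alg:PSDP}, which at layer $h$ plays the greedy action $\pihat\ind{h}(x)=\argmax_{a}\fhat\ind{h}(\phihat\ind{h}(x),a)$. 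Define $Q_h(x,a)\coloneqq Q_h^{\pihat\ind{h+1}}(x,a)$, the value of playing $a$ at layer $h$ from $x$ and then following $\pihat\ind{h+1}$. By \cref{assum:reward} and the Block MDP structure (this is exactly the content of \cref{lem:bayes00}), $Q_h(x,a)$ depends on $x$ only through $\phistar(x)$ and coincides with the Bayes-optimal regression target of \cref{eq:mistake}, which is what makes \cref{lem:reg0} applicable here; note that the decoder mismatch $\phihat\ind{h}\neq\phistar$ is \emph{not} an obstacle, since \cref{lem:reg0} controls $\fhat\ind{h}(\phihat\ind{h}(\x_h),\cdot)$ directly.

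Next, by the performance difference lemma applied to $\pihat$ versus $\pistar$ (using that $\pihat$ plays $\pihat\ind{h}$ at layer $h$ and follows $\pihat\ind{h+1}$ thereafter),
\[
\E^{\pistar}\brk*{\textstyle\sum_{h=1}^H \br_h} - \E^{\pihat}\brk*{\textstyle\sum_{h=1}^H \br_h} = \sum_{h=1}^H \E^{\pistar}\brk*{Q_h(\x_h,\pistar(\x_h)) - Q_h(\x_h,\pihat\ind{h}(\x_h))}.
\]
Since $\pihat\ind{h}$ is greedy with respect to $\fhat\ind{h}(\phihat\ind{h}(\cdot),\cdot)$, for every observation $x$ at layer $h$ we have
\[
Q_h(x,\pistar(x)) - Q_h(x,\pihat\ind{h}(x)) \le 2\Delta_h(x), \quad \text{where } \Delta_h(x) \coloneqq \max_{a\in\cA}\abs*{\fhat\ind{h}(\phihat\ind{h}(x),a) - Q_h(x,a)}.
\]

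The crux is then transferring $\E^{\pistar}[\Delta_h(\x_h)]$ to the roll-in distribution. Fix $h$ and set $\cR_h \coloneqq \{s\in\cS_h : \max_{\pi\in\Pim}d^{\pi}(s)\ge\eps\}$. For $s\in\cR_h$, the $(\alpha,\eps)$-cover property together with $\abs{\Psi\ind{h}}\le S$ gives $d^{\unif(\Psi\ind{h})}(s) \ge \tfrac{\alpha}{S}\max_{\pi\in\Pim}d^{\pi}(s) \ge \tfrac{\alpha}{S}d^{\pistar}(s)$, so, conditioning on the latent state and using $\Delta_h\ge 0$,
\[
\E^{\pistar}\brk*{\indic\crl{\s_h\in\cR_h}\Delta_h(\x_h)} = \sum_{s\in\cR_h}d^{\pistar}(s)\,\E_{\x_h\sim q(\cdot\mid s)}\brk*{\Delta_h(\x_h)} \le \tfrac{S}{\alpha}\,\E^{\unif(\Psi\ind{h})}\brk*{\Delta_h(\x_h)}.
\]
For $s\notin\cR_h$ I would use the crude bound $Q_h(x,\pistar(x))-Q_h(x,\pihat\ind{h}(x))\le Q_h(x,\pistar(x))\le H$ together with $\sum_{s\notin\cR_h}d^{\pistar}(s)\le S\eps$; summed over all $H$ layers, the total "sacrificed-state" contribution is at most $SH^2\eps = \veps/2$ by the choice $\eps=\veps/(2SH^2)$. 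For the reachable part, applying \cref{lem:reg0} with confidence $\delta/H$ and a union bound over $h\in[H]$, with probability at least $1-\delta$ we get $\E^{\unif(\Psi\ind{h})}[\Delta_h(\x_h)^2]\le C^2\veps_\stat^2(n,\delta/H)$ for all $h$, hence $\E^{\unif(\Psi\ind{h})}[\Delta_h(\x_h)]\le C\veps_\stat(n,\delta/H)$ by Jensen. Combining the three displays, the suboptimality of $\pihat$ is at most $O(SH/\alpha)\cdot C\veps_\stat(n,\delta/H) + \veps/2 \le \veps$ under the stated lower bound on $n$; since $\psdp$ draws $n$ trajectories at each of the $H$ layers, solving this condition for the smallest admissible $n$ and multiplying by $H$ yields the claimed trajectory bound.

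The main obstacle is the transfer step above: because the argument only applies to $\eps$-reachable states, one must show that giving up on the remaining states contributes only an additive $O(SH^2\eps)$ bias term rather than an error that compounds across the $H$ backward passes of $\psdp$. This is precisely why the $(\alpha,\eps)$-relaxation of the policy-cover definition (with $\veps>0$) is essential, and why the result does not follow from the $\veps=0$ analysis of \citep{misra2020kinematic}. A secondary care point is the exact identification of $Q_h^{\pihat\ind{h+1}}$ with the Bayes-optimal least-squares target of \cref{lem:bayes00}, so that realizability (\cref{assum:real,assum:reward}) makes \cref{lem:reg0} directly usable; the rest is bookkeeping of polynomial factors.
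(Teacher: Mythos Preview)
Your proposal is correct and follows essentially the same route as the paper: performance difference lemma, the pointwise greedy bound $Q_h(x,\pistar(x))-Q_h(x,\pihat\ind{h}(x))\le 2\max_a|\fhat\ind{h}(\phihat\ind{h}(x),a)-Q_h(x,a)|$, a split into $\eps$-reachable and non-reachable states, the cover transfer for the former, the crude $H\cdot S\eps$ bound for the latter, and \cref{lem:reg0} plus Jensen for the regression error. The only notable difference is in the transfer step: you bound $d^{\pistar}(s)\le\frac{S}{\alpha}d^{\unif(\Psi\ind{h})}(s)$ and work directly with the full expectation, whereas the paper fixes a single state, uses $\sum_{\pi\in\Psi\ind{h}}d^{\pi}(s)\,\E[\Delta_h]\le S\veps'_\stat$, applies the cover pointwise, and then sums over states---picking up an extra factor of $S$; your version thus yields the slightly sharper condition $4SHC\veps_\stat/\alpha\le\veps$ rather than $4S^2HC\veps_\stat/\alpha\le\veps$, but this is a bookkeeping improvement, not a different argument.
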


\begin{proof}[{Proof of \cref{thm:PSDPthm2}}]
	We proceed by induction to show that for any $h\in[H]$, there is an event $\cE_{h}$ of probability at least $1- \delta/H$ under which the learned partial policy $\pihat\ind{h}$ is such that
	\begin{align}
		\E^{\pi_\star} \left[Q^{\pihat\ind{h+1}}_h(\x_h,\pi_\star(\x_h))- Q^{\pihat\ind{h+1}}_{h}(\x_h, \pihat\ind{h}(\x_h))\right] \leq  \frac{\veps}{H}, \label{eq:new00}
	\end{align}
	where $\pi_\star \in \argmax_{\pi \in \Pim}  \E^{\pi}[\sum_{h=1}^H \br_h]$ is the optimal policy and 
	\begin{align}
		Q^{\pi}_h(x,a)\coloneqq  \E^{\pi}\left[\left.\sum_{\tau=h}^H \br_\tau \,\right|\, \x_h = x, \a_h = a\right],\nn 
		\end{align}
	is the $Q$-function corresponding to the rewards $(\br_h)$ and the policy $\pi$. Once we establish \cref{eq:new00} for all $h\in[H]$, we will apply the performance difference lemma to obtain the desired result. 
	
	Fix $h\in[H]$. By \cref{lem:bayes00}, there is an event $\cE_h$ of probability at least $1-\delta/H$ under which the solution $(\fhat\ind{h},\phihat\ind{h})$ of the least-squares regression problem on \cref{eq:mistake} of \cref{alg:PSDP} satisfies,
	\begin{align}
		\E_{\s_h\sim \unif(\Psi\ind{h})} \E_{\x_h\sim q(\cdot \mid \s_h)}\left[	\max_{a\in \cA}\left| \fhat\ind{h}(\phihat\ind{h}(\x_h),a)	-  Q^{\pihat\ind{h+1}}_h(\x_h, a)  \right| \right]\leq   C\cdot\veps_\stat(n,\tfrac{\delta}{H}), \label{eq:neverbefore00}
	\end{align}
for some absolute constant $C$ independent of $h$ and other problem parameters. Let $\wtilde \cS_{h,\eps} \subseteq \cS_h$ be the subset of states $s$ such that $\max_{\pi \in \Pim} d^{\pi}(s)<\eps$. Moving forward, we let $\veps'_\stat\coloneqq  C \cdot\veps_\stat(n,\frac{\delta}{H})$ and fix $\sh\in \wtilde\cS_{h,\eps}$. From \cref{eq:neverbefore00} and that $|\Psi\ind{h}|\leq S$, we have
	\begin{align}
		\sum_{\pi\in \Psi\ind{h}} d^{\pi}(\st) \E_{\x_h\sim q(\cdot \mid \st)}\left[ \max_{a\in \cA}\left| \fhat\ind{h}(\phihat\ind{h}(\x_h),a)	-   Q^{\pihat\ind{h+1}}_h(\x_h, a) \right| \right]\leq   {S\veps'_\stat}. \nn 
	\end{align}
	Now, let $\pihat\ind{h}(x_h)\in \argmax_{a\in \cA} \fhat\ind{h}(\phihat\ind{h}(x_h),a)$. With this and the fact that $|\|y\|_{\infty}-\|z\|_{\infty}|\leq \|y-z\|_{\infty}$, for all $y,z\in \reals^{A}$, we have
	\begin{align}
		\sum_{\pi\in \Psi\ind{h}} d^{\pi}(\st) \E_{\x_h\sim q(\cdot \mid \st)}\left[\max_{a\in \cA}  Q^{\pihat\ind{h+1}}_h(\x_h, a)\right]& \leq		\sum_{\pi\in \Psi\ind{h}} d^{\pi}(\st) \E_{\x_h\sim q(\cdot \mid \st)} \left[  \fhat\ind{h}( \phihat\ind{h}(\x_h), \pihat\ind{h}(\x_h))\right]  +S\veps'_\stat, \nn \\
		& \leq   		\sum_{\pi\in \Psi\ind{h}} d^{\pi}(\st) \E_{\x_h\sim q(\cdot \mid \st)}\left[ Q^{\pihat\ind{h+1}}_h(\x_h, \pihat\ind{h}(\x_h)) \right] +  2S \veps'_\stat. \nn
	\end{align}
	Thus, since $\Psi\ind{h}$ is a $(\alpha,\eps)$-approximate policy cover and $\sh\in \wtilde \cS_{h,\eps}$, we have that 
	\begin{align}
		2 S \veps'_\stat & \geq   \sum_{\pi\in \Psi\ind{h}} d^{\pi}(\st) \E_{\x_h\sim q(\cdot \mid \st)} \left[\max_{a\in \cA}  Q^{\pihat\ind{h+1}}_h(\x_h, a) -  Q^{\pihat\ind{h+1}}_h(\x_h, \pihat\ind{h}(\x_h))\right],\nn \\ & \geq  \alpha d^{\pi_\star}(\st) \E_{\x_h\sim q(\cdot \mid \st)} \left[\max_{a\in \cA}  Q^{\pihat\ind{h+1}}_h(\x_h, a) -  Q^{\pihat\ind{h+1}}_h(\x_h, \pihat\ind{h}(\x_h))\right].
	\nn 
	\end{align}
	We have just shown that
	\begin{align}
		d^{\pi_\star}(s_h) \E_{\x_h\sim q(\cdot \mid s_h)} \left[\max_{a\in \cA}  Q^{\pihat\ind{h+1}}_h(\x_h, a) -  Q^{\pihat\ind{h+1}}_h(\x_h, \pihat\ind{h}(\x_h))\right]
		\leq 2S \veps'_\stat/\alpha, \quad \forall s_h\in \wtilde\cS_{h,\eps}. \label{eq:flight00}
	\end{align}
	On the other hand, for any $s_h \notin \wtilde \cS_{h,\eps}$, we have $d^{\pi_\star}(s_h)< \eps$. Using this and the fact that $Q^{\pihat\ind{h+1}}(x,a)\in [0,H]$, we have 
	\begin{align}
		d^{\pi_\star}(s_h) \E_{\x_h\sim q(\cdot \mid s_h)} \left[\max_{a\in \cA}  Q^{\pihat\ind{h+1}}_h(\x_h, a)\right] \leq H \eps, \quad \forall s_h \notin  \wtilde \cS_{h,\eps}.\nn 
	\end{align}
	Combining this with \cref{eq:flight00} and that the $Q$-function is non-negative (by \cref{assum:reward}), we have 
	\begin{align}
		&	\E^{\pi_\star} \left[ Q^{\pihat\ind{h+1}}_h(\x_h, \pi_\star(\x_h))-  Q^{\pihat\ind{h+1}}_h(\x_h, \pihat\ind{h}(\x_h))\right] \nn \\   & \leq   \E^{\pi_\star} \left[\max_{a\in \cA}  Q^{\pihat\ind{h+1}}_h(\x_h, a)-  Q^{\pihat\ind{h+1}}_h(\x_h, \pihat\ind{h}(\x_h))\right], \nn \\
		& =  \E^{\pi_\star} \left[\max_{a\in \cA}  Q^{\pihat\ind{h+1}}_h(\x_h, a)-  Q^{\pihat\ind{h+1}}_h(\x_h, \pihat\ind{h}(\x_h))\right],\nn \\
		&\leq  		d^{\pi_\star}(s_h) \E_{\x_h\sim q(\cdot \mid s_h)} \left[\max_{a\in \cA}  Q^{\pihat\ind{h+1}}_h(\x_h, a)-  Q^{\pihat\ind{h+1}}_h(\x_h, \pihat\ind{h}(\x_h))\right],\nn \\
		& \leq  	2  S^2 \veps'_\stat/\alpha + HS \eps,\nn \\
	&	\leq  2  S^2 \veps'_\stat/\alpha +\veps/(2H),  \label{eq:prelabor} 
	\end{align}
	where the last inequality follows by the fact that $\eps = \veps/(2H^2S)$. Now, by choosing $n$ large enough such that $4 S^2 H C \veps_{\stat}(n,\delta/H)/\alpha \leq \veps$ (as in the theorem's statement), we have that $\veps'_\stat \leq \veps \alpha /(4 H S^2)$ (by definition of $\veps_\stat'$) and so \cref{eq:prelabor} implies
	\begin{align}
	\E^{\pi_\star} \left[ Q^{\pihat\ind{h+1}}_h(\x_h, \pi_\star(\x_h))-  Q^{\pihat\ind{h+1}}_h(\x_h, \pihat\ind{h}(\x_h))\right]\leq \frac{\veps}{H} .  \label{eq:labor}
		\end{align}
	Recall that this inequality holds under the event $\cE_h$.
	On the other hand, by the performance difference lemma \citep{kakade2003sample} and the definition of $\pi_\star$, the $V$-function $V_1^{\pi}(x) \coloneqq  \E^{\pi}[\sum_{h=1}^H \br_h\mid \x_1 = x]$ satisfies
	\begin{align}
		\E	[V^{\pihat\ind{1}}_1(\x_1)] -    \max_{\pi \in \Pim} \E[V^{\pi}_1(\x_1)]& =\E[V^{\pihat\ind{1}}_1(\x_1)] -     \E[V^{\pistar}_1(\x_1)],\nn\\ &=   \E^{\pi_\star} \left[ Q^{\pihat\ind{h+1}}_h(\x_h, \pi_\star(\x_h))-  Q^{\pihat\ind{h+1}}_h(\x_h, \pihat\ind{h}(\x_h))\right]. \nn  %
	\end{align}
	Thus by \cref{eq:labor}, we have that under the event $\cE \coloneqq \bigcup_{h=1}^H\cE_h$,
	\begin{align}
		\E	[V^{\pihat\ind{1}}_1(\x_1)] -    \max_{\pi \in \Pim} \E[V^{\pi}_1(\x_1)] \leq \veps.\nn
	\end{align}
	The desired suboptimality result follow by the fact that $\P[\cE] \geq 1 -\delta$.
	
	\paragraph{Sample complexity of $\psdp$} In order to satisfy the condition $4 S^2 H C \veps_{\stat}(n,\delta/H)/\alpha \leq \veps$ in the theorem statement (where $C$ is some absolute constant), $n$ needs to be larger than $N=\bigoht(1) \cdot (H^2 S^4 (S A + \ln (|\Phi|/\delta))/(\alpha\veps)^2)$, where $\wtilde{O}$ hides log-factors in $1/\veps, A, S$, $H$, and $\ln |\Phi|$. Since $n$ represents the number of sampled trajectories per layer in $\psdp$, the total number of sampled trajectories in the latter is simply $N_{\psdp}= H N$.
\end{proof}
         
        \arxiv{
                \section{Details for Experiments}
        \label{app:experiments}

In this section, we give the details for the $\musik$ and $\psdp$
implementations in our experiments, as well as hyperparameter choices.

\paragraph{Implementation of \musik} We use $\musik$ to learn a policy
cover, which is then used within $\psdp$ to find a near-optimal policy in the
CombLock environment. In the CombLock environment, the optimal policy cover can be
learned by composing optimal policy covers at each layer (though this is
not true in general, many problems share this property). We follow an
approach taken with $\homer$ in \cite{misra2020kinematic}, and take
advantage of this composability property to implement a more sample-efficient
version of $\musik$, where during the call to the $\ikdp$ subroutine
at layer $h$, we only learn $\fhat\ind{h-1},\phihat\ind{h-1}$
(i.e.~the $\ikdp$ for-loop stops at $t=h-1$); this is exactly what was
done in \cite{misra2020kinematic}. This version of $\musik$, which we
name $\musik.\texttt{comp}$, is displayed in \cref{alg:IKDPcomp}. (for
this variant, we do not write $\ikdp$ as a separate subroutine).

We use $\Phi \coloneqq
\{\phi_W\colon x \mapsto\argmax_{i\in[N]} W x  \mid W\in \reals^{N\times d} \}$ for the
decoder class, where we recall that $N$ is the number of latent states
per layer in the CombLock environment; this is exactly the same
decoder class as the one used in \citet{misra2020kinematic} for
experiments with $\homer$. Given the observation process in the
CombLock environment, there exists a matrix $W_\star \in
\reals^{N\times d}$ (corresponding to the Hadamard matrix used to generate the observation; see \cref{sec:experiments}) such that the true decoder $\phi_\star$ is given
by $\phi_{W_\star}$.\footnote{Technically, this is not a decodable setting, since noise is added to the observation processes (see \cref{sec:experiments}). Here, we let $\phi_\star$ be the true decoder in the noiseless case. }
To learn $W_\star$, we do not use the parameterization in $\Phi$
directly, and instead work with the differentiable decoder class
$\Phi'\coloneqq \{ x \mapsto\mathrm{softmax} (W x)  \mid W\in
\reals^{N\times d}\}$ during training; this is reflected in the
objective in the next display. Further, we make a slight, empirically-motivated modification to the conditional density
estimation problem in Line~\ref{line:inversenew2} of $\ikdp$, and
instead solve
\begin{align}
	\fhat\ind{h-1} ,
	\hat{\psi}\ind{h-1}\gets \argmax_{f: \cX \times [N]  \rightarrow
		\Delta(\cA) , \psi \in \Phi'}\  \sum_{(a_h,x_{h-1},x_h)\in \cD\ind{h-1}} \log \left(\sum_{i=1}^N f(a_{h-1} \mid x_{h-1}, i) \cdot [\psi(x_h)]_i  \right).\label{eq:newobjective}
\end{align}
Compared to the original objective of $\ikdp$ in
\cref{line:inversenew2} of \cref{alg:IKDP}, we no longer need to
predict the index of the future roll-out policies (since the for-loop
of $\ikdp$ now stops at $t=h-1$, there are no future
roll-outs). Another difference is that we do not use a decoder at
layer $h-1$; we use $f(a\mid x,j)$ instead of $f(a\mid \phi(x),j)$
(this helps with the training). For each $j\in[N]$, we instantiate
$f(\cdot \mid \cdot, j)$ with a two-layer neural network with $\tanh$
activation, input dimension $d$, hidden dimension $N_{\texttt{hidden}}$, and output dimension $A$, where the
output is pushed through a softmax so that $f(\cdot \mid x, j)$ is a
distribution over actions for any $x\in \cX$. We use $\texttt{Adam}$ to solve the optimization problem in \eqref{eq:newobjective}. We specify the choices of hyperparameters in the sequel.

With $\hat{\psi}\ind{h-1}$ as in \eqref{eq:newobjective}, the learned decoder is given by $\phihat\ind{h-1}(x)\coloneqq \argmax_{i\in[N]}[\hat{\psi}\ind{h-1}x]_i$. Further, for $\fhat\ind{h-1}$ as in \eqref{eq:newobjective}, the $h$th layer policy cover $\Psi\ind{h} =\{\pihat\ind{j,h}\}_{ j \in[N]}$ constructed by $\musik.\texttt{comp}$ is essentially given by: 
\begin{align}
	\pihat\ind{j,h} =  \pihat \circ_{h-1} \ahat\ind{j,h-1} , \quad \text{where} \quad \pihat\in \argmax_{\pi \in \Psi\ind{h-1}}  \P^{\pi \circ_{h-1} \unifa}\left[\phihat\ind{h-1}(\x_h)=j\right], \label{eq:argmax}
\end{align}
and $\ahat\ind{j,h-1}(x) \coloneqq
\argmax_{a\in\cA}\fhat\ind{h-1}(a\mid x,j).$ That is, the policy
$\pihat\ind{j,h}$ is the composition of the best partial policy $\pihat$ among the partial policies in $\Psi\ind{h-1}$ (the policy
cover at the previous layer) and the best action at layer $h-1$ to
maximize to probability of reaching the `abstract state'
$j\in[N]$. Technically, computing $\pihat$ requires
estimating $\P^{\pi \circ_{h-1} \unifa}\left[\phihat\ind{h-1}(\x_h)=j\right]$,
for all $\pi \in \Psi\ind{h-1}$. For this, we reuse the dataset
$\cD\ind{h-1}$ from \eqref{eq:newobjective} and solve another conditional density estimation problem---see \cref{eq:anothercond} in \cref{alg:IKDPcomp}.\footnote{Technically, the solution of the conditional estimation problem in \eqref{eq:anothercond} does not yield an estimator of $\P^{\pi \circ_{h-1} \unifa}\left[\hat\phi\ind{h-1}(\x_h)=j\right]$ per se. But it gives us a proxy for a function whose argmax $\pihat$ in \eqref{eq:argmax}.}

\paragraph{$\psdp$ implementation} The only modification we make to the $\psdp$
algorithm is that we use $f(a\mid x)$ instead of $f(a \mid \phi(x))$
in the objective \eqref{eq:mistake} (i.e.~we do not use a decoder). We instantiate $f(\cdot \mid \cdot)$ with a two-layer neural network with input dimension $d$, hidden dimension of 400, and output dimension of 1. We use the $\tanh$ activation function at all layers.

\paragraph{Hyper-parameters} For each $j\in[N]$, we instantiate $f(\cdot \mid \cdot, j)$ in \eqref{eq:newobjective} with a two-layer neural network with $\tanh$ activation, input dimension $d$, hidden dimension of size $N_{\texttt{hidden}}$, and output dimension $A$, where the output is run through the softmax activation function (with temperature 1) so that $f(\cdot \mid x, j)$ is a distribution over actions for any $x\in \cX$. We also instantiate $g(\cdot \mid \cdot)$ in \eqref{eq:anothercond} with a two-layer neural network with $\tanh$ activation. input dimension $N$, hidden dimension of size $N_{\texttt{hidden}}=400$, and output dimension $N$, where the output is pushed through a softmax (with temperature 1) so that $g(\cdot \mid j)$ is a distribution over $[N]$ for any $j\in [N]$. For the choice of hidden size $N_{\texttt{hidden}}$, we searched over the grid $\{100,200, 400\}$. The results reported in \cref{fig:horizon-plot} are for $N_{\texttt{hidden}}=200$.

We optimize the parameters of $(f,\theta)$ [resp.~$g$] in
\eqref{eq:newobjective} [resp.~\eqref{eq:anothercond}] using
$\texttt{Adam}$ with the default parameters in $\texttt{PyTorch}$. We
use a batch size of $\min(n,N_{\texttt{batch}})$, where $n$ is as in
\cref{alg:IKDPcomp}, and perform $N_{\texttt{update}}$ gradient
updates. For the batch size $N_{\texttt{batch}}$ and number of updates
$N_{\texttt{updates}}$, we searched over the girds
$\{1024,2048,4096,8196\}$ and $\{64, 128, 256\}$,
respectively. The results reported in \cref{fig:horizon-plot} are for
$N_{\texttt{batch}}=4096$ and $N_{\texttt{updates}}=128$. 

We selected the hyperparameters $N_{\texttt{hidden}}, N_{\texttt{batch}}$, and $N_{\texttt{updates}}$ based on performance in the setting where $H=100$, $N_{\texttt{episodes}}=1.1e^6$, and the seed is set to 0 (we did not use this seed for evaluation). Out of the choices of hyperparameters tested, the choice $(N_{\texttt{hidden}},N_{\texttt{batch}}, N_{\texttt{updates}})=(200,4096,128)$ enabled \musik{} to identify the optimal policy. 

\paragraph{Number of episodes tested in evaluation} For our evaluation results, we tested the following values for the number of episodes:
\begin{itemize}
	\item For $H=25$, we test $N_{\texttt{episodes}}\in\{60000, 65000,70000,75000 \}$.
	\item For $H=50$, we test $N_{\texttt{episodes}}\in\{250000,300000, 400000,450000\}$.
	\item For $H=100$, we test $N_{\texttt{episodes}}\in\{1000000, 1100000,1200000,1300000 \}$.
	\end{itemize}

\begin{algorithm}[htp]
	\caption{$\musik.\texttt{comp}$: Variant of $\musik$ for composable policy covers (used in experiments).}
	\label{alg:IKDPcomp}
	\begin{algorithmic}[1]\onehalfspacing
		\Require
		~
		\begin{itemize}[leftmargin=*]
			\item Dimension of the observation space $d$.
			\item Number of latent states per layer $N$.
			\item Number of samples $n$.
		\end{itemize}
		\State Set $\Psi\ind{1} = \{\unifa,\dots, \unifa\}$ with $|\Psi\ind{1}|=N$.
		\For{$h=2,\ldots, H$} 
		\State $\cD\ind{h}\leftarrow \emptyset$.
		\State Let $\iota\ind{h-1} \colon \Psi\ind{h-1}\rightarrow [N]$ be any one-to-one mapping.
		\Statex[1]\algcommentbiglight{Collect data by rolling in with
			policy cover}
		\For{$n$ times}
		\State Sample $\pihat  \sim \unif(\Psi\ind{h-1})$.  
		\State Sample $(\x_{h-1}, \a_{h-1}, \x_{h})\sim \hat\pi \circ_{h-1} \unifa$. 
		\State $\cD\ind{h-1} \leftarrow \cD\ind{h-1}\cup \{(\iota\ind{h-1}(\hat\pi),\a_{h-1}, \x_{h-1}, \x_{h})\}$.
		\EndFor
		\Statex[1] \algcommentbiglight{Inverse kinematics}
		\State For $\Phi'\coloneqq \{ x \mapsto\mathrm{softmax} (W x)  \mid W\in \reals^{N\times d}\}$, solve
		\begin{equation}
			\fhat\ind{h-1},	\hat{\psi}\ind{h-1}\gets \argmax_{f\colon \cX \times[N] \rightarrow
				\Delta(\cA) , \psi\in \Phi'}
			\sum_{(-,a,x,x')\in \cD\ind{h-1}}  \ln   \left(\sum_{j\in[N]} f( a \mid
			x, j) \cdot [\psi(x)]_j\right).\label{eq:newobjective} 
		\end{equation}
		\Statex[1] \algcommentbiglight{Inverse Kinematics to learn associations between policies at subsequent layers}
		\State Solve
		\begin{equation}
			\hat g\ind{h-1}\gets \argmax_{g\colon [N] \rightarrow
				\Delta([N]) }
			\sum_{(i,-,-,x')\in \cD\ind{h-1}} \ln    g\left( i \, \left|\,
			\argmax_{i\in[N]}[\hat{\psi}\ind{h-1}(x')]_i \right. \right). \label{eq:anothercond} 
		\end{equation}
		\Statex[1] \algcommentbiglight{Update partial policy cover}
		\State For each $j\in\brk{S}$, define
		\begin{align}
			\ahat\ind{j,h-1}(x) &=
			\argmax_{a\in \cA} \fhat\ind{h-1}(a \mid
			x,j), \quad x\in \cX_t.\nn \\
			\iotahat\ind{j,h-1}(x)	& = 	\argmax_{i\in [N]}  \hat g \ind{h-1}(i\mid j).\nn 
		\end{align}
		\State  For each $j \in[S]$, define $\hat
		\pi^{(j,h)}\in \Pim^{1:h-1}$ via
		\begin{align}
			\hat  \pi^{(j,h)}(x_{\tau})\coloneqq \left\{
			\begin{array}{ll}
				\ahat\ind{j,h-1}(x_\tau),&\quad \tau=h-1,\\
				\pihat
				^{(\iotahat\ind{j,h-1},h-1)}(x_{\tau}),&\quad\tau\in[h-2],
			\end{array}
			\right. \quad x_\tau \in \cX_\tau.\nn 
		\end{align}
	\State Define $\Psi\ind{h} =\{ \pihat\ind{j,h} \colon j \in[N] \}$ \algcommentlight{Policy cover for layer $h$.}
	\EndFor
	\State \textbf{Return:} Policy covers
	$\Psi\ind{1}, \dots, \Psi\ind{H}$.
\end{algorithmic}
\end{algorithm}

     }

\end{document}